\documentclass{article}
\RequirePackage{fix-cm}
\DeclareMathSizes{10}{10}{8}{5}
\usepackage[normalem]{ulem}
\usepackage{a4wide}
\usepackage{csquotes}
\usepackage{times}
\usepackage{authblk}
\usepackage{array}
\usepackage{amsthm}
\usepackage{amssymb}
\usepackage{amsfonts}
\usepackage{amsmath}
\usepackage{MnSymbol}
\usepackage{todonotes}
\usepackage{paralist}
\usepackage{thm-restate} 
\usepackage{multirow}
\usepackage{subcaption}
\presetkeys{todonotes}{inline,color=red!30}{} 
\usepackage{framed}
\usepackage{xfrac}  
\usepackage[nottoc,numbib]{tocbibind}
\usepackage{anyfontsize} 
\mathchardef\mhyphen="2D

\newcommand*{\approxident}{%
  \mathrel{\vcenter{\offinterlineskip
  \hbox{$\sim$}\vskip-.35ex\hbox{$\sim$}\vskip-.35ex\hbox{$\sim$}}}}

\usepackage{tikz}
\usetikzlibrary{shapes.geometric,shapes.misc, positioning}
\usetikzlibrary{decorations.pathmorphing}

\tikzset{snake it/.style={decorate, decoration=snake}}

\usepackage{hyperref}

\theoremstyle{plain} 
\newtheorem{theorem}{Theorem}[section]

\theoremstyle{definition}
\newtheorem{definition}[theorem]{Definition}
\declaretheorem[style=definition]{example}
\renewcommand\thmcontinues[1]{Continued}

\newcommand{\myarg}[1] {{\tt {#1}}}

\newcommand{\xa}{\myarg{A}}
\newcommand{\xb}{\myarg{B}}
\newcommand{\xc}{\myarg{C}}
\newcommand{\xd}{\myarg{D}}
\newcommand{\xe}{\myarg{E}}
\newcommand{\xf}{\myarg{F}}
\newcommand{\xg}{\myarg{G}}
\newcommand{\xh}{\myarg{H}}

\newcommand{\tvt}{{\bf t}}
\newcommand{\tvf}{{\bf f}}
\newcommand{\tvu}{{\bf u}}

\newcommand*\cm{$\checkmark$}
\newcommand*\xm{$\times$}

\newcommand{\graph}{{\cal G}}

\newcommand{\lab}{{\cal L}}
\newcommand{\con}{{\cal C}}
\newcommand{\acc}{{\cal AC}}

\newcommand{\formis}{:}
\newcommand{\formeq}{\approxident}
\newcommand{\nformeq}{\not\formeq}

\newcommand{\ineq}{\#}
\newcommand{\operators}{\{=, \neq, \geq, \leq,>,<\}}

\newcommand{\closure}{{\sf Closure}}
\newcommand{\terms}{{\sf Terms}}
\newcommand{\fterms}{{\sf FTerms}}
\newcommand{\fargs}{{\sf FArgs}}

\newcommand{\arcs}{{\sf Arcs}}
\newcommand{\nodes}{{\sf Nodes}}
\newcommand{\sat}{{\sf Sat}}
\newcommand{\dist}{{\sf Dist}}

\newcommand{\parent}{{\sf Parent}}

\newcommand{\eformulae}{{\sf EFormulae}}   
\newcommand{\oformulae}{{\sf OFormulae}}   
\newcommand{\argcomplete}{{\sf AComplete}}
\newcommand{\arop}{{\sf AOp}}

\newcommand{\args}{{\sf Args}}

\definecolor{darkgreen}{rgb}{0.09, 0.45, 0.27}

\newcommand{\prob}{P}

\newcommand{\pprob}{p}

\newcolumntype{P}[1]{>{\centering\arraybackslash}m{#1}}   

\providecommand{\keywords}[1]{\textbf{\textit{Keywords---}} #1}
\begin{document}

\title{Epistemic Graphs for Representing and Reasoning\\ with Positive and Negative Influences of Arguments}
\date{}
\author{Anthony Hunter}
\author{Sylwia Polberg\thanks{Corresponding author. E--mail: sylwia.polberg at gmail.com. University College London, Department of Computer Science, 66-72 Gower Street, London WC1E 6EA, United Kingdom. \\
© 2019. This manuscript version is made available under the CC-BY-NC-ND 4.0 license \url{http://creativecommons.org/licenses/by-nc-nd/4.0/}\\
The published version of this manuscript is available at \url{https://doi.org/10.1016/j.artint.2020.103236}.
} }

\affil{University College London, London, United Kingdom} 
\author{Matthias Thimm}
\affil{University of Koblenz-Landau, Koblenz, Germany}

\maketitle
 
\begin{abstract} 
This paper introduces epistemic graphs as a generalization of 
the epistemic approach to probabilistic argumentation. 
In these graphs, an argument can be believed or disbelieved up to a given degree,
thus providing a more fine--grained alternative to the standard Dung's approaches when it comes 
to determining the status of a given argument. Furthermore, the flexibility of the epistemic approach allows us to both model 
the rationale behind the existing semantics as well as completely deviate from them when required. 
Epistemic graphs can model both attack and support as well as
relations that are neither support nor attack.
The way other arguments influence a given argument is expressed by the epistemic constraints that 
can restrict the belief we have in an argument with a varying degree of specificity.   
The fact that we can specify the rules under which arguments should be evaluated and we can
include constraints between unrelated arguments permits the framework to be more context--sensitive. 
It also allows for better modelling 
of imperfect agents, which can be important in multi--agent applications.  
\end{abstract}

\keywords{
abstract argumentation, epistemic argumentation, bipolar argumentation
}
 
\newpage
\tableofcontents

\newpage
\section{Introduction}
\label{sec:introduction}
%
%
%
%
%

In real-world situations, argumentation is pervaded by uncertainty. 
In monological argumentation, we might be uncertain about how much we believe an argument and how 
much this belief should influence the belief in other arguments. 
These issues are compounded when considering dialogical argumentation, 
where each participant might be uncertain about what other agents believe. 
In addition, there are further notions important for
successful argumentation, such as the ability to take contextual information into account, to handle different
perspectives that 
various  agents can have about a given issue, or to model agents that are not perfectly
rational reasoners or about whom we do not possess complete information.

Our aim in this paper is to present a new formalism for argumentation that takes belief into account 
and tackles these challenges. 
Moreover, we want to have a formalism that will allow us to model how different agents reason with arguments. 
A key application we have in mind is an agent modelling another agent while participating in some form of discussion 
or debate. Hence, the modeller wants to understand the degree of belief in arguments by the other agent and reasons for it. 
He or she may then use the model to help choose their next move in the discussion or debate.

In order to make our investigation more focused, we assume that we can harness the following key assumptions 
for any scenario that we want to handle.

\begin{description}

\item[Argument graph] We assume we have a set of arguments, and some relationships between these arguments. 
We will treat the arguments as abstract in this paper, but we can instantiate each argument with a textual description 
(and we will have examples of such instantiations) or a logical specification 
(for example as a deductive argument \cite{BesnardHunter2014}). 
We also assume that the arguments and relationships between them can be represented by a directed graph, 
and so each node denotes an argument, and each arc denotes a relationship between a pair of arguments.

\item[Belief in arguments] Belief in an arguments can be conceptualized in a number of ways. 
In this paper, we focus on belief in an argument as a combination of the degree to which the premises and claims 
are believed to be true, and the degree to which the claim is believed to follow from those premises. 
Furthermore, we assume that belief in an argument can be modulated or influenced by other arguments. 
\end{description}

Additionally, we are interested in applications where we source arguments from the real-world 
(e.g. arguments that arise in dialogues or discussions). This means that we will have arguments  
that are enthymemes (i.e. arguments with some of the premises and/or claim being implicit). 
This in turn means that different people may have a different belief assignment to an argument 
because it is an enthymeme and therefore they can decode it in different ways.


\subsection*{\textbf{Requirements}}
 
In this paper, we will focus on the following requirements for our new formalism. We will briefly delineate them first, 
and then motivate them through examples and a discussion of how meeting the requirements is of use.  
Some of these requirements have been satisfied by existing proposals in the literature but some are entirely new 
(e.g.  modelling context-sensitivity, modelling different perspectives, modelling imperfect agents, and modelling incomplete graphs).

\begin{description}
\item[Modelling fine--grained acceptability] 
Typical semantics for argumentation frameworks focus on judging whether an argument should be accepted or rejected. 
However, in practical applications, there might be uncertainty as to the degree an argument 
is accepted or rejected. 
Various studies, including \cite{Rahwan2011,PolbergHunter17}, show that a two--valued perspective may 
be insufficient for modelling people's beliefs about arguments. 
Since the degree to which an argument is accepted (rejected) can be expressed by the degree to which 
the argument is believed (disbelieved) \cite{PolbergHunter17}, 
we can see this requirement as stating that we should have a many--valued scale for belief in arguments.
Recent interest in ranking-based semantics and the notion of argument strength, also points 
to the need for the fine-grained requirement 
(see \cite{Bonzon16} for an overview). 

\item[Modelling positive and negative relations between arguments] 
The notions of attack and support are clearly important aspects of argumentation, 
even though the formalization of 
the interaction between these two types of 
relationship is open to multiple interpretation and subject to some debate in the research community 
\cite{CayrolLS13,BrewkaPW14,Prakken14,Polberg16,PolbergHunter17,CabrioVillata13,KontarinisToni16,RosenfeldKraus16}. 
 Nevertheless, there are various studies showing the importance of support in real argumentation, 
such as works on argument mining \cite{CabrioVillata13} or dialogical argumentation \cite{PolbergHunter17}. 
Hence, this requirement is that we need to model how the beliefs in arguments can have a positive or negative influence on 
other arguments, and that the belief in an argument needs to take into account those influences. 

\item[Modelling context--sensitivity]
 Consider two argumentation scenarios represented by the same directed graph, but with arguments instantiated with different 
textual descriptions. The belief that an individual has in these arguments may not be the same in these two scenarios.
The way arguments and influences between them are evaluated can be affected by 
the actual content of the arguments and the problem domain in question \cite{Cerutti2014}.  
Two different instantiations can be interpreted differently by a single user depending on his or her knowledge or preferences 
\cite{Zeng2008}. 
Hence, this requirement is that the context (i.e. how arguments are instantiated) can affect the belief in arguments 
and their influence on other arguments. 


\item[Modelling different perspectives]
It is common for different people to perceive the same information in different ways. 
In argumentation, not only can a given graph be evaluated in various ways by different agents,
but also its structure might not be uniformly perceived  \cite{PolbergHunter17}. 
So if we have an argumentation scenario represented by a single directed graph, 
different people might have dissimilar beliefs in the individual arguments 
and in the influence the belief in one argument has on other arguments.
Partly, this divergence of opinions may occur because of an argument being an enthymeme
 (i.e. an argument that only has some of its premises and claim represented explicitly), which every agent 
can decode differently \cite{Black:2012}. 
This disparity may also occur because of differing background knowledge and experience. 
So this requirement is that the participant (i.e. the agent judging the argument graph) can have belief in 
arguments and their influences that is different to other participants.  



\item[Modelling imperfect agents]
People can exhibit a number of imperfections such as errors in their background knowledge, 
errors in the way they analyze certain information, and biases in how they process information in general. 
So when judging a given argumentation scenario 
some people might make inappropriate or irrational judgments.
This irrationality could be seen in terms of not adhering to argumentation semantics as well as 
in terms of reasoning fallacies or undesirable cognitive biases \cite{ogden2012health}.
Since we want our formalism to be useful for real-world applications, 
we need the ability to model the imperfect agents in their assessment of belief in arguments
and in their assessment of the influence between arguments.

\item[Modelling incomplete situations]
An argumentation graph might not contain all of the arguments relevant to a given problem, in particular those 
 that concern the agent(s). For example, a patient might withhold a certain embarrassing or private piece of information 
from the doctor, 
despite the fact that it can affect the diagnosis. 
However, this incomplete knowledge might also be a result of how the graph is obtained or updated. 
In dialogical argumentation, depending on the used protocol, an agent might not always be able to put forward all arguments 
relevant to the discussion.  
As a result, an agent may, for example, disbelieve an argument that is perceived by us as unattacked, even though 
the agent is privately aware of reasons to doubt the argument. Similarly, an agent can believe an argument despite it 
being attacked, simply because the graph does not contain the agent's supporting arguments. 
Such a behaviour would violate the majority 
of the argumentation semantics available in the literature. 
Furthermore, graph incompleteness in combination with fine--grained acceptability means that we might know that an 
agent believes or disbelieves a particular argument, but cannot precisely state to what degree.
We therefore need an approach that is more resilient 
to potential incompleteness of the possessed information. 
\end{description} 

In the following examples, we consider simple scenarios 
where we might use monological argumentation to make sense of a situation, and possibly to make decisions. 
The examples highlight the value of implementing the above requirements. 

\begin{figure}[ht]
\begin{subfigure}[b]{0.49\textwidth}
\begin{center}
\resizebox{\textwidth}{!}{
\begin{tikzpicture}[->,>=latex,thick, arg/.style={draw,text centered,shape = rounded rectangle,fill=darkgreen!10 }]
\node[arg] (a1) [text width=3.7cm] at (-1.2,0) {$\xa$ =  The train will arrive at 2pm because it is timetabled for a 2pm arrival.};
\node[arg] (a2) [text width=3.5cm] at (4,3)  {$\xb$ =   Normally this train service arrives a little bit late.};
\node[arg] (a3) [text width=3.5cm] at (4,0)  {$\xc$ =   The train appears to be travelling slower than normal.};
\node[arg] (a4) [text width=3.5cm] at (4,-3)  {$\xd$ =  The live travel info app lists the train being on time.};
\path	(a2) edge[] node[above left] {$-$} (a1);
\path	(a3) edge[] node[above] {$-$} (a1);
\end{tikzpicture}
}
\end{center}
\caption{\label{fig:train} Jack's graph.}
\end{subfigure}
\begin{subfigure}[b]{0.49\textwidth}
\begin{center}
\resizebox{\textwidth}{!}{
\begin{tikzpicture}[->,>=latex,thick, arg/.style={draw,text centered,shape = rounded rectangle,fill=darkgreen!10 }]
\node[arg] (a1) [text width=3.7cm] at (-1.2,0) {$\xa$ =  The train will arrive at 2pm because it is timetabled for a 2pm arrival.};
\node[arg] (a2) [text width=3.5cm] at (4,3)  {$\xb$ =   Normally this train service arrives a little bit late.};
\node[arg] (a3) [text width=3.5cm] at (4,0)  {$\xc$ =   The train appears to be travelling slower than normal.};
\node[arg] (a4) [text width=3.5cm] at (4,-3)  {$\xd$ =  The live travel info app lists the train being on time.};
\path	(a2) edge[] node[above left] {$-$} (a1);
\path	(a4) edge node[above right] {$+$} (a1);
\end{tikzpicture}
}
\end{center}
\caption{\label{fig:train2} Jill's graph.}
\end{subfigure}
\caption{Labelleds graph concerning the arrival time of a train journey. Edges labelled with $-$ represent 
attack and edges labelled with $+$ represent support.}
\end{figure}
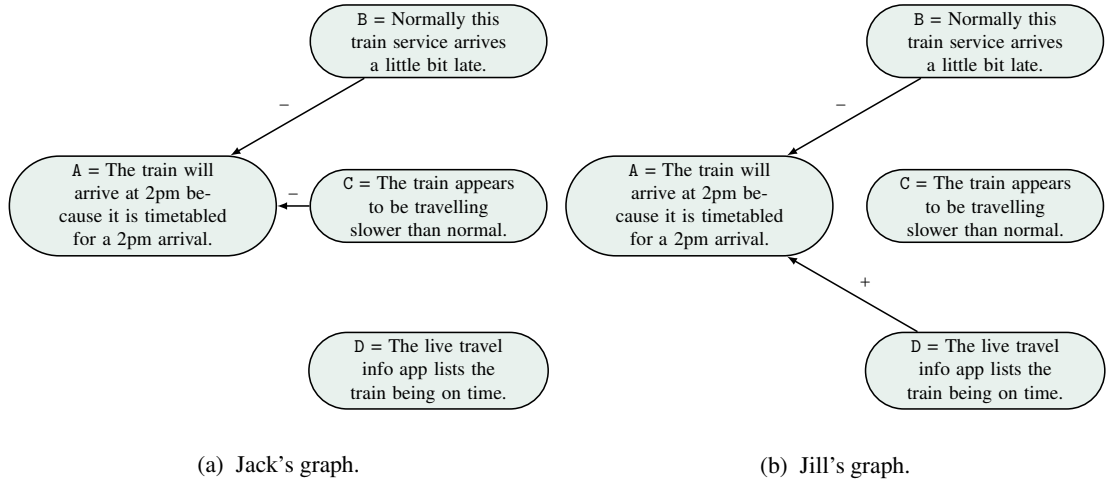

\begin{example}
\label{ex:trains}
Imagine we have two passengers on a train, Jack and Jill, travelling to work.  
Jack is using this particular connection regularly  
and has some experience with the vagaries of the service. Jill, however, uses this connection for the first time, 
and has an important meeting to attend and wants to be on time. 
Let us assume that their knowledge concerning whether the train is going to be late 
is represented in Figures \ref{fig:train} and  \ref{fig:train2}.

Let us first focus on Jack. 
Arguments $\xb$, $\xc$ and $\xd$ are enthymemes, 
and so their claims are not explicit and can be decoded in a number of ways. Since Jack is a regular client we
can assume that the missing claim for $\xb$ (and $\xc$) is 
\enquote{therefore the train will arrive a bit late}. He also has a live travel info app that says that this service
will arrive on time. He has been using this app for a while and does not consider it reliable at all, and because of this experience 
he chooses to decode the claim of $\xd$ as \enquote{the live travel info service predicts the train will be on time}. So, 
Jack does not decode the claim of $\xd$ as \enquote{therefore the train will arrive on time}. 
Hence, he sees arguments $\xb$ and $\xc$ as attacking $\xa$ and disregards the influence of $\xd$. 
Thus, Jack's belief in $\xb$ and $\xc$ suggests that $\xa$ should be disbelieved, and the degree to which 
 he disbelieves or believes $\xa$ should be primarily affected by $\xc$, i.e. his current perception of the service.  
At the same time, he is certain of his 
eyes, i.e. that the info service predicts that the train will be on time, but his belief in that argument does not affect $\xa$. 

Let us now focus on Jill, who is new to the service. She heard from a fellow passenger that this train normally arrives 
a little bit late and chooses to decode the claim of $\xb$ as \enquote{therefore the timetable is inaccurate}. 
This is the first time she has used this particular service, as she had only recently moved from a different town. 
She commuted by train before, but the connection she used from her previous town was a faster one. 
Therefore, she sees argument $\xc$ as a comment on the new line when compared to the line she used before, 
not as a sign of problems happening right now on the train she has boarded. 
Thus, 
her claim for $\xc$ is \enquote{therefore the tracks on this line must be in a worse condition than on the other line} 
and for her, arguments $\xa$ and $\xc$ are not particularly related. Finally, the live travel app she has been using 
has been very reliable in the past and she trusts it. She decodes the claim of $\xd$ as 
\enquote{therefore, the train will be on time}. Therefore, as long as she believes $\xd$ more than she believes 
the complaints of a random stranger on the train (i.e. argument $\xb$), she will believe $\xa$. 
\end{example}

The above example indicates how considering arguments, and beliefs in them, can be a useful part of 
sense-making and decision-making in monological argumentation. How we model the influence of positive 
and negative relations is an important part of this. Furthermore, we can see that there is context-sensitivity, 
in that how we interpret arguments (in particular how we decode enthymemes) and the relationships between 
them can affect this analysis. We can also see that it is reasonable for different agents having different views 
on how to decode a given enthymeme, different views on the influence of one argument on another, and 
different views on how to take multiple relationships into account.  

\begin{figure}[ht]
\begin{center}
\begin{tikzpicture}[->,>=latex,thick, arg/.style={draw,text centered,shape = rounded rectangle,fill=darkgreen!10,font=\normalsize}]
\node[arg] (a1) [text width=4.5cm] at (-1,1) {$\xa$  = Giving up smoking will be good for your health};
\node[arg] (a2) [text width=5cm] at (6,1)  {$\xb$  = My appetite will increase and so I will put on too much weight};
\node[arg] (a3) [text width=5cm] at (6,-1.5)  {$\xc$  = My anxiety will increase and so I will lose too much weight};
\node[arg] (a7) [text width=5cm] at (-1,-1.5)  {$\xd$  = My anxiety will increase and so I will have problems with working};
\node[arg] (a4) [text width=6cm] at (6,3)  {$\xe$ = You can join a healthy eating course to help you manage your weight};
\node[arg] (a5) [text width=5cm] at (-1,-4)  {$\xf$ = You can join a yoga class to help you relax, and thereby manage your anxiety};
\node[arg] (a6) [text width=6.5cm] at (6,-4)  {$\xg$ = You can use online counseling services for anxiety associated with 
smoking cessation, and thereby manage your anxiety};
\path	(a2) edge[bend left]  node[right] {$-$}(a3);
\path	(a3) edge[bend left]  node[left] {$-$} (a2);
\path	(a2) edge  node[above] {$-$}(a1);
\path	(a3) edge node[below] {$-$} (a1);
\path	(a7) edge node[left] {$-$} (a1);
\path	(a4) edge  node[left] {$-$} (a2);
\path	(a5) edge  node[above,pos=0.2] {$-$} (a3);
\path	(a6) edge  node[left] {$-$} (a3);
\path	(a5) edge  node[left] {$-$} (a7);
\path	(a6) edge  node[above,pos=0.2] {$-$} (a7);
\end{tikzpicture}
\end{center}
\caption{\label{fig:smoking} Example of argument graph for persuading someone to give up smoking. 
Edges labelled with $-$ represent attack. The graph contains the arguments known (but not necessarily believed by) 
the artificial agent, and might not contain all arguments of Rachel, Robin or Morgan.}
\end{figure}
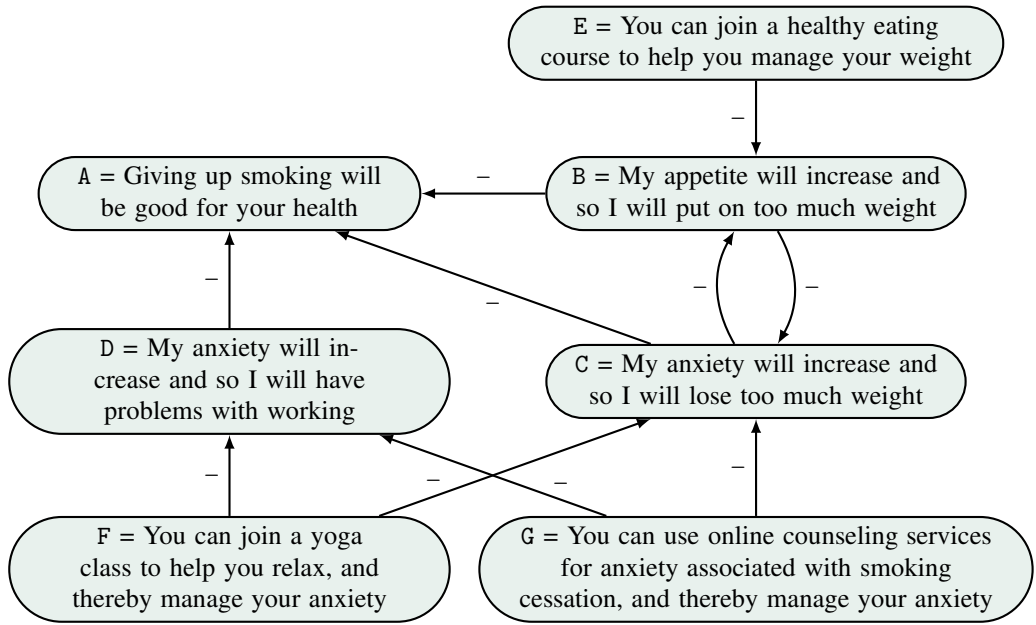
 
\begin{example}
\label{ex:smoking}
Let us now assume that we have an artificial agent attempting to persuade the users Rachel, Robin and Morgan to stop smoking. 
The graph of the artificial agent is represented by Figure \ref{fig:smoking}. The dialogue proceeds in turns and limits the ways 
the participants can respond. The artificial agent can state 
any of the arguments in the graph and the user is allowed to react in two ways. A user (be it Rachel, Robin or Morgan) 
can either select 
his/her counterargument from the list presented by the agent, or 
state how much (s)he agrees or disagrees with an argument presented by the agent. 
The user can end the dialogue at any time, 
the agent ends once there are no arguments to put forward or the user agreed to the desired arguments. 
After the dialogue is finished by any party, the participant is asked whether he or she agrees or disagrees with argument $\xa$. 
If the participant agrees, the dialogue is marked as successful. 

Let us start with Rachel. The agent presents her with argument $\xa$ in order to convince her to stop smoking
and allows her to select from $\xb$, $\xc$ and $\xd$ as her potential arguments. Rachel selects $\xb$ and $\xd$. 
In response to $\xb$, the agent puts forward $\xe$, and Rachel agrees. In response to $\xd$, the agent decides to first put 
forward $\xf$  based on the experience with previous users. Unfortunately, 
Rachel strongly disagrees and ends the discussion. The dialogue is marked as unsuccessful. 
The agent was not aware that Rachel uses a wheelchair and that yoga classes
did not suit her requirements, and the conversation ended before $\xg$ could have been put forward. 

Let us now consider Robin. The agent presents Robin with $\xa$ and again allows $\xb$, $\xc$ and $\xd$ to be selected as 
counterarguments. Robin is afraid of any weight changes associated with smoking cessation and selects both $\xb$ and $\xc$ 
despite the fact that they are conflicting. Consequently, any counterarguments put forward by the artificial agent 
can be seen as at the same time indirectly conflicting with and promoting $\xa$. 
The agent puts forward $\xe$ and $\xf$, to which Robin moderately agrees, and the dialogue ends 
successfully. 

Finally, consider Morgan, who similarly to Rachel selects both $\xb$ and $\xd$. However, in reality, 
Morgan is more afraid of weight gain 
than anxiety affecting his work, but wants to discuss both issues. The agent proposes solutions and Morgan moderately agrees with 
$\xe$, but somewhat disagrees with $\xf$. The agent decides to follow up with $\xg$, with which Morgan strongly disagrees. 
Nevertheless, the dialogue ends successfully due to the fact that Morgan's more pressing issue was addressed. 
\end{example}
  
The above example indicates how beliefs in arguments and relations between them are important in
dialogical argumentation. In particular, the same procedures applied to two agents expressing similar concerns can lead to 
different results based on the beliefs they have in arguments and their private knowledge.
An agent not aware of another agent's arguments or beliefs can put forward unacceptable arguments and
fail to persuade a given party to do or not to do something. One also has to be ready to put forward arguments that, 
possibly due to certain behaviours of the other party that can be deemed not rational, might
work against the agent’s goal.

\begin{example}[Adapted from \cite{Cerutti2014,haepilot}]
\label{ex:context}
The work in \cite{Cerutti2014} has investigated the problem of reinstatement in argumentation 
using an instantiated theory and preferences. We draw attention to two scenarios considered in the study, 
concerning weather forecast and car purchase, where each comes in the basic (without the last sentence) 
and extended (full text) version. 

\begin{displayquote}
The weather forecasting service of the broadcasting company AAA says
that it will rain tomorrow. Meanwhile, the forecast service of the broadcasting
company BBB says that it will be cloudy tomorrow but that it will not rain. It
is also well known that the forecasting service of BBB is more accurate than
the one of AAA. \textit{However, yesterday the trustworthy newspaper CCC published
an article which said that BBB has cut the resources for its weather forecasting
service in the past months, thus making it less reliable than in the past.}
\end{displayquote}
 
\begin{displayquote} 
You are planning to buy a second-hand car, and you go to a dealership
with BBB, a mechanic whom has been recommended you by a friend. The
salesperson AAA shows you a car and says that it needs very little work done
to it. BBB says it will require quite a lot of work, because in the past he had to
fix several issues in a car of the same model. \textit{While you are at the dealership,
your friend calls you to tell you that he knows (beyond a shadow of a doubt)
that BBB made unnecessary repairs to his car last month.}
\end{displayquote}

The formal representation of the base (resp. extended) versions of these scenarios is equivalent (we refer to 
\cite{Cerutti2014,haepilot} for more details). However, 
the findings show that they are not judged in the same way and suggest that the domain dependent knowledge of the 
participants has affected their performance of the tasks.
This shows the importance of modelling context--sensitivity and allowing an agent to evaluate structurally equivalent graphs 
differently.  

\end{example}

\subsection*{\textbf{State of the Art}} 

There are various approaches that attempt to tackle some of the above requirements, as we will discuss in more detail 
in Section \ref{section:Comparison}. However, there does not 
exist one that would be able to deal with all of them at the same time.  
We can find a number of proposals in computational models 
of argument such as the postulates for 
argument weights, strengths or beliefs
\cite{CayrolLS05,CayrolLS05b,LeiteMartins11,Amgoud2013,Amgoud16kr,AmgoudBenNaim16a,AmgoudBenNaim16b,Rago16,Bonzon16,
AmgoudBenNaim17,AmgoudBNDV17,Thimm:2012,Hunter:2013,Hunter:2014,Costa-Pereira:2011}, which offer a more fine--grained 
alternative for Dung's approach. 
Some of these works also permit certain forms of support or positive influences on arguments 
\cite{CayrolLS05, AmgoudBenNaim17,AmgoudBenNaim16a,Rago16,LeiteMartins11}. 
Nevertheless, due to the way the influence aggregation methods are defined, it is difficult for these proposals to meet the 
requirement for modelling context--sensitivity, different perspectives or 
incomplete graphs. Certain flexibility is perhaps possible only with approaches that work with initial scoring assignment such as 
\cite{Rago16, LeiteMartins11, AmgoudBenNaim17, AmgoudBNDV17}, though dealing with imperfect agents still poses difficulties 
in these methods.  
  
In contrast to the above works, there are frameworks that allow us to specify 
the way one argument affects another locally, which promotes dealing with context--sensitivity, different perspectives 
and a wider range of relations between arguments.  
In particular, abstract dialectical frameworks (ADFs) \cite{BrewkaWoltran10,BrewkaESWW13,Polberg16} allow us to 
specify various ways the incoming support and attack can affect a given argument. 
Unfortunately,  this specification has certain restrictions, and  dealing with incomplete scenarios 
and imperfect agents is not ideal. The semantics of ADFs are also primarily two and three--valued, and while 
a recent generalizations allows for considering fine-grained acceptability \cite{Brewka18}, it still suffers from the previously 
mentioned issues.  

\subsection*{\textbf{Our Proposal}} 

 We therefore believe there is a need to investigate  argumentative approaches that would handle both attack and support relations, 
allow for fine--grained argument acceptability,
and permit context--sensitivity, different perspectives, agents' imperfections and incomplete knowledge about agents' graphs. 
%
As a starting point for our research, we take the epistemic approach to probabilistic argumentation, which 
 has already shown to be potentially valuable in modelling agents in persuasion dialogues 
\cite{Hunter15ijcai,Hunter16sum,Hunter16ecai,Hadoux16,Hadoux17}. 
In order to address our requirements, we introduce epistemic graphs a generalization of this approach.
In these graphs, an argument can be believed or disbelieved to a given degree, and 
the way other arguments influence a given argument is expressed by the epistemic constraints that 
can restrict the belief we have in an argument in a flexible way.   


Through the use of degrees of belief, epistemic graphs 
provide a more fine--grained alternative to classical Dung's approaches when it comes 
to determining the status of a given argument. The flexibility of the epistemic approach allows us to both model 
the rationale behind the existing semantics as well as completely deviate from them, 
thus giving us a more appropriate formalism for practical situations including the modelling of imperfect agents. 
Epistemic graphs can model both attack and support as well as
relations that are neither support nor attack, so far analyzed primarily in the context of abstract 
dialectical frameworks \cite{BrewkaESWW13}. The freedom in defining the constraints allows us to easily 
express various interpretations of support at the same time and without the need to transform them, 
which is usually necessary in other types of argumentation frameworks \cite{CayrolLS13,PolbergOren14a,Polberg16}.
The fact that we can specify the rules under which arguments should be evaluated and that we can
include constraints between unrelated arguments allows the framework to be more context--sensitive 
and more accommodating when it comes to dealing with  imperfect agents. Additionally, the ability to leave 
certain relations unspecified lets us deal with cases when the system has insufficient knowledge about the situation. 
%

In this paper, we make the following contributions: 
\begin{inparaenum}[\itshape 1\upshape)]
\item A syntax and semantics for a logical language for constraints that is appropriate for argumentation;
\item A proof theory for reasoning with these constraints so that we can determine whether a set of constraints is consistent, whether 
a set of constraints is minimal, and whether one constraint implies another constraint;
\item A definition for epistemic graphs, study of its properties and an analysis of how it can be used to capture different kinds of argumentation scenarios;
and  
\item  A set of tools for analysing the relationship 
between the graphical structure and the 
constraints contained in the graph and an example of how they can be harnessed in practical applications.  
\end{inparaenum}

In this paper, we do not consider how we can obtain the probability distribution or constraints. However, other works with crowdsourced 
data show how we can obtain belief in arguments and relations between them \cite{Cerutti2014,HunterPolberg17,PolbergHunter17}. 
From this crowdsourced data, we believe that it is entirely feasible to develop machine learning techniques for generating constraints. 
However, we leave the learning of constraints from data to future work. 

\subsection*{\textbf{Outline of the Paper}}

We proceed as follows:
Section \ref{section:Preliminaries} reviews the background that we require;
Section \ref{section:epistemiclanguage} introduces the syntax and semantics for the language we require for specifying epistemic 
graphs; 
Section \ref{section:ReasoningConstraints} presents the proof theory for reasoning with statements in this language;
Section \ref{sec:epistemicgraphs} introduces epistemic graphs and considers how they can be used for analysing different kinds of 
scenarios; 
Section \ref{section:Comparison} compares our work to related state-of-the-art formalisms;
and Section \ref{section:Discussion} discusses our contribution and considers future work.


\section{Preliminaries}
\label{section:Preliminaries} 

In its simplest form, an argument graph is a directed graph in which nodes represent arguments and arcs represent relations. 
In conflict--based graphs, such as 
the ones created by Dung \cite{Dung95}, arcs stand for attacks. In graphs such as those in 
\cite{AmgoudBenNaim16a}, arcs are supports, while in bipolar graphs they can 
be either supports or attacks \cite{CayrolLS13,BoellaGTV10,OrenNorman08,NouiouaRisch11,PolbergOren14a}. 
In some frameworks, such as abstract dialectical 
frameworks, an arc may also represent a dependence relation in case it cannot be strictly classified as neither supporting nor attacking 
\cite{BrewkaWoltran10,BrewkaESWW13,Polberg16}. Argument graphs can be extended in various ways in order to account for 
additional preferences, recursive relations, group relations\footnote{Frameworks with recursive relations are represented as 
generalizations of directed graphs where edges point at other edges. Frameworks with group relations are often represented by B--
graphs, i.e. directed hypergraphs where the head of the edge is a single 
node.} and more. For an overview, we refer the reader to \cite{BrewkaPW14}. 
We will also discuss some of these structures more in Section 
\ref{section:Comparison}. 
For now, we will focus on 
introducing the notation we will use throughout the text.

By an argument graph we will understand a directed graph 
and we will use a labelling function that assigns to every arc a label representing 
its nature -- supporting, attacking, or dependent, where dependency is understood as a relation 
that is neither positive nor negative. Hence, unless stated otherwise, we will assume we are working
with a label set $\Omega = \{+,-,*\}$, which can be adjusted if needed. 
Given that many argumentation graphs allow two arguments to be connected in more ways than one, we allow a single arc to possess more than just one 
label:

\begin{definition}
Let $\graph = (V, A)$, where $A \subseteq V \times V$, be a directed graph. 
A {\bf labelled graph} is a tuple $X = (\graph,\lab)$ where $\lab: A \rightarrow 2^\Omega$ 
is labelling function and $\Omega$ is a set of possible labels. $X$ is \textbf{fully labelled} iff
for every $\alpha \in A$, $\lab(\alpha) \neq \emptyset$. $X$ is \textbf{uni-labelled} iff 
for every $\alpha \in A$, $\lvert \lab(\alpha) \rvert = 1$. 
\end{definition}

Unless stated otherwise, from now on we assume that we are working with fully labelled graphs. 
With $\nodes(\graph)$ we denote the set of nodes $V$ in the graph $\graph$ and with
 $\arcs(\graph)$ we denote the set of arcs $A$ in $\graph$. For a 
graph $\graph$ and a node $\xb \in \nodes(\graph)$, the parents of $\xb$ are 
$\parent(\xb) = \{ \xa \mid (\xa,\xb) \in \arcs(\graph) \}$. With 
$\lab^x(\graph) = \{ \alpha \in \arcs(\graph) \mid x \in \lab(\alpha)\}$ we denote the set of relations labelled with $x$ by $\lab$, 
where $x \in \{+,*,-\}$. In a similar fashion, by $\parent^x(\xb) = \{ \xa \mid (\xa,\xb) \in \arcs(\graph) \land 
x \in \lab((\xa, \xb)) \}$ we will denote the set of parents of an argument $\xb$ s.t. the relation between the two is 
labelled with $x$ by $\lab$. 

On an arc from a parent to the target, a positive label denotes a positive influence, a negative label 
denotes a negative influence, and a star label denotes an influence that is neither strictly positive nor negative. 
If $\lab$ is assigned only the $-$ label to every arc in a graph, then the graph is a conflict--based argument graph, 
and if $\lab$ is assigned $+$ or $-$ (or both) to every 
arc in a graph, then the graph is a bipolar argument graph \cite{NouiouaRisch11,CayrolLS13,PolbergOren14a}. Following 
the analysis in \cite{PolbergHunter17}, a graph making use of all three labels will be referred to as tripolar. 
In Figure \ref{fig:smoking} we 
can see an example of a conflict--based argument graph, Figure \ref{fig:train} shows an example of a bipolar argument graph 
and Figure \ref{fig:tripolar} of a tripolar one. In the last case, we can observe that while $\xe$ and $\xf$ 
are necessary for $\xa$, only one of them can be accepted at a time in order for $\xa$ to be accepted, as having 
both of them would lead to rejecting the argument. This mutual exclusivity requirement for $\xa$ 
is neither an attacking nor a supporting relation, and thus it is classified as a dependency. 
 
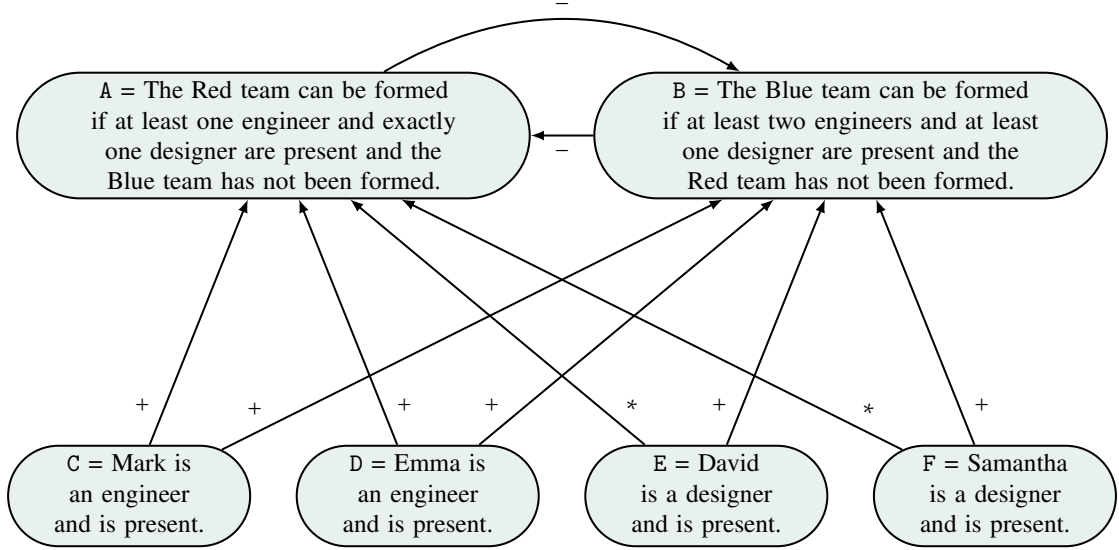
\begin{figure}[ht]
\begin{center}
\resizebox{\textwidth}{!}{
\begin{tikzpicture}[->,>=latex,thick,auto, arg/.style={draw,text centered,shape = rounded rectangle,fill=darkgreen!10}]
\node[arg] (a1) [text width=6cm] at (-2,0) {$\xa$ =  The Red team can be formed if at least one engineer and 
exactly one designer 
are present and the Blue team has not been formed.};
\node[arg] (a2) [text width=6cm] at (6,0)  {$\xb$ =  The Blue team can be formed if at least two engineers and at least one 
designer are present and the Red team has not been formed.};

\node[arg] (a3) [text width=2.5cm] at (-4,-5)  {$\xc$ =   Mark is an engineer and is present.};
\node[arg] (a4) [text width=2.5cm] at (0,-5)  {$\xd$ =   Emma is an engineer and is present.};
\node[arg] (a5) [text width=2.5cm] at (4,-5)  {$\xe$ =  David is a designer and is present.};
\node[arg] (a6) [text width=2.5cm] at (8,-5)  {$\xf$ = Samantha is a designer and is present.}; 

\path	(a3) edge node[pos=0.1] {$+$} (a1)
		(a4) edge node[swap,pos=0.1] {$+$} (a1)
		(a5) edge node[swap,pos=0.1] {$*$} (a1)
		(a6) edge node[swap,pos=0.1] {$*$} (a1) 

		(a3) edge node[pos=0.1]{$+$} (a2)
		(a4) edge node[pos=0.1]{$+$} (a2)
		(a5) edge node[pos=0.1]  {$+$} (a2)
		(a6) edge node[swap,pos=0.1]  {$+$} (a2) 

		(a1) edge [bend left] node {$-$} (a2)
		(a2) edge node {$-$} (a1)
; 
\end{tikzpicture}
}
\end{center}
\caption{\label{fig:tripolar}
A tripolar graph example. Edges labelled with $-$, $+$ and $*$ represent attack, support and dependency respectively. 
Forming of the Red and Blue teams requires particular specialists. Arguments $\xc$, $\xd$, $\xe$ and $\xf$
support the creation of the Blue team. However, only $\xc$ and $\xd$ strictly support the creation of the Red team. 
If we accept $\xe$, then acceptance of $\xf$ leads to the rejection of the Red team, and if we accept $\xf$, 
then the acceptance of $\xe$ leads to the rejection of the Red team. At the same time, one of $\xe$ and $\xf$
has to be accepted. Thus, the relations from $\xe$ and $\xf$ are in some cases attacking, in some supporting, 
and hence they can only be classified as dependent. }
\end{figure}

A given argument graph is evaluated with the use of semantics, which are meant to represent what can be 
considered \enquote{reasonable}. 
The most basic type of semantics -- the extension--based ones -- associate a given graph with sets of arguments, called extensions, 
formed from acceptable arguments. A more refined version, the labeling--based semantics, tell us whether an argument is accepted, 
rejected, or neither \cite{Caminada:2009,Baroni:2011,BrewkaESWW13}. However, when it comes to some 
applications such as user modelling, these two and 
three--valued perspectives can be 
insufficient to express the extent to which the user agrees or disagrees with a given argument \cite{Rahwan2011,PolbergHunter17}. 
Consequently, a variety of weighted, ranking--based and probabilistic approaches have been proposed 
\cite{Amgoud2013,AmgoudBenNaim16b,AmgoudBenNaim16a,Amgoud16kr,Rago16,Bonzon16,
AmgoudBenNaim17,AmgoudBNDV17,Hunter:2013,Hunter15ijcai,Hunter16sum,Hunter16ecai,Hadoux16,Hadoux17,Pu2014,BesnardHunter01,CayrolLS05,CayrolLS05b,LeiteMartins11}.
We will discuss some of these approaches further in Section \ref{section:Comparison} and refer interested readers to the listed papers 
for a more in-depth analysis.   
 
 \section{Epistemic Language}
\label{section:epistemiclanguage}

In the introduction, we have discussed the value of being able to model beliefs in arguments, various
types of relations between arguments, context–sensitivity, and more. Our proposal, capable of meeting the
postulated requirements, comes in the form of epistemic graphs, which can be equipped with particular
formulae specifying the beliefs in arguments and the interplay between them. In this section, we will focus on 
providing the language for these formulae. We describe its syntax and semantics as well as introduce an appropriate 
proof system later in Section \ref{section:ReasoningConstraints}.

\subsection{Syntax and Semantics}

The epistemic language consists of Boolean combinations
of inequalities involving statements about probabilities of formulae built out of arguments. 
Throughout the section, we will assume that we have a directed graph $\graph$.
The building block of an epistemic formula is a statement \enquote{probability of $\alpha$}, where $\alpha$ 
is a propositional formula on arguments (further referred to as terms). We can then speak about additions and subtractions 
of probabilities of such terms (further referred to as operational formulae). Comparing them to actual numerical values 
through equalities and inequalities forms epistemic atoms, which can then through negation, disjunction, conjunction etc. 
be joined into epistemic formulae. Let us now formally introduce the language:

\begin{definition}
The epistemic language based on $\graph$ is defined as follows:

\begin{itemize}
\item a {\bf term} is a Boolean combination of arguments. We use $\lor$, $\land$ 
and $\neg$ as connectives in the usual way, and can derive secondary connectives, such as implication $\rightarrow$, as usual. 
$\terms(\graph)$ denotes all the terms that can be formed from the arguments in $\graph$.

\item an {\bf operational formula} is of the form
$\pprob(\alpha_i) \star_1 \ldots \star_{k-1} \pprob(\alpha_{k})$ where all $\alpha_i \in \terms(\graph)$ and $\star_j \in \{+,-\}$. 
$\oformulae(\graph)$ denotes all possible operational formulae of $\graph$ and we read $\pprob(\alpha)$ as 
\enquote{probability of $\alpha$}. 

\item an {\bf epistemic atom} is of the form $f \ineq x$
where 
$\ineq \in \operators$, $x \in [0,1]$ and $f \in \oformulae(\graph)$.

\item an {\bf epistemic formula} is a Boolean combination of epistemic atoms.
$\eformulae(\graph)$ denotes the set of all possible epistemic formulae of $\graph$.  
\end{itemize}
\end{definition}
 
For $\alpha \in \terms(\graph)$, $\args(\alpha)$ denotes the set of all arguments appearing in $\alpha$
and for a set of terms $\Gamma \subseteq \terms(\graph)$, $\args(\Gamma)$ denotes 
the set of all arguments appearing in $\Gamma$.
Given a formula $\psi \in \eformulae(\graph)$, let $\fterms(\psi)$ denote 
the set of terms appearing in $\psi$ and let $\fargs(\psi) = \args(\fterms(\psi))$ be the set 
of arguments appearing in $\psi$. With ${\sf Num}(\psi)$ we denote the collection of all numerical values 
$x$ appearing in $\psi$. For an operational formula $f = \pprob(\alpha_i) \star_1 \ldots \star_{k-1} \pprob(\alpha_{k})$,
$\arop(f) = (\star_1, \star_2, \ldots, \star_{k-1})$ denotes the, possibly empty, sequence of arithmetic operators appearing in $f$. 
By abuse of notation, by $\arop(\varphi)$ for an epistemic atom $\varphi$ we will understand the sequence of operators
of the operational formula of $\varphi$.
 
\begin{example}
Assume a graph $\graph$ s.t.  $\{\xa,\xb,\xc,\xd\} \subseteq \nodes(\graph)$. 
$\psi \formis  \pprob(\xa\land \xb) - \pprob(\xc) - \pprob(\xd) > 0$ is 
an example of an epistemic formula on $\graph$. 
The terms of that formula are $\fterms(\psi) = \{\xa \land \xb, \xc, \xd\}$, 
the arguments appearing in them are $\fargs(\psi) = \{\xa, \xb,\xc,\xd\}$. 
The sequence of operators of $\psi$ is $\arop(\psi) = (-,-)$.
Finally, in this case, ${\sf Num}(\psi) = \{0\}$. 
\end{example}

Having defined the syntax of our language, let us now focus on its semantics, which comes in the form of belief distributions. 
A {\bf belief distribution} on arguments is a function $\prob: 2^{\nodes(\graph)} \rightarrow [0,1]$ s.t. 
$\sum_{\Gamma \subseteq \nodes(\graph)} \prob(\Gamma) = 1$. With $\dist(\graph)$ we denote the set of all 
belief distributions on $\nodes(\graph)$. 
Each $\Gamma \subseteq \nodes(\graph)$ corresponds to an interpretation of arguments. 
We say that $\Gamma$ \emph{satisfies an argument} $\xa$ and write $\Gamma \models \xa$
iff $\xa \in \Gamma$.  
Essentially $\models$ is a classical satisfaction relation and can be extended to complex terms as usual.  
For instance,
$\Gamma \models \neg\alpha$ iff $\Gamma \not\models \alpha$
and $\Gamma  \models \alpha \land \beta$ iff $\Gamma  \models \alpha$ and $\Gamma  \models \beta$.
For each graph $\graph$, we assume an ordering over the arguments $\langle \xa_1,\ldots,\xa_n \rangle$ 
so that we can encode each model by a binary number: for a model $X$, if the i-th argument is in $X$, 
then the i-th digit is 1, otherwise it is 0. For example, for $\langle \xa,\xb,\xc \rangle$, the model $\{\xa,\xc\}$ is 
represented by 101.  

The {\bf probability of a term} is defined as 
the sum of the probabilities (beliefs) of its models:
$$\prob(\alpha)  = \sum_{\Gamma \subseteq \nodes(\graph) \mbox{ s.t. } \Gamma \models \alpha} \prob(\Gamma).$$
We say that an agent believes a term $\alpha$ to some degree if $\prob(\alpha) > 0.5$,
disbelieves $\alpha$ to some degree if $\prob(\alpha) < 0.5$,
and neither believes nor disbelieves $\alpha$ when $\prob(\alpha) = 0.5$. 
Please note in this notation, $\prob(\xa)$ stands for the probability of a simple term $\xa$ (i.e. sum of probabilities 
of all sets containing $\xa$), which is different 
from $\prob(\{\xa\})$, i.e. the probability assigned to set $\{\xa\}$.  

\begin{definition}
\label{def:sat:valued}
Let $\varphi$ be an epistemic atom $\pprob(\alpha_i) \star_1 \ldots \star_{k-1} \pprob(\alpha_{k}) \ineq b$. 
The {\bf satisfying distributions} of $\varphi$ are defined  as 
$\sat(\varphi) = \{\prob \in \dist(\graph) \mid \prob(\alpha_i) \star_1 \ldots \star_{k-1} \prob(\alpha_{k}) \ineq b\}$.
 
The set of satisfying distributions for a given epistemic formula is as follows 
where $\phi$ and $\psi$ are epistemic formulae:  
\begin{itemize}
\item  $\sat(\phi\land\psi) = \sat(\phi) \cap \sat(\psi)$;
\item $\sat(\phi\lor\psi) = \sat(\phi) \cup \sat(\psi)$; 
and 
\item $\sat(\neg\phi) = \sat(\top) \setminus \sat(\phi)$.
\end{itemize}
For a set of epistemic formulae $\Phi = \{ \phi_1,\ldots, \phi_n\}$,
the set of satisfying distributions is $\sat(\Phi)$ 
=  $\sat(\phi_1) \cap \ldots \cap \sat(\phi_n)$.
\end{definition}
\begin{example}
Consider a graph with nodes $\{\xa,\xb,\xc,\xd\}$ and the formulae 
$\psi \formis  \pprob(\xa\land \xb) - \pprob(\xc) - \pprob(\xd) \! > \! 0 \, \land \, \pprob(\xd) \!> \!0$. 
A probability distribution $\prob_1$ with $\prob_1(\xa \wedge \xb) =0.7$, $\prob_1(\xc) = 0.1$ and 
$\prob_1(\xd) = 0.1$ is in $\sat(\psi)$. However, a distribution $\prob_2$ 
with $\prob_2(\xa \wedge \xb) = 0$ cannot satisfy $\psi$ and so  
$\prob_2 \notin \sat(\psi)$. 
\end{example}  

\subsection{Restricted Language}

The full power of the epistemic language, while useful in various scenarios, may be redundant in other. 
For instance, one of the most commonly employed tools in opinion surveys is a Likert scale, which typically 
admits from 5 to 11 possible answer options. Consequently, we would also like to consider the restricted 
epistemic language, i.e. one where the sets of values that the probability function can take on and that can appear as 
numerical values in the formulae are fixed and finite.   

We start by defining the restricted value set, which has to be closed under addition and subtraction (assuming the resulting 
value is still in the $[0,1]$ interval). We can then create subsets of this set according to a given inequality and \enquote{threshold} 
value, as well as sequences of values that can be seen as satisfying a given arithmetical formula:

\begin{definition}
A finite set of rational numbers from the unit interval $\Pi$ is a {\bf restricted value set} iff 
for any $x, y \in \Pi$ it holds that if $x+y \leq 1$, then $x + y \in \Pi$, and if $x-y \geq 0$, then $x-y \in \Pi$. 
For $\Pi \neq \emptyset$, with $\Pi^x_{\ineq} = \{ y \in \Pi \mid y \ineq x \}$ we denote the subset of $\Pi$ 
obtained according to the value $x$ and relationship $\ineq \in \operators$. 
The {\bf combination set} for a nonempty restricted value set $\Pi$ and 
a sequence of arithmetic operations $(*_1, \ldots, *_k)$ where $*_i \in \{+,-\}$ 
and $k\geq 0$ is defined as: 
\[
\Pi^{x,(*_1, \ldots, *_k)}_{\ineq} = \begin{cases}
   \{(v) \mid v \in \Pi^{x}_{\ineq} \}&	k=0 \\
             \{(v_1, \ldots, v_{k+1}) \mid v_i \in \Pi, \, v_1 *_1 \ldots *_k v_{k+1} \ineq x\} & \mbox{ otherwise } \\
\end{cases}\]
\end{definition} 

\begin{example} 
Let $\Pi_1 = \{ 0, 0.5, 0.75, 1 \}$. We can observe that it is not a restricted value set, since $0.75-0.5 = 0.25$ is missing 
from $\Pi_1$. Its modification, $\Pi_2 = \{ 0, 0.25, 0.5, 0.75, 1 \}$, is a restricted value set. Similarly, it is easy to show that 
$\Pi_3 = \{0, \frac{1}{3}, \frac{2}{3}, \frac{3}{3}\}$ and $\Pi_4 = \{0, \frac{2}{5}, \frac{4}{5}\}$ are also restricted value sets.  
The subsets of $\Pi_2$ for $x = 0.25$ under various inequalities are as follows: 
${\Pi_2}^x_{>} = \{ 0.5,0.75,1\}$, 
${\Pi_2}^x_{<} = \{ 0\}$, 
${\Pi_2}^x_{\geq} = \{0.25,0.5,0.75,1\}$, 
${\Pi_2}^x_{\leq} = \{ 0, 0.25\}$, 
${\Pi_2}^x_{\neq} = \{0, 0.5,0.75,1\}$, and
${\Pi_2}^x_{=} = \{ 0.25\}$. 

Assume we have a restricted value set $\Pi_3 = \{0,0.5,1\}$, a sequence of operations $(+, -)$, an operator $=$ 
and a value $x=1$. In order to find appropriate combination sets, 
we are simply looking for triples of values $(\tau_1,\tau_2,\tau_3)$ s.t. $x+y-z = 1$. 
 By collecting such 
combinations of values from $\Pi_3$, we obtain six possible value sequences, i.e. 
${\Pi_3}^{1,(+,-)}_{=} = \{
 (	0	,	1	,	0	)$, 
$(	0.5	,	0.5	,	0	)$,
$(	0.5	,	1	,	0.5	)$,
$(	1	,	0	,	0	)$, 
$(	1	,	0.5	,	0.5	)$,
$(	1	,	1	,	1	) \}$. 
\end{example}

On the basis of a given restricted value set, we can now constrain our approach both in a syntactic 
and in a semantic way: 

\begin{definition} 
Let $\Pi$ be a restricted value set. 
An epistemic formula $\psi \in \eformulae(\graph)$ is {\bf restricted} w.r.t. $\Pi$ iff ${\sf Num}(\psi) \subseteq \Pi$. 
Let $\eformulae(\graph,\Pi)$ denote this set of restricted epistemic formulae.    
\end{definition}

\begin{definition}
Let $\Pi$ be a restricted value set.  
A probability distribution $\prob \in \dist(\graph)$ is {\bf restricted} w.r.t. $\Pi$ iff
 for every $X \subseteq \nodes(\graph)$, 
$\prob(X) \in \Pi$ and for every argument $\xa \in \nodes(\graph)$, $\prob(\xa) \in \Pi$. 
Let $\dist(\graph, \Pi)$ denote the set of restricted distributions of $\graph$. 
\end{definition}

\begin{definition}
\label{def:restricteddist}
Let $\Pi$ be a restricted value set. 
For $\psi \in \eformulae(\graph,\Pi)$, the {\bf restricted satisfying distribution} w.r.t. $\Pi$, denoted $\sat(\psi,\Pi)$, is
\[
\sat(\psi,\Pi) = \sat(\psi) \cap \dist(\graph,\Pi)
\]
\end{definition}

Due to the properties of $\cap$, $\cup$ and $\setminus$, we can observe that restricted satisfying distributions 
can be manipulated similarly to the unrestricted ones, i.e. the following hold for formulae $\psi$ and $\phi$:
\begin{itemize}
\item  $\sat(\phi\land\psi,\Pi) = \sat(\phi,\Pi) \cap \sat(\psi,\Pi)$;
\item $\sat(\phi\lor\psi,\Pi) = \sat(\phi,\Pi) \cup \sat(\psi,\Pi)$; 
and 
\item $\sat(\neg\phi,\Pi) = \sat(\top,\Pi) \setminus \sat(\phi,\Pi)$.
\end{itemize}
 
\begin{example} 
Let $\Pi = \{ 0, 0.5, 1 \}$.  
In the epistemic language restricted w.r.t. $\Pi$, we can only have atoms of the form 
$\beta \ineq 0$, $\beta \ineq 0.5$, and $\beta \ineq 1$, where $\beta \in \oformulae(\graph)$ and $\ineq \in \operators$. 
From these atoms we compose epistemic formulae 
using the Boolean connectives.  Let us assume we have a formula $\pprob(\xa) + \pprob(\xb) \leq 0.5$ on a graph 
s.t. $\{\xa, \xb\} = \nodes(\graph)$. 
We can create three
restricted satisfying distributions, 
namely $\prob_1$ s.t. $\prob_1(00) = 1$,  $\prob_1(10) = 0$, $\prob_1(01) = 0$ and $\prob_1(11) = 0$,
$\prob_2$ s.t. $\prob_2(00) = 0.5$,  $\prob_2(10) = 0.5$, $\prob_2(01) = 0$ and $\prob_2(11) = 0$,
and $\prob_3$ s.t. $\prob_3(00) = 0.5$,  $\prob_3(10) = 0$, $\prob_3(01) = 0.5$ and $\prob_3(11) = 0$.
\end{example}

We can observe that depending on the graph and the restricted value set, it might not be possible to create a restricted distribution. 
For example, we can consider the set $\{0, 0.9\}$. Although it meets the restricted value set requirements, 
there is no way to add or subtract $0$ and $0.9$ such that they add up to $1$. This means that it is not possible 
to define a distribution with this set. Thus, it makes sense to consider also a stronger version of $\Pi$ that prevents 
such scenarios:

\begin{definition}
Let $\Pi$ be a restricted value set. 
$\Pi$ is {\bf reasonable} iff for every graph $\graph$ s.t. $\nodes(\graph) \neq \emptyset$, 
$\dist(\graph, \Pi) \neq \emptyset$. 
\end{definition}

The following simple properties allow us to easily detect reasonable restricted sets: 

\begin{restatable}{lemma}{reasonablerestricted}
\label{lemma:reasonablerestricted}
The following hold:
\begin{itemize}
\item If $\Pi$ is a nonempty restricted value set, then $0 \in \Pi$. 
\item If $\Pi$ is a reasonable restricted value set, then $0 \in \Pi$. 
\item A restricted value set $\Pi$ is reasonable iff $1 \in \Pi$.  
\end{itemize}
\end{restatable}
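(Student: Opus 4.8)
The plan is to prove the three bullets in order, letting each serve as a stepping stone for the next. For the first bullet I would exploit closure under subtraction directly: since $\Pi$ is nonempty, fix any $x \in \Pi$; then $x - x = 0 \geq 0$, so the defining property of a restricted value set (closure under subtraction whenever the result is non-negative) forces $0 = x - x \in \Pi$. For the second bullet I would first observe that a reasonable $\Pi$ cannot be empty: if it were, then $\dist(\graph,\Pi)$ would be empty for every graph, because a restricted distribution must take all of its values in $\Pi$, and this would contradict reasonableness already on any single-node graph. Hence a reasonable $\Pi$ is nonempty, and the first bullet applies to give $0 \in \Pi$.

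For the third bullet I would argue the two implications separately. For the forward direction, assume $\Pi$ is reasonable and instantiate the definition on the simplest possible witness, a graph $\graph$ with $\nodes(\graph) = \{\xa\}$ consisting of a single node. Reasonableness yields some $\prob \in \dist(\graph,\Pi)$. The only subsets of $\{\xa\}$ are $\emptyset$ and $\{\xa\}$, so the normalisation condition on belief distributions gives $\prob(\emptyset) + \prob(\{\xa\}) = 1$, with both summands lying in $\Pi$. Since their sum equals $1$ and $1 \leq 1$, closure under addition yields $1 = \prob(\emptyset) + \prob(\{\xa\}) \in \Pi$, as required.

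For the backward direction, assume $1 \in \Pi$. Then $\Pi$ is nonempty, so by the first bullet $0 \in \Pi$ as well, and I would exhibit an explicit restricted distribution on an arbitrary nonempty graph $\graph$ by concentrating all mass on the empty interpretation: set $\prob(\emptyset) = 1$ and $\prob(X) = 0$ for every other $X \subseteq \nodes(\graph)$. This is a genuine belief distribution, every $\prob(X) \in \{0,1\} \subseteq \Pi$, and for each argument $\xa$ we have $\prob(\xa) = 0 \in \Pi$ since no set carrying positive mass contains $\xa$; hence $\prob \in \dist(\graph,\Pi)$ and $\Pi$ is reasonable. None of these steps is technically hard; the one point I would take care over is the forward direction of the third bullet, where one must recall that membership in $\dist(\graph,\Pi)$ constrains both the subset-probabilities $\prob(X)$ \emph{and} the derived argument-probabilities $\prob(\xa)$, and then apply the addition-closure property precisely at the boundary value $1$ — which is exactly why a one-node graph is already a sufficient witness.
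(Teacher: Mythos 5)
Your proof is correct and follows essentially the same route as the paper's: closure under subtraction for the first bullet, nonemptiness of a restricted distribution for the second, and for the third the point-mass distribution on $\emptyset$ in one direction and closure under addition applied to the normalisation condition in the other. The only difference is cosmetic — you instantiate reasonableness at a one-node graph so a single addition suffices, whereas the paper telescopes the partial sums $x_1 + \cdots + x_n = 1$ over an arbitrary nonempty graph; both are fine.
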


It can happen that the combination sets or value subsets of $\Pi$ are empty. However, as we can see, this occurs 
only if particular conditions are met:

\begin{restatable}{proposition}{restrnonempty}
\label{restrnonempty}
Let $\Pi$ be a nonempty restricted value set, $x \in \Pi$ a value, $\ineq \in \operators$ an inequality, and 
$(*_1, \ldots, *_k)$ a sequence of operators where $*_i \in \{+,-\}$ and $k\geq 0$. 
Let $max(\Pi)$ denote the maximal value of $\Pi$. The following hold:
 
\begin{itemize}
\item $\Pi^x_\ineq = \emptyset$ if and only if: 
	\begin{enumerate} 
	\item $\Pi = \{0\}$ and $\ineq = \neq$, or
	\item $\ineq$ is $>$ and $x = max(\Pi)$, or
	\item $\ineq$ is $<$ and $x = 0$. 
	\end{enumerate}

\item $\Pi^{x,(*_1, \ldots, *_k)}_{\ineq} = \emptyset$ if and only if:
	\begin{enumerate} 
		\item $k=0$ and $\Pi^x_\ineq = \emptyset$, or
		\item $k >0$, $\ineq$ is $>$, $x = max(\Pi)$ and for no $*_i$, $*_i = +$, or
		\item $k >0$, $\ineq$ is $>$ and $\Pi = \{0\}$, or
		\item $k >0$, $\ineq$ is $<$, $x = 0$ and for no $*_i$, $*_i = -$, or
		\item $k >0$, $\ineq$ is $<$ and $\Pi = \{0\}$.
		\item $k >0$, $\ineq $ is $\neq$, $\Pi = \{0\}$.
	\end{enumerate}
\end{itemize}

\end{restatable}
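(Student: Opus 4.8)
The plan is to prove both biconditionals by case analysis on $\ineq \in \operators$, using throughout that $0 \in \Pi$ (Lemma~\ref{lemma:reasonablerestricted}) and that the threshold $x$ itself lies in $\Pi$; since every element of $\Pi$ is nonnegative and $0 \in \Pi$, the value $0$ is the minimum and I write $M = max(\Pi)$ for the maximum. For the first claim, $\Pi^x_\ineq = \{y \in \Pi \mid y \ineq x\}$, the operators $=$, $\geq$, $\leq$ are always witnessed by $y = x$ and so never give emptiness. For $\neq$, emptiness forces $\Pi = \{x\}$, which together with $0 \in \Pi$ yields $x = 0$ and $\Pi = \{0\}$ (condition~1); for $>$ it means no element exceeds $x$, i.e.\ $x = max(\Pi)$ (condition~2); and for $<$, since $0$ is the minimum, it means $x = 0$ (condition~3).

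For the second claim the case $k = 0$ is immediate from the definition, as $\Pi^{x,()}_\ineq$ is empty exactly when $\Pi^x_\ineq$ is (condition~1). For $k > 0$ I would analyse the signed sum $s = v_1 + \sum_{i=1}^{k} \sigma_i v_{i+1}$, where $\sigma_i = +1$ if $*_i = +$ and $\sigma_i = -1$ if $*_i = -$, as the $v_i$ range independently over $\Pi$. Writing $p$ and $q$ for the number of $+$ and $-$ operators among the $*_i$, the attained maximum and minimum of $s$ are $s_{max} = (1+p)M$, obtained by setting the positively signed $v_i$ to $M$ and the rest to $0$, and $s_{min} = -qM$. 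It is worth flagging that the definition constrains only each $v_i \in \Pi$ and says nothing about $s$, so $s$ may legitimately leave $[0,1]$; this is precisely why the unclipped endpoints $(1+p)M$ and $-qM$, rather than values clipped to the unit interval, are the quantities that matter.

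Each operator is then settled from these endpoints. Taking $v_1 = x$ and all other $v_i = 0$ gives $s = x$, so $=$, $\geq$, $\leq$ are always satisfiable; perturbing one further $v_i$ up to $M$ (possible since $k > 0$) gives $s = x \pm M \neq x$ whenever $M > 0$, so $\neq$ is empty exactly when $\Pi = \{0\}$ (condition~6). Emptiness for $>$ is equivalent to $s_{max} \leq x$: if some $*_i = +$ then $s_{max} \geq 2M > x$ once $M > 0$, whereas if no $*_i = +$ then $s_{max} = M$ and $s_{max} \leq x$ forces $x = max(\Pi)$; adjoining the degenerate case $\Pi = \{0\}$ yields conditions~2 and~3. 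The case $<$ is symmetric, using $s_{min} = -qM$ and the role of $q$, and gives conditions~4 and~5.

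The one genuine subtlety — and the step I would treat most carefully — is reconciling these derived characterizations with the stated lists. My analysis naturally isolates the degenerate case $\Pi = \{0\}$ from the generic one, while the proposition's conditions overlap (e.g.\ condition~2 does not explicitly exclude $\Pi = \{0\}$). Matching them is a small Boolean simplification of the form $A \lor (B \land \lnot A) = A \lor B$, which I would spell out for the $>$ and $<$ cases to confirm the two formulations agree; the remaining effort is only routine verification that the endpoint configurations use solely values from $\Pi$.
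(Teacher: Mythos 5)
Your proof is correct and follows essentially the same route as the paper's: both directions reduce to checking whether extremal tuples built from $0 \in \Pi$ and $max(\Pi)$ can witness the inequality, with the trivial tuple $(x,0,\ldots,0)$ disposing of $=,\geq,\leq$ and the degenerate case $\Pi=\{0\}$ handled separately. Your explicit endpoints $s_{max}=(1+p)M$ and $s_{min}=-qM$ merely package the paper's case-by-case witness constructions more systematically, and your reading that the signed sum need not remain in $[0,1]$ agrees with the paper's own use of a tuple evaluating to $2\,max(\Pi)$.
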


The restricted language is appropriate for applications where a restricted set of belief values are available. 
For instance, it could be used when the beliefs in arguments are obtained from surveys using the Likert scale. 
When we consider the proof theory for constraints, the restricted language also has advantages if we want to 
harness automated reasoning with the logical statements.

\subsection{Distribution Disjunctive Normal Form}

In propositional logic, we often analyze formulae in various normal forms due to their useful properties. 
Traditional forms include the negation normal form NNF, conjunctive normal form CNF and disjunctive normal form DNF. 
Given that epistemic formulae extend propositional logic, they can also be transformed into various normal forms 
if we look at epistemic atoms as propositions. In principle, for every formula $\varphi$ we can find at least 
one formula $\varphi'$ that is in NNF, CNF or DNF and s.t. $\sat(\varphi) = \sat(\varphi')$.   
 
However, further notions can be introduced once we take the meaning of the atoms into account. 
In this section we introduce a normal form for epistemic formulae from which it is easy to 
read if and how a given formula can be satisfied. 
Let us start by observing that for every probability distribution, we can create an epistemic formula describing precisely that 
distribution. As we may remember, a probability distribution maps sets of arguments to probabilities. 
For every such set, we can create a term (i.e. a propositional formula over arguments) describing it, 
where arguments contained in the set appear as positive literals and those not in the set appear as negative literals. 
This brings us to the notion of argument complete terms: 
 
\begin{definition}
Let $\langle \xa_1,\ldots,\xa_n \rangle$ be the order of arguments in $\graph$ and $\varphi \in \terms(\graph)$
a term. Then $\varphi$ 
is $\textbf{argument complete}$ iff it is of the form $\alpha_1 \land \ldots \land \alpha_n$, 
where $\alpha_i = \xa_i$ or $\alpha_i = \neg \xa_i$. With $\argcomplete(\graph) = \{c_1, \ldots, c_j\}$ we 
denote the set of all complete terms on $\graph$, where $j = 2^n$. 
\end{definition} 

\begin{example}
Let us consider a graph with arguments $\xa$, $\xb$ and $\xc$ and ordering $\langle \xa, \xb, \xc \rangle$. 
We can create the following argument complete terms: 
$\neg \xa \land \neg \xb \land \neg \xc$, 
$\xa \land \neg \xb \land \neg \xc$, 
$\neg \xa \land  \xb \land \neg \xc$, 
$\neg \xa \land \neg \xb \land  \xc$, 
$ \xa \land  \xb \land \neg \xc$, 
$\neg \xa \land  \xb \land  \xc$, 
$ \xa \land \neg \xb \land  \xc$, 
and $ \xa \land \xb \land \xc$.
\end{example}

By using atoms containing only complete terms, we can create a formula describing precisely one distribution: 

\begin{definition}
Let $\prob \in \dist(\graph)$ be a probability distribution and $\argcomplete(\graph) = \{c_1, \ldots, c_j\}$ 
the collection of all argument complete terms for $\graph$. 
The \textbf{epistemic formula associated with} 
$\prob$ is $\varphi^\prob = \pprob(c_1) = x_1 \land \pprob(c_2) = x_2 \land \ldots \land \pprob(c_j) = x_j$, 
where $x_i = \prob(c_i)$. 
\end{definition}

\begin{restatable}{proposition}{satdistrdnf}
\label{satdistrdnf}
Let $\prob \in \dist(\graph)$ be a probability distribution and $\varphi^\prob$ its associated epistemic formula. 
Then $\{\prob\} = \sat(\varphi^\prob)$. 
\end{restatable}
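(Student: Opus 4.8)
The plan is to leverage the fact that every argument complete term is satisfied by exactly one interpretation, so that the associated formula pins down the belief assigned to each subset of nodes individually. First I would observe that if $c_i = \alpha_1 \land \ldots \land \alpha_n$ with each $\alpha_k \in \{\xa_k, \neg\xa_k\}$, then the only $\Gamma \subseteq \nodes(\graph)$ with $\Gamma \models c_i$ is the set $\Gamma_i = \{\xa_k \mid \alpha_k = \xa_k\}$, since conjoining a positive or negative literal for every argument forces membership or non-membership for each $\xa_k$. Consequently the map $c_i \mapsto \Gamma_i$ is a bijection between $\argcomplete(\graph)$ and $2^{\nodes(\graph)}$: both sets have $2^n$ elements and the map is injective. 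From the definition of the probability of a term as the sum of the beliefs of its models, this immediately gives $\prob(c_i) = \prob(\Gamma_i)$ for every distribution $\prob$, because $\Gamma_i$ is the unique model of $c_i$.

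With this correspondence in hand, the two inclusions are routine. For $\prob \in \sat(\varphi^\prob)$, each conjunct $\pprob(c_i) = x_i$ is satisfied by $\prob$ precisely because $x_i$ was defined to be $\prob(c_i)$; applying the clause $\sat(\phi \land \psi) = \sat(\phi) \cap \sat(\psi)$ of Definition \ref{def:sat:valued} inductively across the $j$ conjuncts yields $\prob \in \sat(\varphi^\prob)$. For the reverse inclusion, take any $\prob' \in \sat(\varphi^\prob)$. Then $\prob'$ satisfies every conjunct, so $\prob'(c_i) = x_i = \prob(c_i)$ for all $i$. Using the identity $\prob(c_i) = \prob(\Gamma_i)$ (and its analogue for $\prob'$), I obtain $\prob'(\Gamma_i) = \prob(\Gamma_i)$ for every $i$, and since the $\Gamma_i$ exhaust $2^{\nodes(\graph)}$, this shows $\prob'$ and $\prob$ agree on every subset of nodes, hence $\prob' = \prob$. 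Combining the inclusions gives $\{\prob\} = \sat(\varphi^\prob)$.

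The only step requiring genuine care — and the one most likely to hide a gap — is the identity $\prob(c_i) = \prob(\Gamma_i)$, i.e.\ keeping straight the very distinction the paper emphasizes between the probability of a term $\prob(c_i)$ and the belief $\prob(\Gamma_i)$ assigned to a single set. Everything reduces to verifying that an argument complete term has exactly one model; once that is nailed down, the remainder is just bookkeeping about intersections of the satisfaction sets of equality atoms.
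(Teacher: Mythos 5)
Your proposal is correct and follows essentially the same route as the paper's own proof: both rest on the observation that each argument complete term has exactly one model and that these terms are in bijection with the subsets of $\nodes(\graph)$, so the conjunction of equality atoms pins down the value of the distribution on every subset. You simply spell out the two inclusions more explicitly than the paper does, which is fine.
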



\begin{example}
\label{ex:ddnftabs}
Assume we a have a graph s.t. $\{\xa, \xb\} = \nodes(\graph)$. 
Below we have tabulated some of the possible distributions for our graph and their associated formulae. 
\[
\begin{array}{c|cccc|c} 
	&	\emptyset	&	\{\xa\}	&	\{\xb\}	&	\{\xa,\xb\}	&		 \varphi^{P_i}							\\
\hline
\prob_1	&	0	&	1	&	0	&	0	&	\pprob(\neg \xa \land \neg \xb) =	0	\land	\pprob( \xa \land \neg \xb) =	1	\land	\pprob(\neg \xa \land \xb) =	0	\land	\pprob( \xa \land \xb) =	0	\\
\prob_2	&	0	&	0	&	1	&	0	&	\pprob(\neg \xa \land \neg \xb) =	0	\land	\pprob( \xa \land \neg \xb) =	0	\land	\pprob(\neg \xa \land \xb) =	1	\land	\pprob( \xa \land \xb) =	0	\\
\prob_3	&	0	&	0	&	0	&	1	&	\pprob(\neg \xa \land \neg \xb) =	0	\land	\pprob( \xa \land \neg \xb) =	0	\land	\pprob(\neg \xa \land \xb) =	0	\land	\pprob( \xa \land \xb) =	1	\\
\prob_4	&	0	&	0.5	&	0	&	0.5	&	\pprob(\neg \xa \land \neg \xb) =	0	\land	\pprob( \xa \land \neg \xb) =	0.5	\land	\pprob(\neg \xa \land \xb) =	0	\land	\pprob( \xa \land \xb) =	0.5	\\
\prob_5	&	0	&	0	&	0.5	&	0.5	&	\pprob(\neg \xa \land \neg \xb) =	0	\land	\pprob( \xa \land \neg \xb) =	0	\land	\pprob(\neg \xa \land \xb) =	0.5	\land	\pprob( \xa \land \xb) =	0.5	\\
\prob_6	&	0	&	0.5	&	0.5	&	0	&	\pprob(\neg \xa \land \neg \xb) =	0	\land	\pprob( \xa \land \neg \xb) =	0.5	\land	\pprob(\neg \xa \land \xb) =	0.5	\land	\pprob( \xa \land \xb) =	0	\\
\prob_7	&	0.1	&	0.3	&	0.2	&	0.4	&	\pprob(\neg \xa \land \neg \xb) =	0.1	\land	\pprob( \xa \land \neg \xb) =	0.3	\land	\pprob(\neg \xa \land \xb) =	0.2	\land	\pprob( \xa \land \xb) =	0.4	\\
\end{array}
\] 
\end{example}

Consequently, for every epistemic formula $\varphi$, we can create a semantically equivalent formula $\varphi'$ that 
is built from the formulae associated with the distributions satisfying $\varphi$. 
We refer to this new formula 
as the distribution disjunctive normal form. Given the fact that an epistemic formula 
can potentially be satisfied by infinitely many distributions, we only consider this form in the context of restricted reasoning. 
 
\begin{definition}  
Let $\Pi$ be a reasonable restricted value set, 
$\psi \in \eformulae(\graph, \Pi)$ be a restricted epistemic formula and $\{\prob_1,\ldots,\prob_n\} = \sat(\psi, \Pi)$ 
the set of distributions satisfying $\psi$ under $\Pi$. 
The \textbf{distribution disjunctive normal form} (abbreviated DDNF) of $\psi$ is $\bot$ iff 
$\sat(\psi, \Pi) = \emptyset$, and  $\varphi^{\prob_1} \lor \varphi^{\prob_2} \ldots \lor \varphi^{\prob_n}$ otherwise, 
where $\varphi^{\prob_i}$ is the epistemic formula associated with $\prob_i$.   
\end{definition} 

\begin{restatable}{proposition}{satdistrdnfform}
\label{satdistrdnfform}
Let $\Pi$ be a reasonable restricted value set, 
$\psi \in \eformulae(\graph, \Pi)$ be a restricted epistemic formula and $\varphi$ its distribution 
disjunctive normal form. Then $\sat(\psi, \Pi) = \sat(\varphi, \Pi)$. 
\end{restatable}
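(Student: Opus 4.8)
The plan is to split into the two cases of the DDNF definition and reduce everything to Proposition~\ref{satdistrdnf} together with the distributivity of restricted satisfaction over $\lor$ that was recorded just after Definition~\ref{def:restricteddist}.

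First I would dispose of the degenerate case. If $\sat(\psi,\Pi) = \emptyset$, then by definition $\varphi = \bot$, and since $\sat(\bot) = \emptyset$ we get $\sat(\bot,\Pi) = \sat(\bot) \cap \dist(\graph,\Pi) = \emptyset = \sat(\psi,\Pi)$, so the equality holds trivially.

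For the main case, write $\sat(\psi,\Pi) = \{\prob_1,\ldots,\prob_n\}$; note this set is finite because $\Pi$ is finite and $\graph$ has finitely many arguments, so $\dist(\graph,\Pi)$ is finite and the disjunction is a genuine (finite) epistemic formula. By construction $\varphi = \varphi^{\prob_1} \lor \ldots \lor \varphi^{\prob_n}$. Applying repeatedly the identity $\sat(\phi_1 \lor \phi_2, \Pi) = \sat(\phi_1,\Pi) \cup \sat(\phi_2,\Pi)$, I would obtain $\sat(\varphi,\Pi) = \bigcup_{i=1}^{n} \sat(\varphi^{\prob_i},\Pi)$. Then, unfolding the restricted satisfaction according to Definition~\ref{def:restricteddist} and invoking Proposition~\ref{satdistrdnf}, each disjunct satisfies $\sat(\varphi^{\prob_i},\Pi) = \sat(\varphi^{\prob_i}) \cap \dist(\graph,\Pi) = \{\prob_i\} \cap \dist(\graph,\Pi)$.

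The only point that needs care, and which I regard as the crux rather than a genuine obstacle, is checking that this intersection returns $\{\prob_i\}$ rather than $\emptyset$: this holds precisely because each $\prob_i$ lies in $\sat(\psi,\Pi) = \sat(\psi) \cap \dist(\graph,\Pi)$ and is therefore already a restricted distribution, i.e. $\prob_i \in \dist(\graph,\Pi)$. In other words, the restriction $\cap\,\dist(\graph,\Pi)$ never discards the very distributions that generated the disjuncts. Hence $\sat(\varphi^{\prob_i},\Pi) = \{\prob_i\}$, and taking the union over $i$ yields $\sat(\varphi,\Pi) = \{\prob_1,\ldots,\prob_n\} = \sat(\psi,\Pi)$, as required.
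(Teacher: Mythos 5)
Your proof is correct and follows essentially the same route as the paper's: dispose of the $\sat(\psi,\Pi)=\emptyset$ case, distribute restricted satisfaction over the disjunction, apply Proposition~\ref{satdistrdnf} to each disjunct, and observe that each $\prob_i$ already lies in $\dist(\graph,\Pi)$ so the restriction discards nothing. The added remark on finiteness of the disjunction is a harmless extra detail the paper leaves implicit.
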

 
\begin{example}
\label{ex:ddnf1}
Let us continue Example \ref{ex:ddnftabs} and assume we have an epistemic atom $\pprob(\xa \lor \xb) > 0.5$ 
and a reasonable restricted value set $\Pi = \{0, 0.5, 1\}$.
Distributions $\prob_1$ to $\prob_6$ are the restricted satisfying distributions of our formula
and the DDNF associated with $\pprob(\xa \lor \xb) > 0.5$ is 
$\varphi^{\prob_1} \lor \varphi^{\prob_2} \lor \varphi^{\prob_3} \lor \varphi^{\prob_4} \lor 
\varphi^{\prob_5} \lor \varphi^{\prob_6}$. 
%
\end{example}

We will harness the DDNF when we provide correctness results for the consequence relation for the epistemic 
language in Section \ref{section:consequencerelation}.

\section{Reasoning with the Epistemic Language}
\label{section:ReasoningConstraints}

Previously, we have considered the syntax and semantics of our epistemic language. 
However, we have not yet explained how two 
epistemic formulae can be related based on their satisfying distributions, or what can be logically inferred from a given 
formula. We would like to address this here by first introducing 
the notion of epistemic entailment and then by providing a consequence relation, with the latter primarily focused on 
the restricted language. 
From now on, unless stated otherwise, we will assume that the 
argumentation framework we are dealing with is finite and nonempty (i.e. the set of arguments in the graph 
is finite and nonempty). 

\subsection{Epistemic Entailment}

Let us start with the unrestricted epistemic entailment relation, which is defined in the following manner: 

\begin{definition} 
Let $\{\phi_1,\ldots,\phi_n\} \subseteq  \eformulae(\graph)$ be a set of epistemic formulae, and 
$\psi \in \eformulae(\graph)$ be an epistemic formula.
The {\bf epistemic entailment relation}, denoted $\VDash$, is defined as follows.
\[
\{\phi_1,\ldots,\phi_n\}\VDash\psi
\mbox{ iff } \sat(\{\phi_1,\ldots,\phi_n\}) \subseteq \sat(\psi)
\]
\end{definition}

\begin{example}
The following are some instances of epistemic entailment.
\begin{itemize}
\item $\{ \pprob(\xa) < 0.2 \} \VDash \pprob(\xa) < 0.3$ 
\item $\{ \pprob(\xa) < 0.2 \} \VDash \pprob(\xa \land \xb) < 0.2$ 
\item $\{ \pprob(\xa) < 0.9, \pprob(\xa) > 0.7 \} \VDash \pprob(\xa) \geq 0.7 \land \neg (\pprob(\xa) > 0.9)$ 
\end{itemize}
\end{example}
 
Let us now focus on reasoning in the restricted scenario, which can be defined 
similarly to the standard epistemic entailment through the use of restricted satisfying distributions: 
 
\begin{definition} 
Let $\Pi$ be a restricted value set,
$\{\phi_1,\ldots,\phi_n\} \subseteq  \eformulae(\graph,\Pi)$ a set of epistemic formulae, and 
$\psi \in \eformulae(\graph)$ an epistemic formula.
The {\bf restricted epistemic entailment relation} w.r.t. $\Pi$, denoted $\VDash_{\Pi}$, is defined as follows.
\[
\{\phi_1,\ldots,\phi_n\}\VDash_{\Pi} \psi
\mbox{ iff } \sat(\{\phi_1,\ldots,\phi_n\},\Pi) \subseteq \sat(\psi,\Pi)
\]
\end{definition}

\begin{example}
Consider $\Pi = \{ 0, 0.25, 0.5, 0.75, 1 \}$
and restricted epistemic formulae $\pprob(\xa) + \pprob(\neg \xb) \leq 1$ and $\pprob(\xa) + \pprob(\neg \xb) \leq 0.75$. 
It holds that 
\[
\{  \pprob(\xa) + \pprob(\neg \xb) \leq 0.75 \} \VDash_{\Pi} \pprob(\xa) + \pprob(\neg \xb) \leq 1
\]
\end{example} 

Let us now discuss how the restricted satisfying distributions and the restricted entailment are related to the 
unrestricted versions. 
First of all, by Definition \ref{def:restricteddist}, we can observe that every restricted satisfying distribution 
for an epistemic formula is also a satisfying distribution. Thus, we can easily show that epistemic entailment 
implies restricted entailment:

\begin{restatable}{proposition}{temp}
\label{temp1}
Let $\Pi$ be a restricted value set, $\Phi \subseteq  \eformulae(\graph,\Pi)$ a set of epistemic formulae, and 
$\psi \in \eformulae(\graph)$ an epistemic formula.  
If $\Phi \VDash \psi$ then $\Phi \VDash_{\Pi} \psi$. 
\end{restatable}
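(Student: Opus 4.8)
The plan is to reduce both entailment relations to plain set inclusions among satisfying distributions and then exploit the fact that restriction is just intersection with the fixed set $\dist(\graph,\Pi)$. First I would unfold the hypothesis: $\Phi \VDash \psi$ means exactly $\sat(\Phi) \subseteq \sat(\psi)$, and the goal $\Phi \VDash_{\Pi} \psi$ means $\sat(\Phi,\Pi) \subseteq \sat(\psi,\Pi)$. So everything comes down to comparing the restricted satisfying sets with their unrestricted counterparts.

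The one ingredient that needs a line of justification is the set-level identity $\sat(\Phi,\Pi) = \sat(\Phi) \cap \dist(\graph,\Pi)$. For a single formula this is Definition~\ref{def:restricteddist} verbatim. For a set $\Phi = \{\phi_1,\ldots,\phi_n\}$, I would use the conjunction rule for restricted satisfaction noted just after Definition~\ref{def:restricteddist}, namely $\sat(\phi\land\psi,\Pi)=\sat(\phi,\Pi)\cap\sat(\psi,\Pi)$, together with the unrestricted analogue $\sat(\Phi)=\bigcap_i \sat(\phi_i)$, to compute $\sat(\Phi,\Pi) = \bigcap_i \sat(\phi_i,\Pi) = \bigcap_i(\sat(\phi_i)\cap\dist(\graph,\Pi)) = \bigl(\bigcap_i\sat(\phi_i)\bigr)\cap\dist(\graph,\Pi) = \sat(\Phi)\cap\dist(\graph,\Pi)$. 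The same identity for the single conclusion $\psi$ is immediate from Definition~\ref{def:restricteddist}.

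With these identities in hand, the conclusion follows by monotonicity of intersection: intersecting both sides of the hypothesis $\sat(\Phi)\subseteq\sat(\psi)$ with $\dist(\graph,\Pi)$ yields $\sat(\Phi)\cap\dist(\graph,\Pi)\subseteq\sat(\psi)\cap\dist(\graph,\Pi)$, which by the identities just established is exactly $\sat(\Phi,\Pi)\subseteq\sat(\psi,\Pi)$, i.e.\ $\Phi\VDash_{\Pi}\psi$. There is no genuine obstacle here: the result is essentially the observation, already flagged in the excerpt, that every restricted satisfying distribution is a satisfying distribution. The only point requiring care is the bookkeeping that turns the single-formula definition of $\sat(\cdot,\Pi)$ into the set-indexed identity above, so I would state that reduction explicitly before invoking monotonicity.
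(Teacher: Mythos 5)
Your proposal is correct and follows essentially the same route as the paper's own proof: unfold both entailments as inclusions of satisfying sets, intersect the hypothesis $\sat(\Phi)\subseteq\sat(\psi)$ with $\dist(\graph,\Pi)$, and identify the resulting sets with the restricted ones. The extra bookkeeping you supply for extending $\sat(\cdot,\Pi)=\sat(\cdot)\cap\dist(\graph,\Pi)$ from single formulae to sets is a harmless elaboration of a step the paper leaves implicit.
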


In principle, we can observe that a \enquote{less} restricted entailment implies a \enquote{more} restricted one: 

\begin{restatable}{proposition}{tempmore}
\label{temp2}
Let $\Pi_1 \subseteq \Pi_2$ be restricted value sets, $\Phi \subseteq  \eformulae(\graph,\Pi_1)$ a set of epistemic formulae, and 
$\psi \in \eformulae(\graph)$ an epistemic formula.  
If $\Phi \VDash_{\Pi_2} \psi$ then $\Phi \VDash_{\Pi_1} \psi$. 
\end{restatable}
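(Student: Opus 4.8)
The plan is to reduce the whole statement to a single monotonicity observation about restricted distributions, followed by a routine element chase through the definitions.

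First I would record the key fact that, because $\Pi_1 \subseteq \Pi_2$, every distribution that is restricted with respect to $\Pi_1$ is automatically restricted with respect to $\Pi_2$; that is, $\dist(\graph, \Pi_1) \subseteq \dist(\graph, \Pi_2)$. This is immediate from the definition of a restricted distribution: if $\prob(X) \in \Pi_1$ for every $X \subseteq \nodes(\graph)$ and $\prob(\xa) \in \Pi_1$ for every argument $\xa$, then these same values already lie in the larger set $\Pi_2$, so $\prob$ is restricted with respect to $\Pi_2$ as well.

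Next I would unfold the relevant definitions. Using the manipulation rules for restricted satisfying distributions noted after Definition \ref{def:restricteddist} (in particular that $\sat(\cdot,\Pi)$ commutes with conjunction), for a set $\Phi = \{\phi_1,\ldots,\phi_n\}$ we have $\sat(\Phi,\Pi) = \sat(\Phi) \cap \dist(\graph,\Pi)$, and likewise $\sat(\psi,\Pi) = \sat(\psi) \cap \dist(\graph,\Pi)$. Recall also that $\Phi \VDash_{\Pi} \psi$ means precisely the inclusion $\sat(\Phi,\Pi) \subseteq \sat(\psi,\Pi)$.

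Then I would carry out the element chase. Take an arbitrary $\prob \in \sat(\Phi,\Pi_1)$, so that $\prob \in \sat(\Phi)$ and $\prob \in \dist(\graph,\Pi_1)$. By the monotonicity fact, $\prob \in \dist(\graph,\Pi_2)$, hence $\prob \in \sat(\Phi) \cap \dist(\graph,\Pi_2) = \sat(\Phi,\Pi_2)$. Applying the hypothesis $\Phi \VDash_{\Pi_2} \psi$ yields $\prob \in \sat(\psi,\Pi_2) = \sat(\psi) \cap \dist(\graph,\Pi_2)$, so in particular $\prob \in \sat(\psi)$. Since we still have $\prob \in \dist(\graph,\Pi_1)$, it follows that $\prob \in \sat(\psi) \cap \dist(\graph,\Pi_1) = \sat(\psi,\Pi_1)$. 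As $\prob$ was arbitrary, this gives $\sat(\Phi,\Pi_1) \subseteq \sat(\psi,\Pi_1)$, i.e. $\Phi \VDash_{\Pi_1} \psi$. I do not anticipate any genuine obstacle: the only point needing care is tracking the two value sets and applying the hypothesis in the correct direction, namely moving \emph{up} to $\Pi_2$ to invoke it and then coming back \emph{down} to $\Pi_1$. The intuition is simply that shrinking the value set shrinks the admissible distributions of premises and conclusion compatibly, so an entailment holding over the larger set is inherited by the smaller one.
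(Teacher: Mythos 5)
Your proof is correct and follows essentially the same route as the paper's: both hinge on the monotonicity fact $\dist(\graph,\Pi_1) \subseteq \dist(\graph,\Pi_2)$ and then the set-theoretic observation that an inclusion after intersecting with the larger set of distributions persists after intersecting with the smaller one. The paper states this step abstractly for arbitrary sets $X, Y, W \subseteq Z$ whereas you run a direct element chase, but the substance is identical.
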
 
 
Note, it does not necessarily hold that if one formula follows from another in a restricted manner, then it also follows in the 
unrestricted one as illustrated below:

\begin{example}
Consider two formulae $\varphi_1: \pprob(\xa) \neq 0.5$ and $\varphi_2: \pprob(\xa) = 0 \lor \pprob(\xa) = 1$ 
and a reasonable restricted set $\Pi = \{0, 0.5, 1\}$. 
We can observe that  
$\sat(\varphi_1, \Pi) = \sat(\varphi_2, \Pi)$ and therefore $\{\varphi_1\} \VDash_{\Pi} \varphi_2$. 
However, in the unrestricted case we can consider a probability distribution $\prob$
s.t. $\prob(\xa) = 0.9$ in order to show that $\sat(\varphi_1) \nsubseteq \sat(\varphi_2)$. 
We can observe that this issue would have been bypassed if, instead of $\Pi$, we considered 
the set $\Pi_2 = \{0, 0.25, 0.5, 0.75, 1\}$, for which $\{\varphi_1\} \nVDash_{\Pi_2} \varphi_2$.
Consequently, although restricted entailment does not in general imply unrestricted entailment, 
for a given set of formulae it is possible to find such a $\Pi$ for which this property holds. 
\end{example}

The reason that an inference from the restricted entailment relation is not necessarily an inference from the 
unrestricted entailment relation is that the restricted case contains more information. 
The set $\Pi$ is extra information that restricts the possible assignments for the probability distribution. 
Indeed, it could equivalently be represented as a set of formulae that could be added to the left-hand side of the 
unrestricted entailment relation. 
This is analogous to the use of explicit formulae on the domain in order to formalise the closed world assumption in 
predicate logic \cite{Reiter1978}.


\subsection{Consequence Relation}
 \label{section:consequencerelation}

In order to provide a proof theoretic counterpart to the entailment relation, we present a consequence relation in this subsection. 
For this, we will focus on the restricted language. 

The advantage of having a consequence relation is that we can now obtain inferences from a set of epistemic formulae. This means 
we can for instance determine if one constraint is implied by another, and whether there is redundancy in a set of constraints (i.e. 
whether the set is not minimal). More generally, we will see that both the entailment and consequence relations are important in 
examining properties of epistemic graphs as covered in Section \ref{sec:epistemicgraphs}. 

Before we present the epistemic proof system, we introduce 
some subsidiary definitions associated with the arithmetic nature of operational formulae. Although 
they are not limited to restricted formulae only, we prefer to have them at hand due to the fact that we will be using 
them in our epistemic proof system:

\begin{definition}
Let $f_1 \formis  \pprob(\alpha_1) *_1 \pprob(\alpha_2) *_2 \ldots  *_{m-1} \pprob(\alpha_m)$, 
and $f_2 \formis \pprob(\beta_1) \star_1 \pprob(\beta_2) \star_2 \ldots  \star_{l-1} \pprob(\beta_l)$, 
where $\alpha_i, \beta_i \in \terms(\graph)$ 
and $*_i, \star_i \in \{+, -\}$, be operational formulae. 
 $f_1 \succeq_{su} f_2$ denotes the {\bf subject inequality relation} that holds when $f_2$ is obtained from $f_1$ by logical 
weakening of an element $\pprob(\alpha_i)$ of $f_1$ to $\pprob(\alpha'_i)$ where $\{\alpha_i\}\vdash\alpha'_i$, and all other 
elements are the same in $f_1$ and $f_2$. Additionally:
\begin{itemize}
\item with $f_1 \succeq_{su}^+ f_2$ we denote the case where $f_1 \succeq_{su} f_2$ and either $i=1$ or $*_{i-1} = +$
\item with $f_1 \succeq_{su}^- f_2$ we denote the case where $f_1 \succeq_{su} f_2$, $i > 1$ and $*_{i-1} = -$  
\end{itemize} 

Let $\varphi_1 = f_1 \ineq x$ and $\varphi_2 = f_2 \ineq x$, where $\ineq \in \operators$ and 
$x \in [0,1]$, be epistemic atoms. We say that $\varphi_1 \succeq_{su} \varphi_2$ iff $f_1 \succeq_{su} f_2$ and
 with $\varphi_1 \succeq_{su}^+ \varphi_2$ (resp. $\varphi_1 \succeq_{su}^- \varphi_2$) we denote the case where 
$f_1 \succeq_{su}^+ f_2$ (resp. $f_1 \succeq_{su}^- f_2$).
\end{definition}   

\begin{example}
The following illustrate the subject inequality relation.
\[
\begin{array}{c}
\pprob(\xb) - \pprob(\xa \land \xc)  > x  \succeq^+_{su}   \pprob(\xb \vee \xd) -\pprob(\xa \land \xc) >x  \\
 \pprob(\xb) - \pprob(\xa \land \xc \land \xe)  < x   \succeq^-_{su} \pprob(\xb) - \pprob(\xa \land \xc) < x \\
\end{array}
\]
\end{example}
 
\begin{restatable}{proposition}{propformsat}
\label{prop:formsat}
For epistemic atoms $\varphi_1= f_1 \ineq x$ and $\varphi_2 = f_2 \ineq x$ in $\eformulae(\graph,\Pi)$, 
the following hold: 
\begin{itemize} 
\item if $\varphi_1  \succeq_{su}^+ \varphi_2$ 
and $\ineq \in \{> ,\geq\}$  
then $\sat(\varphi_1) \subseteq \sat(\varphi_2)$, and if $\ineq \in \{<, \leq\}$, 
then $\sat(\varphi_2) \subseteq \sat(\varphi_1)$
  
\item if $\varphi_1  \succeq_{su}^- \varphi_2$
and $\ineq \in \{< ,\leq\}$  
then $\sat(\varphi_1) \subseteq \sat(\varphi_2)$, 
and if $\ineq \in \{>, \geq\}$, 
then $\sat(\varphi_2) \subseteq \sat(\varphi_1)$

\end{itemize}  
\end{restatable}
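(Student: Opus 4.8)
The plan is to reduce the whole statement to a single monotonicity fact about probabilities of terms, and then to read off the four set inclusions by tracking the sign of the one summand that changes.

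First I would record the key observation: for any $\prob \in \dist(\graph)$ and any terms $\alpha, \alpha'$ with $\alpha \vdash \alpha'$, we have $\prob(\alpha) \le \prob(\alpha')$. This is immediate from the definition of the probability of a term as $\prob(\alpha) = \sum_{\Gamma \models \alpha} \prob(\Gamma)$: since $\alpha \vdash \alpha'$, every model of $\alpha$ is a model of $\alpha'$, so $\{\Gamma \mid \Gamma \models \alpha\} \subseteq \{\Gamma \mid \Gamma \models \alpha'\}$, and as each $\prob(\Gamma) \ge 0$ the defining sum for $\prob(\alpha)$ is bounded above by the one for $\prob(\alpha')$.

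Next, I would fix an arbitrary $\prob \in \dist(\graph)$ and compare the real numbers $v_1$ and $v_2$ obtained by evaluating $f_1$ and $f_2$ under $\prob$, where $v_1 = \prob(\alpha_1) *_1 \cdots *_{m-1} \prob(\alpha_m)$ and $v_2$ is the analogous value for $f_2$. By the definition of $\succeq_{su}$, the two operational formulae coincide except at position $i$, where $f_1$ has $\pprob(\alpha_i)$ and $f_2$ has $\pprob(\alpha'_i)$ with $\alpha_i \vdash \alpha'_i$; all operators and all other terms agree. Hence $v_2 - v_1$ equals the isolated contribution of that single summand, namely $+(\prob(\alpha'_i) - \prob(\alpha_i))$ when the summand is added and $-(\prob(\alpha'_i) - \prob(\alpha_i))$ when it is subtracted. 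In the case $\varphi_1 \succeq_{su}^+ \varphi_2$ the summand is leading or preceded by $+$, so it contributes positively and, using $\prob(\alpha_i) \le \prob(\alpha'_i)$, we get $v_2 \ge v_1$ for every $\prob$. In the case $\varphi_1 \succeq_{su}^- \varphi_2$ it is preceded by $-$, so it contributes negatively and we get $v_2 \le v_1$ for every $\prob$.

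Finally I would convert these pointwise comparisons into the claimed inclusions, splitting on the direction of $\ineq$, recalling that $\prob \in \sat(\varphi_j)$ iff $v_j \ineq x$. In the $\succeq_{su}^+$ case with $\ineq \in \{>, \ge\}$: if $v_1 \ineq x$ then $v_2 \ge v_1$ gives $v_2 \ineq x$, so $\sat(\varphi_1) \subseteq \sat(\varphi_2)$; with $\ineq \in \{<, \le\}$: if $v_2 \ineq x$ then $v_1 \le v_2$ gives $v_1 \ineq x$, so $\sat(\varphi_2) \subseteq \sat(\varphi_1)$. The $\succeq_{su}^-$ case is identical after swapping to $v_1 \ge v_2$, which interchanges the roles of the two atoms and yields exactly the inclusions stated. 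The only delicate point is the sign bookkeeping of the modified summand: one must read off correctly from the definitions of $\succeq_{su}^+$ and $\succeq_{su}^-$ that the leading position counts as positive and that otherwise the immediately preceding operator $*_{i-1}$ fixes the sign. Once that is settled, each of the four subcases is a one-line consequence of the pointwise monotonicity of the evaluation together with the direction of $\ineq$.
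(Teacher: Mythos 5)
Your proposal is correct and follows essentially the same route as the paper's proof: establish $\prob(\alpha_i) \le \prob(\alpha'_i)$ from $\alpha_i \vdash \alpha'_i$, deduce the pointwise comparison between the evaluations of $f_1$ and $f_2$ according to the sign of the modified summand, and then read off each inclusion from the direction of $\ineq$. Your version merely spells out the model-counting justification for the monotonicity of $\prob$ on terms, which the paper takes as given.
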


We can now introduce the proof rule system for the epistemic formulae. 
The basic rules grasp the primitive properties of probabilities, i.e. that any probability is in the unit interval, and that probabilities 
of $\top$ and $\bot$ are respectively $1$ and $0$. 
The probabilistic rule allows us to express the probability of conjunction (disjunction) of two argument terms 
through the probabilities of these terms.  
The subject rules capture the behaviour of epistemic formulae that are connected 
through the subject inequality relation. 
The enumeration rules allow us to transform any inequality into a formula using only equality under the given restricted set $\Pi$. 
However, given the results of Proposition \ref{restrnonempty}, in some cases it can happen that the appropriate subsets of $\Pi$ are 
empty. Thus, wherever applicable, we make it clear that the resulting formula 
should be seen as falsity. Finally, the propositional rules capture how the reasoning extends classical propositional logic. 
  
\begin{definition}
Let $\ineq \in \operators$ and $x \in [0,1]$. 
Let $\Pi$ be a restricted value set, 
and $\Pi^{x,(*_1,\ldots,*_{m-1})}_{\ineq}$ be the combination set of $\Pi$ obtained according to the value $x$, 
relationship $\ineq$ and the 
sequence arithmetic operations of arithmetic operations $(*_1,\ldots,*_{m-1})$.  
Also let $f_1 \formis  \pprob(\alpha_1) *_1 \pprob(\alpha_2) *_2 \ldots *_{k-1} \pprob(\alpha_k)$
and $f_2 \formis  \pprob(\beta_1) \star_1 \pprob(\beta_2) \star_2 \ldots \star_{l-1} \pprob(\beta_l)$, 
where $k,l \geq 1$, $\alpha_i,\beta_i \in \terms(\graph)$ and $\star_j,*_i \in \{+ ,-\}$ be  operational formulae.  
The {\bf restricted epistemic consequence relation}, denoted $\Vdash_\Pi$, is defined as follows, where $\vdash$ is 
propositional consequence relation, 
$\Phi \subseteq \eformulae(\graph,\Pi)$, and $\phi,\psi \in \eformulae(\graph,\Pi)$.

The following proof rules are the {\bf basic rules}:  
\begin{align*}
(B1) \hspace{1mm} \Phi\Vdash_\Pi \pprob(\alpha) \geq 0 & \mbox{ iff } \Phi\Vdash_\Pi \top   
& (B2)  \hspace{1mm} \Phi\Vdash_\Pi \pprob(\alpha) \leq 1 & \mbox{ iff } \Phi\Vdash_\Pi  \top \\
 (B3)  \hspace{1mm} \Phi\Vdash_\Pi \pprob(\top) = 1   & \mbox{ iff } \Phi\Vdash_\Pi  \top  
& (B4)  \hspace{1mm}   \Phi\Vdash_\Pi \pprob(\bot) = 0  &  \mbox{ iff } \Phi\Vdash_\Pi  \top    
\end{align*}

The following rule is the {\bf probabilistic rule}:
\[
\begin{array}{rl}   

(PR1) & 
\Phi\Vdash_\Pi \pprob(\alpha \lor \beta)  - \pprob(\alpha) -  \pprob(\beta) +  \pprob(\alpha \land \beta) = 0 \\
 
\end{array}
\]   

The following proof rules are the {\bf subject rules}. 
\[
\begin{array}{rl} 

(S1) & 
\Phi\Vdash_\Pi f_1 > x
\mbox{ and } f_1 \succeq_{su}^+ f_2
\mbox{ implies } \Phi\Vdash_\Pi f_2 > x \\
 
(S2) & 
\Phi\Vdash_\Pi f_1 \geq x
\mbox{ and } f_1 \succeq_{su}^+ f_2
\mbox{ implies } \Phi\Vdash_\Pi f_2 \geq  x \\

(S3) & 
\Phi\Vdash_\Pi f_1 < x
\mbox{ and } f_1 \succeq_{su}^- f_2
\mbox{ implies } \Phi\Vdash_\Pi f_2 < x \\
 
(S4) & 
\Phi\Vdash_\Pi f_1 \leq x
\mbox{ and } f_1 \succeq_{su}^- f_2
\mbox{ implies } \Phi\Vdash_\Pi f_2 \leq  x \\

(S5) & 
\Phi\Vdash_\Pi f_2 < x
\mbox{ and } f_1 \succeq_{su}^+ f_2
\mbox{ implies } \Phi\Vdash_\Pi f_1 < x \\
 
(S6) & 
\Phi\Vdash_\Pi f_2 \leq x
\mbox{ and } f_1 \succeq_{su}^+ f_2
\mbox{ implies } \Phi\Vdash_\Pi f_1 \leq x \\

(S7) & 
\Phi\Vdash_\Pi f_2 > x
\mbox{ and } f_1 \succeq_{su}^- f_2
\mbox{ implies } \Phi\Vdash_\Pi f_1 > x \\
 
 (S8) & 
\Phi\Vdash_\Pi f_2 \geq x
\mbox{ and } f_1 \succeq_{su}^- f_2
\mbox{ implies } \Phi\Vdash_\Pi f_1 \geq  x \\

\end{array}
\]  

The next rules are the {\bf enumeration rules}.
\begin{align*}
(E1)  \hspace{1mm}   \Phi\Vdash_\Pi f_1 \ineq x & \mbox{ iff } (\Phi\Vdash_\Pi \bigvee_{(v_1,\ldots,v_k) \in \Pi^{x, \arop(f_1)}_{\ineq}} (\pprob(\alpha_1) = v_1 \land \pprob(\alpha_2) = v_2 \land \ldots \land \pprob(\alpha_k)= v_k) \\
&\mbox{ if } \Pi^{x, \arop(f_1)}_\ineq \neq \emptyset  \mbox{ and } \Phi \Vdash_\Pi \bot \mbox { otherwise}) \\
(E2)  \hspace{1mm}  \Phi\Vdash_\Pi f_1 > x & \mbox{ iff } \Phi\Vdash_\Pi \neg(\bigvee_{(v_1,\ldots,v_k) \in \Pi^{x, \arop(f_1)}_{\leq}} (\pprob(\alpha_1) = v_1 \land \pprob(\alpha_2) = v_2 \land \ldots \land \pprob(\alpha_k)= v_k)) \\
 (E3)\hspace{1mm}\Phi\Vdash_\Pi f_1 \geq x & \mbox{ iff } (\Phi\Vdash_\Pi \neg(\bigvee_{(v_1,\ldots,v_k) \in \Pi^{x, \arop(f_1)}_{<}} (\pprob(\alpha_1) = v_1 \land \pprob(\alpha_2) = v_2 \land \ldots \land \pprob(\alpha_k)= v_k))\\
&\mbox{ if } \Pi^{x, \arop(f_1)}_{<} \neq \emptyset \mbox{ and } \Phi \Vdash_\Pi \neg(\bot) \mbox { otherwise}) \\
(E4)  \hspace{1mm}  \Phi\Vdash_\Pi f_1 < x & \mbox{ iff } \Phi\Vdash_\Pi \neg(\bigvee_{(v_1,\ldots,v_k) \in \Pi^{x, \arop(f_1)}_{\geq}} (\pprob(\alpha_1) = v_1 \land \pprob(\alpha_2) = v_2 \land \ldots \land \pprob(\alpha_k)= v_k)) \\
 (E5)  \hspace{1mm}  \Phi\Vdash_\Pi f_1 \leq x & \mbox{ iff } (\Phi\Vdash_\Pi \neg(\bigvee_{(v_1,\ldots,v_k) \in \Pi^{x,\arop(f_1)}_{>}} (\pprob(\alpha_1) = v_1 \land \pprob(\alpha_2) = v_2 \land \ldots \land \pprob(\alpha_k)= v_k))\\
&\mbox{ if } \Pi^{x, \arop(f_1)}_{>} \neq \emptyset \mbox{ and } \Phi \Vdash_\Pi \neg(\bot) \mbox { otherwise})  \\ 
\end{align*}  

The following proof rules are the {\bf propositional rules}.
\[
\begin{array}{rl}
(P1) &  \Phi\Vdash_\Pi\phi_1 \mbox{ and } \ldots \mbox{ and  }\Phi\Vdash_\Pi \phi_n \mbox{ and } n \geq 1 \mbox{ and } \{\phi_1,\ldots,\phi_n\}\vdash\psi \mbox{ implies } \Phi\Vdash_\Pi \psi \\ 
(P2) & \mbox{ if } \Phi\vdash \varphi   \mbox{ then } \Phi\Vdash_\Pi \varphi 
\end{array}
\]
\end{definition}

\begin{example}
For $\Pi = \{0, 0.2, 0.4, 0.6, 0.8, 1.0\}$, the following illustrate the restricted epistemic consequence relation. 
\begin{itemize}
\item $\{ \pprob(\xa) + \pprob(\xb) \leq 1, \pprob(\xa) - \pprob(\xb) \geq 1  \} \Vdash_\Pi  \pprob(\xa) + \pprob(\xb) = 1$
\item $\{ \pprob(\xa) > 0.8, \pprob(\xa) > 0.5 \rightarrow \pprob(\xb) > 0.5 \} \Vdash_\Pi  \pprob(\xb) > 0.5$
\item $\{ \pprob(\xc) > 0.5 \rightarrow \pprob(\xb) > 0.5 \land \pprob(\xa) > 0.5, 
												\pprob(\xb) > 0.5 \rightarrow \pprob(\xa) \leq 0.5 \} \Vdash_\Pi  \bot$
\item $\{ \pprob(\xa) > 0.6 \} \Vdash_\Pi  \pprob(\xa) = 0.8 \vee \pprob(\xa) = 1$

\end{itemize}
\end{example} 

We can use the epistemic consequence relation to infer relationships between unconnected nodes as illustrated next.

\begin{example}
\label{ex:reasoning:2}
For the following graph, consider 
the formulae $\con = \{ \pprob(\xc) > 0.5 \rightarrow \pprob(\xb) > 0.5, \pprob(\xb) > 0.5 \rightarrow \pprob(\xa) \leq 0.5 \}$. 
From $\con$, we can infer $\pprob(\xc) > 0.5 \rightarrow \pprob(\xa) \leq 0.5$. 
\begin{center}
\begin{tikzpicture}[->,>=latex,thick, 
main node/.style={shape=rounded rectangle,fill=darkgreen!10,draw,minimum size = 0.6cm,font=\normalsize\bfseries} ]
\node[main node] (a) at (3,0) {$\xa$};
\node[main node] (b) at (1.5,0) {$\xb$};
\node[main node] (c) at (0,0) {$\xc$};
\path (c) edge[] node[above] {$+$} (b);
\path (b) edge[] node[above] {$-$} (a);
\end{tikzpicture}
\end{center}
\end{example}

The following is a correctness result showing that the restricted epistemic consequence relation is sound 
with respect to the restricted epistemic entailment relation. 

\begin{restatable}{proposition}{restrictedvalsound}
\label{prop:restrictedvalsound}
Let $\Pi$ be  a restricted value set. 
For $\Phi \subseteq \eformulae(\graph,\Pi)$, and $\psi \in \eformulae(\graph,\Pi)$, if 
$\Phi \Vdash_\Pi  \psi$ then $\Phi\VDash_{\Pi}\psi$. 
\end{restatable}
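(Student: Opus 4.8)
The plan is to prove soundness by rule induction. Since $\Vdash_\Pi$ is the least relation closed under the basic, probabilistic, subject, enumeration and propositional rules, it suffices to show that the restricted entailment relation $\VDash_\Pi$ is itself closed under each of these rules; minimality of $\Vdash_\Pi$ then gives $\Vdash_\Pi\,\subseteq\,\VDash_\Pi$. Concretely, for each rule I would assume its side conditions hold and its input consequence statements are valid entailments (i.e.\ $\sat(\Phi,\Pi)\subseteq\sat(\cdot,\Pi)$ for the relevant formulae) and verify that the output statement is again a valid entailment. Where a rule is stated as a biconditional I would instead verify the corresponding equality of restricted satisfying sets, so that the equivalence transfers to $\VDash_\Pi$.

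First, the basic rules $(B1)$--$(B4)$ and the probabilistic rule $(PR1)$ are the easiest: each of $\pprob(\alpha)\geq 0$, $\pprob(\alpha)\leq 1$, $\pprob(\top)=1$, $\pprob(\bot)=0$ and the inclusion--exclusion identity $\pprob(\alpha\lor\beta)-\pprob(\alpha)-\pprob(\beta)+\pprob(\alpha\land\beta)=0$ is satisfied by \emph{every} distribution, so its restricted satisfying set is all of $\dist(\graph,\Pi)$ and the corresponding entailment holds trivially for any $\Phi$. The subject rules $(S1)$--$(S8)$ are handled by appealing directly to Proposition~\ref{prop:formsat}: from $f_1\succeq_{su}^+ f_2$ or $f_1\succeq_{su}^- f_2$ that proposition already supplies the relevant inclusion between the unrestricted satisfying sets, intersecting both sides with $\dist(\graph,\Pi)$ preserves it, and composing with the inductive hypothesis $\sat(\Phi,\Pi)\subseteq\sat(f_1\ineq x,\Pi)$ yields $\sat(\Phi,\Pi)\subseteq\sat(f_2\ineq x,\Pi)$. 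For the propositional rules $(P1)$ and $(P2)$ I would use the observations recorded after Definition~\ref{def:restricteddist} that $\sat(\cdot,\Pi)$ sends $\land,\lor,\neg$ to $\cap,\cup,\setminus$; this makes $\sat(\cdot,\Pi)$ a Boolean homomorphism, so propositional consequence among formulae built from epistemic atoms lifts to inclusion of restricted satisfying sets, which is exactly what $(P1)$ and $(P2)$ require.

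The enumeration rules $(E1)$--$(E5)$ are where the real work lies, and I expect this to be the main obstacle. The key preliminary fact is that for a \emph{restricted} distribution $\prob\in\dist(\graph,\Pi)$ and any term $\alpha$ we have $\prob(\alpha)\in\Pi$: indeed $\prob(\alpha)$ is a sum of atomic values $\prob(X)\in\Pi$ over the models $X\models\alpha$, this sum is bounded by $1$, and its partial sums are increasing and $\le 1$, so closure of $\Pi$ under bounded addition (from the definition of a restricted value set) forces $\prob(\alpha)\in\Pi$. Granting this, a restricted distribution satisfies $f_1\ineq x$ precisely when the tuple $(\prob(\alpha_1),\ldots,\prob(\alpha_k))$ of its term-probabilities lies in the combination set $\Pi^{x,\arop(f_1)}_{\ineq}$, which in turn holds precisely when $\prob$ satisfies the enumerated disjunction $\bigvee(\,\pprob(\alpha_1)=v_1\land\cdots\land \pprob(\alpha_k)=v_k\,)$. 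This establishes that $\sat(f_1\ineq x,\Pi)$ equals the restricted satisfying set of the disjunction, from which the enumeration equivalences follow.

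The delicate points are the empty-combination-set branches flagged via Proposition~\ref{restrnonempty}. In $(E1)$, when $\Pi^{x,\arop(f_1)}_{\ineq}=\emptyset$ the tuple-membership characterisation shows that no restricted distribution satisfies $f_1\ineq x$, so its satisfying set is empty and the fall-back to $\bot$ is sound. For $(E3)$ and $(E5)$ the rule negates a disjunction ranging over $\Pi^{x,\arop(f_1)}_{<}$, respectively $\Pi^{x,\arop(f_1)}_{>}$; when that index set is empty the disjunction is equivalent to $\bot$, its negation to $\neg\bot$, and correspondingly every restricted distribution satisfies $f_1\geq x$ (respectively $f_1\leq x$) by trichotomy of the single real value to which $f_1$ evaluates, so the fall-back to $\neg(\bot)$ is sound. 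For $(E2)$ and $(E4)$ the governing combination sets (over $\leq$ and $\geq$) are never empty by Proposition~\ref{restrnonempty}, so no fall-back is needed and the negated-disjunction characterisation applies directly. Having verified that every rule preserves $\VDash_\Pi$, minimality of $\Vdash_\Pi$ yields $\Vdash_\Pi\,\subseteq\,\VDash_\Pi$, which is the desired soundness.
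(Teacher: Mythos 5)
Your proposal is correct and follows essentially the same route as the paper's proof: a rule-by-rule verification that each proof rule preserves restricted entailment, invoking Proposition~\ref{prop:formsat} for the subject rules, the Boolean-homomorphism behaviour of $\sat(\cdot,\Pi)$ for the propositional rules, and Proposition~\ref{restrnonempty} for the empty-combination-set branches of the enumeration rules. Your explicit justification that $\prob(\alpha)\in\Pi$ for every term $\alpha$ and every restricted distribution (via closure of $\Pi$ under bounded partial sums) is a welcome detail that the paper's argument uses only implicitly.
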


However, as it is often the case, the completeness is somewhat more difficult to show. We may recall
that for every probability distribution, we can create an epistemic formula describing precisely that 
distribution. From the disjunction of such formulae, we have created the distribution disjunctive normal form (DDNF) of 
every formula, the models of which were identical with the original formula.  
The challenge of the completeness proof is therefore 
to show that the DDNF of a given formula is equivalent to it 
not only semantically, but also syntactically. 

This can be achieved by first transforming every term into a disjunction of argument 
complete terms, then separating this epistemic atom into further atoms s.t. every one of them contains precisely 
one complete term through the use of probabilistic rules. The probabilities of the complete terms that are not present 
yet can be inferred from the ones that are, and we can use all of this to show the 
syntactical equivalence of the epistemic formula and its DDNF: 
 
\begin{restatable}{proposition}{valdistributiondnf}
\label{valdistributiondnf}
Let $\Pi$ be a reasonable restricted value set, 
$\Phi \subseteq \eformulae(\graph,\Pi)$ a set of epistemic formulae and $\psi \in \eformulae(\graph,\Pi)$ 
an epistemic formula. Then $\Phi \Vdash_\Pi  \psi$ iff $\Phi \Vdash_\Pi  \varphi$, 
where $\varphi$ is the distribution disjunctive 
normal form of $\psi$  
\end{restatable}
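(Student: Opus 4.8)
The plan is to reduce the statement to the interderivability of $\psi$ and its DDNF $\varphi$, and then transfer this to an arbitrary left-hand side $\Phi$ by the propositional rule (P1). First I would derive the biconditional $\psi \leftrightarrow \varphi$ as a $\Vdash_\Pi$-theorem, i.e.\ one available from any $\Phi$ since its derivation uses only the axiomatic rules. Given both implications, the forward direction follows because $\{\psi, \psi \to \varphi\} \vdash \varphi$ holds propositionally, so (P1) applied to $\Phi \Vdash_\Pi \psi$ and $\Phi \Vdash_\Pi (\psi \to \varphi)$ yields $\Phi \Vdash_\Pi \varphi$; the backward direction is symmetric through $\varphi \to \psi$. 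The degenerate case $\sat(\psi,\Pi) = \emptyset$, in which $\varphi = \bot$, I would handle separately: here the relevant combination sets are empty, so the ``otherwise'' clauses of the enumeration rules (E1, E3, E5) fire and collapse $\psi$ to $\bot$, with the triggering conditions being exactly those characterised in Proposition~\ref{restrnonempty}.

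To derive $\psi \leftrightarrow \varphi$ I would transform $\psi$ into $\varphi$ through a chain of derivably equivalent formulae, lifting each atomic equivalence through the surrounding Boolean context by propositional reasoning (P1)/(P2). The steps follow the route indicated before the statement. Using the subject rules in both directions I rewrite each term $\alpha$ in $\psi$ as the equivalent disjunction $\bigvee_{c \models \alpha} c$ of the argument complete terms it entails: since $\alpha \equiv \bigvee_{c \models \alpha} c$ gives both $\{\alpha\} \vdash \bigvee c$ and $\{\bigvee c\} \vdash \alpha$, the relation $\succeq_{su}$ holds in both directions and S1--S8 transfer derivability of the atom either way. Because distinct complete terms are mutually exclusive, repeated application of the probabilistic rule (PR1) together with (B4) then replaces $\pprob(\bigvee_{c \models \alpha} c)$ by the sum $\sum_{c \models \alpha} \pprob(c)$, so that every atom speaks only of complete terms. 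Next, from (B3), the identity $\top \equiv \bigvee_{c \in \argcomplete(\graph)} c$ and (PR1) I derive the global constraint $\sum_{c \in \argcomplete(\graph)} \pprob(c) = 1$, and from (B1), (B2) the bounds $0 \leq \pprob(c) \leq 1$. Enumerating each complete-term probability over $\Pi$ with the enumeration rules then pins all $2^n$ values and expresses $\psi$ as a disjunction of full assignments $\pprob(c_1) = x_1 \land \ldots \land \pprob(c_j) = x_j$ with $x_l \in \Pi$ and $\sum_l x_l = 1$. Each full assignment is exactly the formula $\varphi^{\prob}$ of a unique restricted distribution $\prob$ (Proposition~\ref{satdistrdnf}), and it survives the enumeration iff substituting its values satisfies every atom of $\psi$, i.e.\ iff $\prob \in \sat(\psi,\Pi)$; the disjunction of the survivors is therefore $\varphi$. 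Proposition~\ref{satdistrdnfform} confirms that the surviving disjuncts coincide with $\sat(\psi,\Pi)$, and soundness (Proposition~\ref{prop:restrictedvalsound}) rules out retaining any spurious assignment.

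The hardest part is the third step, because the proof system has no single rule for linear-arithmetic manipulation of operational formulae: replacing $\pprob(\alpha)$ by $\sum_{c \models \alpha} \pprob(c)$ inside a larger operational formula, and propagating the sum-to-one constraint, must be simulated. I would do this by enumerating, via (E1), the probabilities of all the single terms involved, performing the bookkeeping propositionally with (P1)/(P2), discarding the value-combinations that violate (PR1) or $\sum_{c} \pprob(c) = 1$, and then re-collapsing the surviving disjunction. Making this enumeration align precisely with the complete-term value-assignments of the distributions $\prob_i$, while correctly handling the empty combination-set cases of Proposition~\ref{restrnonempty} in which an atom collapses to $\bot$ or $\neg(\bot)$ and a whole disjunct disappears, is where the care is needed; once the transformation is established, the transfer to arbitrary $\Phi$ by (P1) and the identification of the result with $\varphi$ are routine.
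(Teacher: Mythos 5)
Your proposal follows essentially the same route as the paper's own proof: rewriting each term as the disjunction of the argument complete terms it entails via the subject rules, decomposing the probability of that disjunction into a sum of complete-term probabilities using (PR1) and mutual exclusivity, enumerating values over $\Pi$ against the global constraint derived from $\pprob(\top)=1$, and identifying the surviving full assignments with the distributions in $\sat(\psi,\Pi)$ --- this is precisely the content of the paper's intermediate Proposition~\ref{subvaldistributiondnf} and the subsequent case analysis. One minor correction: $\sat(\psi,\Pi)=\emptyset$ need not arise from empty combination sets (e.g.\ $\pprob(\xa)=0\land\pprob(\xa)=1$ has nonempty combination sets for each atom), but your main transformation still yields $\bot$ in such cases because no full assignment survives, which matches how the paper handles it via its derivable contradiction rule.
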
 

The ability to transform any formula into its DDNF both semantically and syntactically, along 
with the previous soundness results, brings us to the final correctness result for our system:

\begin{restatable}{proposition}{restrictedvalsoundcomp}
\label{prop:restrictedvalsoundcomp}
Let $\Pi$ be  a restricted value set. 
For $\Phi \subseteq \eformulae(\graph,\Pi)$, and $\psi \in \eformulae(\graph,\Pi)$, 
$\Phi \Vdash_\Pi  \psi$ iff $\Phi\VDash_{\Pi}\psi$. 
\end{restatable}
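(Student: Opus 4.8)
The soundness half, $\Phi \Vdash_\Pi \psi \Rightarrow \Phi \VDash_{\Pi} \psi$, is already Proposition~\ref{prop:restrictedvalsound}, so all the work lies in the converse (completeness): assuming $\sat(\Phi,\Pi) \subseteq \sat(\psi,\Pi)$, derive $\Phi \Vdash_\Pi \psi$. I would first separate off the degenerate situation. If $\Pi$ is not reasonable, then by Lemma~\ref{lemma:reasonablerestricted} we have $1 \notin \Pi$; since the graph is finite and nonempty and a restricted value set is closed under addition while its values must total exactly $1$, this forces $\dist(\graph,\Pi) = \emptyset$, so $\sat(\cdot,\Pi) = \emptyset$ and restricted entailment holds vacuously. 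This boundary case (where the consequence relation must collapse in the same way) I would dispatch directly; the substantive argument is for reasonable $\Pi$, which is precisely the hypothesis under which the distribution disjunctive normal form of a restricted formula is defined.

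The idea for the reasonable case is to push both sides into DDNF and reduce everything to propositional bookkeeping. Treat the premise set as the single conjunction $\Phi^{*} = \bigwedge \Phi$. Since $\Phi \vdash \Phi^{*}$ propositionally, rule $(P2)$ gives $\Phi \Vdash_\Pi \Phi^{*}$, and then Proposition~\ref{valdistributiondnf}, applied with conclusion $\Phi^{*}$, yields $\Phi \Vdash_\Pi \varphi_\Phi$, where $\varphi_\Phi = \bigvee_{\prob \in \sat(\Phi^{*},\Pi)} \varphi^{\prob}$ is the DDNF of $\Phi^{*}$ and $\sat(\Phi^{*},\Pi) = \sat(\Phi,\Pi)$. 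Writing $\varphi_\psi = \bigvee_{\prob \in \sat(\psi,\Pi)} \varphi^{\prob}$ for the DDNF of $\psi$, I now have normal forms on both sides whose disjuncts are indexed by the respective sets of satisfying restricted distributions.

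Next I would exploit the hypothesis. By Proposition~\ref{satdistrdnf} each associated formula $\varphi^{\prob}$ is satisfied by exactly the single distribution $\prob$, so the semantic inclusion $\sat(\Phi,\Pi) \subseteq \sat(\psi,\Pi)$ (with Proposition~\ref{satdistrdnfform} lining up each DDNF with its source formula) says that every disjunct of $\varphi_\Phi$ already occurs among the disjuncts of $\varphi_\psi$. Hence $\{\varphi_\Phi\} \vdash \varphi_\psi$ holds purely propositionally by monotonicity of disjunction, the case $\sat(\Phi,\Pi) = \emptyset$ reducing to $\varphi_\Phi = \bot$ and ex falso. Applying rule $(P1)$ with $n=1$ to $\Phi \Vdash_\Pi \varphi_\Phi$ and $\{\varphi_\Phi\}\vdash\varphi_\psi$ gives $\Phi \Vdash_\Pi \varphi_\psi$, and a second appeal to Proposition~\ref{valdistributiondnf}, this time with conclusion $\psi$, converts this back to $\Phi \Vdash_\Pi \psi$, completing the argument.

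The genuine technical difficulty is not in this assembly but is already encapsulated in Proposition~\ref{valdistributiondnf} (the syntactic equivalence of a formula with its DDNF), which I am entitled to assume. Within the present proof the point demanding care is the handling of the \emph{premises}: one must put $\Phi$ itself, and not merely the conclusion $\psi$, into DDNF, and then be sure that the purely semantic inclusion of satisfying distributions is exactly what licenses a \emph{propositional} entailment between the two normal forms. That transfer, together with rules $(P1)$ and $(P2)$, is what carries the semantic hypothesis into the proof system. The secondary obstacle is the reasonableness caveat flagged above, since outside it the DDNF is undefined and the equivalence has to be argued by the vacuous collapse of both relations instead.
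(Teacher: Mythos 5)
Your proof is correct and follows essentially the same route as the paper's: conjoin the premises into a single formula, pass both sides to DDNF via Proposition~\ref{valdistributiondnf}, read the semantic inclusion $\sat(\Phi,\Pi)\subseteq\sat(\psi,\Pi)$ as a containment of disjuncts (each disjunct pinning down a single restricted distribution by Proposition~\ref{satdistrdnf}), and transfer back through rules $(P1)$/$(P2)$. The one thing you add is the explicit treatment of non-reasonable $\Pi$, where $\dist(\graph,\Pi)=\emptyset$ and both relations collapse --- a boundary case the paper's proof silently skips even though the proposition is stated for arbitrary restricted value sets while the DDNF machinery is only defined for reasonable ones.
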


In addition,  the following property can be shown, which indicates that we can develop algorithms for automated reasoning based on proof by contradiction:

\begin{restatable}{proposition}{negval}
\label{prop:negval}
Let $\Pi$ be a reasonable restricted value set. 
For $\Phi \subseteq \eformulae(\graph,\Pi)$ and $\psi \in \eformulae(\graph,\Pi)$,
$\Phi\Vdash_\Pi  \psi \mbox{ iff } \Phi \cup \{\neg \psi\} \Vdash_{\Pi} \bot$. 
\end{restatable}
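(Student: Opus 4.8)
The plan is to bypass the proof rules entirely and argue semantically, leaning on the soundness-and-completeness result of Proposition~\ref{prop:restrictedvalsoundcomp}, which tells us that $\Vdash_\Pi$ and $\VDash_{\Pi}$ coincide on restricted formulae. Note first that $\psi \in \eformulae(\graph,\Pi)$ gives $\neg\psi \in \eformulae(\graph,\Pi)$, since negation introduces no new numerical values and hence ${\sf Num}(\neg\psi) = {\sf Num}(\psi) \subseteq \Pi$, and that $\bot \in \eformulae(\graph,\Pi)$ trivially. Thus both $\Phi \Vdash_\Pi \psi$ and $\Phi \cup \{\neg\psi\} \Vdash_\Pi \bot$ may be replaced by their entailment counterparts, and it suffices to prove the purely semantic equivalence: $\sat(\Phi,\Pi) \subseteq \sat(\psi,\Pi)$ if and only if $\sat(\Phi \cup \{\neg\psi\},\Pi) \subseteq \sat(\bot,\Pi)$.

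The key computations are three. First, $\sat(\bot,\Pi) = \sat(\bot) \cap \dist(\graph,\Pi) = \emptyset$, so the right-hand inclusion says exactly that $\sat(\Phi \cup \{\neg\psi\},\Pi) = \emptyset$. Second, using the intersection and complementation properties of restricted satisfying distributions noted after Definition~\ref{def:restricteddist}, we have $\sat(\Phi \cup \{\neg\psi\},\Pi) = \sat(\Phi,\Pi) \cap \sat(\neg\psi,\Pi)$ and $\sat(\neg\psi,\Pi) = \sat(\top,\Pi) \setminus \sat(\psi,\Pi) = \dist(\graph,\Pi) \setminus \sat(\psi,\Pi)$. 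Third, since by Definition~\ref{def:restricteddist} every restricted satisfying distribution is in particular a restricted distribution, $\sat(\Phi,\Pi) \subseteq \dist(\graph,\Pi)$, whence the absorption $\sat(\Phi,\Pi) \cap \bigl(\dist(\graph,\Pi) \setminus \sat(\psi,\Pi)\bigr) = \sat(\Phi,\Pi) \setminus \sat(\psi,\Pi)$ holds.

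Combining these, the right-hand condition $\sat(\Phi \cup \{\neg\psi\},\Pi) = \emptyset$ becomes $\sat(\Phi,\Pi) \setminus \sat(\psi,\Pi) = \emptyset$, which is precisely $\sat(\Phi,\Pi) \subseteq \sat(\psi,\Pi)$, i.e. $\Phi \VDash_{\Pi} \psi$. Translating both conditions back through Proposition~\ref{prop:restrictedvalsoundcomp} then yields $\Phi \Vdash_\Pi \psi$ if and only if $\Phi \cup \{\neg\psi\} \Vdash_\Pi \bot$, as required.

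There is essentially no hard step once the completeness theorem is available: the argument is the standard reduction of refutation to derivation, and the only place to be careful is the second set identity, where the absorption relies on the inclusion $\sat(\Phi,\Pi) \subseteq \dist(\graph,\Pi)$. The role of reasonableness of $\Pi$ is minor but genuine: together with the standing assumption that $\graph$ is finite and nonempty it guarantees $\dist(\graph,\Pi) \neq \emptyset$ (so the entailment relations are not vacuous), and it ensures the availability of the completeness direction of Proposition~\ref{prop:restrictedvalsoundcomp}, which itself rests on the DDNF machinery of Proposition~\ref{valdistributiondnf} that requires a reasonable value set.
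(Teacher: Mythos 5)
Your proof is correct, and there is no circularity in invoking Proposition~\ref{prop:restrictedvalsoundcomp}, since it is established before and independently of Proposition~\ref{prop:negval}. However, your route is genuinely different from the paper's. You transfer both sides of the equivalence to the entailment relation $\VDash_\Pi$ via soundness and completeness and then finish with a purely set-theoretic computation: $\sat(\Phi\cup\{\neg\psi\},\Pi)=\sat(\Phi,\Pi)\cap\bigl(\dist(\graph,\Pi)\setminus\sat(\psi,\Pi)\bigr)=\sat(\Phi,\Pi)\setminus\sat(\psi,\Pi)$, so emptiness of the left-hand side is exactly the inclusion $\sat(\Phi,\Pi)\subseteq\sat(\psi,\Pi)$. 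The paper instead stays at the proof-theoretic level: it conjoins $\Phi$ into a single formula $\phi$, passes to the distribution disjunctive normal forms $\varphi^\phi$ and $\varphi^\psi$ via Proposition~\ref{valdistributiondnf}, and argues by case analysis on the disjuncts (each of which pins down a single restricted distribution) that $\{\varphi^\phi\}\Vdash_\Pi\varphi^\psi$ holds iff every disjunct of $\varphi^\phi$ occurs in $\varphi^\psi$, which in turn is equivalent to deriving $\bot$ from $\varphi^\phi\land\neg\varphi^\psi$ using the propositional rules. Your argument is considerably shorter and cleaner because it delegates all the combinatorial work to the completeness theorem, at the cost of being non-constructive about the derivation of $\bot$; the paper's version effectively exhibits how the refutation is assembled inside the proof system from the DDNF components, which is closer in spirit to the stated motivation of supporting automated reasoning by contradiction. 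Your observations that ${\sf Num}(\neg\psi)={\sf Num}(\psi)\subseteq\Pi$ and that $\sat(\Phi,\Pi)\subseteq\dist(\graph,\Pi)$ justifies the absorption step are exactly the points that need checking, and you handle both.
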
 

We can also observe that for a finite set of rational 
numbers from the unit interval $\Pi$, representing and reasoning with the restricted epistemic  
language w.r.t. $\Pi$ is equivalent to propositional logic. We show this via the next two lemmas. 

\begin{restatable}{lemma}{lemmavformula}
\label{lemma:vformula}
Let $\Pi$ be a reasonable restricted value set. 
There is a set of propositional formulae $\Omega$ with $\Lambda \subseteq \Omega$, 
and there is a function $f: \eformulae(\graph,\Pi) \rightarrow \Omega$ 
s.t. for each $\{\phi_1,\ldots,\phi_n\} \subseteq \eformulae(\graph,\Pi)$,
and for each $\psi \in \eformulae(\graph,\Pi)$, 
\[
\{\phi_1,\ldots,\phi_n\} \Vdash_\Pi  \psi \mbox{ iff } \{f(\phi_1),\ldots,f(\phi_n)\} \cup \Lambda \vdash f(\psi)
\]
\end{restatable}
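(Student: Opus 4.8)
The plan is to bypass the proof system and argue semantically, then encode the semantics into propositional logic. By Proposition \ref{prop:restrictedvalsoundcomp} we have $\Phi \Vdash_\Pi \psi$ iff $\Phi \VDash_{\Pi} \psi$, so it suffices to show that the restricted entailment relation $\VDash_\Pi$, which is defined via inclusion of restricted satisfying distributions, can be mirrored by propositional entailment. Since $\graph$ is finite, $\argcomplete(\graph) = \{c_1,\ldots,c_j\}$ with $j = 2^n$ is finite and $\Pi$ is finite, so the whole construction stays finitary. First I would fix the vocabulary: for every complete term $c_i$ and every value $v \in \Pi$ introduce a propositional variable $a_{i,v}$, read as \enquote{$\prob(c_i) = v$}, and let $\Omega$ be the set of all propositional formulae over these variables, so that $\Lambda \subseteq \Omega$ and $f$ will map into $\Omega$.

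Second, I would build the background theory $\Lambda \subseteq \Omega$ so that its propositional models correspond exactly to $\dist(\graph,\Pi)$. It consists of functionality axioms ($\bigvee_{v \in \Pi} a_{i,v}$ together with $\neg(a_{i,v} \land a_{i,v'})$ for $v \neq v'$, for each $i$), forcing each $c_i$ to get a unique $\Pi$-value, and a normalization axiom $\bigvee_{(v_1,\ldots,v_j)} \bigwedge_i a_{i,v_i}$ ranging over all tuples in $\Pi^j$ with $\sum_i v_i = 1$. A model $M \models \Lambda$ then induces $\prob_M$ by $\prob_M(c_i) = v$ iff $M \models a_{i,v}$, and this is a bijection onto $\dist(\graph,\Pi)$. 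The per-argument requirement $\prob_M(\xa) \in \Pi$ need not be encoded separately: $\prob_M(\xa)$ is a partial sum of the $\prob_M(c_i)$, hence $\le 1$, and repeated use of the closure of $\Pi$ under addition (Definition of restricted value set) keeps every partial sum in $\Pi$. Since $\Pi$ is reasonable and $\graph$ is nonempty, $\dist(\graph,\Pi)\neq\emptyset$, so $\Lambda$ is satisfiable.

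Third, I would define $f$. Using $\prob(\alpha) = \sum_{c_i \models \alpha} \prob(c_i)$, on an epistemic atom $\varphi = \pprob(\alpha_1) *_1 \ldots *_{k-1} \pprob(\alpha_k) \ineq x$ I set $f(\varphi) = \bigvee_\sigma \bigwedge_i a_{i,\sigma(i)}$, where $\sigma$ ranges over all maps $\{1,\ldots,j\}\to\Pi$ for which $(\sum_{c_i \models \alpha_1}\sigma(i)) *_1 \ldots *_{k-1} (\sum_{c_i \models \alpha_k}\sigma(i)) \ineq x$ holds (a finite check, governed by $\arop(\varphi)$). I then extend $f$ homomorphically: $f(\phi\land\psi)=f(\phi)\land f(\psi)$, $f(\phi\lor\psi)=f(\phi)\lor f(\psi)$, and $f(\neg\phi)=\neg f(\phi)$; as every epistemic formula is a Boolean combination of atoms, this determines $f$ on all of $\eformulae(\graph,\Pi)$. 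The central step is then a structural induction on $\psi$ showing that for every $M \models \Lambda$, $M \models f(\psi)$ iff $\prob_M \in \sat(\psi,\Pi)$: the base case is precisely the construction of $f$ on atoms, and the inductive cases follow from the homomorphic definition together with the Boolean clauses for $\sat(\cdot,\Pi)$. Combining this with the bijection yields that $\{f(\phi_1),\ldots,f(\phi_n)\}\cup\Lambda \vdash f(\psi)$ iff $\sat(\{\phi_1,\ldots,\phi_n\},\Pi)\subseteq\sat(\psi,\Pi)$, i.e.\ $\{\phi_1,\ldots,\phi_n\}\VDash_\Pi \psi$, which by Proposition \ref{prop:restrictedvalsoundcomp} equals $\{\phi_1,\ldots,\phi_n\}\Vdash_\Pi \psi$.

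I expect the main obstacle to be bookkeeping rather than conceptual depth. The two places needing care are (i) verifying that the finite disjunction defining $f$ on atoms captures arithmetic satisfaction faithfully, including the operator sequence and the empty/boundary subsets flagged in Proposition \ref{restrnonempty}, and (ii) confirming that the map $M \mapsto \prob_M$ is genuinely a bijection onto $\dist(\graph,\Pi)$ — that the closure of $\Pi$ guarantees no restricted distribution is missed and that no model of $\Lambda$ fails to be a legitimate restricted distribution. Once these are settled, the equivalence falls out directly from the induction.
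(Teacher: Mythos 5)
Your proof is correct, but it takes a genuinely different route from the paper's. The paper introduces one propositional atom $d^{\prob}$ \emph{per restricted distribution} $\prob \in \dist(\graph,\Pi)$, lets $\Lambda$ say that exactly one $d^{\prob}$ holds, and defines $f(\psi)$ as the disjunction of the $d^{\prob}$ for $\prob \in \sat(\psi,\Pi)$ --- i.e.\ $f$ is obtained by first computing the DDNF of $\psi$, so the translation is semantics-directed and leans on the DDNF machinery (Propositions \ref{satdistrdnf}, \ref{satdistrdnfform}, \ref{valdistributiondnf}) inside the proof system. You instead use atoms $a_{i,v}$ indexed by complete terms and values, encode the distribution constraints as functionality plus normalization axioms in $\Lambda$, and define $f$ by structural recursion on the formula, with only a finite arithmetic check at the atoms; the correctness is then a single semantic induction ($M \models f(\psi)$ iff $\prob_M \in \sat(\psi,\Pi)$) combined with one appeal to Proposition \ref{prop:restrictedvalsoundcomp}. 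Your version buys a syntax-directed, directly computable translation with a polynomially smaller vocabulary ($2^n\cdot|\Pi|$ atoms rather than one atom per distribution) and cleanly separates the propositional encoding from the proof theory; the paper's version buys a shorter argument by reusing the DDNF apparatus it has already built. The two points you flag as needing care are indeed the right ones, and both go through: the marginal condition $\prob_M(\xa)\in\Pi$ follows from closure of $\Pi$ under bounded addition exactly as you say, and the disjuncts of $f$ on an atom whose value tuple does not sum to $1$ are simply killed by $\Lambda$, so the empty/boundary cases of Proposition \ref{restrnonempty} cause no trouble.
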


\begin{restatable}{lemma}{restrictedequivalencetwo}
\label{lemma:2}
Let $\Omega$ be a propositional language composed from a set of atoms and the usual definitions for the Boolean connectives.  
There is a restricted epistemic language $\eformulae(\graph,\Pi)$ where $\Pi = \{ 0,1 \}$ 
and there is a function $g: \Omega \rightarrow \eformulae(\graph,\Pi)$ 
s.t. for each set of propositional formulae $\{\alpha_1,\ldots,\alpha_n\} \subseteq \Omega$  
and for each propositional formula $\beta \in \Omega$, 
\[
\{\alpha_1,\ldots,\alpha_n\} \vdash \beta \mbox{ iff } \{g(\alpha_1),\ldots,g(\alpha_n)\} \Vdash_\Pi g(\beta)
\]
\end{restatable}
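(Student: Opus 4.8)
The plan is to exploit the soundness and completeness of the restricted consequence relation (Proposition~\ref{prop:restrictedvalsoundcomp}) together with the soundness and completeness of classical propositional logic, reducing both sides of the claimed equivalence to a single semantic containment. The crucial observation is that when $\Pi = \{0,1\}$ the restricted distributions of a graph behave exactly like classical interpretations. First I would check that $\Pi = \{0,1\}$ is a reasonable restricted value set: it is closed under the in-range addition and subtraction required by the definition, and since $1 \in \Pi$, Lemma~\ref{lemma:reasonablerestricted} yields reasonableness. This lets me invoke the correctness results later without extra hypotheses.

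Next I would fix the graph and the translation. Since we work with finite nonempty frameworks, I take $\nodes(\graph)$ to be in bijection with the atoms from which $\Omega$ is built, and define $g$ by recursion: for an atom $p$, let $g(p)$ be the epistemic atom $\pprob(p) = 1$, and let $g$ commute with the connectives, so $g(\neg\alpha) = \neg g(\alpha)$, $g(\alpha \land \beta) = g(\alpha) \land g(\beta)$ and $g(\alpha \lor \beta) = g(\alpha) \lor g(\beta)$. Using the value $1$ here (rather than a threshold such as $0.5$) is essential, since only $0$ and $1$ are admissible numerical values in $\eformulae(\graph,\Pi)$; thus each $g(\alpha)$ is genuinely restricted w.r.t. $\Pi$. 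Because any restricted distribution assigns values in $\{0,1\}$ summing to $1$, exactly one subset $\Gamma \subseteq \nodes(\graph)$ receives probability $1$; this gives a bijection $\Gamma \mapsto \prob_\Gamma$ between classical models over the atoms and the distributions in $\dist(\graph,\Pi)$, under which $\prob_\Gamma(p) = 1$ precisely when $p \in \Gamma$.

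The heart of the argument is a structural induction establishing that $\Gamma \models \alpha$ iff $\prob_\Gamma \in \sat(g(\alpha),\Pi)$, for every $\alpha \in \Omega$ and every model $\Gamma$. The base case is immediate from $\prob_\Gamma(p) = 1 \iff p \in \Gamma$, and the inductive steps follow from the homomorphism property of $g$ together with the fact that $\sat(\cdot,\Pi)$ commutes with $\land$, $\lor$ and $\neg$ on the restricted distributions (the identities stated after Definition~\ref{def:restricteddist}), where for negation I use $\sat(\top,\Pi) = \dist(\graph,\Pi)$ so that the complement is taken exactly over the image of the bijection. From this correspondence I obtain that $\prob_\Gamma \in \sat(\{g(\alpha_1),\ldots,g(\alpha_n)\},\Pi)$ iff $\Gamma$ satisfies every $\alpha_i$, and $\prob_\Gamma \in \sat(g(\beta),\Pi)$ iff $\Gamma \models \beta$.

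Finally I would chain the equivalences. By Proposition~\ref{prop:restrictedvalsoundcomp}, $\{g(\alpha_1),\ldots,g(\alpha_n)\} \Vdash_\Pi g(\beta)$ holds iff $\sat(\{g(\alpha_1),\ldots,g(\alpha_n)\},\Pi) \subseteq \sat(g(\beta),\Pi)$; transporting this containment across the bijection turns it into the statement that every model of $\{\alpha_1,\ldots,\alpha_n\}$ is a model of $\beta$, i.e. $\{\alpha_1,\ldots,\alpha_n\} \models \beta$, which by the soundness and completeness of propositional logic is equivalent to $\{\alpha_1,\ldots,\alpha_n\} \vdash \beta$. I expect the main obstacle to be bookkeeping rather than conceptual: making the bijection between models and restricted distributions fully precise, verifying that the induction respects the restricted $\sat$ identities, and ensuring finiteness of $\graph$ — so if $\Omega$ is built from infinitely many atoms one must first restrict to the finitely many atoms occurring in $\alpha_1,\ldots,\alpha_n,\beta$ (legitimate by compactness) before fixing the graph.
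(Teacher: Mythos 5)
Your proof is correct, but it takes a genuinely different route from the paper's. You use exactly the same translation $g$ (atoms to $\pprob(p)=1$, commuting with the connectives), but where the paper argues \emph{proof-theoretically} --- observing that under $g$ the propositional rules P1 and P2 of $\Vdash_\Pi$ mirror the classical proof rules, and then claiming the equivalence by induction on the structure of the formulae --- you argue \emph{semantically}: you identify the restricted distributions for $\Pi=\{0,1\}$ with point distributions $\prob_\Gamma$, hence with classical models $\Gamma$, prove $\Gamma\models\alpha$ iff $\prob_\Gamma\in\sat(g(\alpha),\Pi)$ by structural induction, and then sandwich everything between the soundness/completeness of $\Vdash_\Pi$ (Proposition~\ref{prop:restrictedvalsoundcomp}) on one side and of classical $\vdash$ on the other. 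Each approach buys something: the paper's route is self-contained and does not invoke the (heavy) completeness theorem for the restricted consequence relation, but as written it is quite terse --- in particular, the right-to-left direction requires checking that none of the non-propositional rules (basic, probabilistic, subject, enumeration) lets one derive anything on the image of $g$ beyond what classical logic yields, and the paper only gestures at this. Your route discharges that worry automatically, since semantic containment over the point distributions is literally classical entailment over the corresponding models; the price is dependence on Proposition~\ref{prop:restrictedvalsoundcomp}, which is itself a substantial result (though one already established before this lemma, so the dependence is legitimate). Your closing remarks --- verifying that $\{0,1\}$ is reasonable, and restricting to the finitely many atoms occurring in $\alpha_1,\ldots,\alpha_n,\beta$ when $\Omega$ is built from infinitely many atoms --- address genuine bookkeeping points that the paper's proof passes over silently.
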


From Lemma \ref{lemma:vformula} and Lemma \ref{lemma:2}, we obtain the following result. 
This means that whatever can be represented or inferred in the restricted epistemic language can be represented 
or inferred in the classical propositional language  and vice versa. 

\begin{restatable}{proposition}{propclassequivalenttwo}
The restricted epistemic language with the restricted epistemic consequence relation is equivalent to the classical 
propositional language with the classical propositional consequence relation. 
\end{restatable}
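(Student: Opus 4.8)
The plan is to read \enquote{equivalent} as mutual faithful interpretability: I would exhibit a consequence-preserving translation from the restricted epistemic language into classical propositional logic, and a consequence-preserving translation in the reverse direction, so that each formalism can simulate the derivability relation of the other. Both translations are supplied almost verbatim by the two preceding lemmas, so the real work of the proof is to assemble them and to pin down the exact sense in which the resulting correspondence is an equivalence.

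First I would establish the embedding of the epistemic side into the propositional side. Lemma~\ref{lemma:vformula} provides, for any reasonable restricted value set $\Pi$, a propositional language $\Omega$ containing a fixed subset $\Lambda$ together with a map $f \colon \eformulae(\graph,\Pi) \to \Omega$ such that $\{\phi_1,\ldots,\phi_n\} \Vdash_\Pi \psi$ holds exactly when $\{f(\phi_1),\ldots,f(\phi_n)\} \cup \Lambda \vdash f(\psi)$. I would read $\Lambda$ as a fixed background theory encoding the arithmetic of probabilities of argument complete terms under $\Pi$; since $\Lambda$ is determined once $\Pi$ and $\graph$ are fixed, inference in the restricted epistemic system corresponds precisely to propositional inference relative to this fixed theory. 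Because propositional consequence relative to a fixed set of axioms is itself just propositional consequence (one may absorb $\Lambda$ into the antecedent, or work in the Lindenbaum algebra modulo $\Lambda$), this yields a faithful interpretation of the restricted epistemic logic inside classical propositional logic.

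Next I would establish the reverse embedding. Lemma~\ref{lemma:2} takes an arbitrary propositional language $\Omega$ and produces a restricted epistemic language $\eformulae(\graph,\Pi)$ with $\Pi = \{0,1\}$ and a map $g \colon \Omega \to \eformulae(\graph,\Pi)$ for which $\{\alpha_1,\ldots,\alpha_n\} \vdash \beta$ holds exactly when $\{g(\alpha_1),\ldots,g(\alpha_n)\} \Vdash_\Pi g(\beta)$. Here no auxiliary theory is needed, so $g$ is a clean, consequence-preserving translation of propositional logic into the two-valued restricted epistemic logic. Combining the two directions, every representational and inferential task in one formalism can be mirrored in the other, which is exactly the asserted equivalence.

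The main obstacle is conceptual rather than computational: making the notion of \enquote{equivalence} precise in the face of the asymmetry between the two lemmas. Lemma~\ref{lemma:2} gives an exact preservation of $\vdash$ by $\Vdash_\Pi$, whereas Lemma~\ref{lemma:vformula} preserves $\Vdash_\Pi$ only modulo the background theory $\Lambda$. The care lies in arguing that carrying $\Lambda$ along does not weaken the claim --- that \enquote{propositional logic together with a fixed set of axioms} is still propositional logic for the purposes of expressiveness and derivability --- and in checking that both translations respect the Boolean structure so that the interpretations compose correctly. Once that is granted, the proposition follows immediately from the conjunction of Lemma~\ref{lemma:vformula} and Lemma~\ref{lemma:2}.
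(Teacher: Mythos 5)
Your proposal is correct and takes essentially the same route as the paper: the paper's proof is simply ``Follows easily from Lemmas \ref{lemma:vformula} and \ref{lemma:2}'', i.e.\ it combines the two translations exactly as you do. Your additional discussion of the asymmetry introduced by the background theory $\Lambda$ is a reasonable elaboration of a point the paper leaves implicit, but it does not change the argument.
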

 
The restricted language (where the values for the inequalities are restricted to a finite set of values from the unit interval) allows for inequalities to be rewritten as a disjunction of equalities. This then allows for an epistemic consequence relation to be defined as a conservative extension of the classical propositional consequence relation. The advantage of this restricted version is that it can be easily implemented using \emph{constraint satisfaction} techniques \cite{Dechter2003,Rossi:2006,Tsang1993}. These allow for a declarative representation of constraints and provide sophisticated methods for determining solutions. For some applications, such as user modelling in persuasion dialogues, having a restricted set of values (such as corresponding to a Likert scale) would offer a sufficiently rich framework.  

\subsection{Closure}  

Last, but not the least, we define the notion of an epistemic closure, which will become particularly useful in the analysis 
of relation coverage and labelings in Sections \ref{sec:RelationCoverage} and \ref{sec:consistentlabel}. To put 
it simply, closure produces the set of 
all formulae derivable from a given set: 

\begin{definition}
Let $\Phi \subseteq \eformulae(\graph)$.
The {\bf epistemic closure function} is defined as follows.
\[
\closure(\Phi) = \{ \psi \mid \Phi\VDash \psi \}
\] 
\end{definition}

We can observe that closure 
can produce infinitely many formulae that, depending on how we intend to use it, can be seen as redundant. 
For example, from a formula $\pprob(\xa) > 0.5$ we can derive $\pprob(\xa)>y$ for every real number $y \in [0,0.5]$. 
Consequently, in many cases it makes sense to focus on closure w.r.t. a given reasonable restricted set of values $\Pi$:

\begin{definition}
Let $\Pi$ be a reasonable restricted value set, 
and let $\Phi \subseteq \eformulae(\graph,\Pi)$.
The {\bf restricted epistemic closure function} is defined as follows.
\[
\closure(\Phi,\Pi) = \{ \psi \mid \Phi\VDash_\Pi \psi \}
\] 
\end{definition}

Given the soundness and completeness results for our proof systems, we can observe that 
closure can also be defined using $\Vdash_\Pi$. 
The closure function is monotonic on both of its arguments (i.e. if $\Phi\subseteq\Phi'$ and $\Pi\subseteq\Pi'$, then 
$\closure(\Phi,\Pi) \subseteq \closure(\Phi,\Pi)$). 

\begin{example}
Let us consider the reasonable restricted value set $\Pi = \{0, 0.1, 0.2, \ldots, 0.9, 1\}$ and the 
set of formulae $\Phi = \{\pprob(\xa) < 0.5, (\pprob(\xb) > 0.5 \land \pprob(\xa) > 0.4) \rightarrow 
\pprob(\xc) > 0.6, \pprob(\xc) = 1 \rightarrow \pprob(\xb) = 0.9\}$. 
We can observe that $\Phi  \VDash_\Pi \pprob(\xa) \leq x$ for $x \in \{0.5, 0.6, \ldots, 1\}$, thus these formulae 
belong to the (both restricted and standard) closure of $\Phi$. On the other hand, the formula $\pprob(\xa) = 0.7$ does not. 
The formula $\pprob(\xa) < 0.5 \land (\pprob(\xb) \leq 0.5 \lor \pprob(\xc) > 0.6) \land (\pprob(\xc) < 1 \lor 
\pprob(\xb) = 0.9)$ also belongs to the closure. The formula $\pprob(\xb) = 0.8 \land \pprob(\xc) = 0.2$ 
does not. 
\end{example}

 We will use the closure function in the next section when we consider properties of epistemic graphs in terms of their constraints.

\section{Epistemic Graphs}
\label{sec:epistemicgraphs} 

In the introduction, we have discussed the value of being able to model beliefs in arguments, various types of relations 
between arguments, context--sensitivity, and more. Our proposal, capable of meeting the postulated requirements, 
comes in the form of epistemic graphs, which are labelled graphs 
equipped with particular formulae specifying the beliefs in arguments and the interplay between them. 
In this section we formalize the idea of epistemic graphs: we explain how constraints can be specified and interpreted, 
define epistemic semantics and provide an example of how our proposal can be used in practical applications. 
  
Our aim in this section is to provide a general representation formalism for epistemic probabilistic argumentation. Although we will, at times, discuss reasoning methods and introduce concepts that may help in implementing a working system based on our formalism, our focus will be on the conceptual level. In general, reasoning with epistemic constraints can be seen as a special case of \emph{constraint satisfaction problems} \cite{Dechter2003,Rossi:2006,Tsang1993} (CSP, as mentioned earlier) and CSP software could be used to implement our proposal. We will point to general concepts from the CSP literature when appropriate, but leave a deeper discussion of the implementation issues for future work. 
  
An epistemic graph is, to put it simply, a labelled graph equipped with a set of epistemic constraints,   
which are defined as epistemic formulae that contain at least one argument. This restriction
is to exclude constraints that operate only on truth values and are simply redundant. Nevertheless, we note 
that it is optional and can be lifted if desired. 
 
\begin{definition} 
An {\bf epistemic constraint} is an epistemic formula $\psi \in \eformulae(\graph)$ s.t. $\fargs(\psi) \neq \emptyset$.  
An {\bf epistemic graph} is a tuple $(\graph,\lab,\con)$ where $(\graph,\lab)$ is a labelled graph, and 
$\con \subseteq \eformulae(\graph)$ is a set of epistemic constraints associated with the graph.  
\end{definition} 

We will say that an epistemic graph is consistent iff its set of constraints is consistent.  
Please note that the graph (and its labelling, which we will discuss in Section \ref{sec:consistentlabel}) is not necessarily induced 
by the constraints and therefore it contains additional information.  
The actual direction of the edges in the graph is also not derivable from $\con$. For example, 
if we had two arguments $\xa$ and $\xb$ connected by an edge, 
a constraint of the form $\pprob(\xa) < 0.5 \lor \pprob(\xb) < 0.5$ would not tell us the direction of this edge.
While for the sake of readability, we may use implications that reflect the directions of the edges, the syntactical features of the constraints should in general not be treated as cues for the graph structure.
The constraints may also involve unrelated arguments, similarly as in \cite{CosteMarquisDM06}. 
We will now consider some examples of epistemic graphs. 

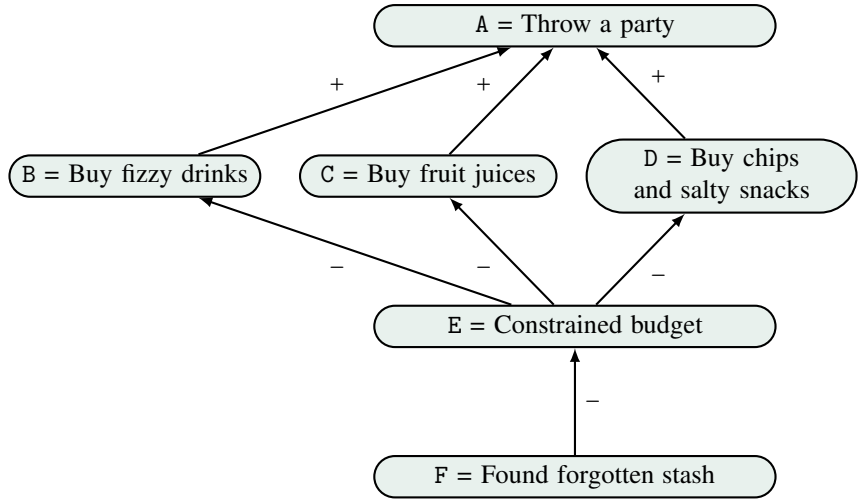
\begin{figure}[!ht]
\centering  
 \begin{tikzpicture}[->,>=latex,thick,auto, arg/.style={draw,text centered,shape = rounded rectangle,fill=darkgreen!10}]
\node[arg] (a) [text width=5cm] at (1.85,4) {$\xa$ = Throw a party}; 
\node[arg] (b) [text width=3cm]at (-4,2) {$\xb$ = Buy fizzy drinks};
\node[arg] (c) [text width=3.075cm]at (-0.1,2) {$\xc$ = Buy fruit juices};
\node[arg] (d) [text width=3.015cm]at (3.8,2) {$\xd$ = Buy chips and salty snacks}; 

\node[arg] (e) [text width=5cm] at (1.85,0) {$\xe$ = Constrained budget};  
\node[arg] (f) [text width=5cm] at (1.85,-2) {$\xf$ = Found forgotten stash}; 

\path (b) edge node{$+$} (a)
	 (c) edge node{$+$}  (a)
	 (d) edge node[swap]{$+$}  (a) 
	  (e) edge node{$-$} (b)
	 (e) edge node{$-$}  (c)
	 (e) edge node[swap]{$-$}  (d) 
	 (f) edge node[swap]{$-$}  (e);  
\end{tikzpicture} 
\caption{Party organization graph. The $+$ labels denote support and $-$ denote attack. }
\label{fig:party}
\end{figure}

\begin{example} 
\label{ex:party}
Let us consider an example in which Mary and Jane are organizing a small party at the student dormitory. Although 
the guests will bring some beer, Mary and Jane need to buy 
some non--alcoholic drinks and snacks. This can be represented with arguments $\xa$, $\xb$, $\xc$ and $\xd$ 
as seen in Figure \ref{fig:party} and expressed with the following constraints:

\begin{itemize}
\item $\varphi_1 \formis (\pprob(\xb) >0.5 \lor \pprob(\xc) >0.5) \land  \pprob(\xd) >0.5  \rightarrow \pprob(\xa) > 0.5$
\item $\varphi_2 \formis (\pprob(\xb) <0.5 \land \pprob(\xc) <0.5) \lor   \pprob(\xd) <0.5  \rightarrow \pprob(\xa)  <0.5$
\end{itemize}

We can observe that $\xb$, $\xc$ and $\xd$ are supporters of $\xa$ in the sense that the acceptance of $\xa$ requires 
the acceptance of $\xd$ and $\xb$ or $\xc$. 

Let us assume that Mary and Jane realize that their budget is somewhat limited. 
We could create a constraint stating
that at least one of the items has to be rejected:

\begin{itemize}
\item $\varphi_3 \formis \pprob(\xb) <0.5 \lor \pprob(\xc) <0.5 \lor \pprob(\xd) < 0.5$
\end{itemize}

However, instead of this, we can also decide to represent the budget limitations as an argument $\xe$ and replace 
$\varphi_3$ with $\varphi'_3$: 
\begin{itemize} 
\item $\varphi'_3 \formis \pprob(\xe) >0.5 \rightarrow \pprob(\xb) <0.5 \lor \pprob(\xc) <0.5 \lor \pprob(\xd) < 0.5$
\item $\varphi'_4 \formis \pprob(\xe) >0.5$
\end{itemize}

We can observe that in this case, the relation between $\xe$ and $\xb$, $\xc$ and $\xd$ is more attacking, in the sense that 
acceptance of $\xe$ leads to the rejection of at least one of $\xb$, $\xc$ and $\xd$. 

Although the former solution is more concise, the latter also has its benefits. Let us assume that Mary now finds
some spare money in her backpack and they can afford to buy all of the items. Thus, we add argument $\xf$, 
and the constraint $\varphi'_4$ will 
need to be replaced:
\begin{itemize} 
\item $\varphi''_4 = \pprob(\xf) >0.5 \rightarrow \pprob(\xe) <0.5$
\item $\varphi''_5 = \pprob(\xf) >0.5$
\end{itemize}
Clearly, the relation between $\xf$ and $\xe$ is conflicting. 
\end{example}

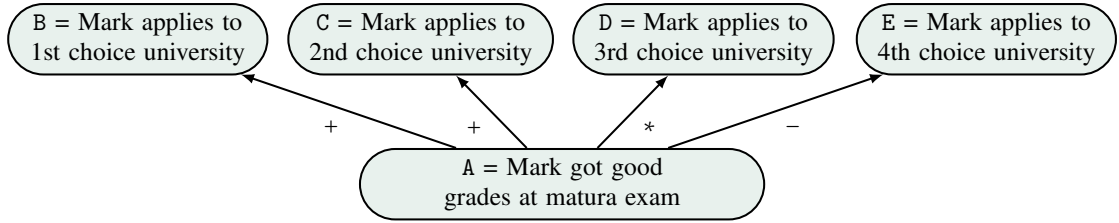
\begin{figure}[!ht]
\centering  
\resizebox{\textwidth}{!}{
 \begin{tikzpicture}[->,>=latex,thick,auto, arg/.style={draw,text centered,shape = rounded rectangle,fill=darkgreen!10}]
\node[arg] (a) [text width=5cm] at (1.85,0) {$\xa$ = Mark got good grades at matura exam}; 
\node[arg] (b) [text width=3cm]at (-4,2) {$\xb$ = Mark applies to 1st choice university};
\node[arg] (c) [text width=3.075cm]at (-0.1,2) {$\xc$ = Mark applies to 2nd choice university};
\node[arg] (d) [text width=3.015cm]at (3.8,2) {$\xd$ = Mark applies to 3rd choice university};
\node[arg] (e) [text width=3cm]at (7.7,2) {$\xe$ = Mark applies to 4th choice university};

\path (a) edge node{$+$} (b)
	 (a) edge node{$+$}  (c)
	 (a) edge node[swap]{$*$}  (d)
	 (a) edge node[swap]{$-$}  (e);  
\end{tikzpicture} 
}
\caption{Mark's university choice graph. The $+$ labels denote support, $-$ attack, and $*$ dependency.}
\label{fig:matura}
\end{figure}

\begin{example}
Let us consider the graph depicted in Figure \ref{fig:matura}. Given the rules in his country, Mark has written the matura exam (national exam after high school allowing a person to apply to a university) and can now register for up to two 
universities that interest him. 
He will be accepted or rejected once the exam results are in. 
We create the following constraints expressing what Mark plans to do: 
\begin{itemize}
\item If Mark strongly disbelieves that he will get good grades, he will apply only to his 4th choice university:

$\pprob(\xa) \leq 0.2 \rightarrow \pprob(\xb) < 0.5 \land \pprob(\xc) < 0.5 \land \pprob(\xd) < 0.5 \land \pprob(\xe) >0.5$

\item If Mark moderately does not believe that he will get good grades, he will apply only to his 3rd and 4th choice universities:

$\pprob(\xa) > 0.2 \land \pprob(\xa) \leq 0.5 \rightarrow \pprob(\xb) < 0.5 \land \pprob(\xc) < 0.5 \land \pprob(\xd) > 0.5 \land \pprob(\xe) >0.5$

\item If Mark moderately believes his grades will be good, he will apply only to his 2nd and 3rd choice universities: 

$\pprob(\xa) > 0.5 \land \pprob(\xa) < 0.8 \rightarrow \pprob(\xb) < 0.5 \land \pprob(\xc) > 0.5 \land \pprob(\xd) > 0.5 \land \pprob(\xe) < 0.5$

\item If Mark strongly believes he will get good grades, he will apply only to his 1st and 2nd choice universities:

$\pprob(\xa) \geq 0.8  \rightarrow \pprob(\xb) > 0.5 \land \pprob(\xc) > 0.5 \land \pprob(\xd) < 0.5 \land \pprob(\xe) < 0.5$

\end{itemize}
We can consider the relation between $\xa$ and $\xe$ to be conflicting, as once $\xa$ is believed we disbelieve $\xe$. 
Given that believing $\xa$ (to a sufficiently high degree) also leads to believing $\xb$ and $\xc$, the relations between 
these arguments can be seen as supporting. However, the interaction between $\xa$ and $\xd$ cannot be clearly 
classified as supporting or attacking, given that as the belief in $\xa$ increases, $\xd$ can be disbelieved, believed, 
and then disbelieved again. 
\end{example}

Epistemic graphs are therefore quite flexible in representing various restrictions on beliefs. 
However, given the freedom we have in defining constraints, we can create epistemic graphs in which the constraints 
do not really reflect the structure of the graph and vice versa. Moreover, a probability distribution satisfying our requirements 
may be further refined in various ways, independently of the graph in question. Thus, in the next section we would like 
to explore testing if and how the graph structure can be reflected by the constraints and introduce various 
types of epistemic semantics.


\subsection{Coverage}
\label{sec:CoverageAnalysis}

Previously, we have stated that it is not necessary for the constraints to account for all arguments and 
all the relations between them. 
While the ability to operate a not fully defined framework is valuable from the practical point of view, for example when 
dealing with limited knowledge about an opponent during a dialogue, having a graph in which the constraints cover all possible 
scenarios has undeniable benefits. In this section we will therefore focus on notions that can be used to measure if, and 
to what degree, arguments and relations between them are accounted for by the constraints. We will consider possible 
means of using this information in Section \ref{sec:intercoh}. 
 
The general idea of verifying coverage relies on modulating beliefs in certain arguments and observing 
whether it results in particular behaviours in the arguments we are interested in. Key notion in this 
is a constraint combination, which we will use as a \enquote{modulating} component: 

\begin{definition}
\label{def:constraintcomb}
Let $F = \{\xa_1, \ldots, \xa_m\} \subseteq \nodes(\graph)$ be a set of arguments. An 
\textbf{exact constraint combination} for $F$ is a set 
${\cal CC}^F = \{\pprob(\xa_1) = x_1, \pprob(\xa_2) = x_2, \ldots, \pprob(\xa_m) = x_m \}$, where $x_1, \ldots, x_m \in [0,1]$. 
A \textbf{soft constraint combination} for $F$ is a set 
${\cal CC}^F = \{\pprob(\xa_1) \ineq_1 x_1, \pprob(\xa_2) \ineq_2 x_2, \ldots, \pprob(f_m) \ineq_m x_m \}$, where 
$x_1, \ldots, x_m \in [0,1]$ and $\ineq_1, \ldots, \ineq_m \in \operators$. 
With ${\cal CC}^F\rvert_{G}$ for $G \subseteq \nodes(\graph)$ we denote the subset of ${\cal CC}^F$ 
that consists of all and only constraints of ${\cal CC}^F$ that are on arguments contained in $F \cap G$. 
\end{definition}

Verifying if and how the belief in an argument changes given the beliefs in other arguments can possess certain challenges 
depending on how the set of constraints is defined. Amending the set of constraints with the above combinations 
might lead to inconsistencies coming from the fact that the arguments in the combinations themselves are interrelated 
or because the set of constraints already affects the belief in one of the arguments in the combination by default. 
Furthermore, we need to take into account the fact that the set of constraints associated with the graph might not be consistent
to start with. In the following sections we will work under the assumption that we are 
dealing with a graph s.t. the associated set of constraints is satisfiable, and for a discussion on inconsistent constraints 
refer to \cite{HunterPT2018Arxiv}.

\subsubsection{Argument Coverage}

On its own, an argument can be assigned any probability value from $[0,1]$. One of the purposes of the constraints 
is - as the name suggest - to constrain the range of values that an argument may take, for example by the values 
assigned to its parents. Coverage means that there is at least one value for the degree of belief 
of an argument cannot 
be assigned, be it straight from the constraints or under certain assumptions concerning the beliefs in other arguments, cf.\ general constraint propagation to restrict the domain of variables \cite{Dechter2003}. 
The most basic form of coverage is the default coverage, where we can find a degree of belief that an argument 
cannot take straightforwardly from the constraints and without imposing additional assumptions:

\begin{definition}
\label{def:defcov} 
Let $X = (\graph, \lab, \con)$ be a consistent epistemic graph.  
We say that an argument $\xa \in \nodes(\graph)$ is 
\textbf{default covered in $X$} if there is a value $x \in [0,1]$ s.t. $\con \VDash \pprob(\xa) \neq x$.   
\end{definition} 

\begin{example}
\label{ex:coverage1}
Let us consider the graph depicted in Figure \ref{fig:defcov} and the associated set of constraints $\con$:
$$\{\pprob(\xa) > 0.5, 
\pprob(\xa) > 0.5 \rightarrow \pprob(\xb) < 0.5, 
(\pprob(\xb) < 0.5 \land \pprob(\xc) > 0.5) \rightarrow \pprob(\xd) \leq 0.5,
\pprob(\xc) \leq 0.5 \rightarrow \pprob(\xd) > 0.5\}$$

In this case, we can observe that both $\xa$ and $\xb$ are covered by default.  
For example, $\con \VDash \pprob(\xa) \neq 0.5$ and $\con \VDash \pprob(\xb) \neq 0.5$. This comes from 
the fact that the belief in $\xa$ is restricted from the very beginning and from it we can derive the restrictions for $\xb$. 
However, arguments $\xc$ and $\xd$ are not default covered. Although they are constrained and, for example, 
it cannot be the case that they are both believed or both disbelieved at the same time, for every belief value 
$x \in [0,1]$ we can still find a probability distribution $\prob$ s.t. $\prob(\xc) = x$ (resp. $\prob(\xd) = x$). 
\end{example}

\begin{figure}[!ht]
\centering
\begin{minipage}[t]{.5\textwidth} 
\centering
  \begin{tikzpicture}
[->,>=stealth,shorten >=1pt,auto,node distance=1.7cm,
  thick,main node/.style={shape=rounded rectangle,fill=darkgreen!10,draw,minimum size = 0.6cm,font=\normalsize\bfseries} 
]

\node[main node] (a) {$\xa$};
\node[main node] (b) [right of=a] {$\xb$};
\node[main node] (c) [right of=b] {$\xc$};
\node[main node] (d) [right of=c] {$\xd$}; 
 
 \path
	(a) edge node {$-$} (b) 
	(b) edge [bend right=45] node [swap] {$+$} (d)
	(c) edge [bend right] node[swap] {$-$} (d) 
    (d) edge [bend right] node[swap] {$-$} (c);
\end{tikzpicture}
\captionof{figure}{An argument graph}
\label{fig:defcov} 
\end{minipage}%
\begin{minipage}[t]{.45\textwidth} 
        \centering
  \begin{tikzpicture}
[->,>=stealth,shorten >=1pt,auto,node distance=1.7cm,
  thick,main node/.style={shape=rounded rectangle,fill=darkgreen!10,draw,minimum size = 0.6cm,font=\normalsize\bfseries} 
]

\node[main node] (a) at(0,0) {$\xa$};
\node[main node] (b) at(1,1) {$\xb$};
\node[main node] (c) at(-1,1) {$\xc$}; 
 
 \path
	(b) edge node[swap] {$-$} (a)
	(c) edge node[swap] {$-$} (a)
	(b) edge node[swap] {$-$} (c);
\end{tikzpicture} 
\captionof{figure}{A conflict--based argument graph}
\label{fig:inconsistentcombination} 
\end{minipage}
\end{figure}

The above example also shows that in some cases, the default coverage may be too restrictive. 
Although neither $\xc$ nor $\xd$ are default covered, the belief we have in one restricts the belief we have in the other. 
Thus, our intuition is that some form of coverage should exist. In our case, every level of belief we had 
in $\xc$ had constrained $\xd$ and vice versa. However, even weaker forms may be considered: 
 
\begin{example}
\label{ex:coverage2} 
Let us consider the framework depicted in Figure \ref{fig:inconsistentcombination} and the following set of constraints $\con$:
$$\{\varphi_1 \formis \pprob(\xb) >0.5 \rightarrow \pprob(\xc) \leq 0.5,
\varphi_2 \formis (\pprob(\xb) >0.5  \land \pprob(\xc) \geq 0.5) \rightarrow \pprob(\xa) <0.5\}$$


Let us analyze how the belief in $\xa$ is constrained in the graph. Our intuition is that some coverage does exist. 
In particular, we can observe that if $\xb$ is believed and $\xc$ is not disbelieved, then $\xa$ is disbelieved and thus there are some 
probabilities it cannot take in this context. However, 
if this condition is not satisfied, then $\xa$ can take on any probability. Thus, the coverage is, in a sense, \enquote{partial}. 
\end{example}

We therefore introduce the 
additional notions of coverage below. 
Given the fact that the constraints can occur between unrelated arguments and that for certain types of relations 
the belief in an argument is more affected by the arguments it is targeting rather than by those that are its parents, 
we allow for testing coverage against an arbitrary set of arguments. 
We say that an argument is partially covered by a set of arguments $F$
if we can find a belief assignment for $F$ that respects the existing constraints and leads to our argument 
not being able to take on some values. Full coverage states that every appropriate belief assignment for $F$
should lead to the argument not taking on some values.  
 
\begin{definition} \label{def:coverage}
Let $X = (\graph, \lab, \con)$ be a consistent epistemic graph, $\xa \in \nodes(\graph)$ 
an argument and $F \subseteq \nodes(\graph)\setminus\{\xa\}$ a set of arguments.  
We say that $\xa$ is: 
\begin{itemize}  
\item \textbf{partially covered by $F$} in $X$ if there exists a constraint combination ${\cal CC}^{F}$ and a value $x \in [0,1]$ s.t. 
${\cal CC}^{F} \cup \con \not\VDash \bot$ and 
${\cal CC}^{F} \cup \con \VDash \pprob(\xa) \neq x$ 

\item \textbf{fully covered by $F$} in $X$ if 
for every constraint combination ${\cal CC}^{F}$ s.t. ${\cal CC}^{F} \cup \con \not\VDash \bot$,
there exists a value $x \in [0,1]$ s.t.  
${\cal CC}^{F} \cup \con \VDash \pprob(\xa) \neq x$
\end{itemize}
\end{definition}   

We note that for a graph that possesses a consistent set of constraints, for every 
set of arguments $F$ we can find a constraint combination 
${\cal CC}^{F}$ for $F$ s.t. ${\cal CC}^{F} \cup \con$ is consistent (see also Definition \ref{def:constraintcomb}). 
It is also worth noting that for $F = \emptyset$, the definitions of partial, full and default coverage coincide. The set ${\cal CC}^{F}$ is also called an \emph{eliminating explanation} \cite{VanBeek:2006}.

We can observe that in the above definition, we exclude the effect an argument may have on itself (i.e. 
the set $F$ cannot contain the argument in question). While it has clear technical benefits, 
we also observe that constraints representing directly self--attacking and self--supporting arguments 
either provide default coverage or no coverage at all. 
Let us consider a simple graph with an argument $\xa$ s.t $\xa$ is a self--attacker, which can be represented 
with constraints $\pprob(\xa) >0.5 \rightarrow \pprob(\xa) < 0.5$ (i.e. if $\xa$ is believed, then 
$\xa$ is disbelieved) and  $\pprob(\xa) < 0.5 \rightarrow 
\pprob(\xa) >0.5$ (i.e. if $\xa$ is disbelieved, then its attackee (and/or attacker) $\xa$ is believed). 
From this we can infer that $\pprob(\xa) = 0.5$ which 
provides default coverage. Performing a similar analysis for a self--supporter (i.e. if $\xa$ is believed, then $\xa$ is believed 
and if $\xa$ is disbelieved, then $\xa$ is disbelieved) leads to a tautology constraint and provides no coverage at all. 

%

\begin{example}
\label{ex:coveragecont}
Let us consider the graph from Example \ref{ex:coverage1} and look at arguments $\xc$ and $\xd$. 
We can start by analyzing whether arguments $\xa$ and $\xb$ provide any coverage for them. We can see 
that any constraint combination $\{\pprob(\xa) = x, \pprob(\xb) = y\}$ 
for these two arguments that is consistent with the existing formulae is such that $x \in (0.5,1]$ and $y \in [0, 0.5)$.
Nevertheless, there is no value $z \in [0,1]$ s.t. the union of our constraint combination 
and the original set of constraints entails $\pprob(\xc) \neq z$ or $\pprob(\xd) \neq z$. Consequently, 
these arguments provide no coverage (be it full or partial), which is in accordance with our intuition. 

Let us therefore consider constraint combinations on $\xc$ and analyze the argument $\xd$. 
We can observe that any set $\{\pprob(\xc) = v\}$ for $v \in [0,1]$ is consistent with $\con$. 
For $v \in [0, 0.5]$, we can observe that $\con \cup \{\pprob(\xc) = v\} \VDash \pprob(\xd) > 0.5$. 
Thus, for example, $\con \cup \{\pprob(\xc) = v\} \VDash \pprob(\xd) \neq 0$. 
For $v \in (0.5, 1]$, we can observe that $\con \cup \{\pprob(\xc) = v\} \VDash \pprob(\xd) \leq 0.5$. 
Therefore, for example, $\con \cup \{\pprob(\xc) = v\} \VDash \pprob(\xd) \neq 1$. 
Hence, we can argue that $\xd$ is both partially and fully covered by $\{\xc\}$ (and, as a result, also by sets containing $\xc$). 
Similar arguments can be made for showing that $\xc$ is partially and fully covered by $\{\xd\}$.  
\end{example}

\begin{example}
\label{ex:counterexfullpart}
Let us come back to Example \ref{ex:coverage2} and check whether argument $\xa$ is covered by the set $\{\xb,\xc\}$. 
We can observe that all constraint combinations $\{\pprob(\xb) = x, \pprob(\xc) = y\}$ are consistent 
with $\con$ as long as 
either $x \leq 0.5$ or $y \leq 0.5$. We can observe that 
$\{\pprob(\xb) = 1, \pprob(\xc) = 0.5\} \cup \con \VDash \pprob(\xa) < 0.5$. 
Thus, for example, $\{\pprob(\xb) = 1, \pprob(\xc) = 0.5\} \cup \con \VDash \pprob(\xa) \neq 1$, 
and we have at least partial coverage. 
However, if we consider $\{\pprob(\xb) = 0.5, \pprob(\xc) = 0.5\}$, 
then $\xa$ can be assigned any belief from $[0,1]$. In other words, 
there is no value $z \in [0,1]$ s.t. $\{\pprob(\xb) = 0.5, \pprob(\xc) = 0.5\} \cup \con \VDash \pprob(\xa) \neq z$. 
Thus, the coverage is not full.  
\end{example}

In the above partial and full versions of the coverage, we needed to select the arguments against which we wanted to 
check whether the belief in an argument is restricted or not. For some applications, this extra information might be unnecessary, 
and thus we can consider the arbitrary versions of partial and full coverage, i.e. ones in which the actual set $F$ is not important
as long as at least one exists. 

\begin{definition}
Let $X = (\graph, \lab, \con)$ be a consistent epistemic graph. An argument $\xa \in \nodes(\graph)$ 
has \textbf{arbitrary full/partial coverage} iff there exists a set of arguments 
$F \subseteq \nodes(\graph)\setminus \{\xa\}$ s.t. $\xa$ is fully or partially covered w.r.t. $F$. 
\end{definition}
 
The following relationships between the various forms of coverage can be shown straightforwardly:

\begin{restatable}{proposition}{conscoverage}
Let $X = (\graph, \lab, \con)$ be a consistent epistemic graph, $\xa \in \nodes(\graph)$ be an 
argument and $F = \nodes(\graph)\setminus\{\xa\}$ be a set of arguments. 
The following hold:
\begin{itemize}
\item If $\xa$ is default covered in $X$, then it is partially and fully covered w.r.t. any set of arguments $G \subseteq  \nodes(\graph)\setminus\{\xa\}$, but not necessarily vice versa
\item If $\xa$ is fully covered in $X$ w.r.t. $F$, then it is partially covered in $X$ w.r.t. $F$, but not necessarily vice versa 
\end{itemize}
\end{restatable}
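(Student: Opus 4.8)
The plan is to derive both implications from a single structural fact: the entailment relation $\VDash$ is monotone in its left-hand side. Since $\Phi \VDash \psi$ means $\sat(\Phi) \subseteq \sat(\psi)$ and adding formulae can only shrink the set of satisfying distributions, we have $\sat({\cal CC}^{G} \cup \con) \subseteq \sat(\con)$ for any constraint combination ${\cal CC}^{G}$; hence $\con \VDash \psi$ implies ${\cal CC}^{G} \cup \con \VDash \psi$. The second ingredient is that, because $X$ is consistent, for every $F \subseteq \nodes(\graph) \setminus \{\xa\}$ at least one consistent constraint combination exists: picking any $\prob \in \sat(\con)$ and setting ${\cal CC}^{F} = \{ \pprob(\xb) = \prob(\xb) \mid \xb \in F \}$ yields ${\cal CC}^{F} \cup \con$ with $\prob$ among its satisfying distributions, so ${\cal CC}^{F} \cup \con \not\VDash \bot$.

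For the first item, suppose $\xa$ is default covered, so there is a fixed $x \in [0,1]$ with $\con \VDash \pprob(\xa) \neq x$ (Definition \ref{def:defcov}). Fix any $G \subseteq \nodes(\graph)\setminus\{\xa\}$. For full coverage I would take an arbitrary constraint combination ${\cal CC}^{G}$ with ${\cal CC}^{G} \cup \con \not\VDash \bot$ and observe, by monotonicity, that the same $x$ satisfies ${\cal CC}^{G} \cup \con \VDash \pprob(\xa) \neq x$; since this holds for every consistent ${\cal CC}^{G}$, $\xa$ is fully covered by $G$. Partial coverage then follows by exhibiting the one consistent combination constructed above as a witness (and for $G = \emptyset$ it coincides with the given default coverage). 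For the failure of the converse I would point to Example \ref{ex:coveragecont}, where $\xd$ is shown to be both partially and fully covered by $\{\xc\}$ yet, as established in Example \ref{ex:coverage1}, is not default covered.

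For the second item, assume $\xa$ is fully covered w.r.t. $F = \nodes(\graph)\setminus\{\xa\}$. Using the consistent combination ${\cal CC}^{F}$ produced above, full coverage supplies a value $x \in [0,1]$ with ${\cal CC}^{F} \cup \con \VDash \pprob(\xa) \neq x$; the pair consisting of ${\cal CC}^{F}$ and $x$ is precisely a witness for partial coverage (Definition \ref{def:coverage}), so $\xa$ is partially covered w.r.t. $F$. For the failure of the converse I would invoke Example \ref{ex:counterexfullpart}, where for the graph of Figure \ref{fig:inconsistentcombination} with $F = \{\xb,\xc\} = \nodes(\graph)\setminus\{\xa\}$ the argument $\xa$ is partially but not fully covered.

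The argument is essentially routine; the only point requiring care is the existential clause in the definitions of partial and full coverage, namely that the constraint combinations under consideration must be \emph{consistent} with $\con$. The step that guarantees this — extracting a consistent combination from a satisfying distribution of $\con$ — is the one place where the standing assumption that $X$ is consistent is used, and it is what makes the partial-coverage conclusions non-vacuous. Monotonicity then does the rest, and in the full-coverage case it delivers the extra feature that a single witness value $x$ can be reused uniformly across all consistent combinations.
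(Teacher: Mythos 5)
Your proof is correct and takes essentially the same route as the paper: the paper's own proof simply states that the implications follow straightforwardly from the definitions and cites Examples \ref{ex:coverage1}, \ref{ex:coveragecont} and \ref{ex:counterexfullpart} for the failed converses — exactly the examples you invoke. You have merely spelled out the two facts the paper leaves implicit (left-monotonicity of $\VDash$ and the extraction of a consistent constraint combination from a satisfying distribution of $\con$), both of which are sound.
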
  

Finally, we can observe that for epistemic graphs whose constraints have the same satisfying distributions, the coverage 
analysis leads to the same results:

\begin{restatable}{proposition}{conscoverageequiv}
\label{conscoverageequiv}
Let $X = (\graph, \lab, \con)$ and $X' = (\graph', \lab', \con')$ be consistent epistemic graphs 
s.t. $\sat(\con) = \sat(\con')$. An argument $\xa \in \nodes(\graph)$ is default (partially, fully) covered 
in $X$ (and w.r.t. $F \subseteq \nodes(\graph) \setminus \{\xa\}$) iff it is default (partially, fully) covered in $X'$ (w.r.t. $F$).
\end{restatable}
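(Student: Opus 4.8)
The plan is to observe that every notion of coverage in Definitions \ref{def:defcov} and \ref{def:coverage} is expressed purely through the entailment relation $\VDash$ applied to $\con$ (possibly extended by a constraint combination), and that $\VDash$ is in turn defined via the satisfying distributions through $\sat$. Since the hypothesis gives $\sat(\con) = \sat(\con')$, all the relevant entailments coincide, and the three equivalences (default, partial, full) follow at once. Implicit in the statement is that the two graphs share the same node set, so that $\dist(\graph) = \dist(\graph')$ and coverage of $\xa$ w.r.t.\ $F$ is well-posed on both sides; nothing in the coverage definitions mentions the labelling $\lab$ or the edge set, so whatever differences there are between $\graph$ and $\graph'$ play no role.

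First I would establish the key fact: for any set of epistemic formulae $\Delta \subseteq \eformulae(\graph)$ and any $\psi \in \eformulae(\graph)$, we have $\Delta \cup \con \VDash \psi$ iff $\Delta \cup \con' \VDash \psi$. This holds because $\sat(\Delta \cup \con) = \sat(\Delta) \cap \sat(\con)$ by the definition of satisfaction for a set of formulae, and using $\sat(\con) = \sat(\con')$ this equals $\sat(\Delta) \cap \sat(\con') = \sat(\Delta \cup \con')$; hence $\sat(\Delta \cup \con) \subseteq \sat(\psi)$ iff $\sat(\Delta \cup \con') \subseteq \sat(\psi)$, which is exactly the claimed equivalence by the definition of $\VDash$. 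Two instances are worth highlighting: taking $\psi = \bot$ shows that $\Delta \cup \con$ is consistent iff $\Delta \cup \con'$ is consistent (equal $\sat$, hence both empty or both nonempty); and taking $\psi = \pprob(\xa) \neq x$ yields exactly the entailments used in the coverage definitions.

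Next I apply this to each form of coverage. For default coverage (the case $\Delta = \emptyset$), $\xa$ is default covered in $X$ iff there is $x \in [0,1]$ with $\con \VDash \pprob(\xa) \neq x$, which by the fact above holds iff $\con' \VDash \pprob(\xa) \neq x$, i.e.\ iff $\xa$ is default covered in $X'$. For partial and full coverage w.r.t.\ a fixed $F$, I note that the collection of admissible constraint combinations ${\cal CC}^F$ depends only on $F$ and on values in $[0,1]$ (Definition \ref{def:constraintcomb}), so it is literally the same for $X$ and $X'$. Taking $\Delta = {\cal CC}^F$, the side-condition ${\cal CC}^F \cup \con \not\VDash \bot$ matches ${\cal CC}^F \cup \con' \not\VDash \bot$, and ${\cal CC}^F \cup \con \VDash \pprob(\xa) \neq x$ matches ${\cal CC}^F \cup \con' \VDash \pprob(\xa) \neq x$. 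Since the existential quantifier (partial) and the universal quantifier (full) both range over the same set of combinations, with matching side-conditions and matching conclusions, the partial and full coverage statements transfer between $X$ and $X'$; the arbitrary versions then follow by quantifying existentially over $F$.

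There is no substantive obstacle here: the proposition reduces to the single observation that coverage is a property of $\sat(\con)$ alone, together with the elementary compatibility of $\sat$ with unions of formula sets. The only care needed is bookkeeping of the consistency side-conditions and noting that the ranges of the quantifiers over ${\cal CC}^F$ are identical on both sides, which the fact established above handles uniformly.
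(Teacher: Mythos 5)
Your proposal is correct and takes essentially the same approach as the paper: both reduce the claim to the observation that $\sat(\con)=\sat(\con')$ forces all the relevant entailments (of $\pprob(\xa)\neq x$ and of $\bot$, from $\con$ possibly augmented with a constraint combination) to coincide, and then read off each form of coverage. The paper's own proof only spells out the default case and dismisses the rest as "similar"; your explicit treatment of the constraint combinations and the consistency side-conditions is just the omitted bookkeeping, not a different argument.
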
 
 
This result further highlights the need of contrasting information in the graph 
with the information in the constraints, which we will address in Section \ref{sec:intercoh}. 

\subsubsection{Relation Coverage}
\label{sec:RelationCoverage}

In the previous section we have discussed properties concerning whether an argument is sufficiently covered by the constraints. 
However, it also makes sense 
to check whether every relation is covered by the constraints as well. For example, we can consider an argument $\xa$ 
and its parents $\xb$ and $\xc$. It is possible that the constraints are defined in a way that only $\xb$ has an actual effect 
on $\xa$. Thus, the relation between $\xc$ and $\xa$ might have no real impact, despite the fact that $\xa$ may be fully 
covered in the graph. Hence, we also test for the effectiveness of a given relation, which is understood as the ability of the source 
to change the belief restrictions on the target argument. 
We therefore introduce the following definition, which simply states that there is a point at which changing the belief 
of the source of a relation will lead to a change in the belief we have in the target. In order to be able to look 
at effectiveness of explicit as well as implicit relations, we do not limit ourselves only to those mentioned in $\arcs(\graph)$:

\begin{definition} 
Let $X = (\graph, \lab, \con)$ be a consistent epistemic graph, $F \subseteq \nodes(\graph) \setminus \{\xb\}$ 
and $G  = F \setminus \{\xa\}$ be sets of arguments.  
The relation represented by $(\xa, \xb)\in \nodes(\graph) \times \nodes(\graph)$ is: 
\begin{itemize}
\item \textbf{effective w.r.t. $F$} if there exists a constraint combination ${\cal CC}^{F}$ and 
values $x,y \in [0,1]$ s.t. 
\begin{itemize}
\item $\con \cup {\cal CC}^{F} \nVDash \bot$, and
\item $\con \cup {\cal CC}^F\rvert_{G} \cup \{\pprob(\xa) = y\} \nVDash \bot$, and
\item at least one of the following conditions holds: 
	\begin{itemize} 
	\item $\con \cup {\cal CC}^{F} \nVDash \pprob(\xb) \neq x$ 
and $\con \cup {\cal CC}^F\rvert_{G} \cup \{\pprob(\xa) = y\} \VDash \pprob(\xb) \neq x$, or 
	\item $\con \cup {\cal CC}^{F} \VDash \pprob(\xb) \neq x$
and $\con \cup {\cal CC}^F\rvert_{G} \cup \{\pprob(\xa) = y\} \nVDash \pprob(\xb) \neq x$.  
\end{itemize}
\end{itemize}
\item 
\textbf{strongly effective w.r.t. $F$} if for every constraint combination ${\cal CC}^{F}$ s.t.
$\con \cup {\cal CC}^{F} \nVDash \bot$, there exist  
values $x,y \in [0,1]$ s.t. $\con \cup {\cal CC}^F\rvert_{G} \cup \{\pprob(\xa) = y\} \nVDash \bot$,
 and at least one of the following conditions holds: 
	\begin{itemize} 
	\item $\con \cup {\cal CC}^{F} \nVDash \pprob(\xb) \neq x$ 
and $\con \cup {\cal CC}^F\rvert_{G} \cup \{\pprob(\xa) = y\} \VDash \pprob(\xb) \neq x$, or 
	\item $\con \cup {\cal CC}^{F} \VDash \pprob(\xb) \neq x$
and $\con \cup {\cal CC}^F\rvert_{G} \cup \{\pprob(\xa) = y\} \nVDash \pprob(\xb) \neq x$.  
\end{itemize}
\end{itemize}
\end{definition} 
  
\begin{example}
\label{ex:effectivenesscounterex}
Let us consider a simple set of constraints $\con = \{ \pprob(\xa) > 0.5 \rightarrow \pprob(\xb) \leq 0.5\}$ 
and analyze the impact $\xa$ has on $\xb$. For this analysis, we assume $F = \{\xa\}$ and $G = \emptyset$. 
Consequently, we will focus on analyzing what we can conclude from $\con \cup \{\pprob(\xa) = z\}$ for 
selected values of $z \in [0,1]$.
We can observe that for every value of $z$, $\con \cup \{\pprob(\xa) = z \}  \not\VDash \bot$. 
Consequently, the first two conditions of effectiveness are easily satisfied. 
Let $z = 1$ and ${\cal CC}^{F} = \{\pprob(\xa) = 1 \}$. 
It holds that $\con \cup \{\pprob(\xa) = 1 \} \VDash \pprob(\xb) \leq 0.5$. 
Thus, for example, $\con \cup \{\pprob(\xa) = 1 \} \VDash \pprob(\xb) \neq 1$. 
However, if we set the probability of $\xa$ to $0$, then $\xb$ is allowed to take on any probability. 
Since ${\cal CC}^{G} = \emptyset$, it suffices to show that 
$\con \cup \emptyset \cup \{\pprob(\xa) = 0 \} \not\VDash \pprob(\xb) \neq 1$. Consequently, 
the third condition of effectiveness is also satisfied, and the $(\xa, \xb)$ 
relation is effective. In a similar fashion, we can show that it is strongly effective. 
\end{example}

\begin{example}
\label{ex:effectivenesscounterex2}
Let us now consider the following set of constraints $\con$:
$$\{\varphi_1 \formis \pprob(\xa) > 0.5 \rightarrow \pprob(\xb) >0.5, \,
 \varphi_2 \formis \pprob(\xc) >0.5 \rightarrow \pprob(\xb) > 0.9\}$$

We can analyze how $\xa$ and $\xc$ affect $\xb$ and consider the constraint combinations on $F=\{\xa,\xc\}$. 
We observe 
that both $(\xa, \xb)$ and $(\xc, \xb)$ are effective w.r.t. $\{\xa,\xc\}$. For instance, in case of $\xa$, 
we can take the constraint combinations ${\cal CC}^{F} = \{\pprob(\xa) = 0, \pprob(\xc) = 0\}$ and 
${\cal CC}^{G} = \{\pprob(\xc) = 0\}$ to see that 
$\con \cup {\cal CC}^{F} \nVDash \pprob(\xb) \neq 0.4$ and 
$\con \cup {\cal CC}^{G} \cup \{\pprob(\xa) = 0.7\} \VDash \pprob(\xb) \neq 0.4$. 
Similar analysis can be carried out for $\xc$. 

We can also observe that $(\xc, \xb)$ is strongly effective w.r.t. $F$. 
Let ${\cal CC}^{F} = \{\pprob(\xa) = x, \pprob(\xc) = y\}$ be an arbitrary constraint combination. 
If $x \in [0,1]$ and $y \leq 0.5$, then we can take ${\cal CC}^{G} = \{\pprob(\xa) = x\}$
and $\{\pprob(\xc) = 1\}$ to observe that 
observe that ${\cal CC}^{F} \cup \con \not\VDash \pprob(\xb) \neq 0.9$
and
${\cal CC}^{G} \cup \{ \pprob(\xc) =1\} \cup \con \VDash \pprob(\xb) \neq 0.9$. 
If $x \in [0,1]$ and $y > 0.5$, then we can take $\{\pprob(\xc) = 0\}$ 
to show that
${\cal CC}^{F}  \cup \con \VDash \pprob(\xb) \neq 0.9$
and ${\cal CC}^{G} \cup \{\pprob(\xc) = 0\} \cup \con \not\VDash \pprob(\xb) \neq 0.9$. 
Hence, in all cases, modifying the belief associated with $\xc$ changes the restrictions on $\xb$. 

We note that unlike $(\xc,\xb)$,
 $(\xa,\xb)$ is not strongly effective w.r.t. $F$. 
For example, we can consider the combination ${\cal CC}^{F} = \{\pprob(\xa) = 0.6, \pprob(\xc) = 0.6\}$ 
for $\xa$. In this case, ${\cal CC}^{G} = \{\pprob(\xc) = 0.6\}$. We can observe that 
${\cal CC}^{G} \cup \con \VDash \pprob(\xb) > 0.9$
and no matter the value of $x \in [0,1]$, adding $\{\pprob(\xa) = x\}$ to our premises will not change 
the restrictions on $\xb$. 
\end{example} 

The above definition of effectiveness is in fact a rather demanding one in the sense that even though 
there might exist a constraint from which we can see how two arguments are connected, 
other constraints in the graph might make it impossible for it to ever become \enquote{active}, so to speak. 
For example, coverage such as default, can interfere with detecting the effectiveness of a given 
relation. Let us consider the following scenario:

\begin{example}
\label{ex:effectivegap}
Let us look at the following set of constraints $\con$ and assume that $\{(\xb,\xa),(\xc, \xa)\} = \arcs(\graph)$:
$$\{\varphi_1 \formis \pprob(\xb) \leq 0.5 \land \pprob(\xc) < 0.5, \,
 \varphi_2 \formis (\pprob(\xb) \leq 0.5  \land \pprob(\xc) < 0.5) \rightarrow \pprob(\xa) <0.5\}$$
We can observe that even though $\xb$ and $\xc$ are not default covered, $\xa$ is. 
In particular, $\con \VDash \pprob(\xa) < 0.5$. In other words, no constraint combination on $\{\xb,\xc\}$
that is consistent with $\con$ will affect the restrictions on the probability of $\xa$. 
Hence, the $(\xb,\xa)$ and $(\xc, \xa)$ relations will not be considered effective. 
\end{example}

Given this, we can consider a weaker form of effectiveness, where the impact of other constraints may be disregarded. 
To achieve this, we test effectiveness not against the set of constraints $\con$, but against any consistent 
set of constraints derivable from it: 

\begin{definition} 
\label{def:effectiveness}
Let $X = (\graph, \lab, \con)$ be a consistent epistemic graph, 
$Z \subseteq \closure(\con)$ be a consistent set of epistemic constraints, $F \subseteq \nodes(\graph) \setminus \{\xb\}$ 
and $G  = F \setminus \{\xa\}$ be sets of arguments.  
Then $(\xa, \xb) \in \nodes(\graph) \times \nodes(\graph)$ is: 
\begin{itemize}
\item \textbf{semi--effective w.r.t. $(Z,F)$} if there exist a constraint combination ${\cal CC}^{F}$ and 
values $x,y \in [0,1]$ s.t. 
\begin{itemize}
\item $Z \cup {\cal CC}^{F} \nVDash \bot$, and
\item $Z \cup {\cal CC}^F\rvert_{G} \cup \{\pprob(\xa) = y\} \nVDash \bot$, and
\item at least one of the following conditions holds: 
	\begin{itemize} 
	\item $Z \cup {\cal CC}^{F} \nVDash \pprob(\xb) \neq x$ 
and $Z \cup {\cal CC}^F\rvert_{G} \cup \{\pprob(\xa) = y\} \VDash \pprob(\xb) \neq x$, or 
	\item $Z \cup {\cal CC}^{F} \VDash \pprob(\xb) \neq x$
and $Z \cup {\cal CC}^F\rvert_{G} \cup \{\pprob(\xa) = y\} \nVDash \pprob(\xb) \neq x$.  
\end{itemize}
\end{itemize}
\item 
\textbf{strongly semi--effective w.r.t. $(Z,F)$} if 
for every constraint combination ${\cal CC}^{F}$ 
s.t. $Z \cup {\cal CC}^{F} \nVDash \bot$, 
there exist values $x,y \in [0,1]$ s.t. $Z \cup {\cal CC}^F\rvert_{G} \cup \{\pprob(\xa) = y\} \nVDash \bot$ 
and at least one of the following conditions holds: 
	\begin{itemize} 
	\item $Z \cup {\cal CC}^{F} \nVDash \pprob(\xb) \neq x$ 
and $Z \cup {\cal CC}^F\rvert_{G} \cup \{\pprob(\xa) = y\} \VDash \pprob(\xb) \neq x$, or 
	\item $Z \cup {\cal CC}^{F} \VDash \pprob(\xb) \neq x$
and $Z \cup {\cal CC}^F\rvert_{G} \cup \{\pprob(\xa) = y\} \nVDash \pprob(\xb) \neq x$.  
\end{itemize} 
\end{itemize}
\end{definition}   

\begin{example}
\label{ex:semieffectiveness}
Let us come back to Example \ref{ex:effectivegap}. We could have observed that the $(\xb, \xa)$ 
and $(\xc,\xa)$ relations were not effective. Let us take $Z = \{(\pprob(\xb) \leq 0.5  \land \pprob(\xc) < 0.5) \rightarrow \pprob(\xa) <0.5\}$ and $F = \{\xb,\xc\}$. It is easy to verify that $Z \subseteq \closure(\con)$. 
We observe that any constraint combination on the set $\{\xb, \xc\}$ is consistent with $Z$, 
i.e. for any $x,y \in [0,1]$, $\{\pprob(\xb) = x, \pprob(\xc) = y\} \cup Z \not\VDash \bot$. 
We observe that if $x \leq 0.5$ and $y < 0.5$, then 
$\{\pprob(\xb) = x, \pprob(\xc) = y\} \cup Z  \VDash \pprob(\xa) < 0.5$. 
Hence, for example, $\{\pprob(\xb) = x, \pprob(\xc) = y\} \cup Z  \VDash \pprob(\xa)\neq 1$. 
If we change either $x$ or $y$ in a way that $x > 0.5$ or $y \geq 0.5$, 
then $\xa$ can take on any probability. Thus, for such new $x'$ or $y'$, 
$\{\pprob(\xb) = x', \pprob(\xc) = y\} \cup Z  \not\VDash \pprob(\xa)\neq 1$
and 
$\{\pprob(\xb) = x, \pprob(\xc) = y'\} \cup Z  \not\VDash \pprob(\xa)\neq 1$.
Hence, the relations are semi--effective w.r.t $(Z, F)$, even though they were not effective w.r.t. $F$. 
They are unfortunately not strongly semi--effective w.r.t. $(Z, F)$. For example, 
if we took a constraint combination  $\{\pprob(\xb) = 1, \pprob(\xc) = 1\}$, 
altering the assignment for $\xb$ ($\xc$ respectively) would not change the restrictions on $\xa$. 
\end{example}

Similarly as we did in case of arbitrary argument coverage, we can speak of arbitrary (semi-) effectiveness 
as long as a suitable $(Z, F)$ pair exists. 
The following connections can be drawn between all of these forms of effectiveness: 

\begin{restatable}{proposition}{effectiveness}
\label{prop:effectiveness}
Let $X = (\graph, \lab, \con)$ be a consistent epistemic graph, 
$Z \subseteq \closure(\con)$ be a consistent set of epistemic constraints, $F \subseteq \nodes(\graph) \setminus \{\xb\}$ 
and $G  = F \setminus \{\xa\}$ be sets of arguments.  
Let $(\xa, \xb) \in \nodes(\graph)\times \nodes(\graph)$. The following hold:
\begin{itemize}
\item If $(\xa, \xb)$ is strongly effective w.r.t. $F$, then it is effective w.r.t. $F$, but not necessarily vice versa
\item If $(\xa, \xb)$ is strongly semi--effective w.r.t. $(Z,F)$, then it is semi--effective w.r.t. $(Z, F)$, but not necessarily
 and vice versa
\item If $(\xa, \xb)$ is effective w.r.t. $F$, then it is semi--effective w.r.t. $(\con, F)$ and vice versa
\item If $(\xa, \xb)$ is strongly effective w.r.t. $F$, then it is strongly semi--effective w.r.t. $(\con, F)$ and vice versa
\item If $Z \neq \con$ and $(\xa, \xb)$ is semi--effective w.r.t. $(Z, F)$, then it is not necessarily effective w.r.t. $F$
\item If $Z \neq \con$ and $(\xa, \xb)$ is strongly semi--effective w.r.t. $(Z, F)$, then it is not necessarily strongly 
effective w.r.t. $F$
\end{itemize}
\end{restatable}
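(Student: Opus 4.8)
The plan is to treat the six claims in three groups, according to whether they follow by directly unfolding the definitions, by a universal--to--existential argument, or by exhibiting a counterexample. Throughout I would write out Definition~\ref{def:effectiveness} alongside the definition of (strong) effectiveness and exploit the fact that the two are related in a purely syntactic way.

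The two equivalences---that effectiveness w.r.t. $F$ coincides with semi--effectiveness w.r.t. $(\con, F)$, and that strong effectiveness w.r.t. $F$ coincides with strong semi--effectiveness w.r.t. $(\con, F)$---are immediate. Inspecting Definition~\ref{def:effectiveness}, the clauses defining semi--effective and strongly semi--effective w.r.t. $(Z, F)$ are obtained from the clauses defining effective and strongly effective w.r.t. $F$ by replacing every occurrence of $\con$ with $Z$. Hence putting $Z = \con$ makes the two pairs of conditions literally identical, which gives both biconditionals with no further work.

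For the two \enquote{strong implies plain} implications, I would use that plain (semi--)effectiveness is an existential statement over consistent constraint combinations whereas the strong version is the corresponding universal statement; so the only thing to verify is that at least one consistent combination exists. For effectiveness this is exactly the remark following Definition~\ref{def:coverage}: since $\con$ is consistent, for every $F$ there is a constraint combination ${\cal CC}^{F}$ with $\con \cup {\cal CC}^{F} \nVDash \bot$. The same reasoning applies with $Z$ in place of $\con$, because $Z$ is assumed consistent and a consistent combination can be read off any satisfying distribution of $Z$. Applying the universal statement to this one combination supplies the values $x, y$ together with the side condition $\con \cup {\cal CC}^{F}\rvert_{G} \cup \{\pprob(\xa) = y\} \nVDash \bot$ required by the existential statement, establishing both implications. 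The failure of the converses is already witnessed in the text: Example~\ref{ex:effectivenesscounterex2} exhibits a relation that is effective but not strongly effective w.r.t. $F$, and Example~\ref{ex:semieffectiveness} one that is semi--effective but not strongly semi--effective w.r.t. $(Z, F)$.

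It remains to justify the two non--implications for $Z \neq \con$. For the semi--effective case, Examples~\ref{ex:effectivegap} and~\ref{ex:semieffectiveness} together do the job: with $Z = \{(\pprob(\xb) \leq 0.5 \land \pprob(\xc) < 0.5) \rightarrow \pprob(\xa) < 0.5\} \subsetneq \con$ the relations $(\xb, \xa)$ and $(\xc, \xa)$ are semi--effective w.r.t. $(Z, F)$, yet Example~\ref{ex:effectivegap} shows they are not effective w.r.t. $F$, since $\con$ forces default coverage of $\xa$. For the strongly semi--effective case I would build a fresh witness along the same lines: augment a graph whose closure contains a combination--sensitive dependence of $\xb$ on $\xa$ with an additional constraint in $\con$ that pins the target (for instance forcing default coverage of $\xb$), so that strong effectiveness w.r.t. $F$ fails while a proper subset $Z \subseteq \closure(\con)$ still makes the relation strongly semi--effective. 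The main obstacle is precisely this last construction: strong semi--effectiveness is a demanding universal condition, so one must check that the dependence of $\xb$ on $\xa$ persists for \emph{every} constraint combination consistent with $Z$, while simultaneously ensuring that the extra constraints of $\con$ destroy the corresponding universal property and hence strong effectiveness w.r.t. $F$.
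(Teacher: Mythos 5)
Your treatment of the first five bullets is correct and matches the paper's route: the two biconditionals are syntactic identities obtained by setting $Z = \con$ in Definition~\ref{def:effectiveness}; the two \enquote{strong implies plain} implications follow because the strong versions are universal statements over consistent constraint combinations and consistency of $\con$ (resp.\ $Z$) guarantees at least one such combination exists, from which the existential version follows; and the cited counterexamples (Examples~\ref{ex:effectivenesscounterex2}, \ref{ex:semieffectiveness}, and \ref{ex:effectivegap}) are exactly the ones the paper invokes. You are in fact slightly more explicit than the paper in justifying the universal-to-existential step, which the paper dismisses as \enquote{follows from the definition.}

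The one genuine gap is the sixth bullet. You describe the shape of the required witness and then explicitly flag its construction as \enquote{the main obstacle}, but a non-implication claim is only established once a concrete counterexample is exhibited and verified; a description of what it should look like does not discharge the claim. The construction is, however, much easier than your closing paragraph suggests, and is essentially a two-argument collapse of Example~\ref{ex:effectivegap}: take $\con = \{\pprob(\xa) > 0.5,\ \pprob(\xa) > 0.5 \rightarrow \pprob(\xb) < 0.5\}$ with $F = \{\xa\}$ and $G = \emptyset$. Every constraint combination $\{\pprob(\xa) = v\}$ consistent with $\con$ forces $v > 0.5$ and hence $\con \cup \{\pprob(\xa)=v\} \VDash \pprob(\xb) < 0.5$, so modifying the belief in $\xa$ never changes the restrictions on $\xb$ and the relation is neither effective nor strongly effective w.r.t.\ $F$. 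Taking $Z = \{\pprob(\xa) > 0.5 \rightarrow \pprob(\xb) < 0.5\} \subseteq \closure(\con)$ with $Z \neq \con$, every combination $\{\pprob(\xa)=v\}$ is consistent with $Z$, and for each one a value $y$ on the other side of $0.5$ flips whether $\pprob(\xb) \neq 1$ is entailed, so the relation is strongly semi-effective w.r.t.\ $(Z,F)$. Note also that pinning the \emph{source} rather than the target is what does the work here, whereas your sketch proposes pinning the target; either can be made to work, but you would still need to verify the universal condition for strong semi-effectiveness, which the above witness makes immediate.
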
  

\subsection{Relation Types}
\label{sec:consistentlabel}

Labellings are useful for indicating the kind of influence one argument has on another. In epistemic graphs, 
the labels can be either provided during the instantiation process or, 
similarly as in the bipolar abstract dialectical frameworks, derived from the constraints\footnote{We note that we refer only to edge labels here, not edges in general. Epistemic graphs are structurally different from ADFs and deriving the graph from the constraints is generally not possible (we refer to the beginning of Section \ref{sec:epistemicgraphs} and to Section \ref{sec:ef-adf} for additional details).}. 
This however begs the question 
whether the way a relation is labelled is really consistent with the way it is described 
by the constraints.  
While taking the labelling as input has the benefit of being informed 
by the method that has instantiated the graph from a given knowledge base, the derivation approach offers more understanding of 
the real impact a given relation has on the arguments connected to it. By this we understand that 
determining edge types during instantiation is typically a very \enquote{local} process in which, for instance, we check whether the conclusions of two arguments are contradictory or not, or if conclusion of one is a premise of another. This often ignores the presence of other arguments. For example, it is perfectly possible for an argument $\xa$ that is locally a supporter of $\xb$ to also support an attacker $\xc$ of $\xb$, and thus have a negative influence on $\xb$ from a more \enquote{global} standpoint.
In this section we will focus on analyzing what constraints are telling us about relations between arguments. 

Inferring the type of a relation we are dealing with based on how the parent affects the target is not as trivial as one may think.
For instance, even in the case of attack relations, we have binary attack, group
attack, attacks as defined in ADFs or attacks as weakening relations, to list a few  \cite{Dung95,CayrolLS05b,Caminada:2009,BrewkaESWW13,BrewkaPW14}.
Acceptance of an attacker can lead to disbelieving the target, decreasing the belief in the target, 
or - in the presence of
e.g. overpowering supporters as in ADFs - have no effect at all. 
While there is a general consensus among argumentation formalisms that an attack should 
not have positive effects, 
the notion of a negative effect is still very broad. 
Since epistemic graphs are expressive enough to model all of these behaviours, it is therefore
valuable to study them in this setting. 

Therefore, as we can observe, even a simple attack can lead to various behaviours, and some of them may overlap with possible behaviours of supporting relations, one
has to be careful when judging a relation by the effect it has.
Additionally, even though 
two arguments can appear to be positively or negatively related on their own, taking into account the effects of other 
arguments in the graph might also bring to light other behaviours. Certain works on argument frameworks 
introduce the notions of indirect relations \cite{inproc:prudent,inproc:careful,CayrolLS13}. 
For example, one argument can support another, but at the same time 
attack another of its supporters, thus serving as an indirect attacker. It can therefore 
happen that depending on the context in which we look at two arguments, the perception of the relation between them changes:

\begin{example}
\label{ex:locglob}
Let us consider the following scenario with arguments $\xa$, $\xb$, $\xc$ and $\xd$ where  
$\xb$ and $\xc$ group support $\xa$ s.t. at least one of $\xb$ and $\xc$ needs to be believed in order to believe $\xa$, 
$\xb$ supports $\xd$ s.t. believing $\xb$ implies believing $\xd$, and $\xd$ attacks $\xa$. 
This can be depicted with the graph in Figure \ref{fig:supportex} 
and expressed with the following set of constraints $\con$:

\begin{itemize}
\item $\varphi_1 \formis \pprob(\xa) >0.5 \rightarrow \pprob(\xb) >0.5 \lor \pprob(\xc) > 0.5$
\item $\varphi_2 \formis (\pprob(\xd) <0.5 \land (\pprob(\xb) >0.5 \lor \pprob(\xc) > 0.5)) \rightarrow \pprob(\xa) >0.5$
\item $\varphi_3 \formis \pprob(\xb) >0.5 \rightarrow \pprob(\xd) >0.5$
\item $\varphi_4 \formis \pprob(\xd) >0.5 \rightarrow \pprob(\xa) < 0.5$
\end{itemize}

If we were to decide on the nature of the $\xb$--$\xa$ relation only from the constraint concerning both of them 
(i.e. constraints $\varphi_1$ and $\varphi_2$), then the supporting relation becomes quite apparent. However, if we were to 
take into account the interactions expressed in constraints $\varphi_1$ to $\varphi_4$, then we would observe that 
believing $\xb$ implies believing $\xd$ and thus disbelieving $\xa$, which is hardly a positive influence. 
\end{example}

\begin{figure}[!ht]
\centering 
\begin{minipage}[t]{.5\textwidth} 
\centering
\begin{tikzpicture}[->,>=latex,thick,auto,
main node/.style={shape=rounded rectangle,fill=darkgreen!10,draw,minimum size = 0.6cm,font=\normalsize\bfseries} ] 
 
\node[main node] (a) at (0,0) {$\xa$};
\node[main node] (b) at (1.5, 0) {$\xb$};
\node[main node] (c) at (-1.5,0) {$\xc$};
\node[main node] (d) at (3,0) {$\xd$};

\path (b) edge node {$+$} (a) 
	 (c) edge node[below] {$+$}  (a)
	(b) edge node[below] {$+$} (d) 
(d) edge [bend right] node[swap]  {$-$} (a) ; 

\end{tikzpicture}
\caption{A bipolar labelled graph for Example \ref{ex:locglob}}
\label{fig:supportex}
\end{minipage}%
\begin{minipage}[t]{.45\textwidth} 
\centering
\begin{tikzpicture}[->,>=latex,thick,auto,
main node/.style={shape=rounded rectangle,fill=darkgreen!10,draw,minimum size = 0.6cm,font=\normalsize\bfseries} ] 
 
\node[main node] (a) at (0,0) {$\xa$};
\node[main node] (b) at (1.5, 0) {$\xb$};
\node[main node] (c) at (3,0) {$\xc$}; 

\path (a) edge node[below] {$-$} (b) 
	 (c) edge node[below] {$+$}  (b) ; 

\end{tikzpicture}
\caption{A bipolar labelled graph for Example \ref{ex:29}}
\label{fig:supportex29}
\end{minipage}
\end{figure}

\begin{example}
\label{ex:29}
Let us consider the following scenario with the graph from Figure \ref{fig:supportex29}
and where we know that 
if $\xa$ is believed, then unless $\xc$ is believed, $\xb$ is disbelieved. Thus, $\xa$ carries out an attack 
that can be overruled by the support from $\xc$\footnote{This is an example of how extended argumentation frameworks 
can be modelled \cite{Polberg17}}. We can create the constraint 
$\pprob(\xa) >0.5 \land \pprob(\xc) \leq 0.5 \rightarrow \pprob(\xb) <0.5$
to reflect this. The interplay between $\xa$ and $\xc$ shows that despite the fact $\xa$ has primarily a negative effect 
on $\xb$, believing $\xa$ might not always imply disbelieving $\xb$ due to the presence of other arguments. 
\end{example}

Therefore, as we can see, both negative and positive relations can be interpreted in various ways, and their actual influence 
can change depending on the context in which they are analyzed. Hence, rather than forcing an attack to have a 
negative effect, we interpret it as a relation not having a positive effect and support as not having a negative effect.  
In this respect, our approach is similar to the one in abstract dialectical frameworks \cite{BrewkaESWW13}, 
which as seen in \cite{Polberg16} subsumes a wide range of existing methods. 
However, as motivated by Example \ref{ex:locglob}, we should additionally distinguish between local and global influence, 
the difference between them being whether all or some (parts of) constraints are taken into account.  

What we would also like to observe is that selecting the constraints against which the relations should be tested,  
is not necessarily an objective process. Let us again look at Example \ref{ex:locglob}:

\begin{example}
\label{ex:locglob2}
Let us come back to the graph depicted in Figure \ref{fig:supportex} and analyzed in Example \ref{ex:locglob}. 
%
%
In order to test for local impact that $\xb$ has on $\xa$, our intuition would be to focus on constraints 
$\varphi_1$ and $\varphi_2$. 
Let us however consider replacing $\varphi_3$ and $\varphi_4$ with an equivalent constraint
$(\pprob(\xb) >0.5 \rightarrow \pprob(\xd) >0.5) \land (\pprob(\xd) >0.5 \rightarrow \pprob(\xa) < 0.5)$. 
We observe that this replacement does not affect the satisfiability of our set. 
From the new constraint we can also infer another constraint 
$ \varphi' \formis \pprob(\xb) >0.5 \land \pprob(\xd) >0.5 \rightarrow \pprob(\xa) <0.5$. Again, adding it to the constraint set
in no way affects the satisfying distributions. However, this constraint can be interpreted as a group attack on $\xa$
by $\xb$ and $\xd$, and if we were to check  the local impact that $\xb$ has on $\xa$, the intuition would 
be to take it under consideration. Consequently, despite the logical equivalence of both the original 
and the modified sets of constraints, the perception of the relations stemming from them might not be the same. 
\end{example}

Thus, similarly as in the case of relation coverage, determining the nature of a given relation depends on the constraints 
that we choose to analyze. Likewise, we will focus on graphs with consistent constraints, and refer to \cite{HunterPT2018Arxiv} for 
discussion on handling the inconsistent ones. 

Let us first consider a simplified approach, which primarily focuses on arguments being believed, disbelieved or neither. 
We assume that a relation we want to investigate is at least semi-effective; ones that are not we will refer to as 
unspecified. Semi-effective relations can later be deemed attacking, supporting, dependent or subtle. 
Attack means that a target argument that is not believed remains as such when the source is believed. 
In other words, we 
want to avoid situations when believing the source would lead to believing the target. 
Support can be defined 
in a similar fashion. A dependent relation is seen as neither supporting nor attacking, and one that is both 
is referred to as subtle\footnote{In frameworks such as ADFs \cite{BrewkaESWW13}, a relation that is both 
attacking and supporting is redundant and can be safely 
removed from the graph \cite{Polberg17}. In our case, this more closely corresponds to unspecified ones 
due to their lack of effectiveness. Epistemic graphs are more fine-grained 
than ADFs and relations that are both positive and negative might not be redundant.}:


\begin{definition} 
\label{def:relations}
Let $X = (\graph, \lab, \con)$ be a consistent epistemic graph. 
Let 
$Z \subseteq \closure(\con)$ be a consistent set of epistemic constraints, $F \subseteq \nodes(\graph) \setminus \{\xb\}$ 
and $G  = F \setminus \{\xa\}$ be sets of arguments.   
The relation represented by $(\xa, \xb)\in \nodes(\graph) \times \nodes(\graph)$ is:
\begin{itemize}
\item \textbf{attacking w.r.t. $(Z,F)$} if it is  semi--effective w.r.t. $(Z,F)$ and
for every constraint combination ${\cal CC}^{F}$ s.t. $Z \cup {\cal CC}^{F} \not\VDash \bot$ and 
$Z \cup {\cal CC}^{F}\rvert_G \cup \{\pprob(\xa) >0.5\} \not\VDash \bot$, 
if $Z \cup {\cal CC}^{F} \VDash  \pprob(\xb) \leq 0.5$ then 
$Z \cup {\cal CC}^{F}\rvert_G \cup \{\pprob(\xa) >0.5\} \VDash \pprob(\xb) \leq 0.5$.  

\item \textbf{supporting w.r.t. $(Z,F)$} if it is semi--effective w.r.t. $(Z,F)$ and 
for every constraint combination ${\cal CC}^{F}$ s.t. $Z \cup {\cal CC}^{F} \not\VDash \bot$ and 
$Z \cup {\cal CC}^{F}\rvert_G \cup \{\pprob(\xa) >0.5\} \not\VDash \bot$, 
if $Z \cup {\cal CC}^{F} \VDash  \pprob(\xb) \geq 0.5$ then 
$Z \cup {\cal CC}^{F}\rvert_G \cup \{\pprob(\xa) >0.5\} \VDash \pprob(\xb) \geq 0.5$.

\item \textbf{dependent w.r.t. $(Z,F)$} if it is semi--effective but neither attacking nor supporting w.r.t. $(Z,F)$ 

\item \textbf{subtle w.r.t. $(Z,F)$} if it is  semi--effective and both attacking and supporting w.r.t. $(Z,F)$ 

\item \textbf{unspecified w.r.t. $(Z,F)$} if it is not semi--effective w.r.t. $(Z,F)$ 
\end{itemize}
\end{definition}

Depending on the choice of constraints and arguments that we use for testing, it can happen 
that a relation is seen as supporting or attacking due to vacuous truth. 
For example, we may never find 
an appropriate combination s.t. 
$Z \cup {\cal CC}^{F} \VDash  \pprob(\xb) \geq 0.5$ ($Z \cup {\cal CC}^{F} \VDash  \pprob(\xb) \leq 0.5$), 
or we cannot find constraint combinations that would be consistent with $Z$. 
Consequently, we can also consider the following strengthening:

\begin{definition}
Let $X = (\graph, \lab, \con)$ be a consistent epistemic graph,  
$Z \subseteq \closure(\con)$ be a consistent set of epistemic constraints, $F \subseteq \nodes(\graph)\setminus \{\xb\}$ 
and $G  = F \setminus \{\xa\}$ be sets of arguments. 
Then a supporting (resp. attacking, dependent, subtle)  w.r.t. $(Z, F)$ 
relation $(\xa, \xb)$ is \textbf{strong} w.r.t. $(Z, F)$ if:
\begin{itemize}
\item for every constraint combination ${\cal CC}^{F}$ 
it holds that $Z \cup {\cal CC}^{F} \not\VDash \bot$ and 
$Z \cup {\cal CC}^{F}\rvert_G \cup \{\pprob(\xa) >0.5\} \not\VDash \bot$
\item and 
there is at least one constraint combination ${\cal CC}^{F}$ 
s.t. 
$Z \cup {\cal CC}^{F} \VDash  \pprob(\xb) \geq 0.5$ (resp. $Z \cup {\cal CC}^{F} \VDash  \pprob(\xb) \leq 0.5$ or both). 
\end{itemize}
\end{definition} 
\begin{example}
\label{ex:rellabel}
Let us consider relation $(\xb,\xa)$. 
We can observe that 
$\con \VDash \varphi_1 \land \varphi_2$ and $\{\varphi_1 \land \varphi_2 \} 
\VDash (\pprob(\xa) \leq 0.5 \lor \pprob(\xb) > 0.5 \lor \pprob(\xc) > 0.5) 
\land (\pprob(\xa) > 0.5 \lor \pprob(\xb) \leq 0.5 \lor \pprob(\xd) \geq 0.5)$. 
Let us therefore take $Z = \{ (\pprob(\xa) \leq 0.5 \lor \pprob(\xb) > 0.5 \lor \pprob(\xc) > 0.5) 
\land (\pprob(\xa) > 0.5 \lor \pprob(\xb) \leq 0.5 \lor \pprob(\xd) \geq 0.5)\}$ and 
$F = \{\xb, \xc, \xd\}$ as our parameters for testing the nature of $(\xb,\xa)$.  
We can observe that if we take the sets $W = \{\pprob(\xb) = 0, \pprob(\xc) = 0, \pprob(\xd) = 0\}$ 
and $W' = \{\pprob(\xb) =1, \pprob(\xc) = 0, \pprob(\xd) = 0\}$, 
then $Z \cup W \VDash \pprob(\xa) \neq 1$ and $Z \cup W' \not\VDash \pprob(\xa) \neq 1$. 
Thus, the $(\xb, \xa)$ relation is semi--effective w.r.t. $(Z, F)$. 
Furthermore, for all value $y_1, y_2, y_3 \in [0,1]$, 
$Z \cup \{\pprob(\xb) =y_1, \pprob(\xc) = y_2, \pprob(\xd) = y_3\} \not\VDash \bot$
and $Z \cup \{\pprob(\xc) =y_2, \pprob(\xd) = y_3\} \cup \{ \pprob(\xb) > 0.5\} \not\VDash \bot$.
We can also observe that if $y_1 >0.5$ and $y_3<0.5$, then
$Z \cup \{\pprob(\xb) =y_1, \pprob(\xc) = y_2, \pprob(\xd) = y_3\} \VDash \pprob(\xa) > 0.5$, 
and if $y_1 \leq 0.5$ and $y_2 \leq 0.5$, then
$Z \cup \{\pprob(\xb) =y_1, \pprob(\xc) = y_2, \pprob(\xd) = y_3\} \VDash \pprob(\xa) \leq 0.5$. 
Otherwise, any probability can be assigned to $\xa$. 
Hence, for support, we only need to consider the first case, and amending the set of constraints with $\pprob(\xb) > 0.5$ 
will not change the outcome. Thus, the $(\xb, \xa)$ relation is strongly supporting w.r.t. $(Z, F)$. 
We can consider the second case and amend the constraints in the same way to see that the relation is not 
attacking. 

Let us now take into account all of the constraints and assume $Z = \con$. 
We can observe that if $F$ left the way it is, the $(\xb, \xa)$ relation is in fact unspecified. 
This is due to the fact that once the values for $\xc$ and $\xd$ are set, modifying the value of $\xb$ 
leads either to inconsistency (caused by $\varphi_3$) or does not change anything anymore. 
We can therefore reduce the set $F$ to $\{\xb, \xc\}$. At this point, we observe that for 
every $y_1, y_2 \in [0,1]$, the set $W = \{\pprob(\xb) = y_1, \pprob(\xc) = y_2\}$
is consistent with $Z$. 
Furthermore, for $y_1 > 0.5$, 
$Z \cup W \VDash \pprob(\xa) < 0.5$, 
for $y_1 \leq 0.5$ and $y_2 \leq 0.5$, $Z \cup W \VDash \pprob(\xa) \leq 0.5$,  
and for $y_1 \leq 0.5$ and $y_2 > 0.5$ $\xa$ can take any probability. 
We can therefore show that $(\xb, \xa)$ is strongly attacking w.r.t. $(Z, F)$. 
However, since we cannot derive $\pprob(\xa) \geq 0.5$, it is also supporting and subtle, even though not strongly.   
Summary of our analysis, as well as for some other relations, can be seen in Table \ref{tab:inducedlabels}.

\newcommand{\zca}{$\{(\pprob(\xa) \leq 0.5 \lor \pprob(\xb) > 0.5 \lor \pprob(\xc) > 0.5) 
\land (\pprob(\xa) > 0.5 \lor \pprob(\xc) \leq 0.5 \lor \pprob(\xd) \geq 0.5)\}$}

\newcommand{\zda}{$\{(\pprob(\xd) <0.5 \land (\pprob(\xb) >0.5 \lor \pprob(\xc) > 0.5)) \rightarrow \pprob(\xa) >0.5, 
\pprob(\xd) >0.5 \rightarrow \pprob(\xa) < 0.5\}$}

\newcommand{\zbd}{$\{\pprob(\xb) >0.5 \rightarrow \pprob(\xd) >0.5\}$}

\newcommand{\zba}{$\{(\pprob(\xa) \leq 0.5 \lor \pprob(\xb) > 0.5 \lor \pprob(\xc) > 0.5) 
\land (\pprob(\xa) > 0.5 \lor \pprob(\xb) \leq 0.5 \lor \pprob(\xd) \geq 0.5)\}$}

\begin{table}[!ht]
\centering 
\begin{tabular}{|c|c|P{6cm}|c|c|} 
\hline
Relation 		& Label 	& $Z$ 	& $F$ 				& Type 	  \\
\hline
\multirow{5}{*}{$(\xb,\xa)$}  & \multirow{5}{*}{+}  
							& \zba	& $\{\xb,\xc,\xd\}$	& (strongly) supporting		 	\\ \cline{3-5}
			&		& \multirow{4}{*}{$\con$}	& $\{\xb,\xc,\xd\}$	& unspecified		 	\\	\cline{4-5}
			&		& 		&\multirow{3}{*}{$\{\xb,\xc\}$}		& (strongly) attacking		 	\\	 
			&		&		&					& supporting \\	 
			&		&		&					& subtle \\	  
\hline
$(\xc, \xa)$	&	+	& \zca	& $\{\xb,\xc,\xd\}	$	&(strongly) supporting	 	\\ 
\hline
$(\xd, \xa)$	&	-	& \zda	& $\{\xb,\xc,\xd\}$	&(strongly) attacking		 	\\
\hline
\multirow{3}{*}{$(\xb, \xd)$}	&	\multirow{3}{*}{+}	& \multirow{3}{*}{\zbd}	& \multirow{3}{*}{$\{\xb\}$}
												& (strongly) supporting	 	\\
			&		&		&					& attacking	 	\\
			&		&		&					& subtle	 	\\
\hline
\end{tabular} 
\caption{Analysis of relations for epistemic graph from Examples \ref{ex:locglob} and \ref{ex:rellabel}.}
\label{tab:inducedlabels}
\end{table} 
\end{example}


In the previous definition we have dealt with positive and negative relations in a more ternary manner, i.e. 
it only mattered whether the parent and the target were believed, and not up to what degree. 
Thus, we can also use  more refined methods, coming in the form of positive and negative monotony. 
In other words, assuming beliefs in order relevant arguments remain unchanged, 
a higher belief in one argument node will ensure that there is a higher (resp. lower) belief in the other argument.   

\begin{definition} 
Let $X = (\graph, \lab, \con)$ be a consistent epistemic graph. 
Let 
$Z \subseteq \closure(\con)$ be a consistent set of epistemic constraints and $F \subseteq \nodes(\graph) \setminus \{\xb\}$ 
be a set of arguments.
The relation represented by $(\xa, \xb)\in \nodes(\graph) \times \nodes(\graph)$ is: 
\begin{itemize}
\item \textbf{positive monotonic} w.r.t. $(Z,F)$ if for every $\prob, \prob' \in \sat(Z)$ 
s.t. 
\begin{itemize}
\item $\prob(\xa) > \prob'(\xa)$, and
\item for all $\xc \in F$, if $\xc \neq \xa$ and $\xc \neq \xb$ then $\prob(\xc) = \prob'(\xc)$,
\end{itemize}
it holds that $\prob(\xb) > \prob'(\xb)$.

\item \textbf{negative monotonic} w.r.t. $(Z,F)$  if for every $\prob, \prob' \in \sat(Z)$ 
s.t. 
\begin{itemize}
\item $\prob(\xa) > \prob'(\xa)$, and
\item for all $\xc \in F$, if $\xc \neq \xa$ and $\xc \neq \xb$ then $ \prob(\xc) = \prob'(\xc)$,
\end{itemize}
it holds that $\prob(\xb) < \prob'(\xb)$. 

\item \textbf{non--monotonic dependent} w.r.t. $(Z,F)$  if it is neither positive nor negative monotonic 

%
\end{itemize}
\end{definition} 

Similarly as previously, we will call a relation arbitrary positive (negative) monotonic or non-monotonic dependent 
if a suitable pair $(Z, F)$ can be found. 
 
\begin{example}
\label{ex:monoton}
If we look at Example \ref{ex:rellabel} once more, we can observe that w.r.t. to the previously analyzed $(Z,F)$ pairs, 
all of the relations are non--monotonic dependent. The constraints, while they can specify whether the target 
argument should be believed or not, are not specific enough to state the precise degree of this belief. 
%
%
Instead, let us now consider a simple graph $(\{\xa,\xb,\xc\}, \{ (\xb,\xa), (\xc,\xa)\}$ where the $(\xb,\xa)$ relation
is labelled with $+$ and the $(\xc, \xa)$ relation is labelled with $-$, and we have a single 
constraint $\varphi \formis \pprob(\xc) + \pprob(\xa) - \pprob(\xb) = 1$. 
Let us focus on the $(\xb,\xa)$ relation and assume an arbitrary probability distribution $\prob$ satisfying our constraint. 
Let $\prob(\xc) = x$. Then, $\prob(\xa) = y + \prob(\xb)$ for $y = 1-x$. We can thus show that 
any increase in the belief in $\xb$ will result in a proportional increase in the belief in 
$\xa$ and that this relation is positive monotonic 
w.r.t. $(\{\varphi\}, \{\xb,\xc\})$. 
In a similar fashion we can show $(\xc,\xa)$ to be negative monotonic under the same parameters. 
\end{example}

We would like to stress that while epistemic graphs are standard directed graphs and not hypergraphs (i.e. edges join only two arguments), the relations between arguments do not need to be binary. 
By this we understand that  
it can happen that only the presence of more than one argument can start to impact another argument. 
This is reflected in the way our relation-specific notions are defined. For instance, Definition 
\ref{def:effectiveness} demands that certain conditions are met at least once, not that they are met all the time, and Definition \ref{def:relations} distinguishes attacking and supporting relations as not having a given effect instead of having one. To exemplify, forcing a supporting relation to always have an explicit positive effect would mean that it can impact a target argument on its own, without the presence of other arguments. This is the binary interpretation that is not demanded here. Our notions reflect similar concepts from ADFs, which have been shown to subsume a wide range of non-binary relations despite the fact that link type analysis is done between pairs of arguments \cite{Polberg16}.

\subsection{Internal Graph Coherence}
\label{sec:intercoh}

In the previous sections we have considered what information about arguments and relations between them we can 
extract from the constraints associated with the graph. Comparing this information with what is presented in the graph 
can provide us with insight into the completeness and internal coherence of the graph. There are many ways 
in which we could determine whether the coverage and labeling consistency of the graph are \enquote{good} 
and we intend to explore this more in the future. Currently, we focus on the following notions, though please note 
that the list is by no means exhaustive: 
\begin{definition}
Let $X = (\graph, \lab, \con)$ be an epistemic graph. 
Let ${\sf DirectRels}(\xa) = \{ \xb \mid \xb \in \parent(\xa)$ or $\xa \in \parent(\xb)\}$ 
be the set of arguments directly connected to an argument $\xa \in \nodes(\graph)$ in $\graph$. 
Let $\arcs^*(\graph) = \{ (\xa, \xb) \mid $ there exists undirected path from $\xa$ to $\xb$ in $\graph \}$ denote the 
set of pairs of all arguments connected in the graph. 
We say that $X$ is:
\begin{itemize}
\item \textbf{bounded} if every argument is default or arbitrary fully covered 
\item  \textbf{entry-bounded} if every argument is default or arbitrary fully covered apart from possibly arguments $\xa$ 
s.t. $\parent(\xa) = \emptyset$
\item  \textbf{directly connected} if every relation in $\arcs(\graph)$ is arbitrary semi-effective 
\item \textbf{indirectly connected} if every relation $(\xa,\xb) \in  \arcs^*(\graph)$ is arbitrary semi-effective
\item \textbf{hidden connected} if there exists an arbitrary semi-effective relation $(\xa,\xb) \notin \arcs^*(\graph)$ 
\item \textbf{locally connected} if for every $\xa$, there exists a consistent set $Z \subseteq \closure(\con)$ s.t. $\xa$ is fully covered w.r.t. $(Z, {\sf DirectRels}(\xa) \setminus \{\xa\})$ and for every $\xb \in {\sf DirectRels}(\xa) \setminus \{\xa\}$, 
$(\xb,\xa)$ or $(\xa, \xb)$ is semi--effective w.r.t. $(Z, {\sf DirectRels}(\xa) \setminus \{\xa\})$. 
\end{itemize}
\end{definition}

A bounded graph is an epistemic graph we would expect to obtain through translating various existing argumentation 
frameworks under their standard semantics (see also Section \ref{section:Comparison}). The purpose of an entry-bounded 
graph is to represent situations in which the internal reasoning of the graph is stated, but the actual resulting beliefs in a 
distribution depend on the \enquote{input} beliefs provided by the user. Connectedness of a graph contrasts the 
relation coverage deduced from constraints with the existence of connections within the graph. In particular, we 
can distinguish hidden connections,  which may reflect a user that is providing constraints 
that are not reflected by the structure of the graph. 
 
Let us now focus on comparing 
the nature of a relation induced from the constraints and the nature defined by the labeling. 
The presented definitions could be used to verify whether the relation labels are in some way reflected by the constraints
or, if possible, to assign labels to relations when they are missing. A possible - though not the only - way to do so 
is shown in the definition below. 

\begin{definition}
Let $X = (\graph, \lab, \con)$ be a consistent epistemic graph. 
We say that $\lab$ is \textbf{(strongly) consistent} if for every $(\xa, \xb) \in \arcs(\graph)$, the following holds:
\begin{itemize}
\item if $+ \in \lab((\xa, \xb))$, then $(\xa,\xb)$ is arbitrary (strongly) supporting

\item if $- \in \lab((\xa, \xb))$, then $(\xa,\xb)$ is arbitrary (strongly) attacking

\item if $* \in \lab((\xa, \xb))$, then $(\xa,\xb)$ is arbitrary (strongly) dependent

\item if $\lab((\xa, \xb)) = \emptyset$, then $(\xa,\xb)$ is arbitrary unspecified 
\end{itemize} 

We say that $\lab$ is \textbf{monotonic consistent} if for every $(\xa, \xb) \in \arcs(\graph)$, the following holds:
\begin{itemize}
\item if $+ \in \lab((\xa, \xb))$, then $(\xa,\xb)$ is arbitrary positive monotonic  

\item if $- \in \lab((\xa, \xb))$, then $(\xa,\xb)$ is arbitrary  negative monotonic  

\item if $* \in \lab((\xa, \xb))$, then $(\xa,\xb)$ is arbitrary non--monotonic dependent 
\end{itemize}
\end{definition}

In this case, we could either use the set $\{+,-\}$ to denote subtle relations, or introduce a new label in order to avoid 
ambiguity. We also observe that in practice, every relation can be arbitrary unspecified, given that one can decide to test 
relations against the empty set of constraints. 

These approaches can be further refined in the future by putting restrictions on how the $Z$ and $F$ sets are chosen, 
imposing certain ranking on the relations (for example, if a relation is seen as strongly supporting and not strongly attacking, 
strong support could take precedence) and/or making the label conditions even stronger (for example, we can demand that 
$\lab((\xa, \xb)) = \emptyset$ iff it $(\xa, \xb)$ unspecified w.r.t. every $(Z, F)$ pair). 
 
\begin{example}
We can consider the analysis performed in Example \ref{ex:rellabel} to show that the labeling proposed 
for the graph from Example \ref{ex:locglob} is strongly consistent with the analyzed set of constraints. The same analysis also 
shows that it is not the 
only possible consistent labeling. Following the analysis in Example \ref{ex:monoton}, we can also argue 
that a labeling that assigns $*$ to every relation would be more adequate based on the monotonicity analysis. 
We can observe that the labeling for the graph from Example \ref{ex:monoton} is monotonic consistent with the assumed constraint. 
\end{example}

The \enquote{quality} of our epistemic graph, particularly in terms of boundedness, can affect our choice of how 
the graph is evaluated. 
For instance, the less we know about the graph, the more risky credulous reasoning becomes. 
In turn, connectedness can affect choice of arguments in applications, such as persuasive dialogues. It can 
be used to both highlight ineffective relations that should not be taken into consideration as system moves as well as 
incomplete relations that, if used by the user, could lead to situations in which the system cannot decide what to do next.  
We will come back to this in Section \ref{section:CaseStudy}. 

Unfortunately, there is also the issue of inconsistent relations labelings. Natural language arguments are often 
enthymemes and the labels we would obtain through instantiating the graph are not necessarily the ones that 
the users would recognize. The fact that the personal views or knowledge of a given agent affects 
their decoding of the graph has already been observed in \cite{PolbergHunter17}. Recovering consistency of the relation 
labelings is, however, not trivial. 

While one can investigate the constraints and override the graph labeling to force consistency, 
it is unclear what methods would be optimal for allowing information in the graph to override or delete the information in the constraints. Furthermore, one could also consider cases where both parts of the graph and the constraints are sacrificed, as well as where neither, and the existence of inconsistency becomes an additional piece of information that we take advantage of. The actual chosen strategy, as well as the criteria by which it is judged, can depend on the application and the methods with which epistemic graphs are combined.

We can consider criteria such as accuracy or popularity when determining whether the graph or constraints (or possibly both) should be adjusted. For instance, argument and relation mining methods are of varying accuracy, and we can expect that the epistemic constraint mining methods will perform differently depending on the quality and amount of available data. Given conflicting graphs and constraints, one can therefore select which one to prioritize and which one to adjust depending on how much we can trust them to be an accurate representation of, for example, the reasoning patterns and knowledge of (sets of) agents. Concerning popularity, we observe that the graphs and the constraints extracted from a single source of knowledge can differ between agents performing the extraction. One can therefore sacrifice parts of graphs or constraints leading to inconsistency depending on how unpopular or unlikely they are w.r.t. the given population.

There are also situations in which keeping both the graph and the constraints, despite the issues between them, could be informative. An argument graph instantiated with a given structured argumentation approach from, for instance, a legal text, can be viewed as a normative representation of a given problem. It can later be augmented with the constraints of a given agent, which offer a more subjective representation. The dichotomy between the two could be a source of information of its own and, for instance, serve as a measure of how reliable or reasonable a given agent is. Favouring either of the perspectives could also be used as guidance as to whether the graph or the constraints should be sacrificed in the face of contradiction.

These are only few possible scenarios, and we leave investigating consistency retrieving strategies for future work.

\subsection{Epistemic Semantics}  
\label{sec:epsem}
 
Epistemic graphs offer us a number 
of ways in which we can decide how much 
a given argument should be believed or disbelieved depending on the remaining arguments. 
Evaluating the graph and deciding what probabilities should be assigned to (sets of) arguments is the role of the epistemic 
semantics:

\begin{definition}
Let $X = (\graph, \lab, \con)$ be an epistemic graph. An 
\textbf{epistemic semantics} associates $X$ with a set ${\cal R} \subseteq \dist(\graph)$, 
where $\dist(\graph)$ is the set of all belief distributions over $\graph$. 
\end{definition} 

Although the main aim of the 
epistemic semantics is to select those probability distributions that satisfy our requirements, one can also 
enforce additional restrictions for refining the sets of acceptable distributions, on which we will focus in this section.
First of all, the simplest possible semantics is the one that associates 
a given graph with the set of distributions satisfying its constraints: 

\begin{definition}
For an epistemic graph $(\graph, \lab, \con)$, a distribution $\prob \in \dist(\graph)$ meets 
the \textbf{satisfaction semantics} iff $\prob \in \sat(\con)$. 
\end{definition}

Given that an inconsistent graph is not particularly interesting, we will aim at specifying 
epistemic graphs that have consistent constraints.
However, we would like to note that this may not always be possible and that inconsistency does not necessarily mean 
that the constraints are not rational. For example, the stable semantics \cite{CosteMarquis:2007} 
for argumentation graphs does not always produce
any extensions, and this is a result of the restrictive nature of this semantics. We can therefore expect that epistemic graphs 
aiming to emulate this may have inconsistent sets of constraints. 
 
Various properties which can be quite useful concern minimizing or maximizing certain aspects of a distribution. 
Similarly as in other types of argumentation semantics, we can aim to 
maximize or minimize the set of arguments that are believed up to any degree, disbelieved up to any degree 
or undecided.  
We can also consider the information ordering, such as the one used in \cite{BrewkaESWW13}, 
which maximizes or minimizes belief and disbelief together. We can therefore introduce the following means of comparing 
distributions:
 
\begin{definition}
Let $X = (\graph, \lab, \con)$ be an epistemic graph and $\prob, \prob' \in \dist(\graph)$ be probability distributions. 
We say that:
\begin{itemize}
\item $\prob \lesssim_A \prob'$ iff $\{\xa \mid \prob(\xa) >0.5 \} \subseteq \{\xa \mid \prob'(\xa) > 0.5 \}$
\item $\prob \lesssim_R \prob'$ iff $\{\xa \mid \prob(\xa) <0.5 \} \subseteq \{\xa \mid \prob'(\xa) < 0.5 \}$
\item $\prob \lesssim_U \prob'$ iff $\{\xa \mid \prob(\xa) =0.5 \} \subseteq \{\xa \mid \prob'(\xa) = 0.5 \}$
\item $\prob \lesssim_I  \prob'$ iff $\{\xa \mid \prob(\xa) >0.5 \} \subseteq \{\xa \mid \prob'(\xa) > 0.5 \}$ 
and $\{\xa \mid \prob(\xa) <0.5 \} \subseteq \{\xa \mid \prob'(\xa) < 0.5 \}$
\end{itemize}
\end{definition}
We will refer to these orderings as acceptance, rejection, undecided and information orderings.  

These approaches can be further refined to 
take the actual degrees into account as well. For example, in some scenarios a distribution s.t. $\prob(\xa) = \prob(\xb) = 1$ 
and $\prob(\xc) = 0.49$ might be preferable to one s.t. $\prob(\xa) = \prob(\xb) = \prob(\xc) = 0.51$, 
even if the actual number of believed arguments is smaller. Thus, we also consider the belief maximizing and 
minimizing approaches, based on the notion of entropy: 

\begin{definition}
For a probability distribution $\prob$, the \textbf{entropy} $H(\prob)$ of $\prob$ is defined as
\begin{align*}
        H(\prob) = -\sum_{\Gamma\subseteq \nodes(\graph)} \prob(\Gamma) log \prob(\Gamma)
\end{align*}
with $0 \log 0 =0$. 
\end{definition} 

The entropy measures the amount of indeterminateness of a probability distribution $\prob$. 
A probability function $\prob_{1}$ that describes absolute certain knowledge, i.\,e.\ $\prob_{1}(\Gamma)=1$ 
for some $\Gamma\subseteq \nodes(\graph)$ and $\prob_{1}(\Gamma')=0$ for every other 
$\Gamma'\subseteq \nodes(\graph)$, yields 
minimal entropy $H(\prob_{1})=0$. 
The uniform probability function $\prob_{0}$ with 
$\prob_{0}(\Gamma)=\frac{1}{|2^{\nodes(\graph)}|}$ for every 
$\Gamma\subseteq \nodes(\graph)$ yields maximal entropy 
$H(\prob_{0})=  -\log \sfrac{1}{|2^{\nodes(\graph)}|}$. 
Hence, entropy is minimal when we are completely certain about the possible world, and entropy is maximal when we are completely uncertain about the possible world. 


\begin{definition}
Let $X = (\graph, \lab, \con)$ be an epistemic graph and $\prob, \prob' \in \dist(\graph)$ be probability distributions. 
We say that  $\prob \lesssim_B \prob'$ iff $H(\prob') \leq H(\prob)$.
\end{definition}

We will refer to the above as belief ordering.

Given that the purpose of an epistemic semantics is to grasp various optional properties whenever and however they 
are needed, a new semantics can be defined \enquote{on top} of a previous semantics, such as the satisfaction semantics.  
We can therefore propose the following, parameterized definition:

\begin{definition}
\label{def:paramsem}
Let $(\graph, \lab, \con)$ be an epistemic graph and ${\cal R}$ the set of distributions associated with it according to a given 
semantics $\sigma$. Let $v \in \{A,R,U,I,B\}$ denote acceptance, rejection, undecided, information or belief. 
A distribution $\prob \in \dist(\graph)$ meets 
the \textbf{$\sigma{\mhyphen}v$ maximizing (minimizing) semantics} 
iff $\prob \in {\cal R}$ and $\prob$ is maximal (minimal) w.r.t. $\lesssim_v$ among the elements of ${\cal R}$. 
\end{definition}

There are, of course, additional properties we may want to impose in order to refine the distributions produced by the 
constraints associated with a framework. In particular, we may want to limit the values that the distribution may take.  
With some exceptions, most of these restrictions can be expressed as straightforward constraints in the epistemic 
graphs. However, one has to observe that in a sense, they are completely independent of the underlying structure of the graph.  
Thus, we believe it is more appropriate to view them as additional, optional properties:

\begin{definition}
A distribution $\prob$ is:
\begin{itemize}
\item \textbf{minimal} iff for every $\xa \in \nodes(\graph)$, $\prob(\xa) = 0$
\item \textbf{maximal}  iff for every $\xa \in \nodes(\graph)$, $\prob(\xa) = 1$
\item \textbf{neutral}  iff for every $\xa \in \nodes(\graph)$, $\prob(\xa) = 0.5$
\item \textbf{ternary} iff for every $\xa \in \nodes(\graph)$, $\prob(\xa) \in \{0, 0.5, 1\}$
\item \textbf{non--neutral}\footnote{Please note 
that this property was previously referred to as \emph{binary} \cite{PolbergHunter17}.}
 iff for every $\xa \in \nodes(\graph)$, $\prob(\xa) \neq 0.5$ 
\item \textbf{n--valued} iff $\left\vert{ \{ x \mid \exists \xa \in \nodes(\graph), \, \prob(\xa) = x\} }\right\vert = n$
\end{itemize}
\end{definition}

We can therefore observe that there are various ways of refining probability distributions. We have proposed 
a number of ways we can minimize or maximize different aspects of the distributions, and it is possible that 
on certain epistemic graphs they will coincide. However, as the following examples will show, all of the methods 
are in principle distinct.

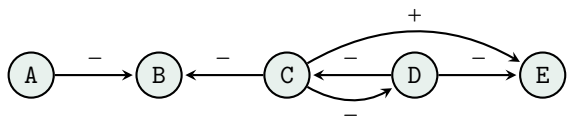
\begin{figure}[!ht]
\centering
  \begin{tikzpicture}
[->,>=stealth,shorten >=1pt,auto,node distance=1.7cm,
  thick,main node/.style={shape=rounded rectangle,fill=darkgreen!10,draw,minimum size = 0.6cm,font=\normalsize\bfseries} 
]

\node[main node] (a) {$\xa$};
\node[main node] (b) [right of=a] {$\xb$};
\node[main node] (c) [right of=b] {$\xc$};
\node[main node] (d) [right of=c] {$\xd$};
\node[main node] (e) [right of=d] {$\xe$};  
 
 \path
	(a) edge node {$-$} (b)
	(c) edge node[swap] {$-$} (b)
	(d) edge node {$-$} (e)
    	(c) edge [bend left] node {$+$} (e)
 	 (c) edge [bend right] node[swap] {$-$} (d)
    (d) edge node[swap] {$-$} (c);
\end{tikzpicture}
\caption{A labelled graph}
\label{fig:episemex5}
\end{figure}

\begin{example}
\label{ex:episem5}
Let us consider the graph depicted in Figure \ref{fig:episemex5} and the following set of constraints $\con$:
\begin{gather*} 
\{ \pprob(\xa) >0.5,\, \pprob(\xb) + \pprob(\xa)\leq 1  \land \pprob(\xb)+ \pprob(\xc)\leq 1, \, 
\pprob(\xc) \neq 0.5, \, \\ \pprob(\xc) + \pprob(\xd)= 1, \, (\pprob(\xc) > 0.5 \land \pprob(\xd) < 0.5) \rightarrow \pprob(\xe) = 0.5\}
\end{gather*}

The types of ternary satisfying distributions of our graph are listed in Table \ref{tab:episemtab2}. 
We also include an analysis of which of them are additionally maximizing or minimizing according to a given criterion. 
We observe that while only belief in arguments are listed, we only have single distributions producing them. 
The patterns set out by $\prob_2$ and $\prob_4$ describe distributions that assign probability $1$ to the 
set of formed of arguments believed with degree $1$ and $0$ to the rest (e.g. in $\prob_2$, the probability of $\{\xa,\xd\}$
would be $1$ and $0$ for everything else). In turn, for $\prob_1$ and $\prob_3$, we would assign probability $0.5$
to the set of arguments that are not disbelieved, and additional $0.5$ to the set of arguments that are believed (e.g. 
for $\prob_1$, $\{\xa,\xc,\xe\}$ and $\{\xa,\xc\}$ would be assigned $0.5$ and all other sets would be assigned $0$). 
This allows us to easily verify belief maximizing and minimizing patterns. 
%

\begin{table}[!ht]
\centering 
\begin{tabular}{|c|c|c|c|c|c|c|c|c|c|c|c|c|c|c|c|} 
\cline{7-16}
\multicolumn{6}{c|}{} & \multicolumn{5}{c|}{Maximizing} & \multicolumn{5}{c|}{Minimizing } \\ \cline{2-16}
\multicolumn{1}{c|}{} & $\xa$	& $\xb$	& $\xc$	& $\xd$	& $\xe$ &  $\lesssim_A$ &  $\lesssim_R$ &  $\lesssim_I$ &  $\lesssim_U$ &  $\lesssim_B$ &   $\lesssim_A$ &  $\lesssim_R$ &  $\lesssim_I$ &  $\lesssim_U$ &  $\lesssim_B$\\
\hline
$\prob_1$	&	1	&	0	&	1	&	0	&	0.5	&	\cm & \cm & \cm& \cm & \xm & \cm & \cm & \cm & \xm &\cm\\
$\prob_2$	&	1	&	0	&	0	&	1	&	0	&	\xm & \cm & \cm& \xm & \cm &\cm & \xm & \xm & \cm &\xm \\
$\prob_3$	&	1	&	0	&	0	&	1	&	0.5	&	\xm & \xm & \xm& \cm & \xm & \cm & \cm & \cm & \xm &\cm \\
$\prob_4$	&	1	&	0	&	0	&	1	&	1	&	\cm & \xm & \cm& \xm & \cm &\xm & \cm & \xm & \cm & \xm \\ 
\hline
\end{tabular} 
\caption{Types of probability distributions meeting the ternary and satisfaction semantics from Example \ref{ex:episem5} and 
their conformity to given maximizing and minimizing semantics.}
\label{tab:episemtab2}
\end{table} 
\end{example}

\begin{figure}[!ht]
\centering
  \begin{tikzpicture}
[->,>=stealth,shorten >=1pt,auto,node distance=1.7cm,
  thick,main node/.style={shape=rounded rectangle,fill=darkgreen!10,draw,minimum size = 0.6cm,font=\normalsize\bfseries} 
]

\node[main node] (a) {$\xa$};
\node[main node] (b) [right of=a] {$\xb$};
\node[main node] (c) [right of=b] {$\xc$};
\node[main node] (d) [right of=c] {$\xd$};
\node[main node] (e) [right of=d] {$\xe$};  
 
 \path
	(a) edge node {$-$} (b)
	(c) edge node[swap] {$-$} (b)
	(d) edge node {$-$} (e)
    	(e) edge [loop right] node {$-$} (e)
 	 (c) edge [bend right] node[swap] {$-$} (d)
    (d) edge [bend right] node[swap] {$-$} (c);
\end{tikzpicture}
\caption{A conflict--based argument graph}
\label{fig:af3}
\end{figure} 

\begin{example}
Consider the graph from Figure \ref{fig:af3} and the following set of constraints $\con$:
\begin{itemize}
\item $\varphi_1 \formis \pprob(\xa) > 0.5$

\item $\varphi_2 \formis (\pprob(\xb) > 0.5 \leftrightarrow (\pprob(\xa) < 0.5 \land \pprob(\xc) < 0.5)) \land 
(\pprob(\xb) < 0.5 \leftrightarrow (\pprob(\xa) > 0.5 \lor \pprob(\xc) > 0.5))$

\item $\varphi_3 \formis (\pprob(\xc) > 0.5 \leftrightarrow \pprob(\xd) < 0.5) \land 
(\pprob(\xc) < 0.5 \leftrightarrow \pprob(\xd) > 0.5)$

\item $\varphi_4 \formis (\pprob(\xe) > 0.5 \leftrightarrow (\pprob(\xe) < 0.5 \land \pprob(\xd) < 0.5)) 
\land 
(\pprob(\xe) < 0.5 \leftrightarrow (\pprob(\xe) > 0.5 \lor \pprob(\xd) > 0.5))$

\item $\varphi_5 \formis \pprob(\xc) > 0.5 \lor \pprob(\xd) > 0.5$
\end{itemize}

We obtain two patterns for ternary satisfying distributions, namely $\prob_1$ s.t. 
$\prob_1(\xa) = \prob(\xc) = 1$, $\prob_1(\xb) = \prob_1(\xd) = 0$, $\prob_1(\xe) = 0.5$, 
and 
$\prob_2$ s.t. 
$\prob_2(\xa) =  \prob_2(\xd) =1$, $\prob_2(\xb) = \prob_2(\xc) =\prob_2(\xe)=0$. 
Both describe distributions that are also 
information minimizing, but only $\prob_1$ fits the undecided maximizing requirements.  
\end{example}

%

\begin{figure}[!ht]
\centering 
  \begin{tikzpicture}
[->,>=stealth,shorten >=1pt,auto,node distance=1.7cm,
  thick,main node/.style={shape=rounded rectangle,fill=darkgreen!10,draw,minimum size = 0.6cm,font=\normalsize\bfseries} 
]

\node[main node] (a) {$\xa$};
\node[main node] (b) [right of=a] {$\xb$};
 
 \path
	(b) edge [loop right] node {$-$} (b);
 	
\end{tikzpicture}
\caption{A labelled graph showing the distinction between undecided minimizing and belief maximizing}
\label{fig:episemex2}
\end{figure}
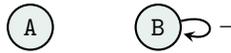

\begin{example}
\label{ex:episem2}
Consider the graph depicted in Figure \ref{fig:episemex2} and the following set of constraints 
$\con = \{\pprob(\xa) \geq 0.5$, $\pprob(\xb)+\pprob(\neg \xb) = 1\}$. 
For this graph, the distribution $\prob'_{1}$ defined via:
\begin{align*}
	\prob'_{1}(\emptyset) & = 0 &	\prob'_{1}(\{\xa\}) & = 0.5 &	\prob'_{1}(\{\xb\}) & = 0 &	\prob'_{1}(\{\xa,\xb\}) & = 0.5
\end{align*}
satisfies the constraints and is both undecided minimizing (only $\xb$ is undecided) and belief maximizing. 
However, the distribution $\prob'_{2}$ defined via: 
\begin{align*}
	\prob'_{2}(\emptyset) & = 0.5 &	\prob'_{2}(\{\xa\}) & = 0 &	\prob'_{2}(\{\xb\}) & = 0 &	\prob'_{2}(\{\xa,\xb\}) & = 0.5
\end{align*}
is belief maximizing but not undecided minimizing (both $\xa$ and $\xb$ are undecided).
 \end{example}

\subsection{Case Study}
\label{section:CaseStudy}

In order to illustrate how epistemic graphs could be acquired for an application, we consider using them 
as domain models in persuasive dialogue systems. Recent developments in computational argumentation are 
leading to a new generation of persuasion technologies \cite{Hunter16comma}. 
An automated persuasion system (APS) is a system that can engage in a dialogue with a user (the persuadee) in order to 
convince the persuadee to do (or not do) some action or to believe (or not believe) something. 
The system achieves this by putting forward arguments that have a high chance of influencing the persuadee. 
In real-world persuasion, in particular in applications such as behaviour change, presenting convincing arguments, 
and presenting counterarguments to the user's arguments, is critically important. 
For example, for a doctor to persuade a patient to drink less alcohol, that doctor has to give good arguments 
why it is better for the patient to drink less, and how (s)he can achieve this. 

Two important features of an APS are the domain model and the user model, which are closely related, 
and together are harnessed by the APS strategy for optimizing the choice of move in a persuasion dialogue.

\begin{description}

\item[Domain model] This contains the arguments that can be presented in the dialogue by the system, and it also contains the
 arguments that the user may entertain. Some arguments will attack other arguments, and some arguments will support other
 arguments. As we will see, the domain model can be represented by an epistemic graph. 

\item[User model] This contains information about the user that can be utilized by the system in order to choose 
the most beneficial actions. The information in the user model is what the system believes is true about that user.  
A key dimension that we consider in the user model are the beliefs that the user may have in the arguments, and as the dialogue
 proceeds, the model can be updated \cite{Hunter15ijcai} based on the results of the queries and of the arguments posited. 

\end{description}

By using an epistemic graph to represent the domain model, and a probability distribution over arguments to 
represent the user model, we can have a tight coupling of the two kinds of model. Furthermore, the probability
 distribution can be harnessed directly in a decision-theoretic approach to optimize the choice of move \cite{Hadoux17}.

To illustrate the use of epistemic graphs for domain/user modelling, we consider a case study in behaviour change. 
The aim of this behaviour change application is to persuade users to book a regular dental check-up.

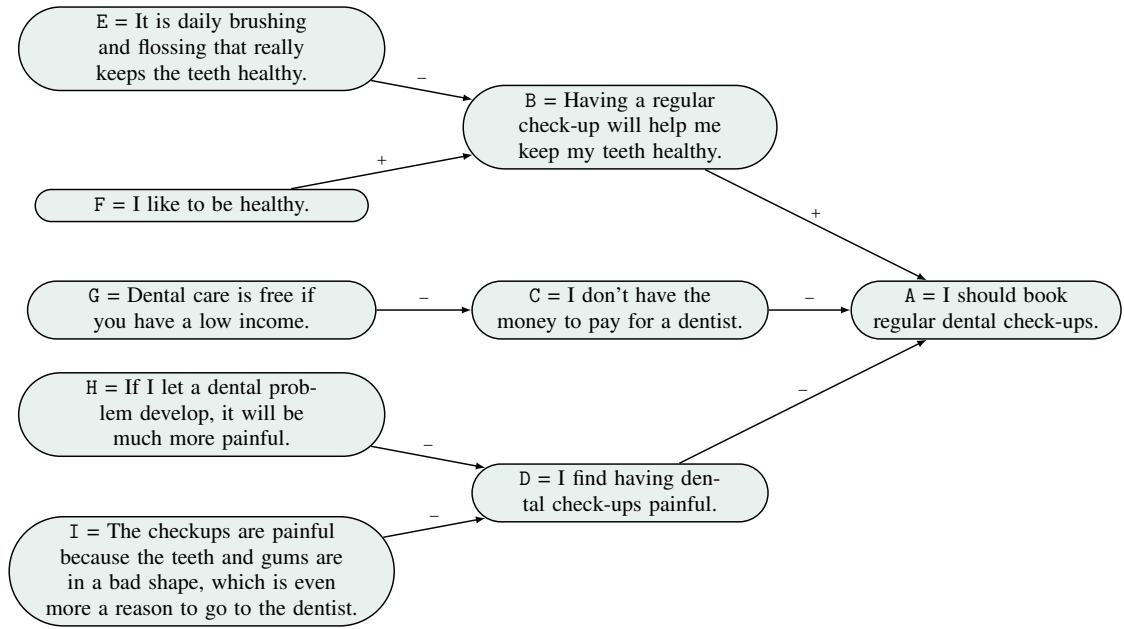
\begin{figure}
\begin{center}
\resizebox{\textwidth}{!}{
\begin{tikzpicture}[->,>=latex,thick, arg/.style={draw,text centered,shape = rounded rectangle,fill=darkgreen!10,font=\large}]
\node[arg] (a) [text width=4.5cm] at (1,0) {$\xa$ = I should book regular dental check-ups.};

\node[arg] (b) [text width=5cm] at (-6,3.5) {$\xb$ =  Having a regular check-up will help me keep my teeth healthy.};
\node[arg] (e) [text width=6cm] at (-14,5) {$\xe$ = It is daily brushing and flossing that really keeps the teeth healthy.};
\node[arg] (f) [text width=6cm] at (-14,2) {$\xf$ = I like to be healthy.};

\node[arg] (c) [text width=5cm] at (-6,0) {$\xc$ = I don't have the money to pay for a dentist.};
\node[arg] (g) [text width=6cm] at (-14,0) {$\xg$ = Dental care is free if you have a low income.};

\node[arg] (d) [text width=5cm] at (-6,-3.5) {$\xd$ = I find having dental check-ups painful.};
\node[arg] (h) [text width=6cm] at (-14, -2) {$\myarg{H}$ = If I let a dental problem develop, it will be much more
 painful.};
\node[arg] (i) [text width=6cm] at (-14,-5) {$\myarg{I}$ = The checkups are painful because 
the teeth and gums are in a bad shape, which is even more a reason to go to the dentist. };

\path[]	(b) edge[] node[above] {$+$} (a);
\path[]	(c) edge[] node[above] {$-$} (a);
\path[]	(d) edge[] node[above] {$-$} (a); 

\path[]	(e) edge[] node[above] {$-$} (b);
\path[]	(f) edge[] node[above] {$+$} (b); 
 
\path[]	(g) edge[] node[above] {$-$} (c); 

\path[]	(h) edge[] node[above] {$-$} (d);
\path[]	(i) edge[] node[above] {$-$} (d); 
\end{tikzpicture}
}
\end{center}
\caption{\label{figure:dental}Epistemic graph for the domain model for a case study on encouraging people to take 
regular dental check-ups.}
\end{figure} 

\begin{example}
Assume we have the graph presented in Figure \ref{figure:dental} and that through, for instance, crowdsourcing data, 
we have learned which constraints should be associated with a given user profile. The obtained domain model(s) can now be used 
in automated persuasion systems, and we assume we are now dealing with a user of such a system whose profile 
lead to the selection of the following constraints in order to describe his or her behaviour:
 
\begin{enumerate}

\item \label{abcd} This constraint states that if $\xb$ is believed or $\xc$ is disbelieved or $\xd$ is disbelieved, 
then $\xa$ is believed and vice versa:
\[ (\pprob(\xb) > 0.5 \lor \pprob(\xc) < 0.5 \lor \pprob(\xd) < 0.5) \leftrightarrow \pprob(\xa) > 0.5
\]

\item \label{ab} This constraint states that if $\xb$ is at least moderately believed then $\xa$ is strongly believed, 
and if $\xb$ is at least strongly believed then $\xa$ is completely believed:
\[ (\pprob(\xb) > 0.65 \rightarrow \pprob(\xa) > 0.8) \land (\pprob(\xb) > 0.8 \rightarrow \pprob(\xa) = 1)
\]


\item \label{ad} This constraint states that if $\xd$ is strongly disbelieved then $\xa$ is at least moderately believed  
\[ \pprob(\xd) < 0.2 \rightarrow \pprob(\xa) > 0.65
\]

\item \label{bf} This constraint states that if $\xf$ is believed then $\xb$ is at least moderately believed 
and if $\xf$ is disbelieved, then so is $\xb$
\[ (\pprob(\xf) > 0.5 \rightarrow \pprob(\xb) > 0.65) \land (\pprob(\xf) < 0.5 \rightarrow \pprob(\xb) < 0.5)
\]

\item \label{cg} This constraint states that disbelief in $\xc$ is proportional to belief in $\xg$
\[ \pprob(\xg) + \pprob(\xc) \leq 1
\]

\end{enumerate}


We can use these constraints together with the 
epistemic graph and probability distribution over the subsets of arguments to model the agent in a persuasion dialogue. We assume 
that we want to persuade the agent to believe argument $\xa$, the more the better. 
The initial belief distribution for such applications can be obtained through crowdsourcing data about 
various participants, and for the purpose of our example 
we assume that a suitable distribution $\prob_0$ denoting the initial belief that we think the agent has in the arguments 
has been obtained. 


Given $\prob_0$, and the need to get a change in belief in $\xa$ so that it is believed, 
we can use constraints \ref{abcd}, \ref{ab} and \ref{ad} as a guide. In other words, we can either 
increase the belief in $\xb$ or decrease the belief in $\xc$ or $\xd$. We observe that using $\xb$ 
can lead to the biggest increase in belief in $\xa$. The effect of using $\xd$ may be smaller and $\xc$ the smallest. 
We observe that none of these arguments (and none of their parents) are default covered. We can thus see this user as
being flexible and open to a discussion. If, for instance, $\xf$ had been default covered by a constraint $\pprob(\xf) < 0.5$, 
then $\xb$ would have been default covered as well and putting forward $\xb$ could be seen as an ineffective move. 

We therefore have three options to explore, and analyzing them is valuable due to the fact that prolonged argument 
exchanges significantly decrease the 
chances of changing an opinion \cite{TanNDNML16}. Consequently, exhausting all possible routes  
may yield negative results, and a persuasion system will need to be able to optimize the choice of dialogue moves.   

\begin{description}
\item[Option $\xb$]  
By looking at the graph and the labels, we may expect that the increase in belief in $\xb$ may be achieved by 
increasing the belief in $\xf$ and/or decreasing the belief in $\xe$. 
However, we observe that even though $\xe$ is stated to be an attacker of $\xb$, analysis of the constraints tells us 
that it cannot be anything else than unspecified and that the labeling is inconsistent. This can potentially be the result of how the user 
was profiled and what constraints for the graph have been created for the profile (s)he fitted. 
By analyzing the constraints, we can observe that increasing the belief in $\xb$ can only be 
done by using argument $\xf$ (constraint \ref{bf}). 

We can therefore choose to rely on the information in the graph or the information in the constraints. 
An optimistic system, which selects the easier and more favourable options, would in this case 
assume that the learned constraints accurately describe the user. It could, for instance, 
take $\prob_{opt}$ as the predicted probability distribution and determine that using $\xf$ 
is a good move\footnote{We note that $\prob_{opt}$ is one of many possible distributions that could be picked 
to satisfy the assumed constraints.}. 
A more pessimistic system, which is convinced that if something can go wrong, it will, 
can consider the learned data to be incomplete. Thus, $\xe$ can be seen as a potential attacker, 
despite being unspecified. The system could take $\prob_{pes}$ as the predicted
distribution and decide not 
to proceed with $\xf$ due to the potential chance of failure\footnote{We note that $\prob_{pes}$ is 
one of many possible distributions that could be picked 
to satisfy the assumed constraints.}. 
 
\begin{center}
\begin{tabular}{|c|c|c|c|c|c|c|c|c|c|c|c| }
\hline
		  & $\xa$ & $\xb$ &  $\xc$ &   $\xd$ & $\xe$ &  $\xf$ &  $\xg$ &  $\xh$ &  $\myarg{I}$ \\
\hline
$\prob_0$ 	 & $0.3$ & $0.4$ &  $0.7$ &  $0.6$ & $0.7$ & $0.45$ &  $0.2$ &  $0.4$ &  $0.3$ 		 \\
$\prob_{opt}$ & $0.85$ & $0.7$ &  $0.7$ &  $0.6$ & $0.7$ & $0.8$ &  $0.2$ &  $0.4$ &  $0.3$ 		 \\
$\prob_{pes}$ & $0.3$ & $0.45$ &  $0.7$ &  $0.6$ & $0.7$ & $0.8$ &  $0.2$ &  $0.4$ &  $0.3$ 		 \\
\hline
\end{tabular}
\end{center}

\item[Option $\xd$] Making argument $\xd$ disbelieved will cause $\xa$ to be believed and based 
on the information in the graph, we can suspect that it can be done through increasing the belief in $\xh$ 
and/or $\myarg{I}$. However, once more we can observe that the graph labeling is not consistent with 
the constraints and the impact of $\xh$ and $\myarg{I}$ on $\xd$ is in fact unspecified. 

We can therefore again choose to rely either on the graph or on the constraints. 
In a similar fashion as before, an optimistic system could decide to carry on 
with presenting either $\xh$ or $\myarg{I}$ and hope to observe a decrease in $\xd$. 
A possible predicted distribution $\prob_{opt}$ associated with presenting $\myarg{I}$ is seen in the table below.
A pessimistic system 
may choose to abandon this dialogue line due to its potential ineffectiveness (see predicted distribution $\prob_{pes})$.

\begin{center}
\begin{tabular}{|c|c|c|c|c|c|c|c|c|c|c|c| }
\hline
		  	 & $\xa$ & $\xb$ &  $\xc$ &   $\xd$ & $\xe$ &  $\xf$ &  $\xg$ &  $\xh$ &  $\myarg{I}$ \\
\hline
$\prob_0$ 	 & $0.3$ & $0.4$ &  $0.7$ &  $0.6$ & $0.7$ & $0.45$ &  $0.2$ &  $0.4$ &  $0.3$ 		 \\
$\prob_{opt}$ & $0.7$ & $0.4$ &  $0.7$ &  $0.1$ & $0.7$ & $0.45$ &  $0.2$ &  $0.4$ &  $0.9$ 		 \\
$\prob_{pes}$ & $0.3$ & $0.4$ &  $0.7$ &  $0.6$ & $0.7$ & $0.45$ &  $0.2$ &  $0.4$ &  $0.9$ 		 \\
\hline
\end{tabular}
\end{center}

\item[Option $\xc$] 
We can observe that $\xg$ is an attacker of $\xc$ both in the graph and in the constraints. 
While increasing the belief in $\xg$ (and thus decreasing the belief in $\xc$) 
yields the smallest gain in $\xa$, it may be seen as the safest way to go given the contrast between the 
constraints and the graph. Consequently, both the optimistic and the pessimistic systems can have similar 
predictions on this route. 

\begin{center}
\begin{tabular}{|c|c|c|c|c|c|c|c|c|c|c|c| }
\hline
		  & $\xa$ & $\xb$ &  $\xc$ &   $\xd$ & $\xe$ &  $\xf$ &  $\xg$ &  $\xh$ &  $\myarg{I}$ \\
\hline
$\prob_0$ 	 & $0.3$ & $0.4$ &  $0.7$ &  $0.6$ & $0.7$ & $0.45$ &  $0.2$ &  $0.4$ &  $0.3$ 		 \\
$\prob_{opt}$ & $0.55$ & $0.4$ &  $0.4$ &  $0.6$ & $0.7$ & $0.45$ &  $0.6$ &  $0.4$ &  $0.3$ 		 \\
$\prob_{pes}$ & $0.55$ & $0.4$ &  $0.4$ &  $0.6$ & $0.7$ & $0.45$ &  $0.6$ &  $0.4$ &  $0.3$ 		 \\
\hline
\end{tabular}
\end{center}

\end{description}
 
Note, we do not consider here how the precise value is picked for the update in the belief in each argument. 
We direct the interested reader to our work on updates in epistemic graphs \cite{HunterPP18} 
and other relevant materials \cite{Hunter15ijcai,HunterPotyka17}. We also do not explicitly consider the issue of verifying predicted 
distributions, but note that it can be done by, for instance, querying the user during a dialogue \cite{Hunter15ijcai}.
\end{example}

The above case study illustrates how the framework in this paper can be incorporated in a user model, 
and then used to guide the choice of moves in a persuasion dialogue.   

\section{Related Work}
\label{section:Comparison}

The constraints in epistemic graphs quite naturally generalize the epistemic postulates 
\cite{Thimm:2012,Hunter:2013,Hunter:2014,PolbergHunter17}. 
Given the fact that in the epistemic 
graphs we can decide whether a given property should hold for a particular argument or not, the desired postulate needs 
to be repeated for every element of the framework. Nevertheless, the general method is straightforward, and using our 
approach we can elevate the classical postulates for conflict--based frameworks to a much more general setting. 
In this section we will focus on describing in more 
detail further argumentation approaches which satisfy at least some of the requirements we have stated in the introduction, 
and consider some other relevant works.  

\subsection{Weighted and Ranking--Based Semantics}

There is a wide array of computational models 
of argument that allow for modelling  
argument weights or strengths 
\cite{Amgoud2013,AmgoudBenNaim16b,AmgoudBenNaim16a,AmgoudBenNaim17,Amgoud16kr,
AmgoudBNDV17,Bonzon16,CayrolLS05,CayrolLS05b,LeiteMartins11,Rago16,Costa-Pereira:2011}, which offer a more fine--grained 
alternative for Dung's approach. 
Some of these works also permit certain forms of support or positive influences on arguments 
\cite{CayrolLS05, AmgoudBenNaim17,AmgoudBenNaim16a,Rago16,LeiteMartins11}. 
Given certain 
structural similarities between these approaches and epistemic semantics, it is therefore natural to compare them. 

Although in both cases what we receive can be seen as \enquote{assigning numbers from $[0,1]$} to 
arguments (either as side or end product), probabilities in the epistemic approach are interpreted as belief, 
while weights remain abstract and open to a number of possible instantiations. The meaning that is assigned to the values 
is derived from the structure of the graph and comparing weightings or rankings between different graphs 
can distort the picture. For instance, given one dense and one sparse graph, it is possible 
that the highest grade achieved by any argument in the former graph is the same as the lowest grade achieved  
in the latter graph. Given the comparative grades w.r.t. other arguments in the graph, we can therefore make different 
judgments about the arguments, which has both its negative as well as positive aspects. 
In turn, the belief and disbelief interpretation of probabilities would more uniformly point to a decision.

We also need to note that many of the postulates set out in the weighted 
and ranking--based methods are, by design, counter--intuitive in the epistemic approach, even though they can 
be perfectly applicable in other scenarios. We can for instance consider the principles from \cite{AmgoudBenNaim17}. 
\textit{(Bi-variate) Independence} states that ranking between two arguments should be independent of any other 
argument that is not 
connected to either of them, and any hidden connected epistemic graph would violate this. The same holds for 
\textit{(Bi-variate) Directionality}, which forces the rank of a given argument 
to depend only on arguments connected to it through a directed path. 
\textit{(Bi-variate) Equivalence} would 
tell us that the strength of the argument depends only on the strength of its parents and not arguments that it attacks or supports. 
This is also clearly not something we intend to force in epistemic graphs. 
All further postulates, such as how an increase or decrease in beliefs in attackers (or supporters) should be matched 
with an appropriate decrease or increase in the belief of the target argument, can, but do not have to, be satisfied 
a given graph. This can be caused either by the constraints themselves simply not adhering to a given axiom on purpose,
or by the constraints being possibly not very specific. 
Already a simple formula such as $\pprob(\xa) > 0.5 \rightarrow \pprob(\xb) \leq 0.5$, which embodies one of the core 
concepts of the classical epistemic approach \cite{Thimm:2012,Hunter:2013,Hunter:2014}, 
violates what is referred to as \textit{Weakening} and 
\textit{Strengthening}. The list continues, however, it should not be taken as a criticism of the weighted or epistemic 
approaches, but only as a highlight of striking conceptual differences.

Another major difference between the epistemic graphs and the weighted or ranking semantics is that in the latter, 
the patterns set out by the semantics have to be global, which leads to side effects not desirable in the epistemic approach. 
In particular, 
two arguments supported and attacked by the same sets of arguments will need to be assigned the same value  
(assuming their initial weights or weights assigned to relations are similar, if applicable). 
In other words, it would be contrary to the intuitions of the weighted approach to have e.g. an attack 
relation $(\xa, \xb)$ described with a constraint $\pprob(\xa) + \pprob(\xb) = 1$ to co-exist in the same 
graph with another attack relation $(\xc,\xd)$ described through $\pprob(\xc) > 0.5 \leftrightarrow \pprob(\xd) \leq 0.5$. 
The first constraint is more specific than the second one and describes the attack relation more closely. Although this 
is generally a desirable thing, it might not be realistic. For instance, when sourcing epistemic graphs and their constraints from 
participants, we have no guarantees that every argument and relation in the graph will be described with the same 
quality and consistency. Forcing a uniform modelling would make us either create specific constraints even for 
parts of the graph for which the data does not support this, or create general constraints even for parts where a better 
description is available. Epistemic graphs aim to bypass this by allowing varying quality of constraints to be used. 
 
Another property of the weighted and ranking semantics (and that is not enforced in the epistemic approach) is that 
given the values of the parents, a single value of the target is returned. This may be a restriction if we want the 
flexibility to express a margin of error or  vagueness.  Depending on how the constraints are defined in the epistemic
 approach, we can force the target to take on a single probability as well as allow it any value from a given range. 
Consequently, we have a certain form of control over specificity in the epistemic graphs. A more relaxed approach 
can be useful in modelling imperfect agents or incomplete situations, and such tasks can pose certain difficulties 
to the weighted semantics.
%
 
In conclusion, we can observe that despite certain high--level similarities, there are significant differences 
between the weighted and epistemic approaches. Although one can argue that it is possible to represent 
certain weighting functions as constraints and the other way around, particularly if multiplication or division
were allowed in the latter, we would either obtain constraints that violate 
the meaning of epistemic probabilities or semantics that do not conform to the required axioms.

\subsection{Abstract Dialectical Frameworks}  
\label{sec:ef-adf}

Epistemic graphs share certain similarities with abstract dialectical frameworks (ADFs)
\cite{Strass13a,Strass13,BrewkaESWW13,StrassWallner15,thesis:polberg,Polberg16,puhrer15}, 
particularly when it comes to their ability to express a wide range of relations between arguments.  
Before we compare the two approaches, we 
briefly review ADFs and some of their semantics\footnote{We note we use the propositional representation of ADFs \cite{BrewkaESWW13}.}.

\begin{definition}
\label{def:funcadf}
An \textbf{abstract dialectical framework} (ADF) is a tuple $(\graph, \lab, \acc)$, where
$(\graph, \lab)$ is a labelled graph and 
$\acc = \{\acc_{\xa}\mid \acc_{\xa}$ is a propositional formula 
over $\parent(\xa)\}_{\xa \in \nodes(\graph)}$ is a set of \textbf{acceptance conditions}. 
\end{definition}
 
In the labeling--based semantics for ADFs, we use three--valued interpretations which 
assign truth values $\{\tvt, \tvf, \tvu\}$ to arguments that are compared according to
precision (information) ordering: $\tvu \leq_i \tvt$ and $\tvu \leq_i \tvf$. 
The pair $(\{\textbf{t}, \textbf{f}, \textbf{u}\}, \leq_i)$ forms a complete meet--semilattice with the meet 
operation $\sqcap$ assigning values in the following way:
 $\textbf{t} \, \sqcap \, \textbf{t} = \textbf{t}$, $\textbf{f} \, \sqcap \, \textbf{f} = \textbf{f}$ and $\textbf{u}$ in all other cases.
These notions can be easily extended to interpretations. For two interpretations $v$ and $v'$ on $\nodes(\graph)$, 
$v \leq_i v'$ iff for every argument $\xa \in \nodes(\graph)$, $v(\xa) \leq_i v'(\xa)$. 
In the case $v$ is three and $v'$ two--valued (i.e. contains no $\tvu$ mappings), 
we say that $v'$ extends 
$v$\footnote{This means that the elements mapped originally to $\tvu$ are now assigned either $\tvt$ or $\tvf$.}. The
set of all two--valued interpretations extending $v$ is denoted $\lbrack v \rbrack_2$. 
Given an acceptance condition $\acc_{\xa}$ for an argument $\xa \in \nodes(\graph)$ and an interpretation $v$, 
we define a shorthand $v(\acc_{\xa})$ as value of $\acc_{\xa}$ for $v^\tvt \cap \parent(\xa)$.

\begin{definition}
Let $D=(\graph, \lab, \acc)$ be an ADF and $v$ a three-valued interpretation on $\nodes(\graph)$. 
The \textbf{three--valued characteristic operator} of $D$ is a function s.t.
$\Gamma(v) = v'$ where 
$v'(\xa) = \bigsqcap_{w \in \lbrack v \rbrack_2} w(\acc_{\xa})$ for $\xa \in \nodes(\graph)$.  
An interpretation $v$ is:
\begin{itemize} 
\item a \textbf{complete} labeling iff $v = \Gamma (v)$. 
\item a \textbf{preferred} labeling iff it is $\leq_i$--maximal admissible labeling. 
\item a \textbf{grounded} labeling iff it is the least fixpoint of $\Gamma$.
\end{itemize}
\end{definition}

\begin{figure}[!ht] 
\begin{center}
\hspace*{-0.8cm}
\begin{subfigure}[b]{0.37\textwidth} 
\centering  
  \begin{tikzpicture}
[->,>=stealth,shorten >=1pt,auto,node distance=1.7cm,
  thick,main node/.style={shape=rounded rectangle,fill=darkgreen!10,draw,minimum size = 0.6cm,font=\normalsize\bfseries},
condition/.style={rectangle,fill=none,draw=none,minimum size = 1cm,font=\normalsize\bfseries}]
 
\node[main node] (e) at (0,0) {$\xe$};
\node[main node] (a)  at (0,-2) {$\xa$};
\node[main node] (b) at (2,0){$\xb$};
\node[main node] (c) at (1,1.5) {$\xc$};
\node[main node] (d) at (2,-2)   {$\xd$};

\node[condition](ca) [left of= a,xshift=1.1cm] {$\xe$};
\node[condition](cb) [right of= b, xshift=-0.4cm] {$\xd \lor (\neg \xc \land \xe)$};
\node[condition](cc) [above of= c, yshift=-1.1cm] {$\neg \xe$};
\node[condition](cd) [right of= d, xshift= -0.7cm] {$\neg \xa \lor \neg \xe$};
\node[condition](ce) [left of= e, xshift=0.9cm] {$\xa \land \xb$};

 \path
	(e) edge  node{$-$} (d)
	(a) edge  node{$-$} (d)

	(e) edge  node{$+$} (a)
	(d) edge  node[swap]{$+$} (b)
	(c) edge   node{$-$} (b)
	(e) edge  [bend left] node{$+$} (b)
	(e) edge  node{$-$} (c)	
(a) edge  [bend left] node{$+$} (e)
(b) edge  node{$+$} (e);
\end{tikzpicture}
\caption{Example of an ADF}
\label{fig:grd}
\end{subfigure}
\begin{subfigure}[b]{0.55\textwidth}
\centering 
\begin{tabular}{|c|c|c|c|c|c|c|c|c|} 
\hline
			& $\xa$	& $\xb$	& $\xc$  	& $\xd$	& $\xe$   	& CMP 	& PREF	& GRD 	  \\
\hline
$v_1$		& $\tvu$	& $\tvu$	& $\tvu$ & $\tvu$	& $\tvu$  	& \cm	& \xm	& \cm		\\
$v_2$		& $\tvt$	& $\tvt$	& $\tvf$ 	& $\tvf$	& $\tvt$ 	 	& \cm	& \cm	& \xm		\\
$v_3$		& $\tvf$  & $\tvt$	& $\tvt$ 	& $\tvt$	& $\tvf$ 	 	& \cm	& \cm	& \xm	 	\\
\hline
\end{tabular}
\caption{Labelings of the presented ADF}
\label{tab:adflab}
\end{subfigure} 
\end{center}
\caption{Example ADF and its complete (CMP), preferred (PREF) and grounded (GRD) labelings.}
\end{figure} 
%
%
%
%
%
%
%

\begin{example}
\label{ex:adflab} 
The admissible, complete, preferred and grounded labelings of the ADF 
depicted in Figure \ref{fig:grd} are visible in Table \ref{tab:adflab}.
%
\end{example}
 
Having an acceptance condition for each node is similar in spirit to having constraints for epistemic graphs. 
Furthermore, both frameworks can handle 
relations that are positive, negative, or neither. 
However, there are some fundamental differences between ADFs and epistemic graphs. 

The acceptance conditions 
can tell us whether an argument is accepted or rejected based on the acceptance of its parents. In contrast, 
the epistemic constraints can produce probability assignments in the unit interval that depend on the degrees of belief in 
other arguments, which offers a much more fine--grained perspective. 
It also allows epistemic graph to easily handle some forms of support, such as the
abstract or deductive supports, 
which are normally too weak to be expressed  in ADFs or require certain translations 
\cite{Polberg16,Polberg17}. The constraints also allow us 
to define a range of values that an argument may take on in given circumstances as well as a single particular value, and 
thus offers more flexibility in modelling the acceptability of an argument.  
Furthermore, in epistemic graphs the constraints are assigned per graph, not per argument.  
We can handle situations where the belief in one argument might depend not just on its parents, but also on other arguments 
for reasons known only to the agent, without necessarily forcing edges in the graph to be modified. 
Additionally, the completeness of acceptance conditions is obligatory in ADFs, while the completeness of epistemic 
constraints is optional and requiring it should be motivated by a given application. 
This control may be useful in user modelling, where we are not yet sure how a given argument and its associated relations
are perceived by the user.

These differences show that epistemic graphs are quite distinct from ADFs. 
Nevertheless, it is possible for epistemic graphs to model ADFs. We will show how this can be achieved based on 
an example. 

\begin{example}
\label{ex:adfepigraph}
Let us come back to the ADF from Example \ref{ex:adflab} and Figure \ref{fig:grd}. We will now show how acceptance
conditions can be transformed into constraints s.t. the labelings extracted from the probabilistic distributions 
correspond to the ADF labelings under a given semantics.
 
Let us focus on argument $\xe$. If we were to create a truth table for its 
condition $\xa \land \xb$, we would observe that if $\xe$ is to be accepted, then $\xa$ and $\xb$ have to be true, 
and if $\xe$ is to be rejected, then $\xa$ or $\xb$ has to be false. Taking into account 
the nature of the complete semantics, this rather straightforwardly translates to the 
following constraints: argument $\xe$ is believed iff
$\xa$ and $\xb$ are believed; and, argument $\xe$ is disbelieved iff either $\xa$ or $\xb$ is disbelieved.

What is therefore happening is that for every (propositional) 
acceptance condition we create two constraints that are the epistemic adaptations 
of the formulas ${\sf X} \leftrightarrow \acc_{\sf X}$ and $\neg {\sf X} \leftrightarrow \neg \acc_{\sf X}$, 
where, assuming that the consequent 
is in a form without nested negations, a positive literal ${\sf Z}$ is transformed into an epistemic atom $\pprob({\sf Z}) > 0.5$ 
and a negative literal $\neg {\sf Z}$ becomes $\pprob({\sf Z}) < 0.5$.   

We can gather such constraints for all arguments into a set $\con$:

\begin{itemize}
\item $(\pprob(\xa) > 0.5 \leftrightarrow \pprob(\xe) >0.5) \land (\pprob(\xa) < 0.5 \leftrightarrow  \pprob(\xe) <0.5)$

\item $(\pprob(\xb) > 0.5 \leftrightarrow  \pprob(\xd) >0.5 \lor (\pprob(\xc) <0.5 \land \pprob(\xe) >0.5)) \land  
 (\pprob(\xb) < 0.5 \leftrightarrow  \pprob(\xd) < 0.5 \land (\pprob(\xc) >0.5 \lor \pprob(\xe) <0.5))$

\item $(\pprob(\xc) > 0.5 \leftrightarrow  \pprob(\xe) <0.5) \land (\pprob(\xc) < 0.5 \leftrightarrow  \pprob(\xe) >0.5)$

\item $(\pprob(\xd) > 0.5 \leftrightarrow  \pprob(\xa) <0.5 \lor \pprob(\xe) < 0.5) \land (\pprob(\xd) < 0.5 \leftrightarrow \pprob(\xa) >0.5 \land \pprob(\xe) >0.5)$

\item $(\pprob(\xe) > 0.5 \leftrightarrow  \pprob(\xa) >0.5 \land \pprob(\xb) > 0.5) \land (\pprob(\xe) < 0.5 \leftrightarrow  \pprob(\xa) <0.5 \lor \pprob(\xb) < 0.5)$
\end{itemize} 

The ternary satisfying distributions of this set are visible in Table \ref{tab:epiadflab}. We observe that by 
transforming the distributions into  labelings that map to $\tvt$ arguments that are believed, $\tvf$ that are disbelieved and $\tvu$ that are neither, we retrieve the complete labelings of our ADF. 
By considering distribution maximal or minimal w.r.t. $\lesssim_I$ we can retrieve the preferred or grounded labelings. 

\begin{table}[!ht]
\centering
\begin{tabular}{|c|c|c|c|c|c|c|c|c|c|} 
\hline
				& $\prob(\xa)$	& $\prob(\xb)$	& $\prob(\xc)$  	& $\prob(\xd)$	& $\prob(\xe)$  	& $\sat(\con)$ 	& Max. $\lesssim_I$ &  Min. $\lesssim_I$ 	 \\
\hline
$\prob_1$		& $0.5$	& $0.5$	& $0.5$ 	& $0.5$	& $0.5$ 	 	& \cm	& \xm	& \cm		\\
$\prob_2$		& $1$	& $1$	& $0$ 	& $0$	& $1$ 	 	& \cm	& \cm	& \xm		\\
$\prob_3$		& $0$  	& $1$	& $1$ 	& $1$	& $0$ 	 	& \cm	& \cm	& \xm	 	\\
\hline
\end{tabular}
\caption{Ternary satisfying and information maximizing/minimizing distributions for the epistemic graph from Example \ref{ex:adfepigraph}.}
\label{tab:epiadflab}
\end{table} 
\end{example} 

This example shows us that it is possible for the epistemic graphs to handle ADFs under the labeling--based semantics, 
even though providing a full translation for any type of condition may be more involved than the presented approach. 
Given the fact that ADFs can subsume a number of different frameworks \cite{Polberg17}, it is also possible for 
the epistemic graphs to express many more approaches to argumentation than we recall here.  
  
There are certain generalizations of ADFs that are relevant in the context of our work. 
In \cite{PolbergDoder14}, a probabilistic version of ADFs has been introduced. However, this work follows the 
constellation interpretation of a probability, not the epistemic one, which leads to significantly different 
modelling \cite{Hunter:2013,PolbergHT17}. In a recent work \cite{Brewka18}, a new version of weighted ADFs has been proposed, 
in which conditions no longer map subsets of parents of a given argument to $in$ or $out$, 
but take values assigned to the parents and state a specific value that should be assigned to the target. 
These values can be abstract entities with some form of ordering between them as well as numbers from the $[0,1]$ 
interval. The information ordering present in the original ADFs is then adopted accordingly and then the definition of the 
existing operator--based 
semantics (admissible, grounded, preferred, complete) remains unchanged.

Despite certain possible overlaps, weighted ADFs are incomparable to epistemic graphs.  
On the one hand, similarly as in original ADFs, condition completeness and limiting the conditions to depend 
only on the parents of a given argument is enforced. Furthermore, unlike epistemic constraints, weighted 
acceptance conditions are very specific in the sense that a given combination of values assigned to a given 
argument leads to a precise, defined outcome. 
Therefore, a constraint stating that if the attacker is believed, then the attackee should be disbelieved 
(we can formalize it e.g. as $\pprob(\xa) > 0.5 \rightarrow \pprob(\xb) < 0.5$), cannot be conveniently 
expressed in weighted ADFs. This is due to the fact that the belief in the target is meant to be a function of beliefs 
of the source, while in epistemic graphs a more general relation is permitted. 
Consequently, there are properties expressible with epistemic graphs, but not with weighted ADFs. 
On the other hand, weighted ADFs are not specialized for handling probabilities, and therefore can take as input
further unspecified values, not only numbers. 
Thus, we can construct scenarios handled by weighted ADFs, but not by epistemic graphs. 
Additionally, even if values from the $[0,1]$ interval are considered, 
for computational reasons they are amended with a special element indicating that a given value is undefined 
and the interpretation of this element is different than the one of neither agreeing nor disagreeing in the epistemic proposal.

\subsection{Constrained Argumentation Frameworks}
\label{sec:caf}  
 
Our proposal shares certain similarities with the constrained argumentation frameworks \cite{CosteMarquisDM06}, 
which permits external requirements among unrelated arguments to be imposed in the framework. 
This constraint would represent certain restrictions that 
(for reasons unknown to the abstract system) are considered desirable by, for example, the user, and which are not necessarily 
reflected by the structure of the graph. 
Although this approach has only been analyzed in the context of attack--based graphs, 
certain positive relationships between arguments could potentially be simulated through the use of propositional formulae
representing the external requirements. Nevertheless, this modelling is targeted mainly at two--valued semantics, 
and thus the framework does not deal with fine--grained acceptability.   

\begin{definition}
Let $PROP_S$ be a propositional language defined in the usual inductive way from a set $S$ of propositional symbols, 
boolean constants $\top$, $\bot$ and the connectives $\neg, \land, \lor, \leftrightarrow$ and $\rightarrow$.
A \textbf{constrained argumentation framework} is a tuple $(\graph, \lab, {\sf PC})$ where $(\graph, \lab)$ is a labelled 
graph s.t. $\lab$ assigns only $-$ to all edges, and ${\sf PC}$ is a propositional formula from $PROP_{\nodes(\graph)}$.
\end{definition} 

Semantics of the graph are primarily defined in terms of sets of arguments that, along with meeting 
the classical extension-based semantics \cite{Dung95}, satisfy the external constraint. 
Such classical semantics can be easily retrieved by the epistemic postulates \cite{PolbergHunter17,Baroni:2011}, 
which themselves are straightforwardly generalized by epistemic graphs. The propositional formula ${\sf PC}$ 
can also be straightforwardly mapped to an epistemic constraint. We will therefore consider an example showing 
how constrained argumentation frameworks can be expressed within epistemic graphs.  

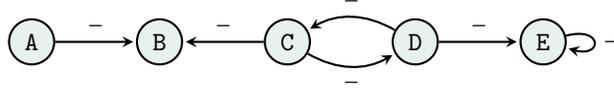
\begin{figure}[!ht]
\centering
  \begin{tikzpicture}
[->,>=stealth,shorten >=1pt,auto,node distance=1.7cm,
  thick,main node/.style={shape=rounded rectangle,fill=darkgreen!10,draw,minimum size = 0.6cm,font=\normalsize\bfseries} 
]

\node[main node] (a) {$\xa$};
\node[main node] (b) [right of=a] {$\xb$};
\node[main node] (c) [right of=b] {$\xc$};
\node[main node] (d) [right of=c] {$\xd$};
\node[main node] (e) [right of=d] {$\xe$};  
 
 \path
	(a) edge node {$-$} (b)
	(c) edge node[swap] {$-$} (b)
	(d) edge node {$-$} (e)
    	(e) edge [loop right] node {$-$} (e)
 	 (c) edge [bend right] node[swap] {$-$} (d)
    (d) edge [bend right] node[swap] {$-$} (c);
\end{tikzpicture}
\caption{A conflict--based argument graph}
\label{fig:cafx}
\end{figure} 

\begin{example} 
\label{ex:cafx}
Consider the graph
depicted in Figure~\ref{fig:cafx} and augmented with the constraint 
${\sf PC} = \neg \xa \lor \xd$. 
The admissible extensions (i.e. extensions in which no arguments attack each other and every argument 
attacking an argument in the set is itself attacked by an element of the set) of the graph (without the constraint) are 
$\{\xa,\xc\}$,$\{\xa, \xd\}$, $\{\xa\}$, $\{\xc\}$, $\{\xd\}$ and $\emptyset$.
Once the constraint is applied, the sets $\{\xa, \xc\}$ and $\{\xa\}$ have to be removed. 
The preferred extensions (i.e. maximal admissible extensions) of the graph are initially $\{\xa, \xc\}$ and $\{\xa,\xd\}$. 
However, if we take the constraint into account, we in fact receive $\{\xc\}$ and $\{\xa, \xd\}$.

Following the method from \cite{PolbergHunter17}, we now create the following set of constraints $\con$:

\begin{itemize}
\item $\pprob(\xa) \geq 0.5$
\item $(\pprob(\xb) >0.5 \rightarrow (\pprob(\xa) < 0.5 \land \pprob(\xc) < 0.5)) \land 
(\pprob(\xb) <0.5 \rightarrow (\pprob(\xa) > 0.5 \lor \pprob(\xc) > 0.5))$

\item $(\pprob(\xc) >0.5 \leftrightarrow \pprob(\xd) < 0.5) \land  
(\pprob(\xc) <0.5 \leftrightarrow \pprob(\xd) > 0.5)$
 
\item $(\pprob(\xe) > 0.5 \rightarrow (\pprob(\xd) < 0.5 \land \pprob(\xe) <0.5)) \land
(\pprob(\xe) <0.5 \rightarrow (\pprob(\xd) > 0.5 \lor \pprob(\xe) >0.5))$ 
\end{itemize} 

In Table \ref{tab:constrlabx} we have listed all the ternary distributions satisfying $\con$. It is easy to see that the sets of believed 
arguments obtained from these distributions coincide with the admissible extensions of our labelled graph. 
The epistemic representation of the ${\sf PC}$ constraint is $\pprob(\xa) \leq 0.5 \lor \pprob(\xd) > 0.5$. 
By adding it to the set $\con$, we obtain the constraint set $\con'$, which excludes distributions 
$\prob_{2}$, $\prob_{3}$, $\prob_{8}$ and $\prob_9$ that corresponded to extensions $\{\xa\}$ and $\{\xa,\xc\}$.  
By enforcing information maximality along with the 
$\con$ and $\con'$ constraints, we obtain either distributions $\prob_{9}$ and $\prob_{13}$ or 
$\prob_4$ and $\prob_{13}$, which are associated with the desired preferred extensions.  
\end{example}  
 
\begin{table}[!ht]
\centering
\begin{tabular}{|c|c|c|c|c|c|c|c|c|c|} 
\cline{9-10}
\multicolumn{8}{c|}{} &\multicolumn{2}{c|}{Max. w.r.t. $\lesssim_I$} \\
\cline{2-10}
\multicolumn{1}{c|}{} & $\prob(\xa)$	& $\prob(\xb)$	& $\prob(\xc)$  & $\prob(\xd)$	& $\prob(\xe)$  
& $\con$ & $\con'$ & $\con$ & $\con'$ \\
\hline
$\prob_1$		& $0.5$		& $0.5$		& $0.5$ & $0.5$		& $0.5$ & \cm	& \cm	& \xm	& \xm	\\
$\prob_2$		& $1$  		& $0.5$		& $0.5$ & $0.5$		& $0.5$	& \cm	& \xm	& \xm	& \xm	\\
$\prob_3$		& $1$  		& $0$		& $0.5$ & $0.5$		& $0.5$	& \cm	& \xm	& \xm	& \xm	\\ 
$\prob_4$		& $0.5$		& $0$		& $1$	& $0$		& $0.5$ & \cm	& \cm	& \xm	& \cm	\\
$\prob_5$		& $0.5$		& $0.5$		& $1$	& $0$		& $0.5$ & \cm	& \cm	& \xm	& \xm	\\
$\prob_6$		& $0.5$		& $0.5$		& $0$ 	& $1$		& $0.5$ & \cm	& \cm	& \xm	& \xm	\\
$\prob_7$		& $0.5$		& $0.5$		& $0$ 	& $1$		& $0$	& \cm	& \cm	& \xm	& \xm	\\
$\prob_8$		& $1$  		& $0.5$		& $1$ 	& $0$		& $0.5$	& \cm	& \xm	& \xm	& \xm	\\
$\prob_9$		& $1$  		& $0$		& $1$ 	& $0$		& $0.5$	& \cm	& \xm	& \cm	& \xm	\\ 
$\prob_{10}$		& $1$  		& $0.5$		& $0$ 	& $1$		& $0.5$	& \cm	& \cm	& \xm	& \xm	\\
$\prob_{11}$		& $1$  		& $0.5$		& $0$ 	& $1$		& $0$	& \cm	& \cm	& \xm	& \xm	\\    
$\prob_{12}$		& $1$  		& $0$		& $0$ 	& $1$		& $0.5$	& \cm	& \cm	& \xm	& \xm	\\
$\prob_{13}$		& $1$  		& $0$		& $0$ 	& $1$		& $0$	& \cm	& \cm	& \cm	& \cm	\\
\hline
\end{tabular}
\caption{Ternary satisfying and information maximizing distributions associated with the sets of 
constraints $\con$ and $\con'$ from Example \ref{ex:cafx}.}
\label{tab:constrlabx}
\end{table}  

\subsection{Other Approaches}

We conclude our comparison with related work by considering two general approaches in knowledge representation and reasoning.

\subsubsection{Constraint Satisfaction Problem} 

Constraint programming \cite{Dechter2003,Rossi:2006,Tsang1993} is a general problem solving paradigm for modelling and solving hard search problems. In essence, our approach comprises of a series of constraints that take both probabilistic and argumentative aspects into account. Thus, the problems we discussed can be captured as constraint satisfaction problems and existing constraint programming solvers can be used to solve these problems. Constraint programming is a general approach and we already noted that there are some general notions in the constraint programming literature that subsume some of our specific notions (such as \emph{eliminating explanations}, cf.\ Definition~\ref{def:coverage}). 
However, our work provides the first proposal for how to turn some aspects of representing and reasoning with beliefs in arguments into a constraint satisfaction problem.
We provide the language for constraints over belief in arguments, the entailment and consequence relations, 
and epistemic semantics.

A concrete formalisation and implementation of our approach using constraint programming technology is part of current work. As a first step in this direction, we have considered performing updates in sub-classes of epistemic graphs as linear optimization problems \cite{HunterPP18}. 

\subsubsection{Bayesian Networks}

A Bayesian network (or a causal probabilistic network) is an acyclic directed graph in which each node denotes a random variable 
(a variable that can be instantiated with an element from some set of events) and each arc denotes causal influence of one 
random variable on another \cite{Pearl2000,Barber2012}. 
Random variables can be used to represent propositions that are either 
\enquote{true} or \enquote{false}. For example, if the random variable is {\tt car-battery-is-flat}, then it 
can be instantiated with the event {\tt car-battery-is-flat}, or the event 
$\neg\mbox{\tt car\mbox{-}battery\mbox{-}is\mbox{-}flat}$.

A key advantage of a Bayesian network is the use of independence assumptions that can be derived from the graph structure. 
These independence assumptions allow for the joint probability distribution for the random variables in a graph to be decomposed 
into a number of smaller joint probability distributions. This makes the acquisition and use of probabilistic information much 
more efficient.

Superficially, there are some similarities between Bayesian networks and epistemic graphs. Both have a graphical 
representation of the influence of one node on another where the nodes can be used to represent statements. Furthermore, 
the influences by one node on another can change the belief in the target node, and this change can be either positive or negative.

However, Bayesian networks and epistemic graphs are significantly different in their underlying representation and in 
the way they work as we clarify here:
\begin{enumerate}
\item A Bayesian network is used with a single probability distribution whereas the constraints associated with an epistemic graph 
allow for multiple probability distributions that satisfy the constraints;

\item A Bayesian network updates a random variable by taking on a specific instantiation, and that there is no longer any 
doubt about that instantiation (e.g. in the case of a random variable being updated by taking on the value \enquote{true}, then there 
is no longer any doubt or uncertainty about the value of the random variable being \enquote{true}), whereas with epistemic graphs, if 
the belief in a node is updated, it can be of any value in the unit interval (e.g. for an argument $\xa$, that is current believed 
to degree $0.7$, we may choose to update it to degree $0.3$), and so this means epistemic graphs can reflect 
uncertainty in updating; and

\item A Bayesian network propagates updates by conditioning, which is a specific kind of constraint (e.g. for a graph with two 
variables $\alpha$ and $\beta$, after updating $\alpha$, the propagated belief in $\beta$ is $\prob(\beta\mid\alpha)$), 
whereas the framework for epistemic graphs provides a rich language for specifying a wide variety of constraints between 
the two variables.
\end{enumerate} 

The motivations for Bayesian networks and epistemic graphs are also different. Bayesian networks are for modelling 
normative reasoning (i.e. they model how we should reason with a set of random variables with given set of influences 
between them). In contrast, epistemic graphs are for modelling non-normative reasoning, and intended to reflect how 
people may choose to reason with the uncertainty concerning arguments. So with epistemic graphs, we may model 
how some people may regard the relative belief in 
a set of arguments, but it does not mean that they are correct in any normative sense, rather it is just a way of 
modelling their perspective or behaviour.

There are also some proposals for capturing aspects of Bayesian networks in argumentation such as qualitative 
probabilistic reasoning \cite{Parsons2004} and \cite{Timmer2017}. As with Bayesian networks, 
they are also concerned with capturing aspects of normative reasoning, and so have a different aim to epistemic graphs.


\section{Discussion}
\label{section:Discussion}


In this paper, we have generalized the epistemic approach to probabilistic argumentation by introducing the notion of 
epistemic graphs which define how arguments influence each other through the use of epistemic constraints. 
We provided an extensive study of properties of graphs and exemplified their potential use in practical applications.  
We have also created a proof theory 
for reasoning with the constraints that is both sound and complete, and 
analyzed various ways in which the constraints can affect arguments and relations 
between them. 
We have also compared our research to other relevant works in argumentation, CSP and Bayesian networks. 
Our proposal meets the requirements postulated in the introduction: 

\begin{description}
\item[Modelling fine--grained acceptability]  
Epistemic graphs can express varying degrees of belief we have in arguments and these beliefs can be 
harnessed and restricted through the use of epistemic constraints, as seen in Section \ref{sec:epistemicgraphs}. 
The beliefs can be easily associated with the traditional notions of acceptance and rejection of arguments \cite{PolbergHunter17} 
and, in contrast 
to more abstract forms of scoring and ranking arguments, provide a clearer meaning of the values associated with 
arguments. 

\item[Modelling positive and negative relations between arguments]  
With epistemic graphs, we can model various types of relations between arguments, including positive, negative or mixed, 
as 
studied in Section \ref{sec:consistentlabel}. 
Furthermore, they can also handle relations marked as group or binary (for example, two attackers need to be believed 
in order for the target to be disbelieved versus at least one attacker needs to be believed for the target to be disbelieved). 
Finally, in our analysis 
of the nature of various relations, we have also discussed how the views on the influence one argument has over 
another change depending on whether local or global perspective is taken into account. 

\item[Modelling context--sensitivity] 
Two structurally similar graphs can be assigned different sets of epistemic constraints. 
An agent is allowed to have different opinions on similar 
graphs and adopt them according to his or her needs, be it caused by the actual content of the arguments, 
agent's preferences or knowledge, 
or the way an agent understands the arguments.
Thus, there is no requirement for the same graphs being evaluated in the same fashion under the 
same epistemic semantics. For example, we can easily create two different sets of constraints for the 
two scenarios considered in Example \ref{ex:context} despite the fact that their formal representations are equivalent. 
Epistemic graphs can also deal with 
restrictions that are not necessarily reflected in the structure of the graph. 

\item[Modelling different perspectives] 
Agents do not need to adhere to a uniform perspective on a given problem. They can perceive arguments and relations 
between them differently, and thus find different arguments believable or not, such as seen in Examples \ref{ex:trains}. 
Furthermore, even arguments sharing some similarities in their views 
can respond differently when put in the same situation. Such behaviour could have been observed in Example \ref{ex:smoking}, 
and it would not be problematic to create constraints that handle rejecting certain arguments differently.   

\item[Modelling imperfect agents] 
The freedom in defining constraints and beliefs in arguments allows agents to express their views freely, independently of 
whether they are deemed rational or not or are strongly affected by cognitive biases. For example, two logically conflicting 
arguments do not need to be accompanied by constraints reflecting this conflict. Furthermore, agents do not necessarily 
need to adhere to various types of semantics \cite{PolbergHunter17}, and epistemic constraints could be used to grasp 
their views more accurately. 

\item[Modelling incomplete graphs] 
An argument graph might not reflect all the knowledge an agent has and that is relevant to a given problem, 
as seen in Example \ref{ex:smoking}.
Consequently, a given argument can be believed or disbelieved 
without any apparent justification, as seen in Section \ref{section:CaseStudy}. It is however not difficult 
to create constraints stating that a given argument should be assigned a particular score. It is also possible to not create 
any constraints at all if it is not known how an agent views the interactions between arguments, and thus provide 
no coverage to arguments or relations, as seen in Section \ref{sec:CoverageAnalysis}. 
\end{description}
 
Although our analysis of epistemic graphs is extensive, there are still various topics to be considered. 
The currently proposed epistemic graph semantics can be further refined in order to take the additional information 
contained in the structure of the graph, but not in the constraints, into account. We could, for example, consider a 
coverage--based family of semantics, where the status assigned to a given argument can depend on the level of coverage 
it possesses. 
%

Another issue we want to explore concerns how the constraints can be obtained. Crowdsourcing opinions on arguments 
is a popular method for obtaining data \cite{Cerutti2014,HunterPolberg17,PolbergHunter17}. 
Such data concerning beliefs in arguments and whether arguments are seen as related
could be analyzed with, for example, machine learning techniques, in order to construct appropriate constraints. 

In the future we would like to explore the use of epistemic graphs for practical applications, in particular for computational 
persuasion. Applying the existing epistemic approach to modelling persuadee's beliefs
in arguments has produced methods for updating beliefs during
a dialogue \cite{Hunter15ijcai,Hunter16sum,HunterPotyka17}, efficient representation and reasoning
with the probabilistic user model \cite{Hadoux16}, modelling uncertainty in belief distributions \cite{Hunter16ecai}, 
for learning belief distributions \cite{HadouxHunter18}, and harnessing
decision rules for optimizing the choice of arguments based
on the user model \cite{Hadoux17}. These methods can be further developed in the context of epistemic 
graphs in order to provide a well understood
theoretical and computationally viable framework
for applications such as behaviour change. For a preliminary investigation on how to update beliefs in epistemic graphs, please see \cite{HunterPP18}.

The epistemic approach is not the only form of probabilistic argumentation. 
Another popular method relies on constellation probabilities 
\cite{Li:2011,Hunter:2013,Fazzinga:2015}
in which we can consider a number of argument graphs, each one having a probability of being the \enquote{real graph}. 
Incorporating constellation probabilities in epistemic graphs would, for example, allow for a more refined handling of agents 
whose argument graphs are not complete but have a chance of containing certain arguments. 
Furthermore, it is also possible to allow epistemic constraints to express beliefs in arguments as well as 
in the relations between them, similarly as done in \cite{PolbergHT17}. Consequently, further developments
of the epistemic graphs are an interesting topic for future work.

Finally, we will also investigate algorithms and implementations aimed at handling epistemic graphs. 
This can be done through devising dedicated solutions as well as by introducing appropriate translations to, for example, 
propositional logic, as indicated by the results in Sections \ref{section:ReasoningConstraints}. 
Further possibilities concern employing SMT solvers or constraint logic programming. 

\section*{Acknowledgments}
 Sylwia Polberg and Anthony Hunter were supported by EPSRC Project EP/N008294/1 \enquote{Framework for Computational Persuasion}.
Matthias Thimm was partially supported by the Deutsche Forschungsgemeinschaft (grant KE 1686/3-1).

\bibliographystyle{abbrv}
\bibliography{epistemicgraph}

\begin{thebibliography}{10}

\bibitem{Amgoud2013}
L.~Amgoud and J.~{Ben-Naim}.
\newblock Ranking-based semantics for argumentation frameworks.
\newblock In W.~Liu, V.~S. Subrahmanian, and J.~Wijsen, editors, {\em
  Proceedings of SUM'13}, volume 8078 of {\em LNCS}, pages 134--147. Springer,
  2013.

\bibitem{AmgoudBenNaim16b}
L.~Amgoud and J.~{Ben-Naim}.
\newblock Axiomatic foundations of acceptability semantics.
\newblock In C.~Baral, J.~Delgrande, and F.~Wolter, editors, {\em Proceedings
  of KR'16}, pages 2--11. AAAI Press, 2016.

\bibitem{AmgoudBenNaim16a}
L.~Amgoud and J.~{Ben-Naim}.
\newblock Evaluation of arguments from support relations: Axioms and semantics.
\newblock In S.~Kambhampati, editor, {\em Proceedings of IJCAI'16}, pages
  900--906. AAAI Press, 2016.

\bibitem{AmgoudBenNaim17}
L.~Amgoud and J.~{Ben-Naim}.
\newblock Evaluation of arguments in weighted bipolar graphs.
\newblock In A.~Antonucci, L.~Cholvy, and O.~Papini, editors, {\em Proceedings
  of ECSQARU'17}, volume 10369 of {\em LNCS}, pages 25--35. Springer, 2017.

\bibitem{Amgoud16kr}
L.~Amgoud, J.~{Ben-Naim}, D.~Doder, and S.~Vesic.
\newblock Ranking arguments with compensation-based semantics.
\newblock In C.~Baral, J.~Delgrande, and F.~Wolter, editors, {\em Proceedings
  of KR'16}, pages 12--21. AAAI Press, 2016.

\bibitem{AmgoudBNDV17}
L.~Amgoud, J.~{Ben-Naim}, D.~Doder, and S.~Vesic.
\newblock Acceptability semantics for weighted argumentation frameworks.
\newblock In C.~Sierra, editor, {\em Proceedings of IJCAI'17}, pages 56--62.
  AAAI Press, 2017.

\bibitem{Barber2012}
D.~Barber.
\newblock {\em Bayesian Reasoning and Machine Learning}.
\newblock Cambridge University Press, 2012.

\bibitem{Baroni:2011}
P.~Baroni, M.~Caminada, and M.~Giacomin.
\newblock An introduction to argumentation semantics.
\newblock {\em The Knowledge Engineering Review}, 26(4):365--410, 2011.

\bibitem{BesnardHunter01}
P.~Besnard and A.~Hunter.
\newblock A logic-based theory of deductive arguments.
\newblock {\em Artificial Intelligence}, 128(1):203 -- 235, 2001.

\bibitem{BesnardHunter2014}
P.~Besnard and A.~Hunter.
\newblock Constructing argument graphs with deductive arguments: A tutorial.
\newblock {\em Argument \& Computation}, 5(1):5--30, 2014.

\bibitem{Black:2012}
E.~Black and A.~Hunter.
\newblock A relevance-theoretic framework for constructing and deconstructing
  enthymemes.
\newblock {\em Journal of Logic and Computation}, 22(1):55--78, 2012.

\bibitem{BoellaGTV10}
G.~Boella, D.~Gabbay, L.~van~der Torre, and S.~Villata.
\newblock Support in abstract argumentation.
\newblock In P.~Baroni, F.~Cerutti, M.~Giacomin, and G.~R. Simari, editors,
  {\em Proceedings of COMMA'10}, volume 216 of {\em FAIA}, pages 111--122. IOS
  Press, 2010.

\bibitem{Bonzon16}
E.~Bonzon, J.~Delobelle, S.~Konieczny, and N.~Maudet.
\newblock A comparative study of ranking-based semantics for abstract
  argumentation.
\newblock In D.~Schuurmans and M.~Wellman, editors, {\em Proceedings of
  AAAI'16}, pages 914--920. AAAI Press, 2016.

\bibitem{BrewkaESWW13}
G.~Brewka, S.~Ellmauthaler, H.~Strass, J.~P. Wallner, and S.~Woltran.
\newblock Abstract dialectical frameworks revisited.
\newblock In F.~Rossi, editor, {\em Proceedings of IJCAI'13}, pages 803--809.
  AAAI Press, 2013.

\bibitem{BrewkaPW14}
G.~Brewka, S.~Polberg, and S.~Woltran.
\newblock Generalizations of {Dung} frameworks and their role in formal
  argumentation.
\newblock {\em IEEE Intelligent Systems}, 29(1):30--38, Jan 2014.

\bibitem{Brewka18}
G.~Brewka, H.~Strass, J.~P. Wallner, and S.~Woltran.
\newblock Weighted abstract dialectical frameworks.
\newblock In S.~A. McIlraith and K.~Q. Weinberger, editors, {\em Proceedings of
  AAAI'18}, pages 1771--1778. AAAI Press, 2018.

\bibitem{BrewkaWoltran10}
G.~Brewka and S.~Woltran.
\newblock Abstract dialectical frameworks.
\newblock In F.~Lin, U.~Sattler, and M.~Truszczynski, editors, {\em Proceedings
  of KR'10}, pages 102--111. AAAI Press, 2010.

\bibitem{CabrioVillata13}
E.~Cabrio and S.~Villata.
\newblock A natural language bipolar argumentation approach to support users in
  online debate interactions.
\newblock {\em Argument \& Computation}, 4(3):209--230, 2013.

\bibitem{Caminada:2009}
M.~Caminada and D.~M. Gabbay.
\newblock A logical account of formal argumentation.
\newblock {\em Studia Logica}, 93:109--145, 2009.

\bibitem{CayrolLS05}
C.~Cayrol and M.~{Lagasquie-Schiex}.
\newblock Gradual valuation for bipolar argumentation frameworks.
\newblock In L.~Godo, editor, {\em Proceedings of ECSQARU'05}, volume 3571 of
  {\em LNCS}, pages 366--377. Springer, 2005.

\bibitem{CayrolLS05b}
C.~Cayrol and M.~{Lagasquie-Schiex}.
\newblock Graduality in argumentation.
\newblock {\em Journal of Artificial Intelligence Research}, 23:245--297, 2005.

\bibitem{CayrolLS13}
C.~Cayrol and M.-C. Lagasquie-Schiex.
\newblock Bipolarity in argumentation graphs: Towards a better understanding.
\newblock {\em International Journal of Approximate Reasoning}, 54(7):876--899,
  2013.

\bibitem{Cerutti2014}
F.~Cerutti, N.~Tintarev, and N.~Oren.
\newblock Formal arguments, preferences, and natural language interfaces to
  humans: an empirical evaluation.
\newblock In T.~Schaub, G.~Friedrich, and B.~O'Sullivan, editors, {\em
  Proceedings of ECAI'14}, volume 263 of {\em FAIA}, pages 1033--1034. {IOS}
  Press, August 2014.

\bibitem{haepilot}
F.~Cerutti, N.~Tintarev, and N.~Oren.
\newblock {Human-Argumentation Experiment Pilot 2013}.
\newblock
  \url{https://www.slideshare.net/fcerutti/humanargumentation-experiment-pilot-2013-technical-material},
  2014.

\bibitem{CosteMarquis:2007}
S.~Coste-Marquis, C.~Devred, S.~Konieczny, M.~{Lagasquie-Schiex}, and
  P.~Marquis.
\newblock On the merging of dung's argumentation systems.
\newblock {\em Artificial Intelligence}, 171(10-15):730--753, 2007.

\bibitem{inproc:careful}
S.~Coste-Marquis, C.~Devred, and P.~Marquis.
\newblock Inference from controversial arguments.
\newblock In G.~Sutcliffe and A.~Voronkov, editors, {\em Proceedings of
  {LPAR}'05}, volume 3835 of {\em LNCS}, pages 606--620. Springer, 2005.

\bibitem{inproc:prudent}
S.~Coste-Marquis, C.~Devred, and P.~Marquis.
\newblock Prudent semantics for argumentation frameworks.
\newblock In A.~Lim, editor, {\em Proceedings of {ICTAI}'05}, pages 568--572.
  IEEE Computer Society, 2005.

\bibitem{CosteMarquisDM06}
S.~{Coste-Marquis}, C.~Devred, and P.~Marquis.
\newblock Constrained argumentation frameworks.
\newblock In P.~Doherty, J.~Mylopoulos, and C.~Welty, editors, {\em Proceedings
  of KR'06}, pages 112--122. AAAI Press, 2006.

\bibitem{Costa-Pereira:2011}
C.~da~Costa~Pereira, A.~G.~B. Tettamanzi, and S.~Villata.
\newblock Changing one's mind: Erase or rewind? possibilistic belief revision
  with fuzzy argumentation based on trust.
\newblock In T.~Walsh, editor, {\em Proceedings of IJCAI'11}, pages 164--171.
  AAAI Press, 2011.

\bibitem{Dechter2003}
R.~Dechter.
\newblock {\em Constraint Processing}.
\newblock Morgan Kaufmann, 2003.

\bibitem{Dung95}
P.~M. Dung.
\newblock On the acceptability of arguments and its fundamental role in
  nonmonotonic reasoning, logic programming and n-person games.
\newblock {\em Artificial Intelligence}, 77:321--357, 1995.

\bibitem{Fazzinga:2015}
B.~Fazzinga, S.~Flesca, and F.~Parisi.
\newblock On the complexity of probabilistic abstract argumentation frameworks.
\newblock {\em ACM Transactions on Computational Logic}, 16(3):22:1--22:39,
  2015.

\bibitem{Hadoux16}
E.~Hadoux and A.~Hunter.
\newblock Computationally viable handling of beliefs in arguments for
  persuasion.
\newblock In N.~Bourbakis, A.~Esposito, A.~Mali, and M.~Alamaniotis, editors,
  {\em Proceedings of ICTAI'16}, pages 319--326. IEEE, 2016.

\bibitem{Hadoux17}
E.~Hadoux and A.~Hunter.
\newblock Strategic sequences of arguments for persuasion using decision trees.
\newblock In S.~Singh and S.~Markovitch, editors, {\em Proceedings of AAAI'17},
  pages 1128--1134. AAAI Press, 2017.

\bibitem{HadouxHunter18}
E.~Hadoux and A.~Hunter.
\newblock Learning and updating user models for subpopulations in persuasive
  argumentation using beta distributions.
\newblock In E.~Andr{\'{e}}, S.~Koenig, M.~Dastani, and G.~Sukthankar, editors,
  {\em Proceedings of {AAMAS}'18}, pages 1141--1149. IFAAMAS, 2018.

\bibitem{Hunter:2013}
A.~Hunter.
\newblock A probabilistic approach to modelling uncertain logical arguments.
\newblock {\em International Journal of Approximate Reasoning}, 54(1):47--81,
  2013.

\bibitem{Hunter15ijcai}
A.~Hunter.
\newblock Modelling the persuadee in asymmetric argumentation dialogues for
  persuasion.
\newblock In Q.~Yang and M.~Wooldridge, editors, {\em Proceedings of IJCAI'15},
  pages 3055--3061. AAAI Press, 2015.

\bibitem{Hunter16comma}
A.~Hunter.
\newblock Computational persuasion with applications in behaviour change.
\newblock In P.~Baroni, T.~F. Gordon, T.~Scheffler, and M.~Stede, editors, {\em
  Proceedings of COMMA'16}, volume 287 of {\em FAIA}, pages 5--18. IOS Press,
  2016.

\bibitem{Hunter16sum}
A.~Hunter.
\newblock Persuasion dialogues via restricted interfaces using probabilistic
  argumentation.
\newblock In S.~Schockaert and P.~Senellart, editors, {\em Proceedings of
  SUM'16}, volume 9858 of {\em LNCS}, pages 184--198. Springer, 2016.

\bibitem{Hunter16ecai}
A.~Hunter.
\newblock Two dimensional uncertainty in persuadee modelling in argumentation.
\newblock In G.~A. Kaminka, M.~Fox, P.~Bouquet, E.~H{\"u}llermeier, V.~Dignum,
  F.~Dignum, and F.~van Harmelen, editors, {\em Proceedings of ECAI'16}, volume
  285 of {\em FAIA}, pages 150--157. IOS Press, 2016.

\bibitem{HunterPP18}
A.~Hunter, S.~Polberg, and N.~Potyka.
\newblock {Updating Belief in Arguments in Epistemic Graphs}.
\newblock In M.~Thielscher, F.~Toni, and F.~Wolter, editors, {\em Proceedings
  of {KR}'18}, pages 138--147. {AAAI} Press, 2018.

\bibitem{HunterPT2018Arxiv}
A.~{Hunter}, S.~{Polberg}, and M.~{Thimm}.
\newblock Epistemic graphs for representing and reasoning with positive and
  negative influences of arguments.
\newblock {\em ArXiv CoRR}, abs/1802.07489, 2018.

\bibitem{HunterPotyka17}
A.~Hunter and N.~Potyka.
\newblock Updating probabilistic epistemic states in persuasion dialogues.
\newblock In A.~Antonucci, L.~Cholvy, and O.~Papini, editors, {\em Proceedings
  of {ECSQARU}'17}, volume 10369 of {\em LNCS}, pages 46--56. Springer, 2017.

\bibitem{Hunter:2014}
A.~Hunter and M.~Thimm.
\newblock Probabilistic argumentation with incomplete information.
\newblock In T.~Schaub, G.~Friedrich, and B.~O'Sullivan, editors, {\em
  Proceedings of ECAI'14}, volume 263 of {\em FAIA}, pages 1033--1034. IOS
  Press, 2014.

\bibitem{KontarinisToni16}
D.~Kontarinis and F.~Toni.
\newblock Identifying malicious behavior in multi-party bipolar argumentation
  debates.
\newblock In M.~Rovatsos, G.~Vouros, and V.~Julian, editors, {\em Proceedings
  of EUMAS'15}, pages 267--278. Springer, 2016.

\bibitem{LeiteMartins11}
J.~Leite and J.~Martins.
\newblock Social abstract argumentation.
\newblock In T.~Walsh, editor, {\em Proceedings of IJCAI'11}, pages 2287--2292.
  AAAI Press, 2011.

\bibitem{Li:2011}
H.~Li, N.~Oren, and T.~J. Norman.
\newblock Probabilistic argumentation frameworks.
\newblock In S.~Modgil, N.~Oren, and F.~Toni, editors, {\em Proceedings of
  TAFA'11}, volume 7132 of {\em LNCS}, pages 1--16. Springer, 2011.

\bibitem{NouiouaRisch11}
F.~Nouioua and V.~Risch.
\newblock Argumentation frameworks with necessities.
\newblock In S.~Benferhat and J.~Grant, editors, {\em Proceedings of SUM'11},
  volume 6929 of {\em LNCS}, pages 163--176. Springer, 2011.

\bibitem{ogden2012health}
J.~Ogden.
\newblock {\em Health Psychology: A Textbook}.
\newblock Open University Press, 5th edition, 2012.

\bibitem{OrenNorman08}
N.~Oren and T.~J. Norman.
\newblock Semantics for evidence-based argumentation.
\newblock In P.~Besnard, S.~Doutre, and A.~Hunter, editors, {\em Proceedings of
  COMMA'08}, volume 172 of {\em FAIA}, pages 276--284. IOS Press, 2008.

\bibitem{Parsons2004}
S.~Parsons.
\newblock On precise and correct qualitative probabilistic reasoning.
\newblock {\em International Journal of Approximate Reasoning}, 35:111--135,
  2004.

\bibitem{Pearl2000}
J.~Pearl.
\newblock {\em Causality: Models, Reasoning, and Inference}.
\newblock Cambridge University Press, 2000.

\bibitem{Polberg16}
S.~Polberg.
\newblock Understanding the {A}bstract {D}ialectical {F}ramework.
\newblock In L.~Michael and A.~Kakas, editors, {\em Proceedings of JELIA'16},
  volume 10021 of {\em LNCS}, pages 430--446. Springer, 2016.

\bibitem{thesis:polberg}
S.~Polberg.
\newblock {\em Developing the Abstract Dialectical Framework}.
\newblock Phd thesis, Technical University of Vienna, Vienna, Austria, 2017.

\bibitem{Polberg17}
S.~Polberg.
\newblock Intertranslatability of abstract argumentation frameworks.
\newblock Technical Report DBAI-TR-2017-104, Institute for Information Systems,
  Technical University of Vienna, 2017.

\bibitem{PolbergDoder14}
S.~Polberg and D.~Doder.
\newblock Probabilistic abstract dialectical frameworks.
\newblock In E.~{Ferm{\'e}} and J.~Leite, editors, {\em Proceedings of
  JELIA'14}, volume 8761 of {\em LNCS}, pages 591--599. Springer, 2014.

\bibitem{HunterPolberg17}
S.~Polberg and A.~Hunter.
\newblock Empirical methods for modelling persuadees in dialogical
  argumentation.
\newblock In J.~Guerrero, editor, {\em Proceedings of {ICTAI'17}}, pages
  382--389. IEEE, 2017.

\bibitem{PolbergHunter17}
S.~Polberg and A.~Hunter.
\newblock Empirical evaluation of abstract argumentation: Supporting the need
  for bipolar and probabilistic approaches.
\newblock {\em International Journal of Approximate Reasoning}, 93:487 -- 543,
  2018.

\bibitem{PolbergHT17}
S.~Polberg, A.~Hunter, and M.~Thimm.
\newblock Belief in attacks in epistemic probabilistic argumentation.
\newblock In S.~Moral, O.~Pivert, D.~S{\'a}nchez, and N.~Mar{\'i}n, editors,
  {\em Proceedings of SUM'17}, volume 10564 of {\em LNCS}, pages 223--236.
  Springer, 2017.

\bibitem{PolbergOren14a}
S.~Polberg and N.~Oren.
\newblock Revisiting support in abstract argumentation systems.
\newblock In S.~Parsons, N.~Oren, C.~Reed, and F.~Cerutti, editors, {\em
  Proceedings of COMMA'14}, volume 266 of {\em FAIA}, pages 369--376. {IOS}
  Press, 2014.

\bibitem{Prakken14}
H.~Prakken.
\newblock On support relations in abstract argumentation as abstractions of
  inferential relations.
\newblock In T.~Schaub, G.~Friedrich, and B.~O'Sullivan, editors, {\em
  Proceedings ECAI'14}, volume 263 of {\em Frontiers in Artificial Intelligence
  and Applications}, pages 735--740. {IOS} Press, 2014.

\bibitem{Pu2014}
F.~Pu, J.~Luo, Y.~Zhang, and G.~Luo.
\newblock Argument ranking with categoriser function.
\newblock In R.~Buchmann, C.~V. Kifor, and J.~Yu, editors, {\em Proceedings of
  {KSEM}'14}, pages 290--301. Springer International Publishing, 2014.

\bibitem{puhrer15}
J.~P{\"u}hrer.
\newblock Realizability of three-valued semantics for abstract dialectical
  frameworks.
\newblock In Q.~Yang and M.~Wooldridge, editors, {\em Proceedings of
  {IJCAI}'15}, pages 3171--3177. {AAAI} Press, 2015.

\bibitem{Rago16}
A.~Rago, F.~Toni, M.~Aurisicchio, and P.~Baroni.
\newblock Discontinuity-free decision support with quantitative argumentation
  debates.
\newblock In C.~Baral, J.~Delgrande, and F.~Wolter, editors, {\em Proceedings
  of KR'16}, pages 63--73. AAAI Press, 2016.

\bibitem{Rahwan2011}
I.~Rahwan, M.~Madakkatel, J.~Bonnefon, R.~Awan, and S.~Abdallah.
\newblock Behavioural experiments for assessing the abstract argumentation
  semantics of reinstatement.
\newblock {\em Cognitive Science}, 34(8):1483--1502, 2010.

\bibitem{Reiter1978}
R.~Reiter.
\newblock On closed world data bases.
\newblock In H.~Gallaire and J.~Minker, editors, {\em Logic and Data Bases},
  pages 55--76. Springer, 1978.

\bibitem{RosenfeldKraus16}
A.~Rosenfeld and S.~Kraus.
\newblock Strategical argumentative agent for human persuasion.
\newblock In G.~A. Kaminka, M.~Fox, P.~Bouquet, E.~H{\"u}llermeier, V.~Dignum,
  F.~Dignum, and F.~van Harmelen, editors, {\em Proceedings of ECAI'16}, volume
  285 of {\em FAIA}, pages 320--328. IOS Press, 2016.

\bibitem{Rossi:2006}
F.~Rossi, P.~van Beek, and T.~Walsh, editors.
\newblock {\em Handbook of Constraint Programming}, volume~2 of {\em
  Foundations of Artificial Intelligence}.
\newblock Elsevier, 2006.

\bibitem{Strass13a}
H.~Strass.
\newblock Approximating operators and semantics for abstract dialectical
  frameworks.
\newblock {\em Artificial Intelligence}, 205:39 -- 70, 2013.

\bibitem{Strass13}
H.~Strass.
\newblock Instantiating knowledge bases in abstract dialectical frameworks.
\newblock In J.~Leite, T.~C. Son, P.~Torroni, L.~van~der Torre, and S.~Woltran,
  editors, {\em Proceedings of CLIMA'13}, volume 8143 of {\em LNCS}, pages
  86--101. Springer, 2013.

\bibitem{StrassWallner15}
H.~Strass and J.~P. Wallner.
\newblock Analyzing the {Computational Complexity} of {Abstract Dialectical
  Frameworks} via {Approximation Fixpoint Theory}.
\newblock {\em Artificial Intelligence}, 226:34--74, 2015.

\bibitem{TanNDNML16}
C.~Tan, V.~Niculae, C.~Danescu-Niculescu-Mizil, and L.~Lee.
\newblock Winning arguments: Interaction dynamics and persuasion strategies in
  good-faith online discussions.
\newblock In J.~Bourdeau, J.~Hendler, R.~Nkambou, I.~Horrocks, and B.~Y. Zhao,
  editors, {\em Proceedings of {WWW}'16}, pages 613--624. {ACM}, 2016.

\bibitem{Thimm:2012}
M.~Thimm.
\newblock A probabilistic semantics for abstract argumentation.
\newblock In L.~D. Raedt, C.~Bessiere, D.~Dubois, P.~Doherty, P.~Frasconi,
  F.~Heintz, and P.~Lucas, editors, {\em Proceedings of ECAI'12}, volume 242 of
  {\em FAIA}, pages 750--755. IOS Press, 2012.

\bibitem{Timmer2017}
S.~Timmer, J.~Meyer, H.~Prakken, S.~Renooij, and B.~Verheij.
\newblock A two-phase method for extracting explanatory arguments from bayesian
  networks.
\newblock {\em International Journal of Approximate Reasoning}, 80:475--494,
  2017.

\bibitem{Tsang1993}
E.~Tsang.
\newblock {\em Foundations of Constraint Satisfaction}.
\newblock Academic Press, 1993.

\bibitem{VanBeek:2006}
P.~{van Beek}.
\newblock {\em Backtracking Search Algorithms}, chapter~4, pages 85 -- 134.
\newblock Volume~2 of Rossi et~al. \cite{Rossi:2006}, 2006.

\bibitem{Zeng2008}
Z.~Zeng.
\newblock Context-based and explainable decision making with argumentation.
\newblock In E.~Andr{\'{e}}, S.~Koenig, M.~Dastani, and G.~Sukthankar, editors,
  {\em Proceedings of AAMAS'18}, pages 1114--1122. IFAAMAS, 2018.

\end{thebibliography}

\newpage
\section{Proof Appendix}

\reasonablerestricted*

\begin{proof}
\begin{itemize}
\item If $\Pi$ is nonempty, then there exists $x \in [0,1]$ s.t. $x \in \Pi$. By the definition of the restricted value set, 
$x-x \in \Pi$. Hence, $0 \in \Pi$. 

\item Let $\nodes(\graph) \neq \emptyset$. Since $\Pi$ is reasonable, then $\dist(\graph, \Pi) \neq \emptyset$. 
Hence, there exists $\prob \in \dist(\graph, \Pi)$ and 
for every $X \subseteq \nodes(\graph)$, there exist a value $y \in [0,1]$ s.t. $\prob(X) = y$. Hence, 
$y \in \Pi$ and $\Pi$ is nonempty. Thus, based on the previous part of this proof, $0 \in \Pi$.  

\item If $\Pi$ is a reasonable restricted value set, then for any nonempty graph, $\dist(\graph, \Pi) \neq \emptyset$. 
Hence, we can find $x_1, \ldots, x_n \in \Pi$ s.t. $\sum_{i=1}^n x_i = 1$. Since $\Pi$ is a restricted value set, 
then $x_1 + x_2 = y_1 \in \Pi$, $y_1 + x_3 = y_2 \in \Pi$, \ldots, $y_{n-2} + x_n = 1 \in \Pi$. 

Let $\Pi$ be a nonempty restricted value set s.t. $1 \in \Pi$. By the previous parts of this proof, $0 \in \Pi$. 
Thus, for any graph $\graph$ s.t. $\nodes(\graph) \neq \emptyset$, we can create a trivial distribution $\prob$ 
s.t. $\prob(\emptyset) = 1$ and $\forall X \subseteq \nodes(\graph)$ s.t $X \neq \emptyset$, $\prob(X) = 0$. 
Consequently, $\dist(\graph, \Pi) \neq \emptyset$ and $\Pi$ is reasonable. 
\end{itemize}
\end{proof}

\restrnonempty*

\begin{proof} 

Let us focus on the first case. It is easy to see that if any of the conditions 
are met, then  $\Pi^x_\ineq = \emptyset$. What remains to be shown is that if 
$\Pi^x_\ineq = \emptyset$, then one of these conditions has to be satisfied. 
Assume it is not the case, i.e. $\Pi^x_\ineq = \emptyset$, but none of the conditions is satisfied. 

Let $\ineq$ be $<$. Then, $\Pi^x_< = \emptyset$ iff $x$ is equal to the minimal value in the set, which given the nature of 
$\Pi$ is $0$. We reach a contradiction (option 3). If $\ineq$ is $>$, we can repeat similar reasoning and 
reach a contradiction with option 2. 
Let $\ineq$ be $\neq$. $\Pi^x_\neq = \emptyset$ iff $|\Pi| = 1$. 
Given the nature of $\Pi$, this is only possible when $\Pi = \{0\}$. We reach a contradiction with option 1. 
It is easy to see that for $\ineq \in \{=,\geq,\leq\}$, $\Pi^x_\ineq \neq \emptyset$. 

This proves that $\Pi^x_\ineq = \emptyset$ if and only if one of the listed conditions is met. 
 
Let us now analyze the combination sets. It is easy to verify that if any of the conditions is met, then the resulting 
combination set is indeed empty. Let us therefore show that if the combination set is empty, then one of the conditions is met. 
Let us assume that it is not the case, i.e. $\Pi^{x,(*_1, \ldots, *_k)}_{\ineq} = \emptyset$ but no condition is satisfied. 

Let $\Pi^{x,(*_1, \ldots, *_k)}_{\ineq} = \emptyset$ and assume that $k = 0$. This means that 
$\Pi^{x,(*_1, \ldots, *_k)}_{\ineq} = \{(v) \mid v \in \Pi^{x}_{\ineq} \}$. Thus, $\Pi^{x,(*_1, \ldots, *_k)}_{\ineq}$
is empty iff $\Pi^{x}_{\ineq}$ is empty. However, this means that we satisfy the first condition and thus reach a contradiction 
with our assumptions. 

Let $\Pi^{x,(*_1, \ldots, *_k)}_{\ineq} = \emptyset$ and assume that $k > 0$. Since $\Pi$ is nonempty, 
then by Lemma \ref{lemma:reasonablerestricted}, $0 \in \Pi$. 
Thus, for any $x$, any $k$ and any sequence $(*_1, \ldots, *_k)$, we can create a trivial tuple $(v_1, \ldots, v_{k+1})$ 
s.t. $v_1 *_1 v_2 \ldots v_k *_k v_{k+1} = x$. This is simply achieved by setting $v_1 = x$ and $v_i = 0$, where $i>1$. 
Hence, for $\ineq \in \{ \geq, \leq, =\}$, clearly $\Pi^{x,(*_1, \ldots, *_k)}_{\ineq} \neq \emptyset$. 
Let us therefore focus on $\ineq \in \{>, <, \neq\}$ and start with $>$. 
Let $y = max(\Pi)$. 
If $x \neq y$, then $y > x$, and we can create a sequence $(v_1, \ldots, v_{k+1})$ s.t. 
$v_1 *_1 v_2 \ldots v_k *_k v_{k+1} > x$ by setting $v_1 = y$ and $v_i = 0$, where $i>1$. 
Hence, we reach a contradiction 
for this case. 
If $x = y$, then if $y \neq 0$ (recall that values from $\Pi$ belong in the unit interval), 
then $y + y > x$.  Consequently, if there is at least one $j$ s.t. $*_j = +$, we can create a sequence 
$(v_1, \ldots, v_{k+1})$ s.t. 
$v_1 *_1 v_2 \ldots v_k *_k v_{k+1} > x$ by setting $v_1 = v_j = y$ and $v_i = 0$, where $i>1$ and $i \neq j$. 
If there is no addition present (i.e. we only have subtractions), 
then it is easy to see that the maximal value we can obtain from our formula is when 
$v_1 = y$ and the remaining values are set to $0$. Thus, in this case, 
$v_1 *_1 v_2 \ldots v_k *_k v_{k+1} = y$ and since $x=y$, then our combination set is empty. However, this scenario 
coincides with one of our conditions that was not supposed to be satisfied, and we reach a contradiction. 
We are therefore left with the case where $x = y = 0$. Since $y = max(\Pi)$, then clearly $\Pi = \{0\}$
and independently of the used arithmetic operators and values, every formula will always amount to $0$. 
Since $0 \not > 0$, our combination set is empty. However, this scenario is again covered by one of our conditions and we 
reach a contradiction. 

Let us now focus on $<$. It is easy to see that since $0 \in \Pi$, then as long as $x \neq 0$, we can observe that  
$v_1 *_1 v_2 \ldots v_k *_k v_{k+1} < x $ for $v_i = 0$. Hence, in such a case, the combination set would never be empty. 
Thus, consider the case where $x = 0$. 
If $\Pi = \{0\}$, then the smallest value obtainable by $v_1 *_1 v_2 \ldots v_k *_k v_{k+1}$ is $0$, and the combination 
set is therefore empty. However, this is already covered by one of our conditions, and we reach a contradiction. 
If $\Pi \neq \{0\}$, then as long as there is at least one $*_j$ s.t. $*_j = -$, we can obtain a formula producing 
a value smaller  than $0$ and the combination set is nonempty. If 
the sequence of operations does not contain any subtractions, then the smallest value obtainable by 
$v_1 *_1 v_2 \ldots v_k *_k v_{k+1}$ is again $0$, and the combination set is empty. However, this again is covered 
by one of our conditions and we reach a contradiction. 

Finally, we can consider $\neq$. Let $y = max(\Pi)$. If $y \neq 0$ and $x \neq y$, then a tuple s.t. the first 
position is $y$ and every other value is $0$ will be in the combination set irrespective of the sequence of operators. 
If $y \neq 0$ and $x = y$, then a tuple of $0$'s 
will be in the combination set irrespective of the sequence of operators.
If $y = 0$ then $\Pi = \{0\}$ and therefore $x = 0$ as well. In this case, independently of the sequence of operators, 
every possible formula will evaluate to $0$ and the combination set will be empty. However, this case is covered by one 
of our conditions, and we reach a contradiction. 

Therefore, we have shown that the combination set is empty iff one of our conditions is met. 
\end{proof}

\satdistrdnf*

\begin{proof}
Let us assume that the arguments in $\graph$ are ordered according to some ordering. 
Let $\argcomplete(\graph) = \{c_1, \ldots, c_j\}$ be the collection of all argument complete propositional 
formulae for $\graph$
and $\varphi^\prob = \pprob(c_1) = x_1 \land \pprob(c_2) = x_2 \land \ldots \land \pprob(c_j) = x_j$, 
where $x_i = \prob(c_i)$, the epistemic formula associated with $\prob$. 

By definition, $\prob$ is an assignment, where the elements of the powerset of arguments are mapped to 
numerical values s.t. these 
values add up to 1. 
Every set of arguments in the powerset can be described with a binary number, where if an i-th digit is 1, then 
the i-th argument is in the set, and it if its 0, then it is not in the set. 
We can observe that every argument complete formula has precisely one model which is trivially constructed -- if 
the i-th argument appears as a positive literal in the formula, then it is in the model, if it appears as a negative literal, 
then it is not in the model. 

We can therefore observe that a given complete formula encodes precisely one set of arguments from the powerset 
and the epistemic atom involving it demands that the value assigned to the formula is the same as the value 
assigned to the corresponding set by the probability distribution. It is therefore easy to see that 
$\varphi^\prob$ is satisfied only by $\prob$. Hence, $\{\prob\} = \sat(\varphi^\prob)$. 
\end{proof}

\satdistrdnfform*

\begin{proof}
Assume $\sat(\psi, \Pi) = \emptyset$. Then, $\varphi \formis \bot$ and $\sat(\bot, \Pi) = \emptyset$. 
Hence, $\sat(\psi, \Pi) = \sat(\varphi, \Pi)$. 

Assume $\sat(\psi, \Pi) \neq \emptyset$. Let $\varphi \formis  \varphi^{\prob_1} \lor \varphi^{\prob_2} \ldots \lor \varphi^{\prob_n}$. 
Then, 
$\sat(\varphi, \Pi) = \sat(\varphi^{\prob_1}, \Pi) \cup \sat(\varphi^{\prob_2}, \Pi) \cup \ldots \cup \sat(\varphi^{\prob_n}, \Pi)$. 
Since $\prob_i \in \sat(\psi, \Pi)$, then $\prob_i \in \dist(\graph, \Pi)$ as well. Thus, based on Proposition \ref{satdistrdnf}, 
$\sat(\varphi^{\prob_i}, \Pi) = \{\prob_i\}$. We can therefore show that 
$\sat(\varphi, \Pi) =\{\prob_1, \ldots, \prob_n\} = \sat(\psi, \Pi)$.  
\end{proof}

\temp*

\begin{proof} 
Assume $\Phi \VDash \psi$.
Therefore, $\sat(\Phi) \subseteq \sat(\psi)$.
Therefore, $\sat(\Phi)\cap \dist(\graph,\Pi) \subseteq \sat(\psi) \cap \dist(\graph,\Pi)$.
Therefore, $\sat(\Phi,\Pi) \subseteq \sat(\psi,\Pi)$.
Therefore, $\Phi \VDash_{\Pi} \psi$. 
\end{proof}

\tempmore*

\begin{proof}
Let $X, Y, W, Z$ be sets of elements s.t. $W \subseteq Z$. 
It is easy to show that if $X \cap Z \subseteq Y \cap Z$, then $X \cap Z \cap W \subseteq Y \cap Z \cap W$. 
Since, $W \subseteq Z$, then $X \cap Z \cap W = X \cap W$ and $Y \cap Z \cap W = Y \cap W$. 
Thus, $X \cap Z \subseteq Y \cap Z$ implies $X \cap W \subseteq Y \cap W$ when $W \subseteq Z$. 

We can show that if $\Pi_1 \subseteq \Pi_2$, then $\dist(\Pi_1) \subseteq \dist(\Pi_2)$. Hence, using the above 
analysis, it is easy to prove that if 
$\sat(\Phi) \cap \dist(\Pi_2) \subseteq \sat(\psi)\cap \dist(\Pi_2)$
then 
$\sat(\Phi) \cap \dist(\Pi_1) \subseteq \sat(\psi)\cap \dist(\Pi_1)$. Thus, 
$\Phi \VDash_{\Pi_2} \psi$ then $\Phi \VDash_{\Pi_1} \psi$. 
\end{proof}

\propformsat*

\begin{proof}
Let $f_1 \formis  \pprob(\alpha_1) *_1 \pprob(\alpha_2) *_2 \ldots  *_{m-1} \pprob(\alpha_m)$, 
and $f_2 \formis \pprob(\beta_1) \star_1 \pprob(\beta_2) \star_2 \ldots  \star_{l-1} \pprob(\beta_l)$, 
where $\alpha_i, \beta_i \in \terms(\graph)$ 
and $*_i, \star_i \in \{+, -\}$, be operational formulae. 
Let $\varphi_1 = f_1 \ineq x$ and $\varphi_2 = f_2 \ineq x$, where $\ineq \in \operators$ and 
$x \in [0,1]$.
\begin{itemize} 

\item Let $\pprob(\alpha_i)$ be the element that became weakened to $\pprob(\alpha'_i)$ (i.e. $\beta_i = \alpha'_i$). Since 
$\{\alpha_i\}\vdash\alpha'_i$, then for any probability distribution $\prob$ it holds that $\prob(\alpha_i) \leq \prob(\alpha'_i)$. 
Therefore, for any probability distribution $\prob$, 
$\prob(\alpha_1) *_1 \prob(\alpha_2) *_2 \ldots + \prob(\alpha_i) *_i \ldots *_{m-1} \prob(\alpha_m)\leq \prob(\alpha_1) *_1 \prob(\alpha_2) \ldots + \prob(\alpha'_i) *_i \ldots *_{m-1} \prob(\alpha_m)$. 
Consequently, 
if
$\prob(\alpha_1) *_1 \prob(\alpha_2) *_2 \ldots + \prob(\alpha_i) *_i\ldots *_{m-1} \prob(\alpha_m) 
\ineq x$, where $\ineq \in \{>,\geq\}$, 
then $\prob(\alpha_1) *_1 \prob(\alpha_2) \ldots + \prob(\alpha'_i) *_i \ldots *_{m-1} \prob(\alpha_m) \ineq x$ as well. 
Hence, for every $\prob' \in \sat(\varphi_1)$,  $\prob' \in \sat(\varphi_2)$ as well, and it holds that 
$\sat(\varphi_1) \subseteq \sat(\varphi_2)$. 
Furthermore, if  
$\prob(\alpha_1) *_1 \prob(\alpha_2) *_2 \ldots + \prob(\alpha'_i) *_i \ldots *_{m-1} \prob(\alpha_m)\ineq x$, where $\ineq \in \{<,\leq\}$, 
then $\prob(\alpha_1) *_1 \prob(\alpha_2) \ldots + \prob(\alpha_i) *_i \ldots *_{m-1} \prob(\alpha_m)\ineq x$ as well. 
Hence, for every $\prob' \in \sat(\varphi_2)$,  $\prob' \in \sat(\varphi_1)$ as well, and it holds that 
$\sat(\varphi_2) \subseteq \sat(\varphi_1)$.  

\item Let $\pprob(\alpha_i)$ be the element that became weakened to $\pprob(\alpha'_i)$ (i.e. $\beta_i = \alpha'_i$). Since 
$\{\alpha_i\}\vdash\alpha'_i$, then for any probability distribution $\prob$ it holds that $\prob(\alpha_i) \leq \prob(\alpha'_i)$. 
Therefore, for any probability distribution $\prob$, 
$\prob(\alpha_1) *_1 \prob(\alpha_2) *_2 \ldots - \prob(\alpha_i) *_i\ldots *_{m-1} \prob(\alpha_m) \geq \prob(\alpha_1) *_1 \prob(\alpha_2) \ldots - \prob(\alpha'_i) *_i \ldots *_{m-1} \prob(\alpha_m)$. 
Therefore, if
$\prob(\alpha_1) *_1 \prob(\alpha_2) *_2 \ldots - \prob(\alpha_i) *_i\ldots *_{m-1} \prob(\alpha_m)\ineq x$, where $\ineq \in \{<,\leq\}$, 
then $\prob(\alpha_1) *_1 \prob(\alpha_2) \ldots - \prob(\alpha'_i) *_i \ldots *_{m-1} \prob(\alpha_m)\ineq x$ as well. 
Hence, for every $\prob' \in \sat(\varphi_1)$,  $\prob' \in \sat(\varphi_2)$ as well, and it holds that 
$\sat(\varphi_1) \subseteq \sat(\varphi_2)$. 
Furthermore, if $\prob(\alpha_1) *_1 \prob(\alpha_2) \ldots - \prob(\alpha'_i) *_i \ldots *_{m-1} \prob(\alpha_m)\ineq x$, where $\ineq \in \{>, \geq\}$
then 
$\prob(\alpha_1) *_1 \prob(\alpha_2) *_2 \ldots - \prob(\alpha_i) *_i\ldots *_{m-1} \prob(\alpha_m)\ineq x$ as well. 
Hence, for every $\prob' \in \sat(\varphi_2)$,  $\prob' \in \sat(\varphi_1)$ as well, and it holds that 
$\sat(\varphi_2) \subseteq \sat(\varphi_1)$. 
\end{itemize} 
\end{proof}


To make certain proofs more readable, we also introduce the following derivable rules (a more extensive set can be found in \cite{HunterPT2018Arxiv}).

\begin{restatable}{proposition}{valuedderivables}  
\label{valuedderivables}
Let $\ineq \in \operators$ be the set of inequality relationships, 
let $\Pi$ be a reasonable restricted value set, 
and let $\Pi^x_{\ineq} = \{ y \in \Pi \mid y \ineq x \}$ be the subset obtained according to the value $x$ and relationship $\ineq$. 
Let $f_1 \formis  \pprob(\alpha_1) *_1 \pprob(\alpha_2) *_2 \ldots *_{k-1} \pprob(\alpha_k)$ 
and $f_2 \formis  \pprob(\beta_1) \star_1 \pprob(\beta_2) \star_2 \ldots \star_{l-1} \pprob(\beta_l)$, 
where $k,l \geq 1$, $\alpha_i,\beta_i \in \terms(\graph)$ and $\star_j,*_i \in \{+ ,-\}$ be  operational formulae.  
The following hold, where $\vdash$ is propositional consequence relation, 
$\Phi \subseteq \eformulae(\graph,\Pi)$, $\phi,\psi \in \eformulae(\graph,\Pi)$ and $x \in \Pi$.  
\begin{enumerate}

\item \label{derivable:1} $\Phi\Vdash_\Pi \pprob(\alpha) \geq 0 $

\item  \label{derivable:2} $\Phi\Vdash_\Pi \pprob(\alpha) \leq 1$

\item  \label{derivable:3} $\Phi\Vdash_\Pi \pprob(\top) = 1 $

\item  \label{derivable:4} $\Phi\Vdash_\Pi \pprob(\bot) = 0 $  
%
%
%
%
%
%
%
%
%
%

\item \label{derivable:15} $\Phi\Vdash_\Pi f_1 > x   \mbox{ iff } \Phi\Vdash_\Pi \neg (f_1 \leq x)$

\item \label{derivable:16} $ \Phi\Vdash_\Pi f_1 < x   \mbox{ iff } \Phi\Vdash_\Pi \neg (f_1 \geq x) $

\item \label{derivable:17} $ \Phi\Vdash_\Pi f_1 \leq x   \mbox{ iff } \Phi\Vdash_\Pi \neg (f_1 > x) $

\item \label{derivable:18} $ \Phi\Vdash_\Pi f_1 \geq x  \mbox{ iff } \Phi\Vdash_\Pi \neg (f_1 < x) $ 

\item \label{derivable:19} $ \Phi\Vdash_\Pi f_1 = x   \mbox{ iff } \Phi\Vdash_\Pi \neg (f_1 \neq x)  $

\item \label{derivable:20} $ \Phi\Vdash_\Pi f_1 \neq x  \mbox{ iff } \Phi\Vdash_\Pi \neg (f_1 = x) $ 

\item \label{derivable:21} $\Phi\Vdash_\Pi f_1 = x \mbox{ and } f_1 \succeq_{su}^+ f_2 \mbox{ implies } \Phi\Vdash_\Pi f_2 \geq x$ 

\item  \label{derivable:22} $\Phi\Vdash_\Pi f_1 = x \mbox{ and } f_1 \succeq_{su}^- f_2 \mbox{ implies } \Phi\Vdash_\Pi f_2 \leq x$
  
\item \label{derivable:23} $\Phi\Vdash_\Pi f_2 = x \mbox{ and } f_1 \succeq_{su}^+ f_2 \mbox{ implies } \Phi\Vdash_\Pi f_1 \leq x$ 

\item \label{derivable:24} $\Phi\Vdash_\Pi f_2 = x \mbox{ and } f_1 \succeq_{su}^- f_2 \mbox{ implies } \Phi\Vdash_\Pi f_1 \geq x$

\item \label{derivable:25} if $f_1 \succeq_{su}^+ f_2$ and $f_2 \succeq_{su}^+ f_1$, then $\Phi\Vdash_\Pi f_1 = x$ iff $\Phi\Vdash_\Pi f_2 = x$ 

\item \label{derivable:26} if $f_1 \succeq_{su}^- f_2$ and $f_2 \succeq_{su}^- f_1$, then $\Phi\Vdash_\Pi f_1 = x$ iff $\Phi\Vdash_\Pi f_2 = x$ 
 
%
%
%
%
%
%
%
  
\item  \label{derivable:35} $\Phi\Vdash_\Pi f_1 = x \land f_1 = y \mbox{ where } x \neq y \mbox { iff } \Phi\Vdash_\Pi \bot$
%
%
%
%
%
%
%
%
%

\item \label{derivable:45} $\Phi \Vdash_\Pi \pprob(\alpha \lor \beta) = x$ iff $\Phi \Vdash_\Pi \pprob(\alpha) + \pprob(\beta) - \pprob(\alpha \land \beta) = x$

\item \label{derivable:46} $\Phi \Vdash_\Pi \pprob(\alpha \land \beta) = x$ iff $\Phi \Vdash_\Pi \pprob(\alpha) + \pprob(\beta) - \pprob(\alpha \lor \beta) = x$

%
\end{enumerate}  
 
\end{restatable}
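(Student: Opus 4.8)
The plan is to derive each listed rule from the primitive rules of $\Vdash_\Pi$ — the basic rules $(B1)$–$(B4)$, the probabilistic rule $(PR1)$, the subject rules $(S1)$–$(S8)$, the enumeration rules $(E1)$–$(E5)$, and the propositional rules $(P1)$–$(P2)$ — proceeding in an order that respects the dependencies between the items, and without invoking soundness/completeness (since these derivable rules are meant to be available when proving those results). Items \ref{derivable:1}–\ref{derivable:4} go first: since $\Phi \vdash \top$ always holds, $(P2)$ gives $\Phi \Vdash_\Pi \top$, and the right-to-left directions of $(B1)$–$(B4)$ then yield $\Phi\Vdash_\Pi \pprob(\alpha)\geq 0$, $\Phi\Vdash_\Pi\pprob(\alpha)\leq 1$, $\Phi\Vdash_\Pi\pprob(\top)=1$ and $\Phi\Vdash_\Pi\pprob(\bot)=0$ directly.

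For the negation dualities \ref{derivable:15}–\ref{derivable:18} I would pair $(E1)$, which rewrites $f_1 \ineq x$ as the disjunction $\bigvee_{\Pi^{x,\arop(f_1)}_{\ineq}}$ of complete value-assignments (or $\bot$ when the combination set is empty), with the matching complementary rule among $(E2)$–$(E5)$. For instance, $(E5)$ gives $f_1 \leq x$ iff $\neg(\bigvee_{\Pi^{x,\arop(f_1)}_{>}})$ while $(E1)$ with $\ineq$ being $>$ gives $f_1 > x$ iff $\bigvee_{\Pi^{x,\arop(f_1)}_{>}}$; negating the latter equivalence under $(P1)$ shows the two right-hand sides coincide, yielding item \ref{derivable:17}, and \ref{derivable:15}, \ref{derivable:16}, \ref{derivable:18} are analogous, with the empty-combination-set cases absorbed by the explicit $\bot$/$\neg(\bot)$ clauses of $(E1)$, $(E3)$, $(E5)$ and Proposition \ref{restrnonempty}. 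The $=$/$\neq$ dualities \ref{derivable:19}–\ref{derivable:20} then follow by first establishing, via $(E1)$ and the set identities $\Pi^{x,\arop(f_1)}_{\neq}=\Pi^{x,\arop(f_1)}_{>}\cup\Pi^{x,\arop(f_1)}_{<}$ and $\Pi^{x,\arop(f_1)}_{=}=\Pi^{x,\arop(f_1)}_{\geq}\cap\Pi^{x,\arop(f_1)}_{\leq}$, the equivalences $f_1 \neq x \leftrightarrow (f_1 > x)\lor(f_1 < x)$ and $f_1 = x \leftrightarrow (f_1 \geq x)\land (f_1 \leq x)$ (using that distinct complete value-assignments are propositionally mutually exclusive), then combining with \ref{derivable:15}–\ref{derivable:18} and double negation under $(P1)$.

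For the subject items \ref{derivable:21}–\ref{derivable:24}, $(E1)$ together with the inclusions $\Pi^{x,\arop}_{=}\subseteq\Pi^{x,\arop}_{\geq}$ and $\Pi^{x,\arop}_{=}\subseteq\Pi^{x,\arop}_{\leq}$ makes the $=$-disjunction a sub-disjunction of the $\geq$- and $\leq$-ones, so $f=x$ yields both $f\geq x$ and $f\leq x$ by $(P1)$; feeding these into $(S2)$, $(S4)$, $(S6)$, $(S8)$ gives the four items. Items \ref{derivable:25}–\ref{derivable:26} combine two of them (e.g.\ \ref{derivable:21} and \ref{derivable:23} in the $\succeq_{su}^+$ case) to obtain both $f_2\geq x$ and $f_2\leq x$, which recombine to $f_2 = x$ through the identity $f_2 = x \leftrightarrow (f_2\geq x)\land(f_2\leq x)$ above. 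Item \ref{derivable:35} is handled by $(E1)$: for $x\neq y$ the index sets $\Pi^{x,\arop}_{=}$ and $\Pi^{y,\arop}_{=}$ are disjoint, so the conjunction of the two disjunctions is $\bot$; the converse is immediate from $(P1)$ since $\bot$ entails everything.

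The main obstacle will be items \ref{derivable:45}–\ref{derivable:46}, where a genuine linear-arithmetic combination is needed but the proof system has no primitive for adding equations. Here the plan is to take $(PR1)$, namely $\Phi\Vdash_\Pi \pprob(\alpha\lor\beta)-\pprob(\alpha)-\pprob(\beta)+\pprob(\alpha\land\beta)=0$, conjoin it with the hypothesis $\pprob(\alpha\lor\beta)=x$ via $(P1)$, and expand both through $(E1)$ into disjunctions of complete value-assignments $(w_1,w_2,w_3,w_4)$. Conjoining with $\pprob(\alpha\lor\beta)=x$ kills — using item \ref{derivable:35} on the single-term formula $\pprob(\alpha\lor\beta)$ — every disjunct with $w_1\neq x$, leaving exactly those with $w_1=x$ and hence $w_2+w_3-w_4=x$; weakening away the $\pprob(\alpha\lor\beta)=x$ conjunct by $(P1)$ leaves precisely the disjuncts that $(E1)$ assigns to $\pprob(\alpha)+\pprob(\beta)-\pprob(\alpha\land\beta)=x$, giving the forward direction, with the reverse symmetric and \ref{derivable:46} identical after swapping $\land$ and $\lor$. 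The delicate bookkeeping throughout is the correspondence between set operations on the combination sets $\Pi^{x,\arop}_{\ineq}$ and Boolean operations on the associated disjunctions, together with the correct dispatch of the empty-set cases identified in Proposition \ref{restrnonempty}.
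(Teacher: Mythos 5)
Your overall plan matches the paper's own proof almost item for item: items \ref{derivable:1}--\ref{derivable:4} from (P2) and the basic rules, the negation dualities \ref{derivable:15}--\ref{derivable:20} from pairing (E1) with (E2)--(E5), items \ref{derivable:21}--\ref{derivable:26} by feeding $=$-consequences into the subject rules, and items \ref{derivable:45}--\ref{derivable:46} by conjoining (PR1) with the hypothesis, expanding via (E1), and pruning disjuncts using item \ref{derivable:35}. The only real divergences are cosmetic: you obtain $f_1=x\Rightarrow f_1\geq x$ and $f_1\leq x$ from the inclusions $\Pi^{x,\arop(f_1)}_{=}\subseteq\Pi^{x,\arop(f_1)}_{\geq},\Pi^{x,\arop(f_1)}_{\leq}$ rather than from the auxiliary rule ``$f_1\geq x$ iff $f_1>x\lor f_1=x$'' that the paper imports from its extended rule set, and you prove item \ref{derivable:35} via disjointness of $\Pi^{x,\arop(f_1)}_{=}$ and $\Pi^{y,\arop(f_1)}_{=}$ rather than the paper's detour through $f_1\neq z_i$ and item \ref{derivable:20}.

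The one place your argument as written does not close is the recurring appeal to distinct complete value-assignments being ``propositionally mutually exclusive,'' equivalently the claim that disjointness of two combination sets makes the conjunction of the associated disjunctions $\bot$. Two atoms $\pprob(\alpha_i)=v_i$ and $\pprob(\alpha_i)=w_i$ with $v_i\neq w_i$ are syntactically distinct epistemic atoms, so no propositional rule refutes their conjunction; their joint inconsistency is precisely the single-term instance of item \ref{derivable:35} and must itself be derived before it can be used. One derivation: for $w_i>v_i$, weaken $\pprob(\alpha_i)=w_i$ propositionally to $\bigvee_{z>v_i}\pprob(\alpha_i)=z$, convert this to $\pprob(\alpha_i)>v_i$ by (E1), then to $\neg\bigl(\bigvee_{z\leq v_i}\pprob(\alpha_i)=z\bigr)$ by (E2), while $\pprob(\alpha_i)=v_i$ propositionally entails that very disjunction, so (P1) yields $\bot$. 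This lemma silently underlies the backward direction of $f_1=x\leftrightarrow(f_1\geq x\land f_1\leq x)$ in your treatment of items \ref{derivable:19}--\ref{derivable:20}, your proof of item \ref{derivable:35}, and through it items \ref{derivable:45}--\ref{derivable:46}; it should be established up front. Once it is, the rest of your outline goes through --- and, to be fair, the paper's own write-up glosses over exactly the same point.
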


\begin{proof} 
\begin{enumerate}
%
%
%
\item We prove that  $\Phi\Vdash_\Pi \pprob(\alpha) \geq 0$. 
We can observe that  
$\Phi \Vdash_\Pi \top$ by propositional rule P2. Thus, by combining the propositional rules and the basic rule B1, 
$\Phi \Vdash_\Pi \pprob(\alpha) \geq 0$. 

\item We can prove that $\Phi\Vdash_\Pi \pprob(\alpha) \leq 1$ similarly to previous point. 

\item We can prove that $\Phi\Vdash_\Pi \pprob(\top) = 1$ similarly to previous points. 

\item We can prove that  $\Phi\Vdash_\Pi \pprob(\bot) = 0$ similarly to previous points. 

\item We can prove that $\Phi\Vdash_\Pi  f_1  > x   $ iff $ \Phi\Vdash_\Pi \neg ( f_1  \leq x)$ 
using enumeration rules E1 and E2.  

\item We prove that $ \Phi\Vdash_\Pi  f_1  < x   $ iff $ \Phi\Vdash_\Pi \neg ( f_1  \geq x) $ 
using enumeration rules E1 and E4.

\item We prove that  $ \Phi\Vdash_\Pi  f_1  \leq x   $ iff $ \Phi\Vdash_\Pi \neg ( f_1  > x) $ 
 using enumeration rules E1 and E3. 

\item We prove that $ \Phi\Vdash_\Pi  f_1  \geq x  $ iff $ \Phi\Vdash_\Pi \neg ( f_1  < x) $
 using enumeration rules E1 and E5. 
 
\item We prove that $\Phi\Vdash_\Pi f_1 = x   $ iff $ \Phi\Vdash_\Pi \neg (f_1  \neq x)$. 
Using the propositional rules we can show that $\Phi\Vdash_\Pi f_1 = x$ iff 
$\Phi\Vdash_\Pi  \neg (\neg (f_1 = x))$. 
We can use enumeration and propositional rules to show that $\Phi\Vdash_\Pi  \neg (\neg (f_1 = x))$ iff 
$\Phi\Vdash_\Pi  \neg (\neg (f_1 \geq x \land f_1 \leq x))$. 
This, by the propositional rules and previous parts
of this proof, is equivalent to $\Phi\Vdash_\Pi  \neg (f_1 < x \lor f_1 > x))$, which through the enumeration rule 
is the same as $\Phi\Vdash_\Pi  \neg (f_1 \neq x)$.
 
\item We can prove that  $ \Phi\Vdash_\Pi f_1 \neq x  $ iff $ \Phi\Vdash_\Pi \neg (f_1 = x) $
using the previous point and the propositional rules. 

\item We can prove that 
$\Phi\Vdash_\Pi f_1 = x \mbox{ and } f_1 \succeq_{su}^+ f_2 \mbox{ implies } \Phi\Vdash_\Pi f_2 \geq x$ 
using the previously proved rule $\Phi\Vdash_\Pi f_1 \geq x   $ iff $ \Phi\Vdash_\Pi f_1 > x \lor f_1  = x $, subject rule S2 and the propositional rules.  

\item We can prove that $\Phi\Vdash_\Pi f_1 = x \mbox{ and } f_1 \succeq_{su}^- f_2 \mbox{ implies } \Phi\Vdash_\Pi f_2 \leq x$
using the previously proved rule $\Phi\Vdash_\Pi f_1 \leq x   $ iff $ \Phi\Vdash_\Pi f_1 < x \lor f_1  = x $, subject rule S4 and the propositional rules.   

\item We can prove that 
$\Phi\Vdash_\Pi f_2 = x \mbox{ and } f_1 \succeq_{su}^+ f_2 \mbox{ implies } \Phi\Vdash_\Pi f_1 \leq x$ 
using the previously proved rule $\Phi\Vdash_\Pi f_2 \leq x   $ iff $ \Phi\Vdash_\Pi f_2 < x \lor f_2  = x $, subject rule S6 and the propositional rules.  

\item We can prove that 
$\Phi\Vdash_\Pi f_2 = x \mbox{ and } f_1 \succeq_{su}^- f_2 \mbox{ implies } \Phi\Vdash_\Pi f_1 \geq x$ 
using the previously proved rule $\Phi\Vdash_\Pi f_2 \geq x   $ iff $ \Phi\Vdash_\Pi f_2 > x \lor f_2  = x $, subject rule S8 and the propositional rules.

\item We now prove that if $f_1 \succeq_{su}^+ f_2$ and $f_2 \succeq_{su}^+ f_1$, then $\Phi\Vdash_\Pi f_1 = x$ iff $\Phi\Vdash_\Pi f_2 = x$. Using the previous points of this proof and the propositional rule P1, we can show that
if $f_1 \succeq_{su}^+ f_2$ and $f_2 \succeq_{su}^+ f_1$, then
$\Phi\Vdash_\Pi f_2 \geq x \land f_2 \leq x$. We can use propositional and enumeration rules to show that this is equivalent 
$\Phi\Vdash_\Pi f_2 =x$. The right to left direction can be proved in a similar fashion. 

\item We can show that if $f_1 \succeq_{su}^- f_2$ and $f_2 \succeq_{su}^- f_1$, then $\Phi\Vdash_\Pi f_1 = x$ iff $\Phi\Vdash_\Pi f_2 = x$, similarly as the previous point. 

\item  
We now prove that 
$\Phi\VDash_\Pi f_1= x \land f_1 = y$ where $ x \neq y$ iff $\Phi\VDash_\Pi \bot$. 
Assume that $\Pi = \{z_1, \ldots, z_m\}$ and that $x = z_i$ and $y = z_j$ where $i \neq j$. 
Through repeated use of enumeration rule E1 and propositional rules, 
we can show that if $\Phi\VDash_\Pi f_1 = x \land f_1 = y$ then 
$\Phi\VDash_\Pi f_1 = z_i \land (f_1 = z_1 \lor \ldots \lor f_1 = z_{i-1} \lor f_1 = z_{i+1} \lor \ldots \lor f_1 = z_m)$, 
which is equivalent to $\Phi\VDash_\Pi f_1 = z_i \land f_1 \neq z_i$. This, through derivable rule \ref{derivable:20}, 
is equivalent to $\Phi\VDash_\Pi f_1 = z_i \land \neg f_1 = z_i$ and, through the propositional rule, to $\Phi\VDash_\Pi \bot$. 
The right to left direction can be easily proved from the propositional rules. Therefore, 
$\Phi\VDash_\Pi f_1= x \land f_1 = y$ where $ x \neq y$ iff $\Phi\VDash_\Pi \bot$.

\item We show that $\Phi \Vdash_\Pi \pprob(\alpha \lor \beta) = x$ iff $\Phi \Vdash_\Pi \pprob(\alpha) + \pprob(\beta) - \pprob(\alpha \land \beta) = x$. Let us consider the left to right direction and assume that 
 $\Phi \Vdash_\Pi \pprob(\alpha \lor \beta) = x$. Based on the probabilistic rule PR1, 
$\Phi\Vdash_\Pi \pprob(\alpha \lor \beta)  - \pprob(\alpha) -  \pprob(\beta) +  \pprob(\alpha \land \beta) = 0$, 
which using enumeration rule E1 can be written as 
$\Phi\Vdash_\Pi \bigvee_{(v_1,v_2,v_3,v_4) \in \Pi^{0, (-,-,+)}_{=}} (\pprob(\alpha \lor \beta) = v_1 \land \pprob(\alpha) = v_2 \land  \pprob(\beta)= v_3 \land  \pprob(\alpha \land \beta) =v_4)$
(we observe that based on Proposition \ref{restrnonempty}, $\Pi^{0, (-,-,+)}_{=} \neq \emptyset$). 
Through the use of propositional rule P1, 
 $\Phi \Vdash_\Pi \pprob(\alpha \lor \beta) = x \land \bigvee_{(v_1,v_2,v_3,v_4) \in \Pi^{0, (-,-,+)}_{=}} (\pprob(\alpha \lor \beta) = v_1 \land \pprob(\alpha) = v_2 \land  \pprob(\beta)= v_3 \land  \pprob(\alpha \land \beta) =v_4)$. 
Through the use of propositional rule (in particular, distributive and identity laws) and derivable rule \ref{derivable:35}, 
we can observe that the above formula is equivalent to  
 $\Phi \Vdash_\Pi \bigvee_{(x,v_2,v_3,v_4) \in \Pi^{0, (-,-,+)}_{=}} (\pprob(\alpha \lor \beta) =x \land \pprob(\alpha) = v_2 \land  \pprob(\beta)= v_3 \land  \pprob(\alpha \land \beta) =v_4)$. This can be further shown to be equivalent to 
$\Phi \Vdash_\Pi \pprob(\alpha \lor \beta) =x \land \bigvee_{(v_1,v_2,v_3) \in \Pi^{x, (+,-)}_{=}} (\pprob(\alpha) = v_1 \land  \pprob(\beta)= v_2 \land  \pprob(\alpha \land \beta) =v_3)$ 
and therefore to $\Phi \Vdash_\Pi \pprob(\alpha \lor \beta) =x \land 
(\pprob(\alpha) +  \pprob(\beta) -  \pprob(\alpha \land \beta) =x)$ through the use of enumeration rule E1. 
Hence, by P1, $\Phi \Vdash_\Pi  \pprob(\alpha) +  \pprob(\beta) -  \pprob(\alpha \land \beta) =x$. 
The right to left direction can be proved in a similar fashion. 

\item We can show that $\Phi \Vdash_\Pi \pprob(\alpha \land \beta) = x$ iff $\Phi \Vdash_\Pi \pprob(\alpha) + \pprob(\beta) - \pprob(\alpha \lor \beta) = x$ in the same way as the previous point of this proof.  

\end{enumerate}
\end{proof}

\restrictedvalsound*

\begin{proof}
We can show that each proof rule is sound. We first consider the \textbf{basic rules}:
\begin{itemize}
\item Consider proof rule 1. We need to show that 
$\Phi\VDash_\Pi \pprob(\alpha) \geq 0$ iff $\Phi\VDash_\Pi \top$. 
We can observe that $\sat(\top, \Pi) = \dist(\Pi)$. 
Furthermore, by definition,  $\sat(\pprob(\alpha) \geq 0, \Pi) = 
\{\prob' \in \dist(\graph) \mid \prob'(\alpha) \geq 0\} \cap \dist(\Pi)$. 
It is easy to see that $\{\prob' \in \dist(\graph) \mid \prob'(\alpha) \geq 0\} = \dist(\graph)$ for any $\alpha$. 
Since $\dist(\Pi) \subseteq \dist(\graph)$, $\sat(\pprob(\alpha) \geq 0, \Pi) =\dist(\Pi) = \sat(\top, \Pi)$. 
Thus, we can show that 
$\Phi\VDash_\Pi \pprob(\alpha) \geq 0$ iff $\Phi\VDash_\Pi \top$. 

\item Proof rules 2 to 4 can be proved in a similar fashion.  
\end{itemize} 
 
We now consider the \textbf{enumeration rules}:
\begin{itemize}
\item Consider proof rule 1. We need to show that 
$\Phi\VDash_\Pi f_1 \ineq x$ iff $\Phi\VDash_\Pi \bigvee_{(v_1,\ldots,v_k) \in \Pi^{x, \arop(f_1)}_{\ineq}} (\pprob(\alpha_1) = v_1 \land \pprob(\alpha_2) = v_2 \land \ldots \land \pprob(\alpha_k)= v_k)$
 if $\Pi^{x, \arop(f_1)}_{\ineq} \neq \emptyset$ and $\Phi \VDash_\Pi \bot$ otherwise.
We first consider $\ineq$ being $>$ and start with the case where $\Pi^{x, \arop(f_1)}_{>} = \emptyset$.
Based on Proposition \ref{restrnonempty} and the fact that $\{0,1\} \subseteq \Pi$ (note that $\Pi$ is a reasonable restricted
value set), it therefore holds that $x=1$ and either $\arop(f_1) = ()$ or for no $*_i$, $*_i = +$. 
If $\arop(f_1) = ()$, then $f_1: \pprob(\alpha_1) >1$ and it is easy to see that 
$\sat(\pprob(\alpha_1) >1, \Pi) = \emptyset = \sat(\bot, \Pi)$. If for every $*_i$, $*_i = -$, 
then based on the fact that probabilities belong to the unit interval, 
$\pprob(\alpha_1) - \pprob(\alpha_2) - \ldots - \pprob(\alpha_k) \leq \pprob(\alpha_1)$. 
Hence, if $\sat(\pprob(\alpha_1) >1, \Pi) = \emptyset$, then $\sat(f_1 >1, \Pi) = \emptyset$ as well. 
Thus, if $\Pi^{x, \arop(f_1)}_{>} = \emptyset$, $\Phi\VDash_\Pi f_1 \ineq x$ iff $\Phi \VDash_\Pi \bot$. 

Now consider the case where 
$\Pi^{x, \arop(f_1)}_{\ineq} \neq \emptyset$. 
For every $\prob' \in \sat(f_1 > x,\Pi)$, 
$\prob'(\alpha_1) *_1 \prob'(\alpha_2) *_2 \ldots *_{k-1} \prob'(\alpha_k) >x$. 
Consequently, $(\prob'(\alpha_1), \ldots, \prob'(\alpha_k)) \in \Pi^{x, \arop(f_1)}_{>}$.
We can therefore show that 
$\sat(f_1 >x,\Pi) \subseteq \sat(\bigvee_{(v_1,\ldots,v_k) \in \Pi^{x, \arop(f_1)}_{>}} (\pprob(\alpha_1) = v_1 \land \pprob(\alpha_2) = v_2 \land \ldots \land \pprob(\alpha_k)= v_k), \Pi)$. 
Let now $\prob' \in \sat(\bigvee_{(v_1,\ldots,v_k) \in \Pi^{x, \arop(f_1)}_{>}} (\pprob(\alpha_1) = v_1 \land \pprob(\alpha_2) = v_2 \land \ldots \land \pprob(\alpha_k)= v_k), \Pi)$. Based on the properties of $\sat$, it means that 
there is $(v_1,\ldots,v_k) \in \Pi^{x, \arop(f_1)}_{>}$ s.t. $\prob' \in \sat((\pprob(\alpha_1) = v_1 \land \pprob(\alpha_2) = v_2 \land \ldots \land \pprob(\alpha_k)= v_k), \Pi)$. Since $v_1 *_1 v_2 *_2 \ldots *_{k-1} v_k >x$, then 
$\prob'(\alpha_1) *_1 \prob'(\alpha_2)*_2 \ldots *_{k-1} \prob'(\alpha_k) > x$. Hence, $\prob' \in \sat(f_1>x,\Pi)$, and we 
can show that
$\sat(\bigvee_{(v_1,\ldots,v_k) \in \Pi^{x, \arop(f_1)}_{>}} (\pprob(\alpha_1) = v_1 \land \pprob(\alpha_2) = v_2 \land \ldots \land \pprob(\alpha_k)= v_k), \Pi) \subseteq \sat(f_1 >x,\Pi)$. 
 Given the previous result, $\sat(f_1 >x,\Pi) = \sat(\bigvee_{(v_1,\ldots,v_k) \in \Pi^{x, \arop(f_1)}_{>}} (\pprob(\alpha_1) = v_1 \land \pprob(\alpha_2) = v_2 \land \ldots \land \pprob(\alpha_k)= v_k), \Pi)$ and therefore
$\Phi\VDash_\Pi f_1 > x$ iff $\Phi\VDash_\Pi \bigvee_{(v_1,\ldots,v_k) \in \Pi^{x, \arop(f_1)}_{>}} (\pprob(\alpha_1) = v_1 \land \pprob(\alpha_2) = v_2 \land \ldots \land \pprob(\alpha_k)= v_k)$. 
The results for other operators can be obtained in a similar fashion.

\item Consider proof rule 2. We need to show that
$\Phi\VDash_\Pi f_1 > x$ iff $\Phi\VDash_\Pi \neg(\bigvee_{(v_1,\ldots,v_k) \in \Pi^{x, \arop(f_1)}_{\leq}} (\pprob(\alpha_1) = v_1 \land \pprob(\alpha_2) = v_2 \land \ldots \land \pprob(\alpha_k)= v_k))$. 
For every $\prob' \in \sat(f_1 > x,\Pi)$, it holds that 
$\prob'(\alpha_1) *_1 \prob'(\alpha_2) *_2 \ldots *_{k-1} \prob'(\alpha_k) >x$. 
Consequently, we can observe that $(\prob'(\alpha_1), \ldots, \prob'(\alpha_k)) \in \Pi^{x, \arop(f_1)}_{>}$ 
and $(\prob'(\alpha_1), \ldots, \prob'(\alpha_k)) \notin \Pi^{x, \arop(f_1)}_{\leq}$.
Hence, it holds that
\begin{multline*}
\sat(f_1 > x,\Pi) \subseteq \sat(\top,\Pi)\setminus\sat(\bigvee_{(v_1,\ldots,v_k) \in \Pi^{x, \arop(f_1)}_{\leq}} (\pprob(\alpha_1) = v_1 \land \pprob(\alpha_2) = v_2 \land \ldots \land \pprob(\alpha_k)= v_k),\Pi)\\
= \sat(\neg(\bigvee_{(v_1,\ldots,v_k) \in \Pi^{x, \arop(f_1)}_{\leq}} (\pprob(\alpha_1) = v_1 \land \pprob(\alpha_2) = v_2 \land \ldots \land \pprob(\alpha_k)= v_k)),\Pi)
\end{multline*}
Let now $\prob' \in \sat(\neg(\bigvee_{(v_1,\ldots,v_k) \in \Pi^{x, \arop(f_1)}_{\leq}} (\pprob(\alpha_1) = v_1 \land \pprob(\alpha_2) = v_2 \land \ldots \land \pprob(\alpha_k)= v_k)), \Pi)$.
This means 
that $\prob' \in \sat(\top,\Pi)\setminus\sat(\bigvee_{(v_1,\ldots,v_k) \in \Pi^{x, \arop(f_1)}_{\leq}} (\pprob(\alpha_1) = v_1 \land \pprob(\alpha_2) = v_2 \land \ldots \land \pprob(\alpha_k)= v_k),\Pi)$. Hence, for every $\prob'$, 
$\prob'(\alpha_1) *_1 \prob'(\alpha_2)*_2 \ldots *_{k-1} \prob'(\alpha_k) > x$ and therefore $\prob' \in \sat(f_1 > x, \Pi)$. 
 Given the previous result, this means that 
$\sat(f_1 > x,\Pi)= \sat(\neg(\bigvee_{(v_1,\ldots,v_k) \in \Pi^{x, \arop(f_1)}_{\leq}} (\pprob(\alpha_1) = v_1 \land \pprob(\alpha_2) = v_2 \land \ldots \land \pprob(\alpha_k)= v_k)), \Pi)$. Hence, 
$\Phi\VDash_\Pi f_1 > x$ iff $\Phi\VDash_\Pi \neg(\bigvee_{(v_1,\ldots,v_k) \in \Pi^{x, \arop(f_1)}_{\leq}} (\pprob(\alpha_1) = v_1 \land \pprob(\alpha_2) = v_2 \land \ldots \land \pprob(\alpha_k)= v_k))$. 

\item Soundness for the proof rules 3 to 5 is obtained in the same way as for rule 2. 
For rule 3, we observe that if $\Pi^{x, \arop(f_1)}_{<} = \emptyset$, then 
based on the properties of $\Pi$, $x=0$ and either $\arop(f_1) = ()$ or for no $*_i$, $*_i = -$. 
If $\arop(f_1) = ()$, then $f_1 \formis \pprob(\alpha_1) \geq 0$ is a tautology based on the basic rules 
(and is therefore equivalent to $\neg \bot$). If for every $*_i$, $*_i = +$, 
then we can observe that $\pprob(\alpha_1) \leq \pprob(\alpha_1) + \ldots + \pprob(\alpha_k)$, 
hence $\sat(\pprob(\alpha_1) + \ldots + \pprob(\alpha_k) \geq 0, \Pi) = \sat(\top, \Pi) = \sat(\neg \bot)$.  
Similar observations can be made for rule 5.
\end{itemize}
 
The soundness of the \textbf{subject rules} can be easily proved by using Proposition \ref{prop:formsat}.
 
We now consider the \textbf{probabilistic rule}. 
We can observe that for any probability distribution $\prob$ and terms $\alpha, \beta \in \terms(\graph)$, 
$\prob(\alpha \lor \beta) = \prob(\alpha) + \prob(\beta) - \prob(\alpha \land \beta)$. 
This can be easily checked by analyzing the definition of the probability of a term. 
Thus, it holds that
$\sat(\pprob(\alpha \lor \beta)  - \pprob(\alpha) -  \pprob(\beta) +  \pprob(\alpha \land \beta) = 0, \Pi) = \dist(\graph,\Pi)$. 
Since for any set of probabilistic formulae $\Phi$, $\sat(\Phi, \Pi) \subseteq \dist(\graph,\Pi)$, 
it holds that $\Phi\VDash_\Pi \pprob(\alpha \lor \beta)  - \pprob(\alpha) -  \pprob(\beta) +  \pprob(\alpha \land \beta) = 0$.

 We now consider the \textbf{propositional rules}. 
\begin{itemize}
\item Assume $\Phi\VDash_\Pi \phi_1$ and .... and  $\Phi\VDash_\Pi\phi_n$ and $\{\phi_1,\ldots,\phi_n\}\vdash\psi$.
So for each $i \in \{1,\ldots,n\}$, $\sat(\Phi, \Pi) \subseteq \sat(\phi_i, \Pi)$.
Furthermore, based on the definition of $\sat$, 
if $\{\phi_1,\ldots,\phi_n\}\vdash\psi$, 
then it holds that $\sat(\{\phi_1,\ldots,\phi_n\}) \subseteq \sat(\psi)$. Hence, 
$\sat(\{\phi_1,\ldots,\phi_n\},\Pi) \subseteq \sat(\psi, \Pi)$ as well. 
Therefore, $\sat(\Phi,\Pi) \subseteq \sat(\psi,\Pi)$.
Hence, $\Phi\VDash_\Pi \psi$.

\item Assume $\Phi \vdash \varphi$. Consequently, $\sat(\Phi)\subseteq \sat(\varphi)$. 
Hence, $\sat(\Phi, \Pi) \subseteq \sat(\varphi, \Pi)$ as well. Therefore, $\Phi \VDash_\Pi \varphi$. 
 \end{itemize} 
We have therefore shown that every rule in our system is sound. Hence, the system is sound as well. 
\end{proof}

In order to prove Proposition \ref{valdistributiondnf}, we also consider the following intermediate results: 
\begin{restatable}{proposition}{subvaldistributiondnf}
\label{subvaldistributiondnf}
Let $\ineq \in \operators$ be the set of inequality relationships, 
let $\Pi$ be a reasonable restricted value set, 
and let $\Pi^x_{\ineq} = \{ y \in \Pi \mid y \ineq x \}$ be the subset obtained according to the value $x$ and relationship $\ineq$. 
Let $\argcomplete(\graph) = \{c_1, \ldots, c_j\}$ be the collection of all argument complete propositional formulae for $\graph$ 
and $T^{\Pi}_{v,k} =  \Pi^{v,(+,\ldots,+)}_{=}$ s.t. the length of $(+,\ldots,+)$ is $k-1$  
be the collection of k--tuples of values from $\Pi$ that sum up to $v \in \Pi$. 
The following hold, where $\vdash$ is propositional consequence relation, 
$\Phi \subseteq \eformulae(\graph,\Pi)$, $\phi,\psi \in \eformulae(\graph,\Pi)$ and $x \in \Pi$:

\begin{enumerate}
\item for consistent formulae $\alpha_1, \ldots \alpha_m \in \terms(\graph)$, if for all $1 \leq i,j\leq m$ s.t. $i \neq j$ it holds that 
$\alpha_i \land \alpha_j \vdash \bot$, then 
$\Phi \Vdash_\Pi  \pprob(\alpha_1 \lor \ldots \lor \alpha_m) = x$ iff $\Phi \Vdash_\Pi  \bigvee_{(\tau_1, \ldots, \tau_m) \in T^\Pi_{x,m}} (\pprob(\alpha_1) = \tau_1 \land \ldots \land \pprob(\alpha_m) = \tau_m)$ 

\item $\Phi \Vdash_\Pi  \bigvee_{(\tau_1, \ldots, \tau_j) \in T^\Pi_{1,j}} (\pprob(c_1) =\tau_1 \land \ldots \land \pprob(c_j) = \tau_j)$ 
\end{enumerate}
\end{restatable}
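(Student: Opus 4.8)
The plan is to reduce Part 1 to a single "sum rule" for mutually exclusive terms and to let the enumeration rule handle the conversion between an operational formula and its disjunction of value assignments. Observe first that, by the definition $T^\Pi_{x,m} = \Pi^{x,(+,\ldots,+)}_{=}$ and by enumeration rule E1 (whose combination set here is nonempty, since for the relation $=$ with $k>0$ none of the emptiness conditions of Proposition~\ref{restrnonempty} apply), the right-hand disjunction in Part 1 is interderivable with the single atom $\pprob(\alpha_1) + \ldots + \pprob(\alpha_m) = x$. Hence Part 1 is equivalent to showing $\Phi \Vdash_\Pi \pprob(\alpha_1 \lor \ldots \lor \alpha_m) = x$ iff $\Phi \Vdash_\Pi \pprob(\alpha_1) + \ldots + \pprob(\alpha_m) = x$, and everything rests on the sum rule $S_m$ below.

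The core step is to establish, for pairwise mutually exclusive $\alpha_1,\ldots,\alpha_m$, the derivability of the single atom $S_m : \pprob(\alpha_1 \lor \ldots \lor \alpha_m) - \pprob(\alpha_1) - \ldots - \pprob(\alpha_m) = 0$, by induction on $m$. The base case $\pprob(\alpha_1) - \pprob(\alpha_1) = 0$ is routine via E1 together with the fact that $\bigvee_{v \in \Pi} \pprob(\alpha_1) = v$ is derivable from $\pprob(\alpha_1) \geq 0$ (basic rule B1). For the step, write $\beta = \alpha_1 \lor \ldots \lor \alpha_{m-1}$, so that $\beta \land \alpha_m \vdash \bot$. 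Probabilistic rule PR1 yields $\pprob(\beta \lor \alpha_m) - \pprob(\beta) - \pprob(\alpha_m) + \pprob(\beta \land \alpha_m) = 0$. Since $\beta \land \alpha_m$ is logically equivalent to $\bot$, we have $\pprob(\beta \land \alpha_m) \succeq_{su}^+ \pprob(\bot)$ in both directions, so by derivable rule~\ref{derivable:25} and basic rule B4 we obtain $\Phi \Vdash_\Pi \pprob(\beta \land \alpha_m) = 0$. Combining this, PR1, and the inductive hypothesis $S_{m-1}$ gives $S_m$.

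The remaining glue in both the induction and in finishing Part 1 is purely arithmetic combination, which I would carry out uniformly: E1-expand each operational-formula equality into a disjunction of conjunctions of value-assignment atoms, and then apply propositional rule P1, using derivable rule~\ref{derivable:35} (that $\pprob(f) = x \land \pprob(f) = y \vdash \bot$ for $x \neq y$) to discard the value combinations that are inconsistent with the assignments already fixed. Concretely, from $S_m$ and $\pprob(\alpha_1 \lor \ldots \lor \alpha_m) = x$ one selects the expanded disjuncts in which the value of $\pprob(\alpha_1 \lor \ldots \lor \alpha_m)$ equals $x$, forcing $\tau_1 + \ldots + \tau_m = x$, which is exactly the E1-expansion of $\pprob(\alpha_1) + \ldots + \pprob(\alpha_m) = x$; the converse direction is symmetric. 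I expect this bookkeeping to be the main obstacle: it is conceptually trivial but notationally heavy, and care is needed to keep track of which combination sets are nonempty so that E1 produces genuine disjunctions rather than $\bot$.

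Finally, Part 2 follows by applying Part 1 to the argument complete terms $c_1,\ldots,c_j$ with $x = 1$ (note $1 \in \Pi$ by Lemma~\ref{lemma:reasonablerestricted}, as $\Pi$ is reasonable). These terms are consistent and pairwise mutually exclusive (distinct complete terms disagree on some literal, so $c_i \land c_j \vdash \bot$), and their disjunction $c_1 \lor \ldots \lor c_j$ is logically equivalent to $\top$. Hence, again via the $\succeq_{su}^+$-equivalence of derivable rule~\ref{derivable:25} and basic rule B3, $\Phi \Vdash_\Pi \pprob(c_1 \lor \ldots \lor c_j) = 1$. The forward direction of Part 1 then yields $\Phi \Vdash_\Pi \bigvee_{(\tau_1,\ldots,\tau_j) \in T^\Pi_{1,j}} (\pprob(c_1) = \tau_1 \land \ldots \land \pprob(c_j) = \tau_j)$, as required.
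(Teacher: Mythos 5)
Your proof is correct and follows essentially the same route as the paper's: an induction on $m$ driven by the probabilistic rule PR1, with pairwise mutual exclusivity forcing $\pprob(\beta \land \alpha_m) = 0$, E1-expansion plus propositional reasoning (and the rule that $f = x \land f = y$ is contradictory for $x \neq y$) handling the arithmetic bookkeeping, and Part 2 obtained by specializing Part 1 to the argument complete terms via $\pprob(\top) = 1$. The only difference is organizational: you factor the additivity out as the single-atom identity $S_m$ and convert to the disjunction over $T^\Pi_{x,m}$ once at the end, whereas the paper carries the disjunction-of-value-assignments form through the induction using its derivable rule for $\pprob(\alpha \lor \beta) = x$.
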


\begin{proof} 

\begin{enumerate} 

\item We prove this property by induction. We show the cases for $m=1$ and, for clarity, $m=2$, and then prove
that if the property holds for $m=k$, then it holds for $m=k+1$ as well. 

\begin{itemize}
\item Let us start with $m=1$. Since $T^\Pi_{v,1} = \Pi^{v,()}_=$, the rule states that $\Phi \Vdash_\Pi \pprob(\alpha_1) = v$ iff 
$\Phi \Vdash_\Pi \pprob(\alpha_1)=v$, which is clearly true.

\item Let $m=2$. We start with the left to right direction. By derivable rule \ref{derivable:45}, it holds that $\Phi \Vdash_\Pi \pprob(\alpha_1 \lor \alpha_2) = x$ iff $\Phi \Vdash_\Pi \pprob(\alpha_1 \land \alpha_2) - \pprob(\alpha_1) - \pprob(\alpha_2) = x$. 
By enumeration rule E1, this is equivalent to 
$\Phi\Vdash_\Pi \bigvee_{(v_1,v_2,v_3) \in \Pi^{x, (+,-)}_{=}} 
(\pprob(\alpha_1) = v_1 \land \pprob(\alpha_2) = v_2 \land \pprob(\alpha_1 \land \alpha_2) = v_3 )$ 
(we note that based on Proposition \ref{restrnonempty} and the properties of $\Pi$, $\Pi^{x, (+,-)}_{=} \neq \emptyset$).  
Since $\alpha_1 \land \alpha_2 \vdash \bot$ by assumption, then based on derivable rule \ref{derivable:4} and P1
we can observe that for any $(v_1,v_2,v_3) \in \Pi^{x,(+,-)}_{=}$ s.t. $v_3\neq 0$, $(\pprob(\alpha_1) = v_1 \land  \pprob(\alpha_2) = v_2 \land \pprob(\alpha_1 \land \alpha_2) = v_3)$ is equivalent to $\bot$. 
Thus, using the propositional rule P1 once more, this is equivalent 
to 
$\Phi\Vdash_\Pi  \bigvee_{(v_1,v_2,0) \in \Pi^{x,(+,-)}_{=}} ( \pprob(\alpha_1) = v_1 \land  \pprob(\alpha_2) = v_2 \land \pprob(\alpha_1 \land \alpha_2) =0)$. 
Since $\pprob(\alpha_1 \land \alpha_2) = 0$ is a tautology, then again by P1, it holds that 
$\Phi \Vdash_\Pi  \bigvee_{(v_1,v_2,0) \in \Pi^{x,(+,-)}_{=}}  (\pprob(\alpha_1) = v_1 \land  \pprob(\alpha_2) =v_2)$, 
which is exactly $\Phi \Vdash_\Pi  \bigvee_{(\tau_1,\tau_2) \in T^\Pi_{x,2}}  (\pprob(\alpha_1) = \tau_1 \land  \pprob(\alpha_2) = \tau_2)$. The right to left direction can be shown in a similar fashion.

\item  
Assume that for $m=k$ our property is true, i.e. if our assumptions hold, then
$\Phi \Vdash_\Pi \pprob(\alpha_1 \lor \ldots \lor \alpha_k) = x$ iff $\Phi \Vdash_\Pi \bigvee_{(\tau_1, \ldots, \tau_k) \in T^\Pi_{x,k}} (\pprob(\alpha_1) = \tau_1 \land \ldots \land \pprob(\alpha_k) = \tau_k)$.
We need to show that for $m=k+1$, if our assumptions hold, then 
$\Phi \Vdash_\Pi \pprob(\alpha_1 \lor \ldots \lor \alpha_{k+1}) = x$ iff $\Phi \Vdash_\Pi \bigvee_{(\tau_1, \ldots, \tau_{k+1}) \in T^\Pi_{x,{k+1}}} (\pprob(\alpha_1) = \tau_1 \land \ldots \land \pprob(\alpha_{k+1}) = \tau_{k+1})$. 
We focus on the left to right direction first. 
By derivable rule \ref{derivable:45}, it holds that $\Phi \Vdash_\Pi \pprob(\alpha_1 \lor \ldots \lor \alpha_{k+1}) = x$ iff 
$\Phi \Vdash_\Pi \pprob(\alpha_1 \lor \ldots \lor \alpha_{k}) + \pprob(\alpha_{k+1}) - \pprob((\alpha_1 \lor \ldots \lor \alpha_{k})  \land \alpha_{k+1}) = x$. 
By enumeration rule E1, this is equivalent to 
$\Phi\Vdash_\Pi \bigvee_{(v_1,v_2,v_3) \in \Pi^{x, (+,-)}_{=}} (\pprob(\alpha_1 \lor \ldots \lor \alpha_{k}) = v_1 
\land \pprob(\alpha_{k+1}) = v_2 \land \pprob((\alpha_1 \lor \ldots \lor \alpha_{k})  \land \alpha_{k+1}) =v_3)$ 
(we note that based on Proposition \ref{restrnonempty} and the properties of $\Pi$, $\Pi^{x, (+,-)}_{=} \neq \emptyset$).

Given that for $i \in \{1, \ldots, k\}$, $\alpha_{k+1} \land \alpha_i \vdash \bot$, 
then based on derivable rule \ref{derivable:4} and propositional rule P1 
we can observe that for any $(v_1,v_2,v_3) \in \Pi^{v,(+,-)}_{=}$ s.t. $v_3\neq 0$, $(\pprob(\alpha_1 \lor \ldots \lor \alpha_k) = v_1 \land  \pprob(\alpha_{k+1}) = v_2 \land  \pprob((\alpha_1\lor \ldots \lor \alpha_k) \land \alpha_{k+1}) = v_3)$ 
is equivalent to $\bot$. Thus, using the propositional rule P1 once more, our formula 
is equivalent to 
$\Phi\Vdash_\Pi  \bigvee_{(v_1,v_2,0) \in \Pi^{v,(+,-)}_{=}}   (\pprob(\alpha_1 \lor \ldots \lor \alpha_k) = v_1 \land  \pprob(\alpha_{k+1}) = v_2 \land  \pprob((\alpha_1\lor \ldots \lor \alpha_k) \land \alpha_{k+1}) = 0)$.
Since $(\alpha_1\lor \ldots \lor \alpha_k) \land \alpha_{k+1})$ is equivalent to $\bot$, then 
$\pprob((\alpha_1\lor \ldots \lor \alpha_k) \land \alpha_{k+1}) = 0$ is a tautology. Thus, again by P1, our formula is equivalent to 
$\Phi \Vdash_\Pi  \bigvee_{(v_1,v_2,0) \in \Pi^{v,(+,-)}_{=}} (\pprob(\alpha_1 \lor \ldots \lor \alpha_k) = v_1 \land  \pprob(\alpha_{k+1}) = v_2)$, 
which is the same as 
$\Phi \Vdash_\Pi  \bigvee_{(\tau_1,\tau_2)  \in T^\Pi_{x,2}}  (\pprob(\alpha_1 \lor \ldots \lor \alpha_k) = \tau_1 \land  \pprob(\alpha_{k+1}) = \tau_2)$.  
Since the proposition holds for $m=k$ by assumption, then our formula is equivalent to: 
$$\Phi \Vdash_\Pi  \bigvee_{(\tau_1,\tau_2)  \in T^\Pi_{x,2}} ((\bigvee_{(\tau'_1,\ldots, \tau'_k) \in T^\Pi_{\tau_1,k}}(\pprob(\alpha_1) = \tau'_1 \land \ldots 
\land \pprob(\alpha_k) = \tau'_k)) \land  \pprob(\alpha_{k+1}) = \tau_2)$$
 
Now, as $\Pi$ is finite, then so is $T^\Pi_{x,2}$.
Let us assume that $T^\Pi_{x,2}$ is of the form 
$\{ (\tau_{1,1}, \tau_{2,1})$, $(\tau_{1,2}, \tau_{2,2})$, $\ldots$, $(\tau_{1,p}, \tau_{2,p})\}$. 
Our formula can therefore be rewritten as
\begin{multline*}
\Phi \Vdash_\Pi  ((\bigvee_{(\tau'_{1,1},\ldots, \tau'_{k,1}) \in T^\Pi_{\tau_{1,1},k}}(\pprob(\alpha_1) = \tau'_{1,1} \land \ldots 
\land \pprob(\alpha_k) = \tau'_{k,1})) \land  \pprob(\alpha_{k+1}) = \tau_{2,1})\\
\lor
\ldots
\lor \\
((\bigvee_{(\tau'_{1,p},\ldots, \tau'_{k,p}) \in T^\Pi_{\tau_{1,p},k}}(\pprob(\alpha_1) = \tau'_{1,p} \land \ldots 
\land \pprob(\alpha_k) = \tau'_{k,p})) \land  \pprob(\alpha_{k+1}) = \tau_{2,p})
\end{multline*}

Now, as $\Pi$ is finite, then so is every $T^\Pi_{\tau_{1,i},k}$ for $i \in \{1,\ldots,p\}$. 
Assume that $|T^\Pi_{\tau_{1,i},k}| = l_i$ and therefore that 
every $T^\Pi_{\tau_{1,i},k}$ is of the form
$\{ ({\tau'}_{1,i}^1, \ldots, {\tau'}_{k,i}^1), \ldots, ({\tau'}_{1,i}^{l_i}, \ldots, {\tau'}_{k,i}^{l_i})\}$. 
Thus, our formula can again be rewritten as:
\begin{multline*}
\Phi \Vdash_\Pi  
(
((\pprob(\alpha_1) = {\tau'}_{1,1}^1 \land \ldots 
\land \pprob(\alpha_k) = {\tau'}_{k,1}^1)
\lor 
\ldots 
\lor
(\pprob(\alpha_1) = {\tau'}_{1,1}^{l_1} \land \ldots 
\land \pprob(\alpha_k) = {\tau'}_{k,1}^{l_1}))\\
 \land  \pprob(\alpha_{k+1}) = \tau_{2,1})\\
\lor
\ldots
\lor \\
(
((\pprob(\alpha_1) = {\tau'}_{1,1}^1 \land \ldots 
\land \pprob(\alpha_k) = {\tau'}_{k,1}^1)
\lor 
\ldots 
\lor
(\pprob(\alpha_1) = {\tau'}_{1,p}^{l_p} \land \ldots 
\land \pprob(\alpha_k) = {\tau'}_{k,p}^{l_p}))\\
 \land  \pprob(\alpha_{k+1}) = \tau_{2,p})
\end{multline*}
which by propositional rule P1 is equivalent to 
\begin{multline*}
\Phi \Vdash_\Pi  
(\pprob(\alpha_1) = {\tau'}_{1,1}^1 \land \ldots 
\land \pprob(\alpha_k) = {\tau'}_{k,1}^1 \land  \pprob(\alpha_{k+1}) = \tau_{2,1})\\
\lor 
\ldots 
\lor\\
(\pprob(\alpha_1) = {\tau'}_{1,1}^{l_1} \land \ldots 
\land \pprob(\alpha_k) = {\tau'}_{k,1}^{l_1}) \land  \pprob(\alpha_{k+1}) = \tau_{2,1}) \\
\lor
\ldots
\lor \\
(\pprob(\alpha_1) = {\tau'}_{1,1}^1 \land \ldots 
\land \pprob(\alpha_k) = {\tau'}_{k,1}^1\land  \pprob(\alpha_{k+1}) = \tau_{2,p} )\\
\lor 
\ldots 
\lor\\
(\pprob(\alpha_1) = {\tau'}_{1,p}^{l_p} \land \ldots 
\land \pprob(\alpha_k) = {\tau'}_{k,p}^{l_p}\land  \pprob(\alpha_{k+1}) = \tau_{2,p} )
\end{multline*}

We can observe that every conjunctive clause we obtain is in fact a combinations of 
values from $T^\Pi_{x,k+1}$. 
Furthermore, if 
$(v_1, \ldots, v_{k+1}) \in T^\Pi_{x,k+1}$ then $(v_1+\ldots+v_k, v_{k+1}) \in T^\Pi_{x,2}$ 
 -- since $v \in [0,1]$ and we are dealing with addition only, then $v_1+\ldots+v_k \in \Pi$ as well. 
Thus, for every combination from $T^\Pi_{x,k+1}$ 
we can also find a conjunctive clause associated with it, and our formula is in fact equivalent to
$\Phi \Vdash_\Pi \bigvee_{(\tau''_1, \ldots, \tau''_{k+1}) \in T^\Pi_{x,k+1}} (\pprob(\alpha_1) = \tau''_1 \land \ldots \land \pprob(\alpha_{k+1}) = \tau''_{k+1})$. 
Since all of our transformation were using syntactical equivalences, the right to left direction of our proof can be shown in a similar fashion.


\end{itemize}


 \item Based on derivable rule \ref{derivable:3}, $\Phi \Vdash_\Pi \pprob(\top)=1$. 
We can observe that $c_1 \lor \ldots  c_j \vdash \top$ and $\top \vdash c_1 \lor \ldots  c_j$. 
Furthermore, every $c_i$ is consistent and for every $i,k$ s.t. $i \neq k$, $c_i \land c_k \vdash \bot$.  
Thus, $\Phi \Vdash_\Pi \pprob(c_1 \lor \ldots  c_j)=1$, and by the previous point of this proof, 
$\Phi \Vdash_\Pi \bigvee_{(\tau_1, \ldots, \tau_j) \in T^\Pi_{1,j}} (\pprob(c_1) =\tau_1 \land$  $\ldots$ $\land \pprob(c_j) = \tau_j)$. 
\end{enumerate}

\end{proof}

\valdistributiondnf*

\begin{proof}

We now show that $\Phi \Vdash_\Pi \psi$ iff $\Phi \Vdash_\Pi \varphi$, where $\varphi$ is the distribution disjunctive 
normal form of $\psi$. 
Let $\argcomplete(\graph) = \{c_1, \ldots, c_j\}$ be the collection of all argument complete terms for $\graph$. 
First of all, we can observe that for any term $\alpha \in \terms(\graph)$, we can find an equivalent formula 
that is either $\bot$ if $\alpha$ is inconsistent, or which is equivalent to $c'_1 \lor \ldots \lor c'_k$, where $C' = \{c'_1,\ldots, 
c'_k\} \subseteq \argcomplete(\graph)$ 
is a nonempty collection of argument complete terms. This form of $\alpha$ is called the full disjunctive normal form, 
and can be easily found by e.g. constructing (propositional) 
DNF using the 
truth table method. Let $\alpha'$ be this form of $\alpha$. 
Through the use of derivable rule \ref{derivable:25}, it is easy to show that $\Phi \Vdash_\Pi \pprob(\alpha) = x$ iff 
 $\Phi \Vdash_\Pi \pprob(\alpha') = x$. It therefore suffices to focus on $\alpha'$ in the remainder of this proof. 

Let us consider $\psi: \pprob(\alpha') = x$. 
%
Since $\{\alpha\} \vdash \alpha'$ and $\{\alpha'\} \vdash \alpha$, 
then by derivable rules \ref{derivable:25} and \ref{derivable:26} 
it holds that $\Phi \Vdash_\Pi  \pprob(\alpha) = x$ iff $\Phi \Vdash_\Pi  \pprob(\alpha') = x$. 
We can now observe that $\alpha'$ is either a propositional contradiction ($\bot$), tautology ($c_1 \lor \ldots \lor c_j$), 
or neither. Under various values $x$, the probabilistic atom $\pprob(\alpha') = x$ can become or remain a contradiction, tautology, 
or neither. We can therefore distinguish the following cases: 
\begin{enumerate}
\item $\alpha' \formis c_1 \lor \ldots \lor c_j$ and $x = 1$
\item $\alpha' \formis c_1 \lor \ldots \lor c_j$ and $x \neq 1$ 
\item $\alpha' \formis \bot$ and $x = 0$ 
\item $\alpha' \formis \bot$ and $x \neq 0$
\item remaining cases, i.e. $\alpha \neq \bot$ and $\alpha \neq c_1 \lor \ldots \lor c_j$
\end{enumerate}

Let us start with the most complicated, last point (i.e. point $e$). 
Since all argument complete formulae are consistent and for each two different formulae $c'_i, c'_l$ it holds that 
$c'_i \land  c'_l\vdash \bot$, then by using (proof of) Proposition \ref{subvaldistributiondnf}, we can show the following:

\[\Phi \Vdash_\Pi  \pprob(\alpha') = x \mbox{ iff }\Phi \Vdash_\Pi  \bigvee_{(\tau'_1, \ldots, \tau'_k) \in T^\Pi_{x,k}} (\pprob(c'_1) =\tau'_1 \land \ldots \land \pprob(c'_k) = \tau'_k)\]
 
Let $D =\{d'_1,\ldots, d'_f\} = \argcomplete(\graph)\setminus C'$ be the collection of argument complete formulae
not appearing in $\alpha'$. 
If $D=\emptyset$, then all of the argument complete terms appear in our formula, which means that 
$\alpha' \formis c_1 \lor \ldots \lor c_j$. Consequently, depending on the value of $x$, please see the analysis 
of points $a$ and $b$ which is explained below.  
If $D \neq \emptyset$, then we can observe that every conjunctive clause in our formula is a partial description of (possibly more 
than one) probability distribution. We can therefore use the second point of this proposition, which in fact enumerates all possible 
restricted probability distribution on $\graph$, to \enquote{complete} our partial descriptions. 
Using the propositional rules and (proof of) Proposition \ref{subvaldistributiondnf}, we can show that 
\begin{align*}
&\Phi \Vdash_\Pi   \bigvee_{(\tau'_1, \ldots, \tau'_k) \in T^\Pi_{x,k}} (\pprob(c'_1) =\tau'_1 \land \ldots \land \pprob(c'_k) = \tau'_k)
\mbox{ iff } \\
&  \Phi \Vdash_\Pi 
(\bigvee_{(\tau'_1, \ldots, \tau'_k) \in T^\Pi_{x,k}} (\pprob(c'_1) =\tau'_1 \land \ldots \land \pprob(c'_k) = \tau'_k)) 
\land
(\bigvee_{(\tau_1, \ldots, \tau_j) \in T^\Pi_{1,j}} (\pprob(c_1) =\tau_1 \land \ldots \land \pprob(c_j) = \tau_j))
\end{align*}

Let $V = \{(\tau_1, \ldots, \tau_j) \in T^\Pi_{1,j} \mid f((\tau_1, \ldots, \tau_j)) \in T^\Pi_{x,k}\}$, where
$f((\tau_1, \ldots, \tau_j))$ returns a sub--tuple $(\tau'_1, \ldots, \tau'_k)$ of $(\tau_1, \ldots, \tau_j)$ 
s.t. if $c'_g \formeq c_h$ then $\tau'_g = \tau_h$, be a collection of tuples in $T^\Pi_{1,j}$ that preserve 
the assignments from $T^\Pi_{x,k}$.
Using the propositional rules, the second formula can be equivalently rewritten as  
\[\Phi \Vdash_\Pi  \bigvee_{(\tau_1, \ldots, \tau_j) \in V} (\pprob(c_1) =\tau_1 \land \ldots \land \pprob(c_k) = \tau_j)\] 
if $V \neq \emptyset$ 
and as 
$\Phi \Vdash_\Pi \bot$ otherwise. 
We have therefore obtained an epistemic formula in (propositional) DNF s.t. it is either $\bot$, or 
every conjunctive clause is built out of atoms of the 
form $\pprob(\beta) = x$ where $\beta$ is an argument complete term and such that 
for every argument complete term $\beta'$ there is a probabilistic atom $\pprob(\beta') = x'$ in the clause. 
Let us refer to this formula as $\psi'$. 
Based on the presented procedure and the fact that our system is sound (see Proposition \ref{prop:restrictedvalsound}), 
we can show that $\sat(\psi, \Pi) = \sat(\psi', \Pi)$. Hence, 
if $\psi' \formis \bot$, then $\sat(\psi, \Pi) = \sat(\psi'\, \Pi) = \emptyset$, and it is easy to show that $\psi'$ is indeed 
the DDNF of $\psi$. Therefore, let us focus on the case where $\psi'$ is not $\bot$, and enumerate 
the conjunctive clauses  as $\psi'_1, \ldots, \psi'_p$. 
Let $\varphi \formis  \varphi_1 \lor \ldots \lor \varphi_r$ 
be a DDNF of $\psi$. We will now show that $\psi'$ and $\varphi$ are equivalent under commutativity.  
By Proposition \ref{satdistrdnfform}, it holds that $\sat(\psi', \Pi) = \sat(\varphi, \Pi)$.
Hence, by the properties of $\sat$, $\sat(\psi'_1,\Pi) \cup  \ldots \cup \sat(\psi'_p,\Pi) = \sat(\varphi_1, \Pi) \cup \ldots \cup 
\sat(\varphi_r, \Pi)$. We can observe that every $\psi'_i \in \{\psi'_1, \ldots, \psi'_p\}$ and 
$\varphi_s \in \{\varphi_1 \lor \ldots \lor \varphi_r\}$ is in fact an epistemic formula 
associated with a single unique probability distribution. Consequently, for every $\psi'_i$ we can find a formula $\varphi_s$ 
s.t. $\sat(\psi'_i,\Pi) = \sat(\varphi_s, \Pi)$ and vice versa. Given the form of these formulae, it therefore has to follow 
that $\psi'_i \vdash \varphi_s$ and $\varphi_s \vdash \psi'_i$. Hence, $\psi'$ is a DDNF 
of $\psi$, and we have shown that for $\psi: \pprob(\alpha) = x$, $\Phi \Vdash_\Pi \psi$ iff $\Phi \Vdash_\Pi \psi'$. 

Let us now consider point $a$, i.e. where $\alpha' \formis c_1 \lor \ldots \lor c_j$ and $x = 1$. 
We can observe that by the construction of $\alpha'$, this means that $\alpha'$ (and $\alpha$) are equivalent to $\top$. 
By repeating the procedures for the last point, we can show that for $\alpha'$, 
$\Phi \Vdash_\Pi  \pprob(\alpha') = 1$ iff $\Phi \Vdash_\Pi  \bigvee_{(\tau_1, \ldots, \tau_j) \in T^\Pi_{1,j}} 
(\pprob(c_1) =\tau_1 \land \ldots \land \pprob(c_j) = \tau_j)$. Given that $\pprob(\top) = 1$ is a tautology 
(see derivable rules), 
we can use derivable rule \ref{derivable:23} to show that $\pprob(\alpha') = 1$ is a tautology as well. Hence, it is easy to check 
that the obtained formula is indeed a DDNF of $\psi$. Please note that the same formula 
can also be obtained through the use of the propositional rules, basic rule B3 and the second point of this proposition. 

Let us now consider point $b$, i.e. where $\alpha' \formis c_1 \lor \ldots \lor c_j$ and $x \neq 1$. 
Based on derivable rule \ref{derivable:25}, $\Phi \Vdash_\Pi  \pprob(\alpha') = x$ iff $\Phi \Vdash_\Pi  \pprob(\top) = x$. 
Using derivable rules \ref{derivable:3} and 
\ref{derivable:35}, we can show this to be a contradiction. Hence, $\bot$ is indeed the DDNF of $\psi$. 

Let us now consider point $c$, i.e. $\alpha' \formis \bot$ and $x = 0$. By combining basic rules 
B3 and B4 we can observe that $\Phi \Vdash_\Pi  \pprob(\bot) = 0$ iff $\Phi \Vdash_\Pi  \pprob(\top) = 1$. 
We can therefore show that 
$\Phi \Vdash_\Pi  \pprob(\bot) = 0$ iff $\Phi \Vdash_\Pi  \bigvee_{(\tau_1, \ldots, \tau_j) \in T^\Pi_{1,j}} (\pprob(c_1) =\tau_1 \land \ldots \land \pprob(c_j) = \tau_j)$. Given that $\sat(\pprob(\bot) = 0, \Pi) = \dist(\graph, \Pi) = 
\sat(\bigvee_{(\tau_1, \ldots, \tau_j) \in T^\Pi_{1,j}} (\pprob(c_1) =\tau_1 \land \ldots \land \pprob(c_j) = \tau_j))$, 
it is easy to verify that the obtained formula is indeed the DDNF of $\psi$. 

Finally, consider point $d$, i.e. where $\alpha' \formis \bot$ and $x \neq 0$. Based on the derivable rules \ref{derivable:4} and 
\ref{derivable:35}, 
we can observe that $\Phi \Vdash_\Pi \pprob(\bot) = 0$
and that for $x \neq 0$, 
$\Phi \Vdash_\Pi \pprob(\bot) = 0 \land \pprob(\bot) = x$ iff $\Phi \Vdash_\Pi \bot$. 
We therefore observe that $\Phi \Vdash_\Pi \pprob(\alpha) = x$ iff $\Phi \Vdash_\Pi \bot$ 
in this case. Given that $\sat(\pprob(\bot) = x) = \emptyset$ for $x\neq 0$, it is easy to see 
that $\bot$ is indeed the DDNF for $\psi$. 

We can therefore conclude that for all $\psi$ of the form $\pprob(\alpha) = x$, 
$\Phi \Vdash_\Pi \psi$ iff $\Phi \Vdash_\Pi \varphi$, where $\varphi$ is the DDNF of $\psi$.

We now also consider $\psi \formis  \top$ and $\psi \formis  \bot$. 
If $\psi \formis  \top$, then we can, for example, use basic rule B3 
along with the previous parts of the proof to show that 
$\bigvee_{(\tau_1, \ldots, \tau_j) \in T^\Pi_{1,j}} (\pprob(c_1) =\tau_1 \land \ldots \land \pprob(c_j) = \tau_j)$ 
is the DDNF of $\psi$. If $\psi \formis  \bot$, then we can observe that it is already in DDNF.  

Let us therefore consider more complex epistemic formulae $\psi$.
We first bring $\psi$ to its (propositional) negation normal form. We then replace every negated 
epistemic atom with a positive one using the derivable rules 
\ref{derivable:15} to \ref{derivable:20}. 
Based on enumeration rule E1, every epistemic atom using $\ineq \in \{>, <, \geq, \leq, \neq\}$ can 
be equivalently expressed a disjunction of atoms using only equality or $\bot$. Hence, $\psi$ can be transformed into $\psi_=$
that uses only positive equality atoms or $\bot$ and s.t. $\Phi \Vdash_\Pi \psi$ iff $\Phi \Vdash_\Pi \psi_=$. 
Every term in an epistemic atom can be transformed into $\bot$ or a disjunction of certain argument complete formulae. 
Furthermore, every epistemic atom containing a term equivalent to $\bot$ can, depending on the value $x$ contained 
in the atom, be replaced by an epistemic atom $\bot$ or $\top$ through the use of the propositional and basic rules. 
In a similar fashion, every epistemic atom containing a term equivalent to $\top$ (i.e. one using all possible argument 
complete formulae) can, depending on the value of $x$, be replaced by an epistemic atom $\bot$ or $\top$. 
The resulting epistemic formula can be transformed into a minimal (propositional) conjunctive normal form 
and if required, various additional derivable rules and the propositional identity and domination laws\footnote{Recall that the identity and 
domination logical equivalence laws 
state that $q \land \top \equiv q$, $q \lor \bot\equiv q$, $q \lor \top \equiv \top$ and $q \land \bot\equiv \bot$.} 
can be used to further simplify it. 
We therefore obtain a formula $\psi'$ s.t. $\psi' \formis \top$, or $\psi' \formis \bot$, or 
$\psi' \formis \psi'_1 \land \ldots \land \psi'_a$ 
where $a \leq 1$ and every $\psi'_i \formis \pprob(\alpha_{i_1}) = x_{i_1} \lor \ldots \lor \pprob(\alpha_{i_n}) = x_{i,n}$ s.t. 
every $\alpha_{i_k}$ is a disjunction of certain argument complete formulae and is not equivalent to neither $\bot$ nor $\top$.   
Given the used syntactical rules, 
we can observe that $\Phi \Vdash_\Pi \psi$ iff $\Phi \Vdash_\Pi \psi'$. If $\psi' = \top$ or $\psi'=\bot$, then we refer the reader 
to the previous part of this proof concerning how $\psi'$ can be transformed to appropriate DDNF. 
We therefore focus on the case where $\psi' = \psi'_1 \land \ldots \land \psi'_a$. 

For every $\psi'_i = \pprob(\alpha_{i_1}) = x_{i_1} \lor \ldots \lor \pprob(\alpha_{i_n}) = x_{i_n}$, 
every $\pprob(\alpha_{i_k}) = x_{i_k}$ can be brought to DDNF formula $\varphi_{i_k}$ 
using the previous parts of this proof. 
Through the use of associativity, every $\psi'_i$ can be equivalently 
written down as a disjunction of epistemic formulae  $\varphi_{i_1} \lor \ldots \lor \varphi_{i_{n_i}}$ 
where each formula $\varphi_{i_k}$ is associated with a probability distribution and 
s.t. $\Phi \Vdash_\Pi \psi'_i$ iff $\Phi \Vdash_\Pi \varphi_{i_1} \lor \ldots \lor \varphi_{i_{n_i}}$. 
Without loss of generality, we can assume that every two $\varphi_{i_k}$ and $\varphi_{i_l}$ formulae
are different - otherwise the idempotent law can be used to remove duplicate formulae. 
We now therefore have that $\Phi \Vdash_\Pi \psi$ iff $\Phi \Vdash_\Pi (\varphi_{1_1} \lor \ldots \lor \varphi_{1_{n_1}}) \land 
\ldots \land (\varphi_{a_1} \lor \ldots \lor \varphi_{a_{n_a}})$. 
Using the propositional rules and the derivable rule \ref{derivable:35}, we can show that for any two formulae $\varphi_{i_k}, \varphi_{m_l} 
\in \{\varphi_{1_1},\ldots, \varphi_{1_{n_1}},\ldots ,\varphi_{a_1} ,\ldots,\varphi_{a_{n_a}}\}$ 
associated with probability distributions,  
if $\varphi_{i_k} \nformeq \varphi_{m_l}$ then $\Phi \Vdash_\Pi  \varphi_{i_k} \land \varphi_{m_l}$ iff
$\Phi \Vdash_\Pi \bot$.   
Using this and the distribution laws, we can bring 
$(\varphi_{1_1} \lor \ldots \lor \varphi_{1_{n_1}}) \land 
\ldots \land (\varphi_{a_1} \lor \ldots \lor \varphi_{a_{n_a}})$ into minimal (propositional) disjunctive normal form $\gamma$
s.t. it is either $\bot$ or every conjunctive clause is an epistemic formula associated with a distribution. 
Due to the nature of the syntactic rules we have used, we can observe that 
$\Phi \Vdash_\Pi \psi$ iff $\Phi \Vdash_\Pi \gamma$. Since our system is sound, it holds that 
$\Phi \VDash_\Pi \psi$ iff $\Phi \VDash_\Pi \gamma$. By using the same argument as in the case of 
$\psi: \pprob(\alpha) = x$ in the previous parts of this proof, we can show that $\gamma$ is indeed the 
DDNF of $\psi$.

We can therefore conclude that for all $\psi \in \eformulae(\graph, \Pi)$, 
$\Phi \Vdash_\Pi \psi$ iff $\Phi \Vdash_\Pi \varphi$, where $\varphi$ is the DDNF of $\psi$. 

\end{proof}

\restrictedvalsoundcomp*

\begin{proof} 
We have shown that our system is sound in Proposition \ref{prop:restrictedvalsound}. 
We now need to show that the system is also complete, 
i.e. that if $\Phi\VDash_\Pi \psi$ then $\Phi \Vdash_\Pi \psi$.

Let $\Phi = \{\phi_1, \ldots, \phi_m\}$ and $\phi \formis \phi_1 \land \ldots \land \phi_m$. 
By using the propositional rules we can easily show that $\Phi \Vdash_\Pi \phi$ and for every formula 
$\gamma \in  \Phi$, $\{\phi\} \Vdash_\Pi \gamma$.
Furthermore, it clearly follows from the definition of $\sat$ that $\sat(\Phi, \Pi) = \sat(\phi, \Pi)$. 
Consequently, for the purpose of this proof, it suffices to show that 
if $\{\phi\} \VDash_\Pi \psi$ then $\{\phi\} \Vdash_\Pi \psi$. 
If $\Phi = \emptyset$, then it is easy to see that we can set $\phi$ to $\top$. 

Let $\sat(\phi, \Pi) = \{\prob_1, \ldots, \prob_k\}$ and $\sat(\psi, \Pi) = \{\prob'_1, \ldots, \prob'_l\}$. 
Let $\varphi^\phi$ and $\varphi^\psi$ be the DDNFs of $\phi$ and $\psi$. 
Based on Proposition \ref{satdistrdnfform}, $\sat(\psi, \Pi) = \sat(\varphi^\psi, \Pi)$ and 
$\sat(\phi,\Pi) = \sat(\varphi^\phi,\Pi)$. 
Therefore, $\{\phi\} \VDash_\Pi \psi$ iff $\{\varphi^\phi\} \VDash_\Pi \varphi^\psi$. 
By definition, $\{\varphi^\phi\} \VDash_\Pi \varphi^\psi$ iff $\sat(\varphi^\phi, \Pi) \subseteq \sat(\varphi^\psi, \Pi)$. 
If $k=0$, then the DDNF of $\phi$ is $\bot$ and therefore through the propositional rule P2, we can show that 
$\{\phi\} \Vdash_\Pi \psi$ for any $\psi$.
If $k \neq 0$ and $l = 0$, then it cannot be the case that $\sat(\varphi^\phi, \Pi) \subseteq \sat(\varphi^\psi, \Pi)$. 
Therefore, it cannot be the case that $\{\phi\} \VDash_\Pi \psi$ either and we reach a contradiction. 
Hence, let $k \neq 0$ and $l \neq 0$. 
Let $\varphi^\phi$ and $\varphi^\psi$ be of the forms $\varphi^\phi \formis \varphi^{\prob_1} \lor \varphi^{\prob_2} \ldots \lor \varphi^{\prob_k}$ 
and $\varphi^\psi \formis \varphi^{\prob'_1} \lor \varphi^{\prob'_2} \ldots \lor \varphi^{\prob'_l}$. 
Since $\sat(\varphi^\phi, \Pi) \subseteq \sat(\varphi^\psi, \Pi)$, then 
for every $\prob_i$ there exists $\prob'_j$ s.t. $\prob_i = \prob'_j$ and 
therefore for every $\varphi^{\prob_i}$ there exists an equivalent $\varphi^{\prob'_j}$.
Consequently, by using the propositional proof rule P1
it is easy to show that if $\{\varphi^\phi\} \VDash_\Pi \varphi^\psi$ then $\{\varphi^\phi\} \Vdash_\Pi \varphi^\psi$. 
We can now use Proposition \ref{valdistributiondnf} to show that $\{\phi\} \Vdash_\Pi \psi$. 
We can therefore conclude that if $\Phi\VDash_\Pi \psi$ then $\Phi\Vdash_\Pi \psi$ and our system is complete.    
\end{proof}

\negval*

\begin{proof} 
For the purpose of this proof, we introduce a shorthand $\varphi \formeq \varphi'$ stating 
that the two formulae have the same syntactical features under commutativity and associativity 
(so, for example, $(\xa \lor \xb) \land \xc \formeq (\xa \lor \xb) \land \xc$, $(\xa \lor \xb) \land \xc  \formeq 
(\xb \lor \xa) \land \xc$, but $(\xa \lor \xb) \land \xc \nformeq (\xa \land \xc) \lor (\xb \land \xc)$).  

We want to show that $\Phi\Vdash_\Pi \psi$ iff $\Phi \cup \{\neg \psi\} \Vdash_{\Pi}\bot$. 
Let $\Phi = \{\phi_1, \ldots, \phi_m\}$ and $\phi \formis \phi_1 \land \ldots \land \phi_m$. 
By using the propositional rules we can easily show that $\Phi \Vdash_\Pi \phi$ and for every formula 
$\gamma \in  \Phi$, $\{\phi\} \Vdash_\Pi \gamma$.
Furthermore, it clearly follows from the definition of $\sat$ that $\sat(\Phi, \Pi) = \sat(\phi, \Pi)$. 
If $\Phi = \emptyset$, then it is easy to see that we can set $\phi$ to $\top$. 
Let now $\varphi^\phi$ and $\varphi^\psi$ be DDNFs of $\phi$ and $\psi$. Based on Propositions \ref{prop:restrictedvalsound} 
and \ref{valdistributiondnf} and the definition of DDNF, we can observe that for the purpose of this proof, it suffices to show that 
$\{\varphi^\phi\} \Vdash_\Pi \varphi^\psi$ iff $\{\varphi^\phi \land \neg \varphi^\psi\} \Vdash_\Pi \bot$. 
Without loss of generality, let us assume that we have an ordering over arguments, and
if a given DDNF is not $\bot$, then the atoms in every subformula associated with a probability distribution 
are ordered according to the set of argument complete terms $\argcomplete(\graph) = \{c_1, \ldots, c_p\}$ of $\graph$
and that the terms themselves are also ordered. We can use this assumption due to 
derivable rule \ref{derivable:25} (i.e. two probabilistic atoms with same constants and argument complete terms equivalent under commutativity 
are themselves equivalent) and propositional rules (commutativity law). 

Let us focus on the left to right direction first. Assume $\varphi^\phi \formis \bot$. Then it is easy to see that 
$\{\varphi^\phi \land \neg \varphi^\psi\} \Vdash_\Pi \bot$ independently of the nature of $\neg \varphi^\psi$. 
We can also observe that if $\varphi^\psi \formis \bot$, then as $\{\varphi^\phi\} \Vdash_\Pi \varphi^\psi$, 
$\varphi^\phi \formis \bot$ as well. Thus, based on previous remarks, the property still holds. 

Let us therefore consider the situation in which none of the DDNFs are $\bot$ and let
$\varphi^\phi$ and $\varphi^\psi$ be of the forms
$\varphi^\phi \formis \varphi^\phi_1 \lor \ldots \lor \varphi^\phi_m$ and 
$\varphi^\psi \formis \varphi^\psi_1 \lor \ldots \lor \varphi^\psi_n$ for $m,n \geq 1$. 
We can observe that due to the nature of DDNF, every 
$\varphi^\phi_i$ and $\varphi^\psi_j$ is an epistemic formula associated with a (single) probability distribution. 
Therefore, if $\{\varphi^\phi\} \Vdash_\Pi \varphi^\psi$, 
then based on the ordering assumption we have made, it has to be the case that
$\{ \varphi^\phi_1, \ldots, \varphi^\phi_m\} \subseteq \{\varphi^\psi_1, \ldots ,\varphi^\psi_n\}$. 
Thus, for $\varphi^\phi_i$ there exists a $\varphi^\psi_j$ s.t. $\varphi^\phi_i \formeq \varphi^\psi_j$. 
Using the propositional rule, we can show that for such formulae, 
$\{\varphi^\phi_i \land \neg \varphi^\psi_j\} \Vdash_\Pi \bot$. 
Using this, we can now prove the following. 
Using propositional rules, 
 $\{\varphi^\phi \land \neg \varphi^\psi\} \Vdash_\Pi \varphi^\phi \land \neg \varphi^\psi$
which is 
 $\{\varphi^\phi \land \neg \varphi^\psi\} \Vdash_\Pi (\varphi^\phi_1 \lor \ldots \lor \varphi^\phi_m) 
\land 
\neg (\varphi^\psi_1 \lor \ldots \lor \varphi^\psi_n)$, 
which in turns is equivalent to 
 $\{\varphi^\phi \land \neg \varphi^\psi\} \Vdash_\Pi (\varphi^\phi_1 \lor \ldots \lor \varphi^\phi_m) 
\land  (\neg \varphi^\psi_1 \land \ldots \land \neg \varphi^\psi_n)$
and to 
 $\{\varphi^\phi \land \neg \varphi^\psi\} \Vdash_\Pi (\varphi^\phi_1 \land \neg \varphi^\psi_1 \land \ldots \land \neg \varphi^\psi_n) \lor \ldots \lor (\varphi^\phi_m \land \neg \varphi^\psi_1 \land \ldots \land \neg \varphi^\psi_n)$. 
Based on the previous explanations and propositional rules, 
for every $\varphi^\phi_i$, 
$\{\varphi^\phi_i \land \neg \varphi^\psi_1 \land \ldots \land \neg \varphi^\psi_n\} \vdash \bot$. 
Hence, using propositional rules once more, 
 $\{\varphi^\phi \land \neg \varphi^\psi\} \Vdash_\Pi \bot \lor \ldots \lor \bot$ 
and therefore 
 $\{\varphi^\phi \land \neg \varphi^\psi\} \Vdash_\Pi \bot$. 
This concludes the left to right direction of our proof. 

Let us now focus on the right to left direction of our proof, i.e. 
that if  $\{\varphi^\phi \land \neg \varphi^\psi\} \Vdash_\Pi \bot$ then $\{\varphi^\phi\} \Vdash_\Pi \varphi^\psi$. 
Assume $\varphi^\phi \formis \bot$. Then clearly, $\{\varphi^\phi\} \Vdash_\Pi \varphi^\psi$ based on propositional rules. 
Now assume $\varphi^\psi \formis \bot$. Then $\neg \varphi^{\psi}$ is equivalent to $\top$ through the use 
of propositional rules and our property holds easily 
and it is easy to argue that if 
$\{\varphi^\phi \land \top\} \Vdash_\Pi \bot$ then $\{\varphi^\phi\} \Vdash_\Pi \bot$.

Let us therefore consider the situation in which none of the DDNFs are $\bot$ and let
$\varphi^\phi$ and $\varphi^\psi$ be of the forms
$\varphi^\phi \formis \varphi^\phi_1 \lor \ldots \lor \varphi^\phi_m$ and 
$\varphi^\psi \formis \varphi^\psi_1 \lor \ldots \lor \varphi^\psi_n$ for $m,n \geq 1$. 
Assume that it is not the case that $\{\varphi^\phi\} \Vdash_\Pi \varphi^\psi$. Therefore, 
using the previous analysis and the properties of DDNFs, we can show that there must exist 
$\varphi^\phi_i \in \{\varphi^\phi_1 , \ldots ,\varphi^\phi_m\}$ s.t. 
$\varphi^\phi_i \notin \{\varphi^\psi_1, \ldots ,\varphi^\psi_n\}$ (i.e. there exists a probability distribution satisfying 
$\varphi^\phi$ and not satisfying $\varphi^\psi$, and therefore a formula associated with that probability 
present in $\varphi^\phi$ and not present in $\varphi^\psi$). 
Therefore, we can reuse the previous reasoning to show that it cannot be the case that
$\{\varphi^\phi_1 \land \neg \varphi^\psi_1 \land \ldots \land \neg \varphi^\psi_n) \lor \ldots \lor (\varphi^\phi_m \land \neg \varphi^\psi_1 \land \ldots \land \neg \varphi^\psi_n\} \Vdash_\Pi \bot$
and therefore it cannot be the case that $\{\varphi^\phi \land \top\} \Vdash_\Pi \bot$. We reach a contradiction with our 
assumptions. Hence, we conclude that if 
$\{\varphi^\phi \land \neg \varphi^\psi\} \Vdash_\Pi \bot$ then $\{\varphi^\phi\} \Vdash_\Pi \varphi^\psi$. 
Given the previous results, we have therefore shown that 
$\{\varphi^\phi \land \neg \varphi^\psi\} \Vdash_\Pi \bot$ iff $\{\varphi^\phi\} \Vdash_\Pi \varphi^\psi$. 
\end{proof}

\lemmavformula*

\begin{proof}  
For every $\prob \in \dist(\graph, \Pi)$ we create 
the epistemic formula $\varphi^\prob$ associated with it. For every such formula, we create 
a proposition $d^\prob$. To every formula $\psi \in \eformulae(\graph, \Pi)$, we assign its DDNF, i.e. 
if $\sat(\psi, \Pi) = \emptyset$, then $f(\psi) = \bot$, and if $\sat(\psi, \Pi) = \{\prob_1, \ldots, \prob_m\}$ for $m \geq 1$, 
then
$f(\psi) = d^{\prob_1} \lor d^{\prob_2} \lor \ldots \lor d^{\prob_m}$. 
We can observe that for two different $\prob, \prob' \in \dist(\graph, \Pi)$, $\varphi^\prob \land \varphi^{\prob'}$ is 
equivalent to $\bot$. Thus, we add rule $d^{\prob_i} \land d^{\prob_j} \leftrightarrow \bot$ for $i \neq j$ to $\Lambda$. 
Assuming that $\dist(\graph, \Pi) = \{\prob_1, \ldots, \prob_k\}$, 
we also add rule $d^{\prob_1} \lor d^{\prob_2} \lor \ldots \lor d^{\prob_k} \leftrightarrow \top$ to $\Lambda$, 
following the result in Proposition \ref{subvaldistributiondnf}.

Let us consider the left to right direction. 
Assume that for a formula $\phi_i$, its DDNF $\varphi^{\phi_i}$ is $\varphi^{\prob}_{1,i} \lor \ldots \lor \varphi^{\prob}_{m_i,i}$ 
(or $\bot$ if $m_i = 0$)
and that for a formula $\psi$, its DDNF $\varphi^{\psi}$ is $\varphi^{\prob}_{1} \lor \ldots \lor \varphi^{\prob}_{l}$ 
(or $\bot$ if $l = 0$).
We observe that under the restricted epistemic consequence relation, 
$\{\phi_1,\ldots,\phi_n\} \Vdash_\Pi \psi$ iff 
$\{\varphi^{\phi_1},\ldots,\varphi^{\phi_n}\} \Vdash_\Pi \varphi^{\psi}$. 
Hence, $\{\phi_1,\ldots,\phi_n\} \Vdash_\Pi \psi$ iff 
$\{\varphi^{\prob}_{1,1} \lor \ldots \lor \varphi^{\prob}_{m_1,1}, \ldots, \varphi^{\prob}_{1,n} \lor \ldots \lor \varphi^{\prob}_{m_n,n} \Vdash_\Pi \varphi^{\prob}_{1} \lor \ldots \lor \varphi^{\prob}_{l}$. 
Due to the nature of DDNFs and $\Vdash_\Pi$, we observe that the long formula holds 
iff every distribution formula (and conjunctive clause at the same time) $\varphi^{\prob}$ that is present in all $\varphi^{\phi_i}$, 
is present in $\varphi^{\psi}$ or, if $n=0$ (i.e. our set of assumptions is empty), $\varphi^{\psi}$ is the DDNF of $\top$. 
If it is the first case, then 
$\{f(\phi_1), \ldots, f(\phi_n)\} \cup \Lambda \vdash f(\psi)$ easily from properties of propositional logic 
or the $\bot$ rule we have added to $\Lambda$. If it is the other, then it holds easily through the second rule we have 
added to $\Lambda$.  

The right to left direction can be proved in a similar fashion based on (proof of) Proposition \ref{valdistributiondnf} 
and the presence of propositional rules in the proof system.  
\end{proof}

\restrictedequivalencetwo*

\begin{proof} 

We assume that $\Pi = \{0,1\}$.
First, we provide the following translation, and then show the equivalence.  
For each propositional formula $\alpha$, let $g(\alpha)$ be defined as follows:
\begin{itemize}
\item if $\alpha$ is an atom, then $g(\alpha)$ is $\pprob(\alpha)=1$
\item if $\alpha$ is negation of the form $\neg \alpha'$, 
then $g(\neg \alpha') = \neg g(\alpha')$.
\item if $\alpha$ is conjunction of the form $\alpha_1\land\alpha_2$, 
then $g(\alpha_1\land\alpha_2) = g(\alpha_1) \land g(\alpha_2)$.
\item if $\alpha$ is disjunction of the form $\alpha_1\lor\alpha_2$, 
then $g(\alpha_1\lor\alpha_2) = g(\alpha_1) \lor g(\alpha_2)$.
\end{itemize}
We can observe that there is a correspondence between the propositional proof rules P1 and P2 and the proof rules for classical propositional logic under this translation.
Hence, it is straightforward to show by induction on the structure of the propositional formulae in 
$\alpha_1,\ldots,\alpha_n$ and $\beta$ 
that $\{\alpha_1,\ldots,\alpha_n\} \vdash \beta$ iff $\{g(\alpha_1),\ldots,g(\alpha_n)\} \Vdash_\Pi g(\beta)$.  
\end{proof}

\propclassequivalenttwo*

\begin{proof}
Follows easily from Lemmas \ref{lemma:vformula} and \ref{lemma:2}.
\end{proof}

\conscoverage*

\begin{proof}
The properties hold straightforwardly from the definitions of default, full and partial coverage. For counterexamples 
showing that the relations hold only one way, please consult Examples \ref{ex:coverage1}, \ref{ex:coveragecont} 
and \ref{ex:counterexfullpart}.
\end{proof}
 
\conscoverageequiv*

\begin{proof}
We can observe that if $\sat(\con) = \sat(\con')$, 
then $\nodes(\graph) = \nodes(\graph')$. 
Furthermore, from this it also holds that $\con \VDash \varphi$ iff $\con'  \VDash \varphi$ for a given 
formula $\varphi$. 
Hence, if $\con \VDash \pprob(\xa) \neq x$ for an argument $\xa \in \nodes(\graph)$ and value $x \in [0,1]$, 
then $\con' \VDash \pprob(\xa) \neq x$ and vice versa. Therefore, an argument $\xa$ is default covered in $X$ 
iff it is default covered in $X'$. Remaining forms of coverage can be shown in a similar fashion. 

\end{proof}

\effectiveness*
\begin{proof}
\begin{itemize}
\item Follows straightforwardly from the definition. To see that the relation is one way, consult Example \ref{ex:effectivenesscounterex2}.
\item Follows straightforwardly form the definition. To see that the relation is one way, consult Example 
\ref{ex:semieffectiveness}. 
\item Follows straightforwardly from the definition.
\item Follows straightforwardly form the definition.
\item Consult Example \ref{ex:semieffectiveness}. 
\item Consider the set of constraints $\con = \{\pprob(\xa) > 0.5, \pprob(\xa) > 0.5 \rightarrow \pprob(\xb) < 0.5\}$
and assume that the $(\xa, \xb)$ relation is present in the graph. It is easy to see 
that it is neither effective nor strongly effective w.r.t. $F = \{\xa\}$. 
However, given $Z = \{\pprob(\xa) > 0.5 \rightarrow \pprob(\xb) < 0.5\}$, 
we can show that $(\xa, \xb)$ is strongly semi--effective w.r.t. $(Z, F)$.
\end{itemize}

\end{proof}

\end{document}